\def\author{Alessandro Fornasier}
\def\place{Klagenfurt}
\def\submitdate{April 2024}
\def\includetblr{}
\providecommand{\R}{\mathbb{R}}
\providecommand{\SO}{\mathbf{SO}}
\providecommand{\SL}{\mathbf{SL}}
\providecommand{\SE}{\mathbf{SE}}
\providecommand{\HG}{\mathbf{HG}}
\providecommand{\SOT}{\mathbf{SOT}}
\providecommand{\SIM}{\mathbf{SIM}}
\providecommand{\grpG}{\mathbf{G}}
\providecommand{\grpH}{\mathbf{H}}
\providecommand{\grpA}{\mathbf{G_{DP}}}
\providecommand{\grpB}{\mathbf{G_{SD}}}
\providecommand{\grpC}{\mathbf{G_{TF}}}
\providecommand{\grpD}{\mathbf{G_{TG}}}
\providecommand{\grpE}{\mathbf{G_{O}}}
\providecommand{\grpF}{\mathbf{G_{ES}}}
\providecommand{\IN}{\mathbf{IN}}
\providecommand{\gothso}{\mathfrak{so}}
\providecommand{\gothse}{\mathfrak{se}}
\providecommand{\gothhg}{\mathfrak{hg}}
\providecommand{\gothin}{\mathfrak{in}}
\providecommand{\gothg}{\mathfrak{g}}
\providecommand{\gothGrpA}{\mathfrak{g}_{\mathbf{DP}}}
\providecommand{\gothGrpD}{\mathfrak{g}_{\mathbf{TG}}}
\providecommand{\gothGrpE}{\mathfrak{g}_{\mathbf{O}}}
\providecommand{\gothGrpF}{\mathfrak{g}_{\mathbf{ES}}}
\providecommand{\so}{\mathfrak{so}}
\providecommand{\se}{\mathfrak{se}}
\providecommand{\sot}{\mathfrak{sot}}
\providecommand{\hg}{\mathfrak{hg}}
\providecommand{\sdpgrpG}{\mathbf{G}_{\mathfrak{g}}^{\ltimes}}
\providecommand{\calC}{\mathcal{C}}
\providecommand{\calG}{\mathcal{G}}
\providecommand{\calI}{\mathcal{I}}
\providecommand{\calM}{\mathcal{M}}
\providecommand{\calN}{\mathcal{N}}
\providecommand{\calO}{\mathcal{O}}
\providecommand{\calHG}{\mathcal{HG}}
\providecommand{\torSO}{\mathcal{SO}}
\providecommand{\torSE}{\mathcal{SE}}
\providecommand{\torIN}{\mathcal{IN}}
\providecommand{\vecL}{\mathbb{L}}
\providecommand{\PD}{\mathbb{S}_+} 
\DeclareMathOperator{\Ad}{Ad}
\DeclareMathOperator{\ad}{ad}
\providecommand{\tT}{\mathrm{T}} 
\providecommand{\td}{\mathrm{d}}
\providecommand{\tD}{\mathrm{D}}
\providecommand{\ddt}{\frac{\td}{\td t}}
\newcommand{\Fr}[2]{\left. \mathrm{D}_{#1} \right|_{#2}}
\providecommand{\scirc}{%
    \hbox{\fontfamily{\rmdefault}\fontsize{0.4\dimexpr(\f@size pt)}{0}\selectfont{\raisebox{-0.52ex}[0ex][-0.52ex]{$\circ$}}}}
\mathchardef\mhyphen="2D
\providecommand{\etal}{\textit{et al.}~}
\newcommand\Tstrut{\rule{0pt}{3.5ex}}
\newcommand\Bstrut{\rule[-1.5ex]{0pt}{0pt}}
\DeclareSIUnit{\sqrts}{\ensuremath{\sqrt{\text{\second}}}}
\newcommand{\ts}{\textsuperscript}
\theoremstyle{plain}
\newtheorem{theorem}{Theorem}[section]
\newtheorem{lemma}[theorem]{Lemma}
\theoremstyle{definition}
\theoremstyle{remark}
\newtheorem{remark}[theorem]{Remark}
\newcommand{\norm}[1]{\lVert #1 \rVert}
\newcommand{\AtoB}[2]{\;:\;#1\;\rightarrow\;#2}
\newcommand{\chrulefill}{%
  \leaders\hrule height \dimexpr\fontdimen22\scriptscriptfont2+0.2pt\relax
                 depth -\dimexpr\fontdimen22\scriptscriptfont2-0.2pt\relax
          \hfill}
\newcommand{\xoverline}[2]{%
  \mathord{
    \vbox{\offinterlineskip
      \halign{##\cr
        \chrulefill$\,\scriptscriptstyle#1\,$\chrulefill\cr
        \noalign{\kern.2ex}
        $#2$\cr
      }%
    }%
  }%
}
\newcommand{\set}[2]{\left\{ #1 \;\middle|\; #2 \right\}}
\newcommand{\st}{\;\left.\middle|\right.\;}
\newcommand{\stq}{\quad\left.\middle|\right.\quad}
\providecommand{\tT}{\mathrm{T}} 
\providecommand{\td}{\mathrm{d}}
\providecommand{\tD}{\mathrm{D}}
\providecommand{\ddt}{\frac{\td}{\td t}}
\providecommand{\Fr}[2]{\mathrm{D}_{#1}\big|_{#2}}
\newcommand{\eye}{\mathbf{I}}
\newcommand{\Adsym}[2]{\mathrm{Ad}_{#1}\left[#2\right]}
\newcommand{\adsym}[2]{\mathrm{ad}_{#1}\left[#2\right]}
\newcommand{\AdMsym}[1]{\mathbf{Ad}^\vee_{#1}}
\newcommand{\adMsym}[1]{\mathbf{ad}^\vee_{{#1}}}
\providecommand{\xizero}{\mathring{\xi}}
\providecommand{\uzero}{\mathring{u}}
\providecommand{\yzero}{\mathring{y}}
\newcommand{\Vector}[3]{\prescript{#1}{}{\bm{#2}}_{#3}}
\newcommand{\Vectorfull}[4]{\prescript{#1}{#2}{\bm{#3}}_{#4}}
\newcommand{\dotVector}[3]{\prescript{#1}{}{\dot{\bm{#2}}}_{#3}}
\newcommand{\hatVector}[3]{\prescript{#1}{}{\hat{\bm{#2}}}_{#3}}
\newcommand{\Pose}[2]{\prescript{#1}{}{\mathbf{T}}_{#2}}
\newcommand{\PoseS}[2]{\prescript{#1}{}{\mathbf{S}}_{#2}}
\newcommand{\PoseP}[2]{\prescript{#1}{}{\mathbf{P}}_{#2}}
\newcommand{\dotPose}[2]{\prescript{#1}{}{\dot{\mathbf{T}}_{#2}}}
\newcommand{\hatPose}[2]{\prescript{#1}{}{\hat{\mathbf{T}}_{#2}}}
\newcommand{\hatPoseP}[2]{\prescript{#1}{}{\hat{\mathbf{P}}_{#2}}}
\newcommand{\VAtt}[2]{\prescript{#1}{}{\mathbf{V}}_{#2}}
\newcommand{\Rot}[2]{\prescript{#1}{}{\mathbf{R}}_{#2}}
\newcommand{\dotRot}[2]{\prescript{#1}{}{\dot{\mathbf{R}}_{#2}}}
\newcommand{\hatRot}[2]{\prescript{#1}{}{\hat{\mathbf{R}}_{#2}}}
\newcommand{\frameofref}[1]{\{$#1$\}}
\providecommand{\bw}{\Vector{}{b}{\bm{\omega}}}
\providecommand{\ba}{\Vector{}{b}{a}}
\providecommand{\bnu}{\Vector{}{b}{\bm{\nu}}}
\providecommand{\angv}{\Vector{}{\omega}{}}
\providecommand{\acc}{\Vector{}{a}{}}
\providecommand{\velnu}{\Vector{}{\nu}{}}
\providecommand{\ethree}{\Vector{}{e}{3}}
\newcommand*{\titlepage}{%
\thispagestyle{empty}
\begingroup%
\centering
{\huge\bfseries\scshape Equivariant Symmetries for Aided Inertial Navigation}\par
\vspace{1.5\baselineskip}
{\large\mdseries Dott. Mag. \author}\par
\vspace{2.0\baselineskip}
{\large\mdseries\scshape DISSERTATION}\par
\vspace{0.5\baselineskip}
{\small submitted in fulfillment of the requirements for the degree of\\[0.25\baselineskip]
{\itshape Doctor of Technical Sciences}}\par
\vspace{1.0\baselineskip}
\includegraphics[width=0.33\linewidth]{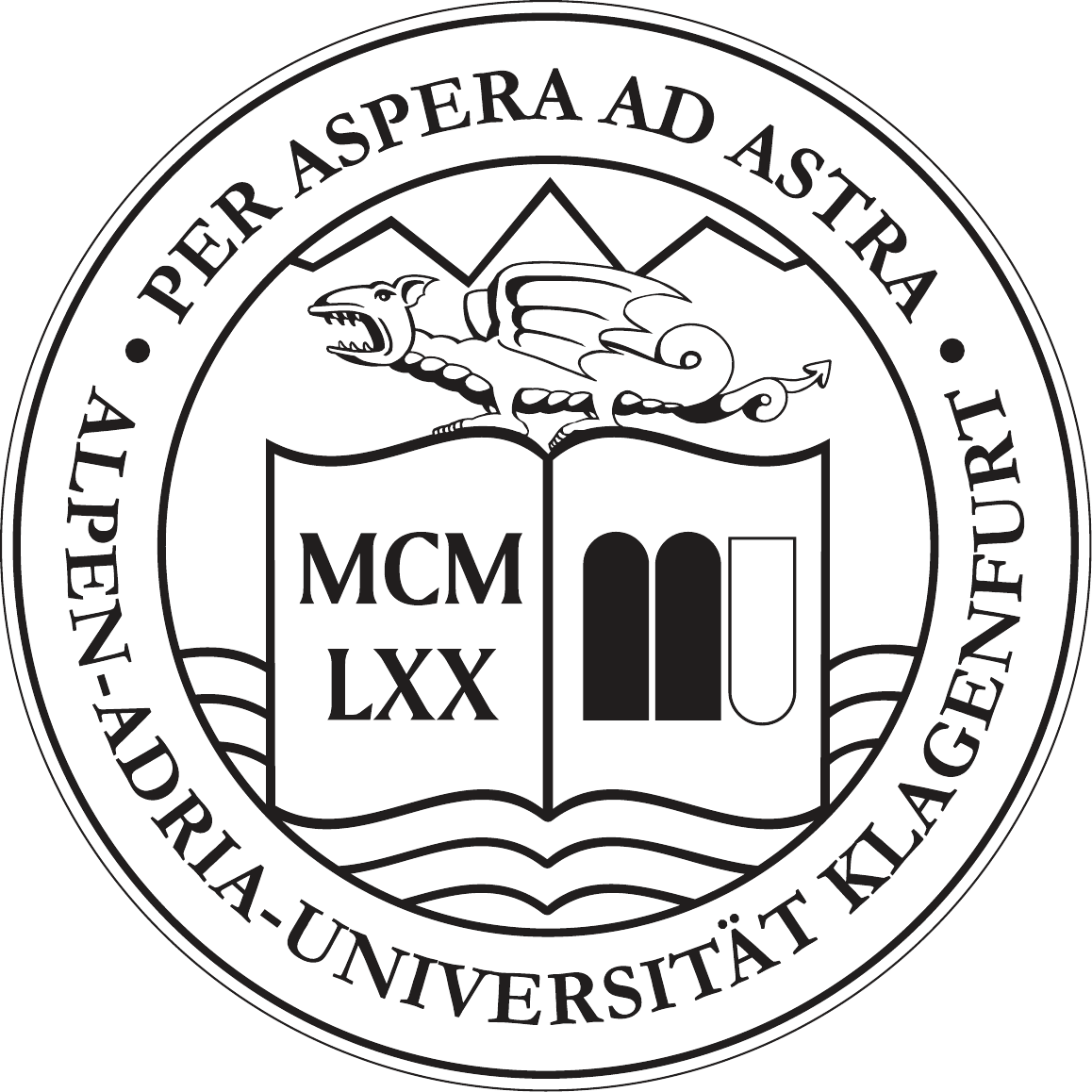}\\[1.0\baselineskip]
{University of Klagenfurt\\[0.25\baselineskip]
Faculty of Technical Sciences}\par
\vspace{2.0\baselineskip}
\begin{tblr}
{
    rows = {m},
    column{1} = {valign=t, halign=c, wd=0.45\textwidth, font=\small},
    column{2} = {valign=t, halign=c, wd=0.45\textwidth, font=\small},
}
\SetCell[c=2]{c}{Supervisors}\\[0.25\baselineskip]
{Univ.-Prof. Dr. Stephan Weiss\\University of Klagenfurt\\Department of Smart System Technologies} & 
{Prof. Robert Mahony Ph.D.\\Australian National University\\College of Engineering and Computer Science}\\[1.0\baselineskip]
\SetCell[c=2]{c}{Expert Reviewers}\\[0.25\baselineskip]
{Prof. Timothy Barfoot Ph.D.\\University of Toronto\\Institute for Aerospace Studies} & 
{Prof. Stergios Roumeliotis Ph.D.\\University of Minnesota\\Department of Computer Science and Engineering}\\
\end{tblr}\\[1.5\baselineskip]
{\place, \submitdate}\par
\vfill
\endgroup}
\begin{document}

\setsecnumdepth{subsection}
\maxsecnumdepth{subsection}

\maxtocdepth{subsection}


\chapterstyle{madsen}
\renewcommand*{\chaptitlefont}{\normalfont\Huge\bfseries\raggedleft}
\renewcommand*{\chapnamefont}{\normalfont\LARGE\scshape\raggedleft}
\renewcommand*{\printchapternum}{%
    \makebox[0pt][l]{\hspace{0.4em}%
      \resizebox{!}{4ex}{%
        \chapnamefont\bfseries\thechapter}%
    }%
  }%

\makeatletter
\newcommand{\figrefcaption}[1]{\renewcommand\fnum@figure{\figurename~\thefigure~(from~#1)}}
\newcommand{\tabrefcaption}[1]{\renewcommand\fnum@table{\tablename~\thetable~(from~#1)}}
\makeatother


\frontmatter
\titlepage
\clearpage
\chapter*{Affidavit}

I hereby declare in lieu of an oath that
\begin{itemize}
    \item the submitted academic paper is entirely my own work and that no auxiliary materials have been used other than those indicated,
    \item I have fully disclosed all assistance received from third parties during the process of writing the thesis, including any significant advice from supervisors,
    \item any contents taken from the works of third parties or my own works that have been included either literally or in spirit have been appropriately marked and the respective source of the information has been clearly identified with precise bibliographical references (e.g. in footnotes),
    \item I have fully and truthfully declared the use of generative models (Artificial Intelligence, e.g. ChatGPT, Grammarly Go, Midjourney) including the product version,
    \item to date, I have not submitted this paper to an examining authority either in Austria or abroad and that
    \item when passing on copies of the academic thesis (e.g. in printed or digital form), I will ensure that each copy is fully consistent with the submitted digital version.
\end{itemize}
I am aware that a declaration contrary to the facts will have legal consequences.\\[0.25\baselineskip]
\begin{tblr}{
    rows = {m},
    width = \linewidth,
    colspec = {X[1,l] X[1,r]}
}
\author~m.p. & \place, \submitdate
\end{tblr}
\clearpage
\chapter*{Disclosure}

I hereby declare the use of AI-assisted technologies, including Grammarly and GPT 3.5, as online spell-checking tools during the preparation of this manuscript. Note that none of the aforementioned technologies were directly used to generate complete paragraphs within this thesis.
\clearpage
\chapter*{Acknowledgments}

First and foremost, I am extremely grateful to my supervisors, Professor Stephan Weiss and Professor Robert Mahony, for their invaluable supervision, continuous inspiration, and fruitful discussions. They have guided me throughout my Ph.D. journey, fueling my curiosity and teaching me how to conduct meaningful research, ultimately shaping me as a researcher.

The Control of Networked Systems (CNS) has provided an incredibly inspiring environment and an unforgettable experience. I express my gratitude to all my colleagues, with special thanks to Eren Allak, who guided me during my internship at CNS before I became a Ph.D. candidate, sparking my interest in research. Additionally, I extend special appreciation to my office colleagues Christian Brommer, Martin Scheiber, and Giulio Delama for their continuous engagement in productive conversations, boundless enthusiasm, and friendship.

I am incredibly pleased to have met and collaborated with my colleagues from the Australian National University, Yonhon Ng, Yixiao Ge, and Pieter van Goor. Their knowledge and contributions were crucial to the work presented in this dissertation.

I express my deep gratitude to my parents, Ottorino and Dianella, my brother Simone, and my girlfriend Naira, for their invaluable life lessons, unwavering support, and profound belief in my abilities. I owe much of who I am to them.

Last but not least, I would like to express my deepest appreciation to Stergios Roumeliotis and Timothy Barfoot for reviewing this dissertation.

\clearpage
\chapter*{Abstract}
\emph{Respecting the geometry of the underlying system and exploiting its symmetry} have been driving concepts in deriving modern geometric filters for \acfp{ins}. Despite their success, the explicit treatment of \ac{imu} biases remains challenging, unveiling a gap in the current theory of filter design. In response to this gap, this dissertation builds upon the recent theory of equivariant systems to address and overcome the limitations in existing methodologies. The goal is to \emph{identify new symmetries of \aclp{ins} that include a geometric treatment of \ac{imu} biases and exploit them to design filtering algorithms that outperform state-of-the-art solutions in terms of accuracy, convergence rate, robustness, and consistency}.

This dissertation leverages the semi-direct product rule and introduces the tangent group for \aclp{ins} as the first equivariant symmetry that properly accounts for \ac{imu} biases. Based on that, we show that it is possible to derive an \ac{eqf} algorithm with autonomous navigation error dynamics. The resulting filter demonstrates superior to state-of-the-art solutions. 

Through a comprehensive analysis of various symmetries of \aclp{ins}, we formalized the concept that \emph{every filter can be derived as an \ac{eqf} with a specific choice of symmetry}. This underlines the fundamental role of symmetry in determining filter performance.

To address modern challenges in \aclp{ins} and multi-sensor fusion, we propose a general methodology to extend the presented symmetries to account for sensor calibration states, other variables of interest, and different measurement types. The established theoretical framework provides a robust foundation encompassing a wide range of inertial navigation problems, offering a versatile and unified solution for mobile robots to tackle complex scenarios.

Specifically, in the context of \ac{vins}, we present a novel Lie group symmetry and combine the idea of the multi state constraint filter methodology with the established theoretical framework to develop the \acf{msceqf}. We show that the proposed equivariant approach to the \ac{vio} problem exhibits beyond state-of-the-art robustness while achieving comparable accuracy and being consistent without artificial manipulation of the linearization point. The \ac{msceqf} was tested on common real-world datasets and was successfully deployed for closed-loop control of a resource-constrained aerial platform. 

This dissertation advances the understanding of equivariant symmetries in the context of \aclp{ins} and serves as a basis for the next generation of equivariant estimators, marking a significant leap toward more reliable navigation solutions.

\clearpage
\tableofcontents*
\printacronyms
\clearpage
\listoffigures*
\clearpage
\listoftables*
\clearpage
\chapter*{Publications}
\addcontentsline{toc}{chapter}{Publications}

Throughout my doctoral studies, I conducted research under the supervision of Professors Stephan Weiss and Robert Mahony. This research resulted in the publication of multiple papers in internationally recognized peer-reviewed journals and conference proceedings. Papers including results directly presented in this thesis are marked with a filled triangle ($\blacktriangle$), and papers outcomes of collaborations and projects are marked with an empty triangle ($\triangle$). Equal contribution is marked with an asterisk superscript in the names ($*$).

\section*{Journal papers}
\begin{itemize}[label=$\blacktriangle$, leftmargin=*]
\item A. Fornasier, P. van Goor, E. Allak, R. Mahony, and S. Weiss, “MSCEqF: A Multi State Constraint Equivariant Filter for Vision-aided Inertial Navigation”, in 2023 IEEE Robotics and Automation Letters (RA-L), 2023. \href{https://ieeexplore.ieee.org/document/10325586}{[IEEExplore]} \href{https://github.com/aau-cns/MSCEqF}{[Code]}
\item A. Fornasier, Y. Ng, C. Brommer, C. B\"ohm, R. Mahony, and S. Weiss, “Overcoming Bias: Equivariant Filter Design for Biased Attitude Estimation with Online Calibration”, in 2022 IEEE Robotics and Automation Letters (RA-L), 2022. \href{https://ieeexplore.ieee.org/abstract/document/9905914}{[IEEExplore]} \href{https://github.com/aau-cns/ABC-EqF}{[Code]}
\end{itemize}
\begin{itemize}[label=$\triangle$, leftmargin=*]
\item M. Scheiber$^{*}$, A. Fornasier$^{*}$, R. Jung, C. B\"ohm, R. Dhakate, C. Stewart, S. Weiss, and C. Brommer$^{*}$, “CNS Flight Stack for Reproducible, Customizable, and Fully Autonomous Applications”, in 2022 IEEE Robotics and Automation Letters (RA-L), 2022. \href{https://ieeexplore.ieee.org/abstract/document/9849131}{[IEEExplore]} \href{https://github.com/aau-cns/flight_stack}{[Code]}
\item C. Brommer, A. Fornasier, M. Scheiber, J. Delaune, R. Brockers, J. Steinbrener, and S. Weiss, “The INSANE dataset: Large number of sensors for challenging UAV flights in Mars analog, outdoor, and out-indoor transition scenarios”, in the International Journal of Robotics Research (IJRR), 2024. \href{https://journals.sagepub.com/doi/full/10.1177/02783649241227245}{[Sagepub]}
\end{itemize}

\section*{Conference papers}
\begin{itemize}[label=$\blacktriangle$, leftmargin=*]
\item A. Fornasier, G. Yixiao, P. van Goor, M. Scheiber, A. Tridgell, R. Mahony, and S. Weiss, “An Equivariant Approach to Robust State Estimation for the ArduPilot Autopilot System”, in 2024 IEEE International Conference on Robotics and Automation (ICRA).
\end{itemize}
\begin{itemize}[label=$\blacktriangle$, leftmargin=*]
\item A. Fornasier, Y. Ng, R. Mahony, and S. Weiss, “Equivariant filter design for inertial navigation systems with input measurement biases”, in 2022 IEEE International Conference on Robotics and Automation (ICRA), 2022. \href{https://ieeexplore.ieee.org/document/9811778}{[IEEExplorer]}
\item M. Scheiber$^{*}$, A. Fornasier$^{*}$, C. Brommer, and S. Weiss, “Revisiting multi-GNSS Navigation for UAVs -- An Equivariant Filtering Approach”, in 2023 IEEE International Conference on Advanced Robotics (ICAR), 2023. 
\end{itemize}
\begin{itemize}[label=$\triangle$, leftmargin=*]
\sloppy
\item E. Allak, A. Fornasier, and S. Weiss, “Consistent Covariance Pre-Integration for Invariant Filters with Delayed Measurements”, in 2020 IEEE International Conference on Intelligent Robots and Systems (IROS),
2020. \href{https://ieeexplore.ieee.org/abstract/document/9340833}{[IEEExplore]}
\item B. Starbuck$^{*}$, A. Fornasier$^{*}$, S. Weiss, and C. Pradalier, “Consistent State Estimation on Manifolds for Autonomous Metal Structure Inspection”, in 2021 IEEE International Conference on Robotics and Automation (ICRA), 2021. \href{https://ieeexplore.ieee.org/abstract/document/9561837}{[IEEExplore]}
\item A. Fornasier, M. Scheiber, A. Hardt-Stremayr, R. Jung, and S. Weiss, “VINSEval: Evaluation Framework for Unified Testing of Consistency and Robustness of Visual-Inertial Navigation System Algorithms”, in 2021 IEEE International Conference on Robotics and Automation (ICRA), 2021. \href{https://ieeexplore.ieee.org/abstract/document/9561382}{[IEEExplore]}
\item J. Blueml, A. Fornasier, and S. Weiss, “Bias Compensated UWB Anchor Initialization using Information-Theoretic Supported Triangulation Points”, in 2021 IEEE International Conference on Robotics and Automation (ICRA), 2021. \href{https://ieeexplore.ieee.org/abstract/document/9561455}{[IEEExplore]}
\item J. Steinbrener, C. Brommer, T. Jantos, A. Fornasier, and S. Weiss, “Improved State Propagation through AI-based Pre-processing and Down-sampling of High-Speed Inertial Data”, in 2022 IEEE International Conference on Robotics and Automation (ICRA), 2022. \href{https://ieeexplore.ieee.org/abstract/document/9811989}{[IEEExplore]}
\item J. Michalczyk, C. Sch\"offmann, A. Fornasier, J. Steinbrener, and S. Weiss, “Radar-Inertial State-Estimation for UAV Motion in Highly Agile Manoeuvres”, in 2022 IEEE International Conference on Unmanned Aircraft Systems (ICUAS), 2022.  \href{https://ieeexplore.ieee.org/abstract/document/9836130}{[IEEExplore]}
\item G. Delama, F. Shamsfakhr, S. Weiss, D. Fontanelli, and A. Fornasier, “UVIO: An UWB-Aided Visual-Inertial Odometry Framework with Bias-Compensated Anchors Initialization”, in 2023 IEEE International Conference on Intelligent Robots and Systems (IROS), 2023.
\end{itemize}

\section*{Preprints}
\begin{itemize}[label=$\blacktriangle$, leftmargin=*]
\item A. Fornasier, Y. Ge, P. van Goor, R. Mahony, and S. weiss, “Equivariant Symmetries for Inertial Navigation Systems”, submitted to Automatica, 2023. \href{https://arxiv.org/abs/2309.03765}{[ArXiv]}
\end{itemize}

\mainmatter
\chapter{Introduction}
Navigation systems are key components of modern autonomous robotics platforms and vehicles. These systems rely on effective \emph{control} algorithms that define the movement and behavior of the robotic platform. A key prerequisite to any control algorithm is an accurate estimation of the system's \emph{state} variables, including orientation, position, and velocity.

\emph{State estimation} is the problem of combining multiple sources of information to determine the state of a kinematic system. Exploiting the modeled dynamics and optimally fusing the measurements from different sensors are the fundamental principles behind many estimators.

Understanding the core concepts of state estimation is essential to comprehend the significant advancements achieved in the field of \acp{ins}. This understanding becomes relevant when considering the historical milestones marked by the introduction of the \ac{kf}~\cite{Kalman1960AProblems, Kalman1961NewTheory} for linear systems in the early sixties, and its counterpart for nonlinear systems, the \ac{ekf}~\cite{Smith1962ApplicationVehicle, McElhoe1966AnVenus}, which have had a long history of successful practical application~\cite{Grewal2010ApplicationsPresent}.

The limitations of using Euler angles as the parameterization for the \ac{ekf} were clear from the start, and this motivated the consideration of alternative attitude parameterisations~\cite{markley2003attitude} that respects the geometry of the problem. An important contribution was made in the early eighties in the development of the so-called \ac{mekf}~\cite{Lefferts1982KalmanEstimation} that exploited the group structure of the quaternion representation of global attitude. The \ac{mekf} framework became the industry-standard methodology for \ac{ins} filter design for 30 years.

The lack of strong stability properties and the requirement for extensive tuning of \ac{ekf}-like estimators, together with a general interest in geometric methods, have motivated the study of nonlinear geometric observers with convergence guarantees for deterministic inertial navigation problems. This research challenge led to the development of robust nonlinear observers~\cite{mahony2008nonlinear, batista2012ges}, that explicitly used the structure of $\SO(3)$, the Lie group of 3D rotations, to provide reliable and simple attitude estimation for real-world commercial grade vehicles. 
The resulting insight led to a range of new results including observers on the Special Euclidean group ${\SE(3)}$ for navigation problems with velocity input~\cite{Bras2013NonlinearMotion, Vasconcelos2010AMeasurements, Hua2011ObserverSE3}, and on the Special linear group ${\SL(3)}$ for homography estimation~\cite{Hamel2011HomographyCorrespondence, Mahony2012NonlinearGroup, Hua2020NonlinearStabilization}. 

Nonlinear geometric observers, however, do not generalize well and are often restricted to specific applications or limited classes of systems. 

Bonnabel \etal~introduced the concept of symmetry-preserving observer~\cite{bonnabel2008symmetry} and, as a first attempt to exploit the symmetry of these systems in the context of stochastic filtering, developed the left \ac{iekf} for attitude estimation~\cite{bonnabel2007left, Bonnabel2009InvariantProblem}. This work was consolidated into the general theory of the \acl{iekf} for a class of group affine systems~\cite{7523335, barrau:tel-01247723} by Barrau and Bonnabel, and they showed that the \ac{iekf} is locally asymptotic stable~\cite{7523335} in a deterministic setting. 

Since then, it has been clear that \emph{respecting the geometry of the problem and exploiting its symmetry} are fundamental concepts for stochastic estimators with improved linearization error and convergence properties. Recently, Mahony, Trumpf and Hamel introduced the theory of \emph{equivariant systems} and \emph{equivariant observer} design~\cite{Mahony2013ObserversSymmetry, Trumpf2019OnSymmetry, Mahony2020EquivariantDesign, Mahony2021EquivariantGroups, Mahony2022ObserverEquivariance}, which led to the development of the \ac{eqf} by van Goor \etal~\cite{VanGoor2020EquivariantSpaces, Mahony2021EquivariantGroups, vanGoor2022EquivariantEqF}; a general filter design for systems on homogeneous space with symmetries.

This dissertation builds upon the theory of equivariant systems and significantly contributes to the field of \aclp{eqf} for \aclp{ins}.

\section{Motivation}

In the context of \aclp{ins}, the introduction of the extended special Euclidean group $\SE_2(3)$~\cite{barrau2015ekf, mahony2017geometric}, and the understanding that inertial navigation problems are \emph{group affine}~\cite{7523335, barrau:tel-01247723} when directly posed on the group, marked a major contribution in the field. This opened the door to the study of symmetries of \aclp{ins} and their use in geometric estimation algorithms for inertial navigation problems.

Unfortunately, modeling \aclp{ins} directly on $\SE_2(3)$ fails to account for \ac{imu} biases, and extending the state space to model \ac{imu} biases as elements of an Euclidean space in a direct product structure, breaks the group affine property of the unbiased \ac{ins} on $\SE_2(3)$~\cite{barrau:tel-01247723}.
Nevertheless,
\ac{imu} biases are crucial components to account for in modern \ac{ins}, hence modeling inertial navigation problem on $\SE_2(3)$ with \ac{imu} biases as vector elements and applying the \ac{iekf} design methodology resulted in a filter, termed an Imperfect-\ac{iekf}~\cite{barrau:tel-01247723}, that has been demonstrated across multiple applications~\cite{barrau:tel-01247723, doi:10.1177/0278364919894385, Wu2017AnConsistency, Brossard2018InvariantSLAM, Pavlasek2021InvariantEstimationd, Liu2023InGVIO:Odometry} and is accepted as the state-of-the-art filter for \ac{ins}.
In recent work, Barrau and Bonnabel proposed the \ac{tfgiekf}~\cite{Barrau2022TheProblems}, a variation of the original \ac{iekf} for two frames systems, that provides a more structured treatment of the symmetry of \ac{imu} biases.

Despite the effort made, the theory of filter design for biased \aclp{ins} remains \emph{incomplete}. It is without any doubt that a proper geometric description of the \ac{imu} biases in \aclp{ins} is highly desired and would lead to the next generation of \ac{ins} filtering algorithms. As mentioned above, this dissertation builds upon the recent theory of equivariant systems and seeks to achieve the aforementioned goal; that is, in its essence, \emph{finding new symmetries for \aclp{ins} that properly include a geometric treatment of the \ac{imu} biases and exploit them to design filtering algorithms that respect the geometry of the problem, hence yielding lower linearization error, and outperform the state-of-the-art in terms of accuracy, robustness, and consistency}

\section{Contributions}
This dissertation contributes to the field of \acl{eqf} design for \aclp{ins} through the following core contributions:
\begin{itemize}[label=$\bullet$, leftmargin=*]
\sloppy\item \emph{Introduction of the Tangent group for biased \aclp{ins}, as the first Lie group symmetry that properly accounts for \ac{imu} biases.}
\item \emph{Development of the first \acl{eqf} algorithm for biased \aclp{ins} with autonomous navigation error dynamics.}
\item \emph{Introduction of multiple symmetries for biased \aclp{ins}, based on variation of the Tangent group. Theoretical assessment and discussion to identify the optimal symmetry for filter design through linearization error analysis.}
\item \emph{Formalization of the concept that every filter is equivariant. Every modern geometric \ac{ins} filter can be derived as an \ac{eqf} with a specific choice of symmetry, and the choice of symmetry structure is, in fact, the only difference between different versions of modern geometric \ac{ins} filters.}
\item \emph{Introduction of general a methodology to cast global-referenced measurements into body-referenced measurements that are compatible with the symmetry, leading to a unified framework suitable for various measurement types}.
\item \emph{Extension of the semi-direct product symmetry group to account for generic (translation, rotation, and roto-translation) sensor calibration states and extra state variables of interest}.
\item \emph{Design, implementation, and testing of a robust, self-calibrating, multi-sensor fusion algorithm based on the \acl{eqf}. A case study for the ArduPilot autopilot system.}
\item \emph{Theoretical derivation, design, implementation, and testing of the first \ac{msceqf} for vision-aided inertial navigation.}
\end{itemize}
Bridging the gap between theoretical results and novel state estimation algorithms for inertial navigation problems has always been part of our goal. In this regard, we made source-available implementations of the algorithms discussed in this dissertation. In particular, an educational implementation of \ac{eqf} for biased attitude system (\href{https://github.com/aau-cns/ABC-EqF}{https://github.com/aau-cns/ABC-EqF}), a header-only C++ library for symmetry groups (\href{https://github.com/aau-cns/Lie-plusplus}{https://github.com/aau-cns/Lie-plusplus}) to bootstrap the development of \aclp{eqf}, and finally, a real-time C++ implementation of the \ac{msceqf} for vision-aided inertial navigation, with wrappers for the most used middlewares in robotics (\href{https://github.com/aau-cns/msceqf}{https://github.com/aau-cns/msceqf}).

\section{Outline}
Besides this introductory chapter, the dissertation is divided into the following chapters.

\textbf{Chapter 2 and Chapter 3}. \emph{These two chapters introduce the fundamentals of this dissertation}.
In particular, \cref{math_chp} introduces the notation used throughout this work together with concepts in the theory of Lie groups that are needed for the understanding of the work presented in this dissertation. Finally, \cref{math_chp} concludes with a discussion of all the Lie groups that are important in the context of \aclp{ins}. In \cref{eq_chp}, the recent results in the theory of Equivariant systems are reviewed. The concept of symmetry for a kinematic system and how a symmetry is exploited to lift an estimation problem from the original state space to the Lie group are discussed. Furthermore, in \cref{eq_chp}, the recently introduced \acl{eqf} algorithm, upon which the results in this dissertation are based, is discussed.

\textbf{Chapter 4 and Chapter 5}. \emph{These two chapters introduce the principal theoretical results of this dissertation -- the symmetries of \aclp{ins} and their exploitation in \aclp{eqf} design.} \Cref{bas_chp} starts introducing the first and easiest example of biased \acl{ins}, that is the attitude kinematics with gyroscope bias. For this system, a novel symmetry based on the tangent group of $\SO(3)$ given by $\SO(3) \ltimes \so(3)$ is derived. \Cref{bas_chp} continues describing how the tangent symmetry is extended to account for the sensor's calibration states and how the overall symmetry is exploited to design an \acl{eqf} capable of handling a generic number of body-frame and reference-frame direction type measurements and sensor's self-calibration. Finally, \cref{bas_chp} concludes with an experimental evaluation of the proposed \ac{eqf}, and a comparison with the state-of-the-art \ac{iekf} on both simulated data and real-world field data.

\Cref{bins_chp} provides a generalization of the results in \cref{bas_chp}, presenting the general theory of symmetries and \acl{eqf} design for second-order \aclp{ins}. Specifically, we discuss different known symmetries for the \ac{ins} problem and present novel symmetries based on the tangent group of $\SE_2(3)$. \Cref{bins_chp} provides a detailed analysis of the linearization error of \aclp{eqf} designed exploiting the presented symmetries. 
This analysis not only serves as a performance indicator but also highlights the main results of \cref{bins_chp}. 
To begin, exploiting the tangent group of $\SE_2(3)$ results in a filter with exact linearization of the navigation states. Secondly, every modern filter (\ac{mekf}, Imperfect-\ac{iekf}, \ac{tfgiekf}) can be derived as an \ac{eqf} for a specific choice of symmetry. Therefore, the only difference among modern geometric filters lies in the choice of symmetry. \Cref{bins_chp} continues discussing the problem of position-based navigation as a direct application of the presented theoretical results. This application serves as a convenient framework to introduce a new interesting result, that is, how global-referenced measurements of state variables are reformulated as fixed body-referenced measurements that are compatible with the presented symmetries. Finally, \cref{bins_chp} concludes with a Monte-Carlo simulation of the position-based navigation problem, confirming the theoretical findings of the linearization error analysis. 

\textbf{Chapter 6}. \emph{This chapter acts as a bridge between theoretical results and practical inertial navigation applications.} \Cref{ms_bins_chp} discusses how the symmetries presented in \cref{bins_chp} are extended to account for generic sensor extrinsic parameters and extra state of interest often present in modern \aclp{ins}. Continuing, \cref{ms_bins_chp} presents a comprehensive case study — the design of a robust, self-calibrating, multi-sensor fusion, \acl{eqf} algorithm for the ArduPilot autopilot system. This case study combines all the results discussed in this dissertation and exemplifies the real-world application of the developed theoretical framework. 

\textbf{Chapter 7} \emph{This chapter presents a novel symmetry for \aclp{vins} and introduces the \acf{msceqf}, a novel \acl{vio} algorithm}. In \cref{msceqf_chp}, the results of previous chapters are combined, introducing a novel Lie group symmetry for \aclp{vins} that accounts for camera intrinsic and extrinsic parameters. Moreover, the idea of a multi state constraint filter is combined with the \acl{eqf} framework to design a novel system for the \ac{vio} problem. To handle zero motion scenarios, a novel equivariant formulation of the \acl{zvu} is derived. Additionally, the consistency properties are analyzed, and the proposed symmetry is shown to be compatible with the invariance to reference frame transformations of the \ac{vio} problem. Thus, the resulting filter is naturally consistent and does not require any observability-enforcing techniques. 

The results presented in \cref{msceqf_chp} demonstrate that the \ac{msceqf} exhibits beyond state-of-the-art robustness against expected and unexpected errors while achieving accuracy comparable to state-of-the-art \ac{vio} filter algorithms on commonly used real-world datasets. Moreover, the real-time capabilities for closed-loop control or a resource-constrained aerial platform have been experimentally demonstrated.

\Cref{msceqf_chp} concludes proposing an extension of the Lie group symmetry for \aclp{vins} to include fixed visual features and introduces a hybrid version of the \ac{msceqf}.

\chapter[Notation and Mathematical Preliminaries][Notation and Preliminaries]{Notation and Mathematical Preliminaries}\label{math_chp}

This chapter introduces the notation adopted throughout this dissertation, as well as a series of mathematical concepts that are fundamental for understanding the results presented in this dissertation.

\section{Vectors, matrices, and transformations of space}

\subsection{Vectors and matrices}
Vector spaces are denoted with blackboard letters such as ${\mathbb{X}}$. Vectors expressing elements of some $n$-dimensional vector space $\mathbb{X} \subset \R^n$ are denoted by lowercase bold letters such as $\Vector{}{x}{}$. Matrices representing linear operators on vector spaces are denoted by capital bold letters such as $\mathbf{X}$.

Vectors describing physical quantities $\Vector{}{x}{}$ of a target object \frameofref{B}, observed from an \emph{observer frame of reference} \frameofref{O} and expressed in a \emph{coordinate frame of reference} \frameofref{A}, are denoted by $\Vectorfull{A}{O}{x}{B}$. Note that when the observer frame and the coordinate frame coincide, the left subscript is dropped ${\Vectorfull{A}{A}{x}{B} = \Vector{A}{x}{B}}$.

In particular, a vector encoding the translation between a frame of reference \frameofref{A} and a frame of reference \frameofref{B} expressed in the frame of reference \frameofref{A} is denoted $\Vector{A}{p}{B}$. Similarly, a vector encoding the relative velocity of a moving frame of reference \frameofref{B} expressed in a frame of reference \frameofref{A} is denoted by $\Vector{A}{v}{B}$. 

\subsection{Rotation matrices}
Rotation matrices encoding the orientation of a frame of reference \frameofref{B} with respect to a reference \frameofref{A} are denoted by $\Rot{A}{B}$. In particular, $\Rot{A}{B}$ is the matrix whose columns are the orthonormal vectors defining the axes of \frameofref{B} expressed in \frameofref{A}:
\begin{equation*}
    \Rot{A}{B} = 
    \begin{bmatrix}
        \Vectorfull{A}{B}{e}{B_1} & \Vectorfull{A}{B}{e}{B_2} & \Vectorfull{A}{B}{e}{B_3}
    \end{bmatrix} \in \R^{3 \times 3} .
\end{equation*}

\subsection{Poses, extended poses and Galilean matrices}
Pose matrices encode simultaneously the orientation and the translation of a frame of reference \frameofref{B} with respect to a reference \frameofref{A}, and are denoted by $\PoseP{A}{B}$. 
\begin{equation*}
    \PoseP{A}{B} = 
    \begin{bmatrix}
        \Rot{A}{B} & \Vector{A}{p}{B}\\
        \Vector{}{0}{1 \times 3} & 1
    \end{bmatrix} \in \R^{4 \times 4} .
\end{equation*}

Galilean matrices encode simultaneously the orientation and the relative velocity of a frame of reference \frameofref{B} with respect to a reference \frameofref{A}, and are denoted by $\VAtt{A}{B}$. In particular, $\VAtt{A}{B}$ is written
\begin{equation*}
    \VAtt{A}{B} = 
    \begin{bmatrix}
        \Rot{A}{B} & \Vector{A}{v}{B}\\
        \Vector{}{0}{1 \times 3} & 1
    \end{bmatrix} \in \R^{4 \times 4} .
\end{equation*}

Extended pose matrices encode simultaneously the orientation and the translation and the relative velocity of a frame of reference \frameofref{B} with respect to a reference \frameofref{A}, and are denoted by $\Pose{A}{B}$. In particular, $\Pose{A}{B}$ is written
\begin{equation*}
    \Pose{A}{B} = 
    \begin{bmatrix}
        \Rot{A}{B} & \Vector{A}{v}{B} & \Vector{A}{p}{B}\\
        \Vector{}{0}{1 \times 3} & 1 & 0\\
        \Vector{}{0}{1 \times 3} & 0 & 1
    \end{bmatrix} \in \R^{5 \times 5} .
\end{equation*}

\subsection{Transformations of space}
Although often intertwined in the robotics literature, the notion of space transformation refers to a distinct concept than rotations, poses, extended poses, and Galilean matrices of a rigid body, previously introduced. \emph{Transformation of spaces refers to geometric transformations of a Euclidean space that preserves relative distances and relative angles. Space transformations are not physical variables, but they are maps acting on physical variables. Specific transformations form a Lie group and can be expressed in terms of some algebraic parametrization}.

\begin{boxexample}{Rotation transformation}{}
    A rigid body rotation is defined as the map ${R \AtoB{\R^3}{\R^3}}$. In order to be able to express the rotation transformation as the action of rotating a physical variable $\Vector{}{x}{}$, we need to define an algebraic parametrization, which is the matrix form $\Rot{}{} \in \R^{3 \times 3}$. Therefore, $R\left(\Vector{}{x}{}\right) \coloneqq \Rot{}{}\Vector{}{x}{}$.
\end{boxexample}
\begin{boxexample}{Rigid body transformation}{}
    A rigid body transformation is defined as the map ${P \AtoB{\R^3}{\R^3}}$. In order to be able to express the roto-translation transformation as the action of rotating and translating a physical variable $\Vector{}{x}{}$, we need to define an algebraic parametrization, which is given by the matrix elements $\left(\Rot{}{}, \Vector{}{p}{}\right)$, or by the matrix form $\PoseP{}{} \in \R^{4 \times 4}$, if the physical vector $\Vector{}{x}{}$ is expressed in homogenous coordinates\cite{Hartley2004MultipleVision}. Therefore, $P\left(\Vector{}{x}{}\right) \coloneqq \Rot{}{}\Vector{}{x}{} + \Vector{}{p}{}$.
\end{boxexample}

Although the previously introduced rotations, poses, extended poses, and Galilean matrices are not space transformations, the algebraic representation of transformations can be used to place coordinates on rotations, poses, extended poses, and Galilean matrices. 

\begin{boxexample}{Coordinates of pose}{}
    Imagine fixing a reference frame \frameofref{A}, finding the transformation that transforms \frameofref{B} to \frameofref{A} and defining an algebraic representation $\left(\Rot{A}{B}, \Vector{A}{p}{B}\right)$. This representation provides coordinates for the pose encoding the orientation and the translation of \frameofref{B} with respect to \frameofref{A}.
\end{boxexample}

\emph{This way of construction coordinates generalizes to any group that acts via a free and transitive action on a manifold}.

\section{Smooth manifolds}
A smooth (differentiable) manifold is a $n$-dimensional topological space that locally resembles some Euclidean space $\R^n$~\cite{JohnMLee2012IntroductionManifolds}. Smooth manifolds are denoted with calligraphic letters such as $\mathcal{M}$. Elements of smooth manifolds are denoted by lowercase regular letters such as $\xi$.

Let $\xi \in \calM$ be an element of a smooth manifold $\calM$, the tangent space of $\calM$ at $\xi$ is denoted $\tT_{\xi}\calM$. The tangent bundle of $\calM$, denoted $\tT\calM$, is defined as the disjoint union of the tangent spaces at all points of $\calM$.
A smooth vector field on $\calM$ is a smooth map ${f \AtoB{\calM}{\tT\calM}}$ such that ${f \in \tT_{\xi}\calM}$. The set of all smooth vector fields on $\calM$ is denoted ${\mathfrak{X}\left(\calM\right)}$.
Let ${h \AtoB{\calM}{\calN}}$ be a continuous and differentiable map between manifolds. The differential is written ${\td h \AtoB{\tT\calM}{\tT\calN}}$. Given ${\xi \in \calM}$ and ${\eta_{\xi} \in \tT_{\xi}\calM}$, the differential $\td h$ is evaluated pointwise by the \emph{Férchet derivative} as
\begin{equation*}
    \td h\left(\xi\right)\left[\eta_{\xi}\right] = \Fr{\zeta}{\xi}h\left(\zeta\right)\left[\eta_{\xi}\right] \in \tT_{h\left(\xi\right)}\calM .
\end{equation*}
In different terms, $\td h\left(\xi\right)\left[\eta_{\xi}\right]$ refers to the \emph{differential of h with respect to the argument $\zeta$, evaluated at $\xi$, in the direction $\eta_{\xi}$}.

We refer the reader to~\cite{JohnMLee2012IntroductionManifolds} for an in-depth introduction to smooth manifolds.

\section{Lie theory}

The laws of motion that govern physical systems often vary in a structured manner according to specific transformations. The formalism to describe those transformations is provided by the theory of Lie groups, pioneered by the mathematician Sophus Lie at the end of the 19th century. This theory constitutes an extremely important tool for modern robotics and physics. In the following subsections, we will elaborate on the major concepts.

\subsection{Lie groups}
A Lie group $\grpG$ is a smooth manifold endowed with a smooth group multiplication that satisfies the group axioms~\cite{Chirikjian2011StochasticApplications}. For any $X, Y \in \grpG$, the group multiplication is denoted $X \circ Y$, the group inverse $X^{-1}$, and the identity element $I$. For a given Lie group $\grpG$, the $\grpG$-Torsor~\cite{Mahony2013ObserversSymmetry}, denoted $\calG$, is defined as the set of elements of $\grpG$ (the underlying smooth manifold), but without the group structure, see \cref{math_torsor_example}.

Matrix Lie groups are a particular case of Lie groups, whose elements can be represented by invertible square matrices and where the group operation $\circ$ is the matrix multiplication. In the context of this dissertation, we limit our consideration to matrix Lie groups and products of matrix Lie groups.

Elements of Lie groups and matrix Lie groups are always represented with capital non-bold letters such as $X$.

\begin{boxexample}[label={math_torsor_example}]{$\grpG$-Torsor}{}
    In the context of kinematic systems with symmetries, the concept of $\grpG$-Torsor plays an important role. With this example, we try to clarify the difference between what a $\grpG$-Torsor represents and what a Lie group $\grpG$ represents. Consider the rotational kinematics of a moving platform, its state space representation is written as follows:
    \begin{equation*}
        \dotRot{G}{B} = \Rot{G}{B}\Vector{B}{\omega}{}^{\wedge}.
    \end{equation*}
    In this case, $\Rot{G}{B}$ is a rotation matrix that encodes the orientation of the platform's body frame \frameofref{B} with respect to some global frame of reference \frameofref{G}, and therefore is an element of $\torSO(3)$, the $\SO(3)$-Torsor. For such a system, moreover, we can define a Lie group that acts on the state space of the system; thus, we can define $R$ as an element of the Lie group $\SO(3)$. While $R$ represents a transformation that acts on the state space and hence is an element of a Lie group, $\Rot{G}{B}$ represents the physical orientation of \frameofref{B} with respect to \frameofref{A}, and hence is an element of the smooth manifold $\torSO(3)$. More details about Lie groups, Lie algebras, and Lie group actions follow.
\end{boxexample}

\subsection{Lie algebras}
For a given Lie group $\grpG$, the Lie algebra, denoted $\gothg$, is a vector space corresponding to the tangent space at the identity of the group, together with a bilinear non-associative map ${[\cdot, \cdot] \AtoB{\gothg}{\gothg}}$ called Lie bracket. 
The Lie algebra $\gothg$ of a $n$-dimensional Lie group $\grpG$, is isomorphic to a vector space $\R^{n}$.
Define the \emph{wedge} map and its inverse, the \emph{vee} map, as linear mappings between the vector space and the Lie algebra:
\begin{align*}
    &\left(\cdot\right)^{\wedge} \AtoB{\R^{n}}{\gothg} \stq \Vector{}{u}{} \mapsto \Vector{}{u}{}^{\wedge},\\
    &\left(\cdot\right)^{\vee} \AtoB{\gothg}{\R^{n}} \stq \Vector{}{u}{}^{\wedge} \mapsto \Vector{}{u}{}. 
\end{align*} 

Elements of Lie algebras are generally represented with lowercase letters such as $u$. Bold letters with a wedge superscript, such as $\Vector{}{u}{}^{\wedge}$, are used only to make explicit the physical nature of the vector variable.

\subsection{Group translations}
In general, for any $X, Y \in \grpG$, define the \emph{left translation} and the \emph{right translation} as
\begin{align*}
    &\textrm{L}_{X} \AtoB{\grpG}{\grpG}, && \textrm{L}_{X}\left(Y\right) = X \circ Y ,\\
    &\textrm{R}_{X} \AtoB{\grpG}{\grpG}, && \textrm{R}_{X}\left(Y\right) = Y \circ X . 
\end{align*}
In the context of matrix Lie groups, the left translation and the right translation are written
\begin{align*}
    &\textrm{L}_{X} \AtoB{\grpG}{\grpG}, && \textrm{L}_{X}\left(Y\right) = XY ,\\
    &\textrm{R}_{X} \AtoB{\grpG}{\grpG}, && \textrm{R}_{X}\left(Y\right)= YX . 
\end{align*}
Follows that the differential of the left and right translations are derived as follows:
\begin{align*}
    &\td \textrm{L}_{X} \AtoB{\gothg}{\gothg}, && 
    \begin{aligned}[t]
        \td\textrm{L}_{X}\left[\eta\right] &= \Fr{Y}{I}\textrm{L}_{X}(Y)\left[\eta\right]\\
        &= \lim_{t\to0}\frac{1}{t}\left[X(I - \eta t) - X\right] = X \eta ,
    \end{aligned}\\
    &\td \textrm{R}_{X} \AtoB{\gothg}{\gothg}, && 
    \begin{aligned}[t]
        \td\textrm{R}_{X}\left[\eta\right] &= \Fr{Y}{I}\textrm{R}_{X}(Y)\left[\eta\right]\\
        &= \lim_{t\to0}\frac{1}{t}\left[(I - \eta t)X - X\right] = \eta X .
    \end{aligned}
\end{align*} 

\subsection{Adjoint representations}
Let $X \in \grpG$ and ${\eta \in \gothg}$, the \emph{Adjoint map} for the group $\grpG$, $\Ad_X \AtoB{\gothg}{\gothg}$ is defined by
\begin{equation*}
    \Adsym{X}{\eta} = \td \textrm{L}_{X} \td \textrm{R}_{X^{-1}}\left[\eta\right] ,
\end{equation*} 
where $\mathbf{X}$ is the matrix representation of the group element $X$.
In the context of matrix Lie groups, we can define the \emph{Adjoint matrix}. Let ${\Vector{}{u}{} \in \R^n \st \Vector{}{u}{}^{\wedge} \in \gothg}$, the Adjoint matrix ${\AdMsym{X} \AtoB{\R^{n}}{\R^{n}}}$ is defined by
\begin{equation*}
    \AdMsym{X}\Vector{}{u}{} = \left(\Adsym{X}{\Vector{}{u}{}^{\wedge}}\right)^{\vee} = \left(\mathbf{X}\Vector{}{u}{}^{\wedge}\mathbf{X}^{-1}\right)^{\vee} .
\end{equation*}
In addition to the Adjoint map for the group $\grpG$, the \emph{adjoint map} for the Lie algebra $\gothg$ is defined as the differential at the identity of the Adjoint map for the group $\grpG$.
Let $X \in \grpG$ and ${\eta, \kappa \in \gothg}$, the adjoint map for the Lie algebra $\ad_{\eta} \AtoB{\gothg}{\gothg}$ is defined by 
\begin{equation*}
    \adsym{\eta}{\kappa} = \left[\eta, \kappa\right] ,
\end{equation*}
which is equivalent to the Lie bracket previously defined. 
In the context of matrix Lie groups, we can define the \emph{adjoint matrix} $\adMsym{\Vector{}{u}{}} \AtoB{\R^{n}}{\R^{n}}$ as 
\begin{equation*}
    \adMsym{\Vector{}{u}{}}\Vector{}{v}{} = \left(\Vector{}{u}{}^{\wedge}\Vector{}{v}{}^{\wedge} - \Vector{}{v}{}^{\wedge}\Vector{}{u}{}^{\wedge}\right)^{\vee} = \left[\Vector{}{u}{}^{\wedge}, \Vector{}{v}{}^{\wedge}\right]^{\vee} .
\end{equation*}
In general, the adjoints commute as follows:
\begin{subequations}\label{math_adj_comm}
    \begin{align}
        \Adsym{X}{\adsym{\eta}{\kappa}} &= \adsym{\left(\Adsym{X}{\eta}\right)}{\Adsym{X}{\kappa}} ,\\
         \AdMsym{X}\adMsym{\Vector{}{u}{}}\Vector{}{v}{} &= \adMsym{\AdMsym{X}{\Vector{}{u}{}}}\AdMsym{X}\Vector{}{v}{}.
    \end{align}
\end{subequations}

\subsection{Exponential map and auxiliary operators}
Let $\eta \in \gothg$, the \emph{exponential map}, ${\exp \AtoB{\gothg}{\grpG}}$ is defined by
\begin{equation*}
    \exp(\eta) = X \in \grpG .
\end{equation*}
In the context of matrix Lie group, the exponential map is the matrix exponential. Let ${\Vector{}{u}{} \in \R^n \st \Vector{}{u}{}^{\wedge} \in \gothg}$, then the exponential map for matrix Lie groups is defined by
\begin{equation*}
    \exp(\Vector{}{u}{}^{\wedge}) = \mathbf{X} \in \grpG .
\end{equation*}
The exponential map is analytic, and hence the inverse map, the \emph{logarithmic map}, ${\log \AtoB{\grpG}{\gothg}}$ is defined by
\begin{equation*}
    \log(X) = \eta \in \gothg .
\end{equation*}
In the context of matrix Lie group, the logarithmic map is the matrix logarithm:
\begin{equation*}
    \log(\mathbf{X}) = \Vector{}{u}{}^{\wedge} \in \gothg .
\end{equation*}

For a n-dimensional matrix Lie group, the exponential map is written by the ordinary series expansion: 
\begin{equation*}
    \exp(\Vector{}{u}{}^{\wedge}) \coloneqq \sum_{k=0}^{\infty}\frac{1}{k!}\left(\Vector{}{u}{}^{\wedge}\right)^k.
\end{equation*}
Define the auxiliary linear operator~\cite{Bloesch2013StateIMU} ${\mathbf{\Gamma}_m \AtoB{\R^n}{\R^n}}$:
\begin{equation}\label{math_Gamma_aux}
    \mathbf{\Gamma}_m(\Vector{}{u}{}) \coloneqq \sum_{k=0}^{\infty}\frac{1}{(k+m)!}(\adMsym{\Vector{}{u}{}})^k.
\end{equation}
In particular, $\mathbf{\Gamma}_0$ relates to the matrix exponential and corresponds to the Adjoint matrix for the Lie group $\grpG$:
\begin{equation*}
    \mathbf{\Gamma}_0(\Vector{}{u}{}) \coloneqq \exp\left(\adMsym{\Vector{}{u}{}}\right) = \AdMsym{\exp\left(\Vector{}{u}{}^{\wedge}\right)}{}.
\end{equation*}
$\mathbf{\Gamma}_1 $ is referred as to the \emph{left Jacobian}:
\begin{align*}
    \mathbf{\Gamma}_1(\Vector{}{u}{}) \coloneqq \mathbf{J}_{L}(\Vector{}{u}{}).
\end{align*}
The left Jacobian relates to the first derivative of the exponential map. In particular, by algebraic manipulation of the series in \cref{math_Gamma_aux}, one has ${\mathbf{\Gamma}_0(\Vector{}{u}{}) = \eye + \adMsym{\Vector{}{u}{}}\mathbf{\Gamma}_{1}(\Vector{}{u}{})}$.
The \emph{right Jacobian} is defined as
\begin{equation*}
    \mathbf{J}_{R}(\Vector{}{u}{}) \coloneqq \mathbf{J}_{L}(-\Vector{}{u}{}) = \mathbf{\Gamma}_1(-\Vector{}{u}{}).
\end{equation*}
Moreover, it can be shown that the following relation between the left and the right Jacobian holds.
\begin{equation*}
    \AdMsym{\exp\left(\Vector{}{u}{}^{\wedge}\right)}\mathbf{J}_{R}(\Vector{}{u}{}) = \mathbf{J}_{L}(\Vector{}{u}{})
\end{equation*}
Generally, every $\mathbf{\Gamma}_n$ for $n > 2$ relates to the higher-order derivatives of the exponential map.
Let $\Vector{}{u}{},\; \Vector{}{v}{} \in \R^n$, for the linear operator $\mathbf{\Gamma}_m$ the following relation holds:
\begin{equation*}
    \mathbf{\Gamma}_m(\mathbf{\Gamma}_0(\Vector{}{u}{})\Vector{}{v}{}) = \mathbf{\Gamma}_0(\Vector{}{u}{})\mathbf{\Gamma}_m(\Vector{}{v}{})\mathbf{\Gamma}_0(\Vector{}{u}{})^{-1} .
\end{equation*}
Note that every operator $\mathbf{\Gamma}_m$ relates to the higher order $\mathbf{\Gamma}_{m+1}$. By algebraic manipulation of the series, the following relation is obtained
\begin{equation*}
    \mathbf{\Gamma}_m(\Vector{}{u}{}) \coloneqq \frac{1}{m!}\eye_n + \adMsym{\Vector{}{u}{}}\mathbf{\Gamma}_{m+1}(\Vector{}{u}{}) .
\end{equation*}
Finally, let ${\Vector{}{u}{},\; \Vector{}{\delta}{u} \in \R^n \st \norm{\Vector{}{\delta}{u}} << \norm{\Vector{}{u}{}}}$, then by expanding the auxiliary operator $\mathbf{\Gamma}_m$ in Taylor series we obtain
\begin{equation*}
    \mathbf{\Gamma}_m(\Vector{}{u}{} +  \Vector{}{\delta}{u}) \simeq \exp((\mathbf{\Gamma}_{m+1}(\Vector{}{u}{})\Vector{}{\delta}{u})^{\wedge})\mathbf{\Gamma}_m(\Vector{}{u}{}) .
\end{equation*}
With $m = 0$, this formula is expanded in the following useful identities:
\begin{align*}
    &\exp\left(\left(\Vector{}{u}{} + \Vector{}{\delta}{u}\right)^{\wedge}\right) \simeq \exp\left(\Vector{}{u}{}^{\wedge}\right)\exp\left(\left(\mathbf{J}_{R}(\Vector{}{u}{})\Vector{}{\delta}{u}\right)^{\wedge}\right) \simeq \exp\left(\left(\mathbf{J}_{L}(\Vector{}{u}{})\Vector{}{\delta}{u}\right)^{\wedge}\right)\exp\left(\Vector{}{u}{}^{\wedge}\right),\\
    &\exp\left(\Vector{}{u}{}^{\wedge}\right)\exp\left(\Vector{}{\delta}{u}^{\wedge}\right) \simeq \exp\left(\left(\Vector{}{u}{} + \mathbf{J}_{R}^{-1}(\Vector{}{u}{})\Vector{}{\delta}{u}\right)^{\wedge}\right),\\
    &\exp\left(\Vector{}{\delta}{u}^{\wedge}\right)\exp\left(\Vector{}{u}{}^{\wedge}\right) \simeq \exp\left(\left(\Vector{}{u}{} + \mathbf{J}_{L}^{-1}(\Vector{}{u}{})\Vector{}{\delta}{u}\right)^{\wedge}\right).
\end{align*}

\subsection{Group actions and homogeneous spaces}\label{math_action}
Lie groups are often referred to as transformation groups. A Lie group is said to \emph{act} on a differentiable manifold to apply a specific transformation. In particular, a \emph{right (left) group action} of a Lie group $\grpG$ on a differentiable manifold $\calM$ is a smooth map ${\phi \AtoB{\grpG\times \calM}{\calM}}$ that satisfies 
\begin{align*}
    &\phi\left(I, \xi\right) = \xi ,\\
    &\phi\left(X, \phi\left(Y, \xi\right)\right) = \phi\left(Y \circ X, \xi\right) \quad \text{(right group action)},\\
    &\phi\left(X, \phi\left(Y, \xi\right)\right) = \phi\left(X \circ Y, \xi\right) \quad \text{(left group action)},\\
\end{align*} 
for any $X,Y \in \grpG$ and $\xi \in \calM$.
A right (left) group action $\phi$ induces a family of diffeomorphisms (differentiable maps between manifolds that have a differentiable inverse) ${\phi_X \AtoB{\calM}{\calM}}$ and smooth nonlinear projections ${\phi_{\xi} \AtoB{\grpG}{\calM}}$. The group action $\phi$ is said to be transitive if the induced projection $\phi_{\xi}$ is surjective, and then, in this case, $\calM$ is termed a \emph{homogeneous space} of $\grpG$.

Any right (left) group action ${\phi \AtoB{\grpG\times \calM}{\calM}}$ induces a right (left) group action ${\Phi \AtoB{\grpG \times \mathfrak{X}\left(\calM\right)}{\mathfrak{X}\left(\calM\right)}}$ on the set of all smooth vector fields over $\calM$:
\begin{equation}\label{math_induced_action}
    \Phi\left(X,f\right) = \td\phi_{X} \circ f \circ \phi_{X^{-1}} ,
\end{equation}
for each $f \in \mathfrak{X}\left(\calM\right)$, $\phi$-invariant vector field and $X \in \grpG$.

\subsection{Semi-direct product groups}\label{math_sdp_sec}
A semi-direct product group $\mathbf{G} \ltimes \mathbf{H}$ can be seen as a generalization of the direct product group $\grpG \times \grpH$ where the underlying set is given by the cartesian product of two groups $\grpG$ and $\grpH$, but, contrary to the direct product, in the semi-direct product, the group multiplication is defined with the group $\grpG$ that acts on a group $\grpH$ by a group homomorphism. The semi-direct product group $\mathbf{G} \ltimes \mathbf{H}$ with the trivial automorphism corresponds to the direct product group $\grpG \times \grpH$. Specifically, let $X = (A,a), Y = (B,b) \in \grpG \ltimes \grpH$ the semi-direct product rule is defined by $X \circ Y = (A \circ B, a \circ \varphi_{A}(b))$, with $\varphi \AtoB{\grpG}{\text{Aut}(\grpH)}$ a group homomorphism.

In this dissertation, we will discuss the use of semi-direct product groups in the form of $\sdpgrpG := \mathbf{G} \ltimes \gothg$ where $\gothg$ is the Lie algebra of $\grpG$ or a Lie sub algebra of $\grpG$. In particular, when $\gothg$ is the Lie algebra of $\grpG$, the group $\sdpgrpG := \mathbf{G} \ltimes \gothg$ is termed \emph{tangent group of $\grpG$}.

Let $\grpG$ be a $n$-dimensional Lie group, let ${A,B \in \grpG}$, and ${a,b \in \gothg}$. Define ${X = \left(A, a\right)}$ and ${Y = \left(B, b\right)}$ as elements of the group $\sdpgrpG$. The identity element is defined as ${\left(I, 0\right)}$. The inverse element is defined as 
\begin{equation*}
    X^{-1} = \left(A^{-1}, -\Adsym{A^{-1}}{a}\right).
\end{equation*} 
Group multiplication is defined as the semi-direct product, and hence, left and right translation are respectively defined by
\begin{align}
    \textrm{L}_{X}\left(Y\right) = \left(A \circ B, a + \Adsym{A}{b}\right) ,\label{math_sdp_rule}\\
    \textrm{R}_{X}\left(Y\right) = \left(B \circ A, b + \Adsym{B}{a}\right) .
\end{align}
The differentials of the left and right translations in the direction ${\eta = \left(\gamma, \zeta\right) \in \gothg \oplus \gothg}$ for the semi-direct product are written
\begin{align*}
    &\td \textrm{L}_{X}[\eta] = (A\gamma, \Adsym{A}{\zeta}),\\
    &\td \textrm{R}_{X}[\eta] = (\gamma A, \zeta + \adsym{\gamma}{a}).
\end{align*}
The Adjoint map $\textrm{Ad}_{X}$ can be then computed as follows:
\begin{align*}
    \Adsym{X}{\left(\gamma, \zeta\right)} &= \td \textrm{L}_{X} \td \textrm{R}_{X^{-1}}\left[\left(\gamma, \zeta\right)\right]\\
    &= \left(\Adsym{A}{\gamma}, \Adsym{A}{\zeta} - \Adsym{A}{\adsym{\gamma}{\Adsym{A^{-1}}{a}}}\right)\\
    &= \left(\Adsym{A}{\gamma}, \Adsym{A}{\zeta} - \adsym{\Adsym{A}{\gamma}}{a}\right)\\
    &= \left(\Adsym{A}{\gamma}, \Adsym{A}{\zeta} + \adsym{a}{\Adsym{A}{\gamma}}\right),
\end{align*}
where we have exploited the adjoint commutation formula in \cref{math_adj_comm}. Moreover, let ${\kappa = \left(\alpha,\beta\right) \in \gothg \oplus \gothg}$, the adjoint map $\textrm{ad}_{\eta}$ can be computed as
\begin{align*}
    \adsym{\eta}{\kappa} &= \left(\adsym{\gamma}{\alpha}, \adsym{\gamma}{\beta} + \adsym{\zeta}{\alpha}\right).
\end{align*}

In the context of matrix Lie groups, the same rules apply. Moreover, let $\grpG$ be a $n$-dimensional matrix Lie group, let $\Vector{}{\gamma}{}, \Vector{}{\zeta}{} \in \R^{n}$, the explicit matrix representation of the semi-direct product group $\sdpgrpG$ and its Lie algebra are derived from the semi-direct product rule, and are written as
\begin{align*}
    &\Vector{}{x}{}^{\wedge} = \begin{bmatrix}
        \adMsym{\Vector{}{\gamma}{}} & \Vector{}{\zeta}{}\\
        \mathbf{0}_{1 \times n} & 0
    \end{bmatrix} && 
    X = \begin{bmatrix}
        \AdMsym{A} & \Vector{}{a}{}\\
        \mathbf{0}_{1 \times n} & 1
    \end{bmatrix}
\end{align*}
Specifically, the closed-form analytic solutions of the semi-direct product group exponential and logarithm can be written as
\begin{align*}
    X &= \exp_{\sdpgrpG}\left(\Vector{}{x}{}^{\wedge}\right) = \begin{bmatrix}
        \AdMsym{\exp_{\grpG}\left(\Vector{}{\gamma}{}^{\wedge}\right)} & \mathbf{\Gamma}_1\left(\Vector{}{\gamma}{}\right)\Vector{}{\zeta}{}\\
        \mathbf{0}_{1 \times n} & 1
    \end{bmatrix} \in \sdpgrpG \subset \R^{(n + 1)\times (n + 1)}\\
    \Vector{}{x}{}^{\wedge} &= \log_{\sdpgrpG}\left(X\right) = \begin{bmatrix}
        \adMsym{\log_{\grpG}\left(A\right)} & \mathbf{\Gamma}_{1}^{-1}\left(\log_{\grpG}\left(A\right)^{\vee}\right)\Vector{}{a}{}\\
        \mathbf{0}_{1 \times n} & 0
    \end{bmatrix} \in \sdpgrpG \subset \R^{(n + 1)\times (n + 1)}
\end{align*}
where ${\mathbf{\Gamma}_1}$ is the left Jacobian for $\grpG$.

\section{Important matrix Lie groups}

The following subsections dive into the algebraic properties of matrix Lie groups, playing a crucial role in robotic systems, specifically in the context of \aclp{ins}.

\subsection{Special orthogonal group $\SO(3)$}
The \emph{special orthogonal group} $\SO(3)$ is the Lie group of 3D rotations in space. Let $A \in \SO(3)$ represent an element of the Lie group. Let ${\Vector{}{\omega}{}, \Vector{}{v}{} \in \R^3}$, then the special orthogonal group and its Lie algebra are defined as follows:
\begin{align*}
    \SO(3) &= \set{\mathbf{A} \in \R^{3\times3}}{
    \mathbf{A} \mathbf{A}^\top = \mathbf{I}_3, \; \det(\mathbf{A} ) = 1}, \\
    \so(3) &= \set{\Vector{}{\omega}{}^{\wedge} \in \R^{3\times3}}{
    \Vector{}{\omega}{}^{\wedge} = - {\Vector{}{\omega}{}^{\wedge}}^\top, \; \Vector{}{\omega}{}^{\wedge}\Vector{}{v}{} = \Vector{}{\omega}{} \times \Vector{}{v}{}},
\end{align*}
where $\times$ represents the cross-product. 
Let ${\Vector{}{\omega}{} = \left(\omega_x, \omega_y, \omega_z\right) \in \R^3}$, then $\Vector{}{\omega}{}^{\wedge} \in \R^{3 \times 3}$ is explicitly written 
\begin{equation*}\
    \Vector{}{\omega}{}^{\wedge} = 
    \begin{bmatrix}
        0 & -\omega_z & \omega_y\\
        \omega_z & 0 & -\omega_x\\
        -\omega_y & \omega_x & 0
    \end{bmatrix} .
\end{equation*}
Note that for $\Vector{}{\omega}{}^{\wedge} \in \so(3)$, the following relation holds:
\begin{equation*}
    \Vector{}{\omega}{}^{\wedge}\Vector{}{\omega}{}^{\wedge} = \Vector{}{\omega}{}\Vector{}{\omega}{}^{\top} - \Vector{}{\omega}{}^{\top}\Vector{}{\omega}{}\eye_3 .
\end{equation*}
Let $\mathbf{A} \in \SO(3)$ be the matrix representation of $A$, the inverse element is written $A^{-1} = \mathbf{A}^{\top}$. In matrix representation, the identity element is the $\eye_3$ matrix. Let ${\Vector{}{\omega}{} \in \R^3 \st \Vector{}{\omega}{}^{\wedge} \in \so(3)}$, the Adjoint map for $\SO(3)$ and the related Adjoint matrix are written
\begin{align*}
    &\Adsym{A}{\Vector{}{\omega}{}^{\wedge}} \coloneqq \mathbf{A}\Vector{}{\omega}{}^{\wedge}\mathbf{A}^{\top} = (\mathbf{A}\Vector{}{\omega}{})^{\wedge}, && \AdMsym{A} \coloneqq \mathbf{A}.
\end{align*}
Similarly, let ${\Vector{}{\omega}{} \in \R^3 \st \Vector{}{\omega}{}^{\wedge} \in \so(3)}$, and ${\Vector{}{v}{} \in \R^3 \st \Vector{}{v}{}^{\wedge} \in \so(3)}$ the adjoint map for $\so(3)$ and the adjoint matrix are written
\begin{align*}
    &\adsym{\Vector{}{\omega}{}^{\wedge}}{\Vector{}{v}{}^{\wedge}} \coloneqq \left[\Vector{}{\omega}{}^{\wedge}, \Vector{}{v}{}^{\wedge}\right], && \adMsym{\Vector{}{\omega}{}} \coloneqq \Vector{}{\omega}{}^{\wedge}.
\end{align*}
The auxiliary functions $\mathbf{\Gamma}_m$ previously introduced, are given in closed form for $\SO(3)$:
\begin{align*}
    &\mathbf{\Gamma}_0(\Vector{}{\omega}{}) = \exp(\Vector{}{\omega}{}^{\wedge}) = \eye_3 + \frac{\sin(\norm{\Vector{}{\omega}{}})}{\norm{\Vector{}{\omega}{}}}\Vector{}{\omega}{}^{\wedge} + \frac{1 - \cos(\norm{\Vector{}{\omega}{}})}{\norm{\Vector{}{\omega}{}}^{2}}(\Vector{}{\omega}{}^{\wedge})^2 ,\\
    &\mathbf{\Gamma}_1(\Vector{}{\omega}{}) = \mathbf{J}_{L}(\Vector{}{\omega}{}) = \eye_3 + \frac{1 - \cos(\norm{\Vector{}{\omega}{}})}{\norm{\Vector{}{\omega}{}}^{2}}\Vector{}{\omega}{}^{\wedge} + \frac{\norm{\Vector{}{\omega}{}} - \sin(\norm{\Vector{}{\omega}{}})}{\norm{\Vector{}{\omega}{}}^{3}}(\Vector{}{\omega}{}^{\wedge})^{2} ,\\
    &\mathbf{\Gamma}_2(\Vector{}{\omega}{}) = \frac{1}{2}\eye_3 + \frac{\norm{\Vector{}{\omega}{}} - \sin(\norm{\Vector{}{\omega}{}})}{\norm{\Vector{}{\omega}{}}^{3}}\Vector{}{\omega}{}^{\wedge} + \frac{\norm{\Vector{}{\omega}{}}^{2} + 2\cos(\norm{\Vector{}{\omega}{}}) - 2}{2\norm{\Vector{}{\omega}{}}^{4}}(\Vector{}{\omega}{}^{\wedge})^{2}
\end{align*}

\subsection{Scaled orthogonal transform group $\SOT(3)$}
The \emph{scaled orthogonal transform group} $\SOT(3)$~\cite{VanGoor2020AnEquivariance, Mahony2020EquivariantWild, vanGoor2021AnOdometry, vanGoor2023EqVIO:Odometry, VanGoor2023EquivariantAwareness} is the group of 3D rotation and scale in space. Let $X \in \SOT(3)$ represent an element of the Lie group. Let $\R^+$ be the set of strictly positive real numbers, then the scaled orthogonal transform group and its Lie algebra are defined as follows:
\begin{align*}
    \SOT(3) &= \set{
    a\mathbf{A} \in \R^{3\times3}}{
    \mathbf{A} \in \SO(3), \; a \in \R^+}, \\
    \sot(3) &= \set{
    \Vector{}{\omega}{}^{\wedge} + \alpha\eye_3 \in \R^{3\times3}}{
    \Vector{}{\omega}{}^{\wedge} \in \so(3), \; \alpha \in \R},
\end{align*}
Let ${X = (A, a) \in \SOT(3)}$, the inverse element is written $X^{-1} = (A^{-1}, \frac{1}{a})$. The identity element is $I = (I, 1)$, in matrix representation, is the $\eye_4$ matrix. Let $\mathbf{A}$ be the matrix representation of $\SO(3)$, and ${\Vector{}{x}{} = (\Vector{}{\omega}{}, \alpha) \in \R^4 \st \Vector{}{x}{}^{\wedge} \in \sot(3)}$, the Adjoint map for $\SOT(3)$ and the Adjoint matrix are written
\begin{align*}
    &\Adsym{X}{\Vector{}{x}{}^{\wedge}} \coloneqq \mathbf{A}\Vector{}{\omega}{}^{\wedge}\mathbf{A}^{\top} + \alpha\eye_3, && \AdMsym{X} \coloneqq 
    \begin{bmatrix}
        \mathbf{A} & 0\\
        \Vector{}{0}{1\times3} & 1
    \end{bmatrix}.
\end{align*}
Similarly, let ${\Vector{}{x}{} = (\Vector{}{\omega}{}, \alpha) \in \R^4 \st \Vector{}{x}{}^{\wedge} \in \sot(3)}$, and ${\Vector{}{y}{} = (\Vector{}{\nu}{}, \beta) \in \R^4 \st \Vector{}{y}{}^{\wedge} \in \sot(3)}$ the adjoint map for $\sot(3)$ and the adjoint matrix are written
\begin{align*}
\begin{split}
\adsym{\Vector{}{x}{}^{\wedge}}{\Vector{}{y}{}^{\wedge}} &\coloneqq \left[\Vector{}{x}{}^{\wedge}, \Vector{}{y}{}^{\wedge}\right]\\
&= (\Vector{}{\omega}{}^{\wedge}\Vector{}{\nu}{}^{\wedge} + \alpha\beta - \Vector{}{\nu}{}^{\wedge}\Vector{}{\omega}{}^{\wedge} - \alpha\beta)\\
&=\left[\Vector{}{\omega}{}^{\wedge}, \Vector{}{\nu}{}^{\wedge}\right] + 0\eye_3 = \left[\Vector{}{\omega}{}^{\wedge}, \Vector{}{\nu}{}^{\wedge}\right],
\end{split} && \adMsym{\Vector{}{x}{}} = 
    \begin{bmatrix}
        \Vector{}{\omega}{}^{\wedge} & 0\\
        \Vector{}{0}{1\times3} & 0
    \end{bmatrix}.
\end{align*}
The exponential map and the auxiliary functions $\mathbf{\Gamma}_m$, are given in closed form for $\SOT(3)$:
\begin{align*}
    \exp(\Vector{}{x}{}^{\wedge}) &= \exp(\alpha)\exp(\Vector{}{\omega}{}^{\wedge}) ,\\
    \mathbf{\Gamma}_m(\Vector{}{x}{}) &= 
    \begin{bmatrix}
        \mathbf{\Gamma}_m(\Vector{}{\omega}{}) & 0\\
        \Vector{}{0}{1\times3} & 1
    \end{bmatrix} .
\end{align*}

\subsection{Special euclidean group $\SE(3)$}
The \emph{special Euclidean group} $\SE(3)$ is the group of 3D rigid body transformation in space. Note that $\SE(3)$ is defined as a semi-direct product group, specifically, $\SE(3) \coloneq \SO(3) \ltimes \R^6$. Let $X \in \SE(3)$ represent an element of the Lie group. The special Euclidean group and its Lie algebra are defined as follows:
\begin{align*}
    \SE(3) &= \set{
    \begin{bmatrix}
        \mathbf{A} & \Vector{}{b}{} \\ \mathbf{0}_{1\times 3} & 1
    \end{bmatrix}
    \in \R^{4\times4}}{
    \mathbf{A} \in \SO(3), \; \Vector{}{b}{} \in \R^3}, \\
    \se(3) &= \set{
    \begin{bmatrix} \Vector{}{\omega}{}\\ \Vector{}{w}{} \end{bmatrix}^{\wedge} = 
    \begin{bmatrix}
        \Vector{}{\omega}{}^{\wedge} & \Vector{}{w}{} \\ \mathbf{0}_{1\times 3} & 0
    \end{bmatrix}
    \in \R^{4\times4}}{
    \Vector{}{\omega}{}^{\wedge} \in \so(3), \; \Vector{}{w}{} \in \R^3},
\end{align*}
Let ${X = (A, b) \in \SE(3)}$, the inverse element is written $X^{-1} = (A^{-1}, -A^{-1}b)$; which, in their matrix representation, are written
\begin{align*}
    \mathbf{X} = 
    \begin{bmatrix}
        \mathbf{A} & \Vector{}{b}{} \\ \mathbf{0}_{1\times 3} & 1
    \end{bmatrix}, &&
    \mathbf{X}^{-1} = 
    \begin{bmatrix}
        \mathbf{A}^{\top} & -\mathbf{A}^{\top}\Vector{}{b}{} \\ \mathbf{0}_{1\times 3} & 1
    \end{bmatrix}.
\end{align*}
The identity element is given by $I = (I, 0)$, and its matrix representation is given by the $\eye_4$ matrix. Specifically, $\mathbf{A} = \eye_3$ and $\Vector{}{b}{} = \Vector{}{0}{}$.
Let ${\Vector{}{x}{} \in \R^6 \st \Vector{}{x}{}^{\wedge} \in \se(3)}$, the Adjoint map for $\SE(3)$ and the Adjoint matrix are written
\begin{align*}
    &\Adsym{X}{\Vector{}{x}{}^{\wedge}} \coloneqq \mathbf{X}\Vector{}{x}{}^{\wedge}\mathbf{X}^{-1}, && \AdMsym{X} \coloneqq 
    \begin{bmatrix}
        \mathbf{A} & \Vector{}{0}{3\times3}\\
        \Vector{}{b}{}^{\wedge}\mathbf{A} & \mathbf{A}
    \end{bmatrix}.
\end{align*}
Similarly, let ${\Vector{}{x}{} = (\Vector{}{\omega}{}, \Vector{}{w}{}) \in \R^6 \st \Vector{}{x}{}^{\wedge} \in \se(3)}$, and ${\Vector{}{y}{} = (\Vector{}{\nu}{}, \Vector{}{v}{}) \in \R^6 \st \Vector{}{y}{}^{\wedge} \in \se(3)}$ the adjoint map for $\se(3)$ and the adjoint matrix are written
\begin{align*}
    &\adsym{\Vector{}{x}{}^{\wedge}}{\Vector{}{y}{}^{\wedge}} \coloneqq \left[\Vector{}{x}{}^{\wedge}, \Vector{}{y}{}^{\wedge}\right], && \adMsym{\Vector{}{x}{}} \coloneqq 
    \begin{bmatrix}
        \Vector{}{\omega}{}^{\wedge} & \Vector{}{0}{3\times3}\\
        \Vector{}{w}{}^{\wedge} & \Vector{}{\omega}{}^{\wedge}
    \end{bmatrix}.
\end{align*}
The exponential map and the left Jacobian $\mathbf{\Gamma}_1$, are given in closed form for $\SE(3)$:
\begin{align*}
    \exp(\Vector{}{x}{}^{\wedge}) &= 
    \begin{bmatrix}
        \exp(\Vector{}{\omega}{}^{\wedge}) & \mathbf{\Gamma}_1(\Vector{}{\omega}{})\Vector{}{b}{}\\
        \Vector{}{0}{1\times3} & 1
    \end{bmatrix} ,\\
    \mathbf{\Gamma}_1(\Vector{}{x}{}) = \mathbf{J}_L(\Vector{}{x}{}) &= 
    \begin{bmatrix}
        \mathbf{\Gamma}_1(\Vector{}{\omega}{}) & \Vector{}{0}{3\times3}\\
        \mathbf{Q}_1(\Vector{}{\omega}{}, \Vector{}{w}{}) & \mathbf{\Gamma}_1(\Vector{}{\omega}{})
    \end{bmatrix} ,
\end{align*}
with
\begin{align*}
    \mathbf{Q}_1(\Vector{}{\omega}{}, \Vector{}{w}{}) &\coloneqq \sum_{k=0}^{\infty}\sum_{p=0}^{\infty}(\Vector{}{\omega}{}^{\wedge})^{k}\Vector{}{w}{}^{\wedge}(\Vector{}{\omega}{}^{\wedge})^{p} =\\
    &= \frac{1}{2}\Vector{}{w}{}^{\wedge} + \left(\frac{\norm{\Vector{}{\omega}{}} - \sin(\norm{\Vector{}{\omega}{}})}{\norm{\Vector{}{\omega}{}}^{3}}\right)\left(\Vector{}{\omega}{}^{\wedge}\Vector{}{w}{}^{\wedge} + \Vector{}{w}{}^{\wedge}\Vector{}{\omega}{}^{\wedge} + \Vector{}{\omega}{}^{\wedge}\Vector{}{w}{}^{\wedge}\Vector{}{\omega}{}^{\wedge}\right) +\\
    &+ \left(\frac{\norm{\Vector{}{\omega}{}}^{2} + 2\cos(\norm{\Vector{}{\omega}{}}) - 2}{2\norm{\Vector{}{\omega}{}}^{4}}\right)\left(\Vector{}{\omega}{}^{\wedge}\Vector{}{\omega}{}^{\wedge}\Vector{}{w}{}^{\wedge} + \Vector{}{w}{}^{\wedge}\Vector{}{\omega}{}^{\wedge}\Vector{}{\omega}{}^{\wedge} - 3\Vector{}{\omega}{}^{\wedge}\Vector{}{w}{}^{\wedge}\Vector{}{\omega}{}^{\wedge}\right) +\\
    &+ \left(\frac{2\norm{\Vector{}{\omega}{}} - 3\sin(\norm{\Vector{}{\omega}{}}) + \norm{\Vector{}{\omega}{}}\cos(\norm{\Vector{}{\omega}{}})}{2\norm{\Vector{}{\omega}{}}^{5}}\right)\left(\Vector{}{\omega}{}^{\wedge}\Vector{}{w}{}^{\wedge}\Vector{}{\omega}{}^{\wedge}\Vector{}{\omega}{}^{\wedge} + \Vector{}{\omega}{}^{\wedge}\Vector{}{\omega}{}^{\wedge}\Vector{}{w}{}^{\wedge}\Vector{}{\omega}{}^{\wedge}\right)  .
\end{align*}
For a detailed derivation of the left Jacobian for $\SE(3)$, we refer the reader to~\cite{Barfoot2017StateRobotics}.

\subsection{Extended special Euclidean group $\SE_2(3)$}\label{math_se23_sec}
The \emph{extended special Euclidean group} $\SE_2(3)$~\cite{barrau:tel-01247723, 7523335, Barrau2020APreintegration, Brossard2021AssociatingEarth} is an extension of the special Euclidean group. Let $X \in \SE_2(3)$ represent an element of the Lie group. The extended special Euclidean group and its Lie algebra are defined as follows:
\begin{align*}
    \SE_2(3) &= \set{
    \begin{bmatrix}
        \mathbf{A} & \Vector{}{a}{} & \Vector{}{b}{} \\ 
        \mathbf{0}_{1\times 3} & 1 & 0\\
        \mathbf{0}_{1\times 3} & 0 & 1
    \end{bmatrix}
    \in \R^{5\times5}}{
    \mathbf{A} \in \SO(3), \; \Vector{}{a}{}, \Vector{}{b}{} \in \R^{3}}, \\
    \se_2(3) &= \set{
    \begin{bmatrix} \Vector{}{\omega}{}\\ \Vector{}{v}{}\\ \Vector{}{w}{} \end{bmatrix}^{\wedge} = 
    \begin{bmatrix}
        \Vector{}{\omega}{}^{\wedge} & \Vector{}{v}{} & \Vector{}{w}{} \\ \mathbf{0}_{1\times 3} & 0 & 0\\
        \mathbf{0}_{1\times 3} & 0 & 0
    \end{bmatrix}
    \in \R^{5\times5}}{
    \Vector{}{\omega}{}^{\wedge} \in \so(3), \; \Vector{}{v}{}, \Vector{}{w}{} \in \R^{3}}.
\end{align*}
$\SE_2(3)$ is an extension of $\SE(3)$ and hence all the identities previously introduced for $\SE(3)$ are extended as follows. 
Let ${X = (A, a, b) \in \SE_2(3)}$, the inverse element is written $X^{-1} = (A^{-1}, -A^{-1}a, -A^{-1}b)$; which, in their matrix representation, are written
\begin{align*}
    \mathbf{X} = 
    \begin{bmatrix}
        \mathbf{A} & \Vector{}{a}{} & \Vector{}{b}{} \\ 
        \mathbf{0}_{1\times 3} & 1 & 0\\
        \mathbf{0}_{1\times 3} & 0 & 1
    \end{bmatrix}, &&
    \mathbf{X}^{-1} = 
    \begin{bmatrix}
        \mathbf{A}^{\top} & -\mathbf{A}^{\top}\Vector{}{a}{} & -\mathbf{A}^{\top}\Vector{}{b}{} \\ 
        \mathbf{0}_{1\times 3} & 1 & 0\\
        \mathbf{0}_{1\times 3} & 0 & 1
    \end{bmatrix}.
\end{align*}
The identity element is given by $I = (I, 0, 0)$, and its matrix representation is given by the $\eye_5$ matrix. Specifically, $\mathbf{A} = \eye_3$ and ${\Vector{}{a}{} = \Vector{}{b}{} = \Vector{}{0}{}}$.
Let  ${\Vector{}{x}{} \in \R^9 \st \Vector{}{x}{}^{\wedge} \in \se_2(3)}$, the Adjoint map for $\SE_2(3)$ and the Adjoint matrix are written
\begin{align*}
    &\Adsym{X}{\Vector{}{x}{}^{\wedge}} \coloneqq \mathbf{X}\Vector{}{x}{}^{\wedge}\mathbf{X}^{-1}, && \AdMsym{X} \coloneqq 
    \begin{bmatrix}
        \mathbf{A} & \Vector{}{0}{3\times3} & \Vector{}{0}{3\times3}\\
        \Vector{}{a}{}^{\wedge}\mathbf{A} & \mathbf{A} & \Vector{}{0}{3\times3}\\
        \Vector{}{b}{}^{\wedge}\mathbf{A} & \Vector{}{0}{3\times3} & \mathbf{A}
    \end{bmatrix}.
\end{align*}
Let ${\Vector{}{x}{} = (\Vector{}{\omega}{}, \Vector{}{v}{}, \Vector{}{w}{}) \in \R^9 \st \Vector{}{x}{}^{\wedge} \in \se_2(3)}$, and ${\Vector{}{y}{} = (\Vector{}{\nu}{}, \Vector{}{u}{}, \Vector{}{q}{}) \in \R^9 \st \Vector{}{y}{}^{\wedge} \in \se_2(3)}$ the adjoint map for $\se_2(3)$ and the adjoint matrix are written
\begin{align*}
    &\adsym{\Vector{}{x}{}^{\wedge}}{\Vector{}{y}{}^{\wedge}} \coloneqq \left[\Vector{}{x}{}^{\wedge}, \Vector{}{y}{}^{\wedge}\right], && \adMsym{\Vector{}{x}{}} \coloneqq 
    \begin{bmatrix}
        \Vector{}{\omega}{}^{\wedge} & \Vector{}{0}{3\times3} & \Vector{}{0}{3\times3}\\
        \Vector{}{v}{}^{\wedge} & \Vector{}{\omega}{}^{\wedge} & \Vector{}{0}{3\times3}\\
        \Vector{}{w}{}^{\wedge} & \Vector{}{0}{3\times3} & \Vector{}{\omega}{}^{\wedge}
    \end{bmatrix}.
\end{align*}
The exponential map and the left Jacobian $\mathbf{\Gamma}_1$, are given in closed form for $\SE_2(3)$:
\begin{align*}
    \exp(\Vector{}{x}{}^{\wedge}) &= 
    \begin{bmatrix}
        \exp(\Vector{}{\omega}{}^{\wedge}) & \mathbf{\Gamma}_1(\Vector{}{\omega}{})\Vector{}{a}{} & \mathbf{\Gamma}_1(\Vector{}{\omega}{})\Vector{}{b}{}\\
        \Vector{}{0}{1\times3} & 1 & 0\\
        \Vector{}{0}{1\times3} & 0 & 1
    \end{bmatrix} ,\\
    \mathbf{\Gamma}_1(\Vector{}{x}{}) = \mathbf{J}_L(\Vector{}{x}{}) &= 
    \begin{bmatrix}
        \mathbf{\Gamma}_1(\Vector{}{\omega}{}) & \Vector{}{0}{3\times3} & \Vector{}{0}{3\times3}\\
        \mathbf{Q}(\Vector{}{\omega}{}, \Vector{}{v}{}) & \mathbf{\Gamma}_1(\Vector{}{\omega}{}) & \Vector{}{0}{3\times3}\\
        \mathbf{Q}(\Vector{}{\omega}{}, \Vector{}{w}{}) & \Vector{}{0}{3\times3} & \mathbf{\Gamma}_1(\Vector{}{\omega}{})
    \end{bmatrix} .
\end{align*}

The Special Euclidean group can be extended to ${\SE_n(3), \forall n \in \mathbb{N}}$~\cite{barrau2015ekf}. This extension is straightforward and does not require particular treatment.

\subsection{Homogenous galilean group $\HG(3)$}
The \emph{homogenous Galilean group} $\HG(3)$~\cite{Fornasier2023EquivariantSystems} is the group of 3D rotation and relative velocity transformations. $\HG(3)$ is isomorphic with $\SE(3)$ but acts on physical velocities rather than physical translations. Let $X \in \HG(3)$ represent an element of the Lie group. The homogeneous Galilean group and its Lie algebra are defined as follows:
\begin{align*}
    \HG(3) &= \set{
    \begin{bmatrix}
        \mathbf{A} & \Vector{}{a}{} \\ \mathbf{0}_{1\times 3} & 1
    \end{bmatrix}
    \in \R^{4\times4}}{
    \mathbf{A} \in \SO(3), \; \Vector{}{a}{} \in \R^3}, \\
    \hg(3) &= \set{
    \begin{bmatrix} \Vector{}{\omega}{}\\ \Vector{}{v}{} \end{bmatrix}^{\wedge} = 
    \begin{bmatrix}
        \Vector{}{\omega}{}^{\wedge} & \Vector{}{v}{} \\ \mathbf{0}_{1\times 3} & 0
    \end{bmatrix}
    \in \R^{4\times4}}{
    \Vector{}{\omega}{}^{\wedge} \in \so(3), \; \Vector{}{v}{} \in \R^3},
\end{align*}
$\HG(3)$ is isomorphic to $\SE(3)$. Let $X = (A, a)$, and $X^{-1} = (A^{-1}, -A^{-1}a)$, all the identities previously introduced for $\SE(3)$ hold.

\subsection{Inhomogeneous Galilean group $\grpG(3)$}\label{math_G3_sec}
The \emph{inhomogeneous Galilean group} $\grpG(3)$ is the group of 3D rotations, translations in space and time, and transformations between frames of reference that differ only by constant relative motion~\cite{DeMontigny2006GalileiContractions}. The inhomogeneous Galilean group is a subgroup of the $\SIM_2(3)$ Lie group, introduced in~\cite{vanGoor2023ConstructiveNavigation}. Let $X \in \grpG(3)$ represent an abstract element of the Lie group. The inhomogeneous Galilean group and its Lie algebra are defined as follows:
\begin{align*}
    \grpG(3) &= \set{
    \begin{bmatrix}
        \mathbf{A} & \Vector{}{a}{} & \Vector{}{b}{} \\ 
        \mathbf{0}_{1\times 3} & 1 & c\\
        \mathbf{0}_{1\times 3} & 0 & 1
    \end{bmatrix}
    \in \R^{5\times5}}{
    \mathbf{A} \in \SO(3), \; \Vector{}{a}{}, \Vector{}{b}{} \in \R^{3}, c \in \R}, \\
    \gothg(3) &= \set{
    \begin{bmatrix} \Vector{}{\omega}{}\\ \Vector{}{v}{}\\ \Vector{}{w}{}\\ \alpha \end{bmatrix}^{\wedge} = 
    \begin{bmatrix}
        \Vector{}{\omega}{}^{\wedge} & \Vector{}{v}{} & \Vector{}{w}{} \\ 
        \mathbf{0}_{1\times 3} & 0 & \alpha\\
        \mathbf{0}_{1\times 3} & 0 & 0
    \end{bmatrix}
    \in \R^{5\times5}}{
    \Vector{}{\omega}{}^{\wedge} \in \so(3), \; \Vector{}{v}{}, \Vector{}{w}{} \in \R^{3},\; \alpha \in \R}.
\end{align*}
\sloppy Let ${X = (A, a, b, c) \in \grpG(3)}$, the inverse element is written ${X^{-1} = (A^{-1}, -A^{-1}a, -A^{-1}(b - ca), -c)}$; which, in their matrix representation, are written
\begin{align*}
    \mathbf{X} =
    \begin{bmatrix}
        \mathbf{A} & \Vector{}{a}{} & \Vector{}{b}{} \\ 
        \mathbf{0}_{1\times 3} & 1 & c\\
        \mathbf{0}_{1\times 3} & 0 & 1
    \end{bmatrix}, &&
    \mathbf{X}^{-1} = 
    \begin{bmatrix}
        \mathbf{A}^{\top} & -\mathbf{A}^{\top}\Vector{}{a}{} & -\mathbf{A}^{\top}(\Vector{}{b}{} - c\Vector{}{a}{}) \\ 
        \mathbf{0}_{1\times 3} & 1 & -c\\
        \mathbf{0}_{1\times 3} & 0 & 1
    \end{bmatrix}.
\end{align*}
The identity element is given by $I = (I, 0, 0, 0)$, and its matrix representation is given by the $\eye_5$ matrix. Specifically, $\mathbf{A} = \eye_3$, ${\Vector{}{a}{} = \Vector{}{b}{} = \Vector{}{0}{}}$ and $c = 0$.
Let ${\Vector{}{x}{} \in \R^{10} \st \Vector{}{x}{}^{\wedge} \in \gothg(3)}$, the Adjoint map for $\grpG(3)$ and the Adjoint matrix are written
\begin{align*}
    &\Adsym{X}{\Vector{}{x}{}^{\wedge}} \coloneqq \mathbf{X}\Vector{}{x}{}^{\wedge}\mathbf{X}^{-1}, && \AdMsym{X} \coloneqq 
    \begin{bmatrix}
        \mathbf{A} & \Vector{}{0}{3\times3} & \Vector{}{0}{3\times3} & \Vector{}{0}{3\times1}\\
        \Vector{}{a}{}^{\wedge}\mathbf{A} & \mathbf{A} & \Vector{}{0}{3\times3} & \Vector{}{0}{3\times1}\\
        (\Vector{}{b}{} - c\Vector{}{a}{})^{\wedge}\mathbf{A} & -c\mathbf{A} & \mathbf{A} & \Vector{}{a}{}\\
        \Vector{}{0}{1\times3} & \Vector{}{0}{1\times3} & \Vector{}{0}{1\times3} & 1
    \end{bmatrix}.
\end{align*}
Let ${\Vector{}{x}{} = (\Vector{}{\omega}{}, \Vector{}{v}{}, \Vector{}{w}{}, \alpha) \in \R^{10} \st \Vector{}{x}{}^{\wedge} \in \gothg(3)}$, and ${\Vector{}{y}{} = (\Vector{}{\nu}{}, \Vector{}{u}{}, \Vector{}{q}{}, \beta) \in \R^{10} \st \Vector{}{y}{}^{\wedge} \in \gothg(3)}$ the adjoint map for $\gothg(3)$ and the adjoint matrix are written
\begin{align*}
    &\adsym{\Vector{}{x}{}^{\wedge}}{\Vector{}{y}{}^{\wedge}} \coloneqq \left[\Vector{}{x}{}^{\wedge}, \Vector{}{y}{}^{\wedge}\right], && \adMsym{\Vector{}{x}{}} \coloneqq 
    \begin{bmatrix}
        \Vector{}{\omega}{}^{\wedge} & \Vector{}{0}{3\times3} & \Vector{}{0}{3\times3} & \Vector{}{0}{3\times1}\\
        \Vector{}{v}{}^{\wedge} & \Vector{}{\omega}{}^{\wedge} & \Vector{}{0}{3\times3} & \Vector{}{0}{3\times1}\\
        \Vector{}{w}{}^{\wedge} & -\alpha\eye_3 & \Vector{}{\omega}{}^{\wedge} & \Vector{}{v}{}\\
        \Vector{}{0}{1\times3} & \Vector{}{0}{1\times3} & \Vector{}{0}{1\times3} & 0
    \end{bmatrix}.
\end{align*}
The exponential map is given in closed form for $\grpG(3)$:
\begin{align*}
    &\exp(\Vector{}{x}{}^{\wedge}) = 
    \begin{bmatrix}
        \exp(\Vector{}{\omega}{}^{\wedge}) & \mathbf{\Gamma}_1(\Vector{}{\omega}{})\Vector{}{v}{} & \mathbf{\Gamma}_1(\Vector{}{\omega}{})\Vector{}{w}{} + \alpha\mathbf{\Gamma}_2(\Vector{}{\omega}{})\Vector{}{v}{}\\
        \Vector{}{0}{1\times3} & 1 & \alpha\\
        \Vector{}{0}{1\times3} & 0 & 1
    \end{bmatrix} .\\
    &\log(\mathbf{X}) =
    \begin{bmatrix}
        \log(\mathbf{A}) & \mathbf{\Gamma}_1^{-1}(\log(\mathbf{A})^{\vee})\Vector{}{a}{} & \mathbf{\Gamma}_1^{-1}(\log(\mathbf{A})^{\vee})  \mathbf{\Xi}\\
        \Vector{}{0}{1\times3} & 0 & c\\
        \Vector{}{0}{1\times3} & 0 & 0
    \end{bmatrix} ,
\end{align*}
with
\begin{equation*}
    \mathbf{\Xi} = (\Vector{}{b}{} - c\mathbf{\Gamma}_2(\log(\mathbf{A})^{\vee})\mathbf{\Gamma}_1^{-1}(\log(\mathbf{A})^{\vee})\Vector{}{a}{}).
\end{equation*}
The left Jacobian $\mathbf{\Gamma}_1$ is given in closed form for $\grpG(3)$:
\begin{equation*}
\mathbf{\Gamma}_1(\Vector{}{x}{}) = \mathbf{J}_L(\Vector{}{x}{}) = 
\begin{bmatrix}
    \mathbf{\Gamma}_1(\Vector{}{\omega}{}) & \Vector{}{0}{3\times3} & \Vector{}{0}{3\times3} & \Vector{}{0}{3\times1}\\
    \mathbf{Q}_1(\Vector{}{\omega}{}, \Vector{}{v}{}) & \mathbf{\Gamma}_1(\Vector{}{\omega}{}) & \Vector{}{0}{3\times3} & \Vector{}{0}{3\times1}\\
    \mathbf{Q}_1(\Vector{}{\omega}{}, \Vector{}{w}{})-\alpha\mathbf{Q}_2(\Vector{}{\omega}{}, \Vector{}{v}{}) & -\alpha\mathbf{U}_1(\Vector{}{\omega}{}) & \mathbf{\Gamma}_1(\Vector{}{\omega}{}) & \mathbf{\Gamma}_2(\Vector{}{\omega}{})\Vector{}{v}{}\\
    \Vector{}{0}{1\times3} & \Vector{}{0}{1\times3} & \Vector{}{0}{1\times3} & 1\\
\end{bmatrix}
\end{equation*}
with
\begin{equation*}
    \begin{split}
    \mathbf{Q}_1(\Vector{}{\omega}{}, \Vector{}{z}{}) &\coloneqq \sum_{p=0}^{\infty}\sum_{r=0}^{\infty}\frac{1}{(p+r+2)!}(\Vector{}{\omega}{}^{\wedge})^{r}\Vector{}{z}{}^{\wedge}(\Vector{}{\omega}{}^{\wedge})^{p} = \\
    & = \frac{1}{2}\Vector{}{z}{}^{\wedge} + \left(\frac{\norm{\Vector{}{\omega}{}} - \sin(\norm{\Vector{}{\omega}{}})}{\norm{\Vector{}{\omega}{}}^{3}}\right)\left(\Vector{}{\omega}{}^{\wedge}\Vector{}{z}{}^{\wedge} + \Vector{}{z}{}^{\wedge}\Vector{}{\omega}{}^{\wedge} + \Vector{}{\omega}{}^{\wedge}\Vector{}{z}{}^{\wedge}\Vector{}{\omega}{}^{\wedge}\right) + \\
    & + \left(\frac{\norm{\Vector{}{\omega}{}}^{2} + 2\cos(\norm{\Vector{}{\omega}{}}) - 2}{2\norm{\Vector{}{\omega}{}}^{4}}\right)\left(\Vector{}{\omega}{}^{\wedge}\Vector{}{\omega}{}^{\wedge}\Vector{}{z}{}^{\wedge} + \Vector{}{z}{}^{\wedge}\Vector{}{\omega}{}^{\wedge}\Vector{}{\omega}{}^{\wedge} - 3\Vector{}{\omega}{}^{\wedge}\Vector{}{z}{}^{\wedge}\Vector{}{\omega}{}^{\wedge}\right) + \\
    & + \left(\frac{2\norm{\Vector{}{\omega}{}} - 3\sin(\norm{\Vector{}{\omega}{}}) + \norm{\Vector{}{\omega}{}}\cos(\norm{\Vector{}{\omega}{}})}{2\norm{\Vector{}{\omega}{}}^{5}}\right)\left(\Vector{}{\omega}{}^{\wedge}\Vector{}{z}{}^{\wedge}\Vector{}{\omega}{}^{\wedge}\Vector{}{\omega}{}^{\wedge} + \Vector{}{\omega}{}^{\wedge}\Vector{}{\omega}{}^{\wedge}\Vector{}{z}{}^{\wedge}\Vector{}{\omega}{}^{\wedge}\right) ,
    \end{split}
\end{equation*}
\begin{equation*} 
    \begin{split}   
    \mathbf{Q}_2(\Vector{}{\omega}{}, \Vector{}{z}{}) &\coloneqq \sum_{p=0}^{\infty}\sum_{r=0}^{\infty}\frac{p+1}{(p+r+3)!}(\Vector{}{\omega}{}^{\wedge})^{r}\Vector{}{z}{}^{\wedge}(\Vector{}{\omega}{}^{\wedge})^{p} = \\
    & = \frac{1}{6}\Vector{}{z}{}^{\wedge} + \left(\frac{\norm{\Vector{}{\omega}{}}^{2} + 2\cos(\norm{\Vector{}{\omega}{}}) - 2}{2\norm{\Vector{}{\omega}{}}^{4}}\right)\Vector{}{z}{}^{\wedge}\Vector{}{\omega}{}^{\wedge} + \\
    & + \left(\frac{\norm{\Vector{}{\omega}{}}^{3} - 6\norm{\Vector{}{\omega}{}} + 6\sin(\norm{\Vector{}{\omega}{}})}{6\norm{\Vector{}{\omega}{}}^{5}}\right)\Vector{}{z}{}^{\wedge}\Vector{}{\omega}{}^{\wedge}\Vector{}{\omega}{}^{\wedge} + \\
    & + \left(\frac{-2\cos(\norm{\Vector{}{\omega}{}}) - \norm{\Vector{}{\omega}{}}\sin(\norm{\Vector{}{\omega}{}}) + 2}{\norm{\Vector{}{\omega}{}}^{4}}\right)\Vector{}{\omega}{}^{\wedge}\Vector{}{z}{}^{\wedge} + \\
    & + \left(\frac{\norm{\Vector{}{\omega}{}}^{3} + 6\norm{\Vector{}{\omega}{}}\cos(\norm{\Vector{}{\omega}{}}) + 6\norm{\Vector{}{\omega}{}} - 12\sin(\norm{\Vector{}{\omega}{}})}{6\norm{\Vector{}{\omega}{}}^{5}}\right)\Vector{}{\omega}{}^{\wedge}\Vector{}{\omega}{}^{\wedge}\Vector{}{z}{}^{\wedge} + \\
    & + \left(\frac{-3\norm{\Vector{}{\omega}{}}\cos(\norm{\Vector{}{\omega}{}}) - (\norm{\Vector{}{\omega}{}}^{2} - 3)\sin(\norm{\Vector{}{\omega}{}})}{4\norm{\Vector{}{\omega}{}}^{5}}\right)\Vector{}{\omega}{}^{\wedge}\Vector{}{z}{}^{\wedge}\Vector{}{\omega}{}^{\wedge} + \\
    & + \left(\frac{\norm{\Vector{}{\omega}{}}\cos(\norm{\Vector{}{\omega}{}}) + 2\norm{\Vector{}{\omega}{}} - 3\sin(\norm{\Vector{}{\omega}{}})}{4\norm{\Vector{}{\omega}{}}^{5}}\right)\Vector{}{\omega}{}^{\wedge}\Vector{}{\omega}{}^{\wedge}\Vector{}{z}{}^{\wedge}\Vector{}{\omega}{}^{\wedge} + \\
    & + \left(\frac{(\norm{\Vector{}{\omega}{}}^{2} - 8)\cos(\norm{\Vector{}{\omega}{}}) - 5\norm{\Vector{}{\omega}{}}\sin(\norm{\Vector{}{\omega}{}}) + 8}{4\norm{\Vector{}{\omega}{}}^{6}}\right)\Vector{}{\omega}{}^{\wedge}\Vector{}{z}{}^{\wedge}\Vector{}{\omega}{}^{\wedge}\Vector{}{\omega}{}^{\wedge} + \\
    & + \left(\frac{2\norm{\Vector{}{\omega}{}}^{2} + 15\norm{\Vector{}{\omega}{}}\cos(\norm{\Vector{}{\omega}{}}) + 3(\norm{\Vector{}{\omega}{}}^{2} - 5)\sin(\norm{\Vector{}{\omega}{}})}{12\norm{\Vector{}{\omega}{}}^{7}}\right)\Vector{}{\omega}{}^{\wedge}\Vector{}{\omega}{}^{\wedge}\Vector{}{z}{}^{\wedge}\Vector{}{\omega}{}^{\wedge}\Vector{}{\omega}{}^{\wedge} ,
    \end{split}
\end{equation*}
\noindent\begin{align*}  
    \mathbf{U}_1(\Vector{}{\omega}{}) &\coloneqq \sum_{k=0}^{\infty}\frac{1}{(k+2)!}(\Vector{}{\omega}{}^{\wedge})^{k} = &\\
    & = \frac{1}{2}\eye_3 + \left(\frac{\sin(\norm{\Vector{}{\omega}{}}) - \norm{\Vector{}{\omega}{}}\cos(\norm{\Vector{}{\omega}{}})}{\norm{\Vector{}{\omega}{}}^{3}}\right)\Vector{}{\omega}{}^{\wedge} + &\\
    & + \left(\frac{\norm{\Vector{}{\omega}{}}^{2} - 2\norm{\Vector{}{\omega}{}}\sin(\norm{\Vector{}{\omega}{}}) - 2\cos(\norm{\Vector{}{\omega}{}}) + 2}{2\norm{\Vector{}{\omega}{}}^{4}}\right)\Vector{}{\omega}{}^{\wedge}\Vector{}{\omega}{}^{\wedge} .& 
\end{align*}

\subsection{Intrinsic group $\IN$}
The \emph{intrinsic group} $\IN$ is the group of camera intrinsic transformations. Let $X \in \IN$ represent an abstract element of the Lie group. The intrinsic group and its Lie algebra are defined as follows:
\begin{align*}
    \IN &= \set{
    \begin{bmatrix}
         a & s & x \\ 0 & b & y \\ 0 & 0 & 1
    \end{bmatrix} \in \R^{3\times 3}}
    {a,b > 0, \; s,x,y\in \R},\\
    \gothin &= \set{
    \begin{bmatrix}
         \alpha & \zeta & \gamma \\ 0 & \beta & \delta \\ 0 & 0 & 0
    \end{bmatrix} \in \R^{3\times 3}}
    {\alpha, \beta, \zeta, \delta, \gamma \in \R}
\end{align*}
Let ${X = (a,b,s,x,y) \in \IN}$, the inverse element is written ${X^{-1} = (\frac{1}{a}, \frac{1}{b}, -\frac{s}{ab}, -\frac{x}{a}+\frac{sy}{ab}, -\frac{y}{b})}$. In their matrix representation are written
\begin{align*}
    \mathbf{X} = 
    \begin{bmatrix}
         a & s & x \\ 0 & b & y \\ 0 & 0 & 1
    \end{bmatrix}, &&
    \mathbf{X}^{-1} = \begin{bmatrix}
        \frac{1}{a} & -\frac{s}{ab} & -\frac{x}{a}+\frac{sy}{ab}\\
        0 & \frac{1}{b} & -\frac{y}{b}\\
        0 & 0 & 1
    \end{bmatrix}.
\end{align*}
The identity element is given by $I = (1, 1, 0, 0, 0)$, and its matrix representation is given by the $\eye_4$ matrix. Specifically, $a = b = 1$, $s = x = y = 0$.
Let ${\Vector{}{x}{} = (\alpha, \beta, \zeta, \gamma, \delta) \in \R^5 \st \Vector{}{x}{}^{\wedge} \in \gothin}$, the Adjoint map for $\IN$ and the Adjoint matrix are written
\begin{align*}
    &\Adsym{X}{\Vector{}{x}{}^{\wedge}} \coloneqq \mathbf{X}\Vector{}{x}{}^{\wedge}\mathbf{X}^{-1}, && \AdMsym{X} \coloneqq 
    \begin{bmatrix}
        1 & 0 & 0 & 0 & 0 \\
        0 & 1 & 0 & 0 & 0 \\
        \frac{sy}{b}-x & -\frac{sy}{b} & -\frac{ay}{b}& a & s\\
        0 & -y & 0 & 0 & b
    \end{bmatrix}.
\end{align*}
Let ${\Vector{}{x}{} = (\alpha, \beta, \zeta, \gamma, \delta) \in \R^5 \st \Vector{}{x}{}^{\wedge} \in \gothin}$, and ${\Vector{}{y}{} \in \R^5 \st \Vector{}{y}{}^{\wedge} \in \gothin}$ the adjoint map for $\gothin$ and the adjoint matrix are written
\begin{align*}
    &\adsym{\Vector{}{x}{}^{\wedge}}{\Vector{}{y}{}^{\wedge}} \coloneqq \left[\Vector{}{x}{}^{\wedge}, \Vector{}{y}{}^{\wedge}\right], && \adMsym{\Vector{}{x}{}} \coloneqq 
    \begin{bmatrix}
        0 & 0 & 0 & 0 & 0 \\
        0 & 0 & 0 & 0 & 0 \\
        -\zeta & \zeta & \alpha-\beta & 0 & 0 \\
        -\gamma & 0 & -\delta & \alpha & \zeta \\
        0 & -\delta & 0 & 0 & \beta
    \end{bmatrix}.
\end{align*}
The matrix exponential and logarithm maps are given, respectively, by the matrix exponential and logarithm.

\section{Maps and operators}\label{math_maps_sec}

${\eye_n \in \R^{n \times n}}$ denotes the $n$-dimensional identity matrix, and ${\mathbf{0}_{n \times m} \in \R^{n \times m}}$ denotes the zero matrix with $n$ rows and $m$ columns.
For all $\Vector{}{v}{} = \left(x,y,z\right) \in \R^3$, define the maps:
\begin{align}
     &\pi_{\mathbb{Z}_1}\left(\cdot\right) \AtoB{\R^{3}}{\R^{3}},\qquad \Vector{}{v}{} \mapsto \pi_{\mathbb{Z}_1}\left(\Vector{}{v}{}\right) \coloneqq \frac{\Vector{}{v}{}}{z},\label{eq_piz1}\\
     &\pi_{\mathbb{S}^{2}}\left(\cdot\right) \AtoB{\R^{3}}{\R^{3}},\qquad \Vector{}{v}{} \mapsto \pi_{\mathbb{S}^{2}}\left(\Vector{}{v}{}\right) \coloneqq \frac{\Vector{}{v}{}}{\norm{\Vector{}{v}{}}},\label{eq_pis2}\\
     &\Xi\left(\cdot\right) \AtoB{\R^3}{\R^{3 \times 4}},\qquad
     \Vector{}{v}{} \mapsto \Xi\left(\Vector{}{v}{}\right) = 
     \begin{bmatrix}
         x & 0 & z & 0\\
         0 & y & 0 & z\\
         0 & 0 & 0 & 0
     \end{bmatrix}.
 \end{align}
For all $X = \left(A, a_{1}, \cdots, a_{n}\right) \in \SE_n(3)$, define the following useful map:
\begin{equation}
    \Gamma\left(\cdot\right) \AtoB{\SE_n(3)}{\SO(3)},\qquad X \mapsto \Gamma\left(X\right) \coloneqq A \in \SO(3).
\end{equation}
For all $X = \left(A, a, b\right) \in \SE_2(3)$, define the maps:
\begin{align}
    \chi\left(\cdot\right) \AtoB{\SE_2(3)}{\SE(3)},\qquad &X \mapsto \chi\left(X\right) = \left(A, a\right) \in \SE(3),\\
    \Theta\left(\cdot\right) \AtoB{\SE_2(3)}{\SE(3)},\qquad &X \mapsto \Theta\left(X\right) = \left(A, b\right) \in \SE(3) ,\\
    \Omega\left(\cdot\right) \AtoB{\SE_2(3)}{\se_2(3)},\qquad &X \mapsto \Omega\left(X\right) = \left(\mathbf{0}_{3 \times 1}, \mathbf{0}_{3 \times 1}, \Vector{}{a}{}\right)^\wedge \in \se_2(3) .
\end{align} 
For all $\Vector{}{x}{} = (\Vector{}{a}{}, \Vector{}{b}{}, \Vector{}{c}{}) \in \R^9 \st \Vector{}{a}{}, \Vector{}{b}{}, \Vector{}{c}{} \in \R^3$, define the linear maps:
 \begin{align}
    \Pi\left(\cdot\right) \AtoB{\se_2(3)}{\se(3)}, \quad &\Vector{}{x}{}^{\wedge} \mapsto \Pi\left(\Vector{}{x}{}^{\wedge}\right) = \left(\Vector{}{a}{}, \Vector{}{b}{}\right)^{\wedge} \in \se(3), \\
    \Upsilon\left(\cdot\right) \AtoB{\se_2(3)}{\se(3)}, \quad &\Vector{}{x}{}^{\wedge} \mapsto \Upsilon\left(\Vector{}{x}{}^{\wedge}\right) = \left(\Vector{}{a}{}, \Vector{}{c}{}\right)^{\wedge} \in \se(3),
\end{align}

\chapter{Equivariant System Theory}\label{eq_chp}

The theory of \emph{Equivariant systems} refers to the study of nonlinear systems that admits a \emph{symmetry}, associated with \emph{equivariance}~\cite{Mahony2020EquivariantDesign}. A core concept in this theory is that \emph{non-linearities in many natural, mechanical, and navigation systems do not arise from complex kinematics but rather from the geometric structure associated with the state space of the system}; in other words, there exist transformations that leaves unchanged or change in a structured manner the physical laws that govern their motion. Such structure is what is referred to as symmetry. 

This chapter reviews the recent results in the theory of equivariant systems and builds up the concepts that are at the core of the work presented in this dissertation. In particular, the first section of this chapter discusses the recent results by Mahony \etal~\cite{Mahony2020EquivariantDesign, Mahony2022ObserverEquivariance} and describes the meaning of symmetry and equivariance in the context of equivariant systems. The second section of this chapter discusses how symmetry and equivariance of a system are exploited for filter design. Specifically, it introduces the \emph{\acl{eqf}}; a general filtering approach for equivariant systems, established by van Goor \etal~\cite{VanGoor2020EquivariantSpaces, vanGoor2022EquivariantEqF, VanGoor2023EquivariantAwareness}.

\section{Symmetry, invariance and equivariance}\label{eq_sym_sec}

Let $\xi \in \calM$ be the state of a kinematic system posed on a smooth manifold $\calM$. Let $y \in \calN$ be the output of a kinematic system on a smooth manifold $\calN$. Let $u \in \vecL \subset \R^n$ be the input of a kinematic system. Consider a nonlinear kinematic system
\begin{align*}
    &\dot{\xi} = f(\xi, u) ,\\
    &y = h(\xi) .
\end{align*}
$f$ is an affine system function ${f \AtoB{\calM \times \vecL}{\tT\calM} \st \left(\xi, u\right) \mapsto f(\xi, u)}$, which, in some local coordinates, is written
\begin{equation}\label{eq_affine_form}
    f(\xi, u) = f_0(\xi) + \sum_{k=1}^{n}f_k(\xi)u_k = \sum_{k=0}^{n}f_k(\xi)u_k,
\end{equation}
for ${u = (u_0,u_1,\cdots,u_n) \in \vecL \st u_0 = 1}$. The output function is $h \AtoB{\calM}{\calN}$. The system function can be interpreted as an affine map of the input space ${f \AtoB{\vecL}{\mathfrak{X}(\calM)} \st u \mapsto f_u}$. This last interpretation will result in an extremely powerful characterization of \emph{invariance} and \emph{equivariance} properties of a system~\cite{Mahony2020EquivariantDesign, Mahony2022ObserverEquivariance}.

In the context of kinematic systems, the concept of symmetries was first introduced in the field of geometric control theory~\cite{Jurdjevic1996GeometricTheory, Bullo2005GeometricSystems}. A symmetry of a kinematic system can be seen as a set of transformations that either leave unchanged or change in a structured manner the physical laws that govern the motion of the system. A symmetry is defined as the action $\phi$ of a Lie group $\grpG$ (often referred to as \emph{symmetry group}) on the state space $\calM$ of a system. Formally $(\grpG, \phi) \st \phi \AtoB{\grpG \times \calM}{\calM}$. The group action $\phi$ can be either a right action or a left action. Either convention leads to the same mathematical framework~\cite{Mahony2022ObserverEquivariance}, and any left action is transformed to a right action by considering the inverse parametrization of the group~\cite{Mahony2020EquivariantDesign}; however, the results discussed in this dissertation make use of the right convention and, hence, of a right group action $\phi$. 

\begin{boxexample}{Direction kinematics symmetry}{}
For example, consider the kinematics of a direction on a sphere:
\begin{equation}\label{eq_dir_kin}
    \dotVector{}{d}{} = \Vector{}{d}{}^{\wedge}\Vector{}{\omega}{} = \Vector{}{d}{} \times \Vector{}{\omega}{},
\end{equation}
where $\xi = \Vector{}{d}{} \in S^2$ is the state of the system. The input of the system is ${u = \Vector{}{\omega}{} \in \vecL \subset \R^3}$.
A symmetry for such a system is given by the special orthogonal group $\SO(3)$ and by its action on the 2-sphere. Let ${A \in \SO(3)}$, the right-handed group action $\phi$ is written
\begin{equation}\label{eq_dir_kin_phi}
    \phi \AtoB{\SO(3) \times S^2}{S^2} \stq (A, \xi) \mapsto \phi(A, \xi) \coloneqq A^\top\Vector{}{d}{} .
\end{equation}
Note that $\phi(A, \xi) \coloneqq A^\top\Vector{}{d}{}$ represents a valid right group action, whereas $\phi(A, \xi) \coloneqq A\Vector{}{d}{}$ represents a left action.
\end{boxexample}

A symmetry that leaves the law of motion of a kinematic system unchanged encodes the \emph{invariance} of the system. Instead, a symmetry that changes the law of motion of a kinematic system in a structured manner encodes the \emph{equivaraince} of the system. To formally characterize invariance and equivariance, consider the induced group action ${\Phi \AtoB{\grpG \times \mathfrak{X}\left(\calM\right)}{\mathfrak{X}\left(\calM\right)}}$ on the set of all smooth vector fields over $\calM$ defined in \cref{math_induced_action}. A system $f$ is said to be \emph{invariant} if
\begin{equation}\label{eq_inv}
    \Phi(X, f_u) = \td\phi_{X} \circ f_u \circ \phi_{X^{-1}} = f_u.
\end{equation}
$\forall X \in \grpG$, and $u \in \vecL$. On the contrary, a system $f$ is said to be \emph{equivariant} if
\begin{equation}\label{eq_equi}
    \Phi(X, f_u) = \td\phi_{X} \circ f_u \circ \phi_{X^{-1}} = f_{\psi(X, u)},
\end{equation}
$\forall X \in \grpG$, $u \in \vecL$, and for a right-handed action of the group $\grpG$ on the input space $\vecL$, that is $\psi \AtoB{\grpG \times \vecL}{\vecL}$.

\begin{boxexample}{Direction kinematics equivariance}{}
For the direction kinematics introduced in the previous example, elements of the state space and elements of the input space change in a structured manner when transformed with actions of $\SO(3)$, specifically the induced group action $\Phi(X, f_u)$ is written
\begin{equation}
    \Phi(A, f_u) = \underbrace{A^{\top}}_{\td\phi_{X}}\underbrace{(\underbrace{A\Vector{}{d}{}}_{\phi_{X^{-1}}})^{\wedge}\Vector{}{\omega}{}}_{f_u \circ \phi_{X^{-1}}} = A^{\top}A\Vector{}{d}{}^{\wedge}A^{\top}\Vector{}{\omega}{} = \Vector{}{d}{}^{\wedge}(A^{\top}\Vector{}{\omega}{}),
\end{equation}
which is in the form of \cref{eq_equi} with the right-handed action $\psi$ defined by
\begin{equation}\label{eq_dir_kin_psi}
    \psi \AtoB{\SO(3) \times \vecL}{\vecL} \stq (A, u) \mapsto \psi(A, u) \coloneqq A^\top\Vector{}{\omega}{} .
\end{equation}
The direction kinematics in \cref{eq_dir_kin} is equivariant with respect to the action $\phi$ in \cref{eq_dir_kin_phi} and $\psi$ in \cref{eq_dir_kin_psi} of $\SO(3)$.
\end{boxexample}

Similar concepts apply to the configuration output $h$. In particular, a symmetry that leaves the output unchanged encodes the \emph{invariance of the output}, whereas a symmetry that changes the output in a structured manner encodes the \emph{equivariance of the output}. Formally the configuration output $h$ is said to be \emph{invariant} if
\begin{equation}
    h(\phi(X, \xi)) = h(\xi) ,
\end{equation}
$\forall X \in \grpG$, and $\xi \in \calM$. On the contrary, the configuration output $h$ is said to be \emph{equivariant} if
\begin{equation}\label{eq_out_equi}
    h(\phi(X, \xi)) = \rho(X, h(\xi)) ,
\end{equation}
$\forall X \in \grpG$, $\xi \in \calM$, and for a right-handed action of the group $\grpG$ on the output space $\calN$, that is $\rho \AtoB{\grpG \times \calN}{\calN}$.

\begin{boxexample}{Direction kinematics output equivariance}{}
Let $h\left(\xi\right) = \Vector{}{d}{} \in S^2$ represent the configuration output of the direction kinematics. Applying the state action $\phi$ previously introduced, yields
\begin{equation}\label{eq_dir_kin_rho}
h\left(\phi\left(X, \xi\right)\right) = A^\top\Vector{}{d}{} = \rho\left(X, h\left(\xi\right)\right).
\end{equation}
Hence, the configuration output of the direction kinematics is equivariant with respect to the action $\rho$ in \cref{eq_dir_kin_rho} of $\SO(3)$.
\end{boxexample}

In \cref{math_action}, we introduced the concept of homogeneous space. In particular, a smooth manifold $\calM$ is termed homogeneous space if the action $\phi$ of a Lie group $\grpG$ on $\calM$ is transitive. Thus, if the induced projection ${\phi_{\xi} \AtoB{\grpG}{\calM}}$ is surjective. In other words, if for every $\xi, \xizero \in \calM$, there exist at least one $X \in \grpG$ such that $\xi = \phi_{\xizero}(X)$. \emph{A homogeneous space is completely parametrized by the Lie group $\grpG$.}

This fact is of particular importance in the context of kinematic systems. If the state space of a system is a homogeneous space, the parametrization provided by the Lie group $\grpG$ allows the exploitation of the Lie group structure and, therefore, ``lift'' the system dynamics on the Lie group. That is, defining a \emph{lifted system} ${\dot{X} = X\Lambda}$ whose solutions $X(t)$ at time $t$, project to solutions $\xi(t) = \phi(X(t), \xizero)$. A graphical representation of the lifted system is provided in \cref{eq_symmetry}.

\begin{boxintuition*}{The concept of lifting the system dynamics}{}
    The concept of ``lifting'' the system dynamics on the Lie group is based on the idea of taking the original system evolving on the homogeneous space $\calM$ for which it is hard to work with and exploiting the symmetry $(\grpG, \phi)$ to define a new system evolving on the Lie group $\grpG$ for which is easy to work with.
    Consider the direction kinematics on the 2-sphere introduced in the previous examples. The 2-sphere $S^2$ does not possess structure; in fact, no composition rule is defined for elements of $S^2$; hence, working directly on the 2-sphere is hard. On the contrary, The Lie group structure of $\SO(3)$ makes it easy to work directly on it.
\end{boxintuition*}

According to~\cite{Mahony2020EquivariantDesign, Mahony2022ObserverEquivariance}, the lifted system can always be written
\begin{equation}
    \dot{X} = \td \textrm{L}_{X}\Lambda(\phi_{\xizero}(X), u),
\end{equation}
where $\Lambda \AtoB{\grpG \times \vecL}{\gothg}$ is the \emph{lift}, which provides the necessary structure that associates the system input $u$ with the Lie algebra $\gothg$ of the Lie group $\grpG$. The lift is the solution of the following equation:
\begin{equation}\label{eq_lift}
    \td_{\phi_{\xi}}\left[\Lambda(\xi, u)\right] = \Fr{X}{I}\phi_{\xi}\left(X\right)\left[\Lambda(\xi, u)\right] = f(\xi, u).
\end{equation}
Taking the time derivative of $\xi = \phi_{\xizero}(X)$ yields ${\dot{\xi} = f_u(\xi) = \td_{\phi_{\xizero}} \circ \dot{X}}$. It is clear now that the requirement of having a lifted system in the form of ${\dot{X} = \td \textrm{L}_{X}\Lambda(\phi_{\xizero}(X), u)}$ yields \cref{eq_lift}. Specifically, ${f_u(\xi) = \td_{\phi_{\xizero}} \circ \td \textrm{L}_{X}\Lambda(\phi_{\xizero}(X), u) = \td_{\phi_{\xi}}\Lambda(\xi, u)}$, with the last equality derived as follows:
\begin{align*}
    \td_{\phi_{\xizero}} \circ \td \textrm{L}_{X}\left[\Lambda\right] &= \Fr{X}{I}\phi_{\xizero}\left(X\right) \circ \Fr{Y}{I}\textrm{L}_{X}\left(Y\right)\left[\Lambda\right]\\
    &= \Fr{Y}{I}\phi\left(\textrm{L}_{X}\left(Y\right), \xizero\right)\left[\Lambda\right]\\
    &= \Fr{Y}{I}\phi\left(XY, \xizero\right)\left[\Lambda\right]\\
    &= \Fr{Y}{I}\phi\left(Y, \phi\left(X,\xizero\right)\right)\left[\Lambda\right]\\
    &= \Fr{Y}{I}\phi\left(Y, \xi\right)\left[\Lambda\right]\\
    &= \Fr{Y}{I}\phi_{\xi}\left(Y\right)\left[\Lambda\right] = \td_{\phi_{\xi}}\left[\Lambda\right].
\end{align*}

The existence of the lift is guaranteed by the transitivity of the group action $\phi$. However, if the system is equivariant, then the lift is also said to be equivariant, and it satisfies
\begin{equation}\label{eq_lift_equi}
    \Adsym{X^{-1}}{\Lambda(\xi, u)} = \Lambda(\phi_X(\xi), \psi_X(u)) .
\end{equation}
A graphical representation of equivariant symmetry, homogenous space, and lifted system is provided in \cref{eq_symmetry}.

\begin{boxexample}[label={eq_lift_example}]{Direction kinematics lift}{}
    For the direction kinematics,  the lift is the solution at the \cref{eq_lift}. In particular, the left-side of the equation is written
    \begin{equation}
    \begin{split}
        \td_{\phi_{\xi}}\left[\Lambda(\xi, u)\right] &= \Fr{X}{I}\phi_{\xi}\left(X\right)\left[\Lambda(\xi, u)\right]\\
        &= \lim_{t\to0}\left.\frac{1}{t}\left[(I - \lambda t)\Vector{}{d}{} - \Vector{}{d}{}\right]\right|_{\lambda = \Lambda(\xi, u)} \\
        &= -\Lambda(\xi, u)\Vector{}{d}{} ,
    \end{split}
    \end{equation}
    substituting now ${\td_{\phi_{\xi}}\Lambda(\xi, u)}$ in \cref{eq_lift} and solving for ${\Lambda(\xi, u)}$ yields
    \begin{equation}
    \begin{split}
        -\Lambda(\xi, u)\Vector{}{d}{} &= \Vector{}{d}{}^{\wedge}\Vector{}{\omega}{} \\
        &= -\Vector{}{\omega}{}^{\wedge}\Vector{}{d}{} \quad\Longrightarrow\quad \Lambda(\xi, u) = \Vector{}{\omega}{}^{\wedge} .
    \end{split}
    \end{equation}
    Moreover, the lift is clearly equivariant, that is
    \begin{equation}
    \begin{split}
        \Adsym{A^\top}{\Lambda(\xi, u)} &= A^\top\Lambda(\xi, u)A = A^\top\Vector{}{\omega}{}^{\wedge}A\\
        &= (A^\top\Vector{}{\omega}{})^{\wedge} = \Lambda(\phi_X(\xi), \psi_X(u)) .
    \end{split}
    \end{equation}
    Finally, the lifted system is written
    \begin{equation}
        \dot{A} = A\Vector{}{\omega}{}^{\wedge} .
    \end{equation}
\end{boxexample}

Natural questions that might arise at this point are: \emph{``How do we find a symmetry?''} and \emph{``How do we choose one symmetry if we have found more?''} 

Let's start with the latter question. If more symmetries are found, then choose the invariant symmetry (\cref{eq_inv}) first and the equivariant symmetry (\cref{eq_equi}) after. Clearly, invariant symmetries are much more rare, and most of the time, finding an equivariant symmetry already represents an achievement. For the former question, unfortunately, there exists no golden recipe yet. However, the whole process revolves around finding a transitive action $\phi$ of a Lie group. Explicit examples in the next chapters will teardown the thinking process into distinct stages for better understanding.

\begin{figure}[htp]
\centering
\includegraphics[width=\linewidth]{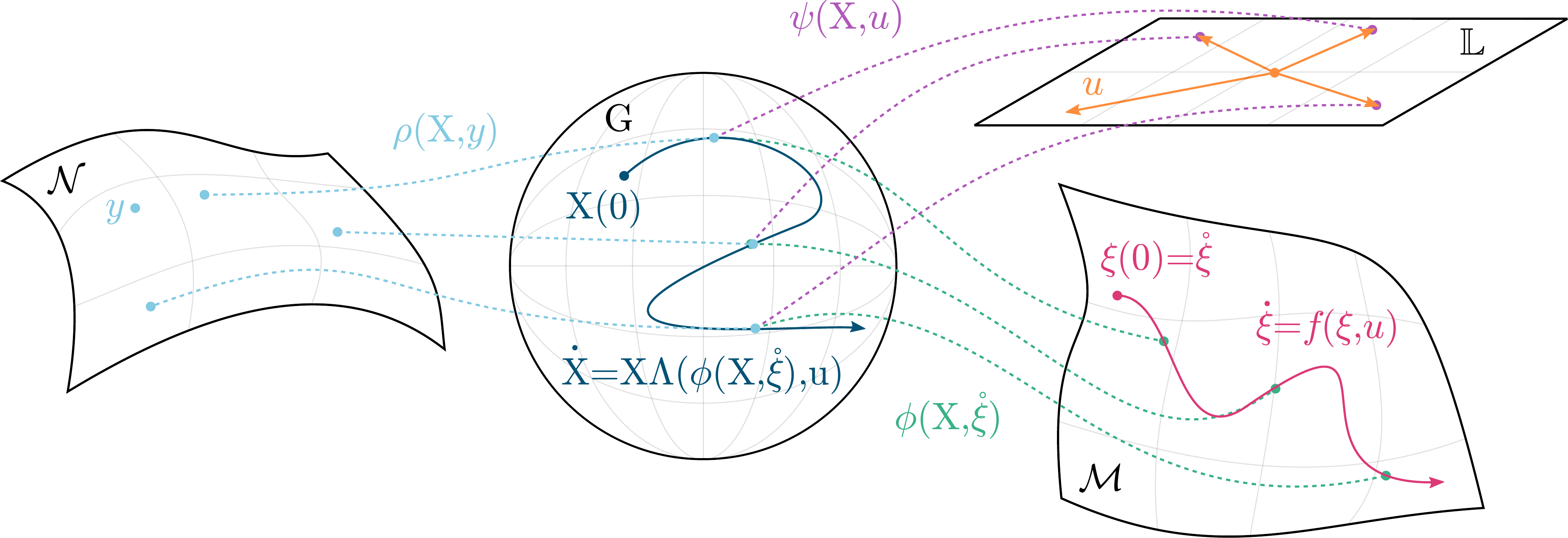}
\caption[Graphical representation of equivariant symmetry.]{Graphical representation of equivariant symmetry $(\grpG, \phi)$ of a homogeneous space $\calM$. A trajectory of a kinematic system $\xi(t) \in \calM$ is represented in red. The input of the system $u \in \vecL$ is represented in orange. The trajectory of the lifted system $X(t) \in \grpG$ is represented in blue. The dashed light green arrows represent the projection $\xi = \phi_{\xizero}(X)$ from $\grpG$ onto $\calM$ given by the transitive action $\phi$. The trajectory of the kinematic system on the homogeneous space, is completely parametrized by the trajectory of the lifted system onto the symmetry group. The dashed light purple arrows represent the action of the Lie group $\grpG$ on the input space $\vecL$ that defines equivariance. The dashed light blue lines represent the action of the Lie group $\grpG$ on the output space $\calN$ that defines the equivariance of the output.}
\label{eq_symmetry}
\end{figure}

\section{Equivariant filter design}\label{eq_eqf_design}
The concept of lifting the system dynamics onto the Lie group, introduced in the previous chapter, is of paramount importance in the theory of equivariant systems. The fact that the Lie group $\grpG$ parametrizes the state space $\calM$ of a kinematic system allows the formulation of a control or state estimation problem directly on the lifted system on the Lie group. Specifically for state estimation problems, this is equivalent to estimating an element of the symmetry group ${\hat{X} \in \grpG \st \hat{\xi} = \phi(\hat{X}, \xizero)}$, rather than $\hat{\xi} \in \calM$ directly. 
This is not just a mere ``embedding'' of the system on the Lie group, but abstracting the estimation problem on the Lie group and hence exploiting the symmetry of the kinematic systems allows the definition of a \emph{global error} ${e = \phi(\hat{X}^{-1}, \xi) \in \calM}$ termed \emph{equivariant error}~\cite{Mahony2020EquivariantDesign, Mahony2022ObserverEquivariance, VanGoor2020EquivariantSpaces, vanGoor2022EquivariantEqF, VanGoor2023EquivariantAwareness}.
\begin{align*}
    e &= \phi(\hat{X}^{-1}, \xi) && e \in \calM \text{: global error in the homogeneous space}\\
    &= \phi(\hat{X}^{-1}, \phi(X, \xizero))\\
    &= \phi(X\hat{X}^{-1}, \xizero)\\
    &= \phi(E, \xizero) && E = X\hat{X}^{-1} \in \grpG \text{: global error in the symmetry group}.
\end{align*}

The equivariant error is at the core of the \emph{\acl{eqf}} algorithm, introduced by van Goor \etal~\cite{VanGoor2020EquivariantSpaces, vanGoor2022EquivariantEqF, VanGoor2023EquivariantAwareness}, which represent a general filter design for systems on homogenous spaces. The \acl{eqf} is a state estimation algorithm where the \ac{ekf} design principles are applied to the linearized (global) error kinematics about an origin point $\xizero$. 
The equivariant error, in fact, is a measure between an element of the symmetry group $\hat{X} \in \grpG$, and an element of state space $\xi \in \calM$, defined in a neighborhood of a fixed point $\xizero$. This construction has motivated the choice to pose the filter state on the symmetry group.
\emph{The key advantage of such a global error definition is that the error kinematics is linearized about a single set of coordinates, whereas in the standard \ac{ekf} design, the error kinematics is linearized about linearization points taken along a time-varying trajectory and changing local coordinates}.

\begin{boxintuition*}{The difference between the classical and the equivariant error}{}
    The classical \ac{ekf} error for systems with Euclidean state space is defined as $e = \xi - \hat{\xi}$. For systems posed on manifolds, the classical error $e = \xi \boxminus \hat{\xi}$ can only be constructed by means of local coordinates centered around the time-varying state estimate $\hat{\xi}\left(t\right)$. In contrast, the equivariant error is globally defined by construction. Moreover, the equivariant error specializes to the classical error definition $e = \xi - \hat{\xi}$ for $\calM = \R^n$, $\grpG = \R^n$, $\xizero = \Vector{}{0}{}$ and the action $\phi$ given by vector addition.
\end{boxintuition*}

Let us walk through the main results that characterize the \acl{eqf} algorithm.
The \ac{eqf} is designed by linearizing the equivariant error and solving the Ricatti equation to propagate the error covariance. However, the equivariant error is defined as an element of the homogeneous space $\calM$. Hence, local coordinates for the error need to be defined. Similarly, the configuration output is, in general, an element of the output space $\calN$. Therefore, also for the configuration output, local coordinates need to be defined.
Consider a system whose state space is a $m$-dimensional homogenous space $\calM$, and output space a $n$-dimensional smooth manifold $\calN$. Let $\xizero \in \calM$ be the state origin. Define local coordinates of the state space $\varepsilon = \vartheta(e)$, thus a coordinate chart ${\vartheta \AtoB{\mathcal{U}_{\xizero} \subset \calM}{\R^{m}}}$. 
Let ${\yzero = h\left(\xizero\right)}$ be the output origin, then define local coordinates of the output space, thus a coordinate chart ${\delta \AtoB{\mathcal{U}_{\yzero} \subset \calN}{\R^{n}}}$.

\begin{boxexample}{Direction kinematics equivariant error}{}
The equivariant error and the global error in the symmetry group, for the direction kinematics previously introduced, are defined as follows:
\begin{align*}
    e &= \phi(\hat{X}^{-1}, \xi) = \hat{A}\Vector{}{d}{},\\
    E &= A\hat{A}^\top.
\end{align*}
\end{boxexample}

Let $\hat{X} \in \grpG$ denote the \acl{eqf} state state.
Let $\varepsilon$ denote the linearization of the error $e$ in the chart $\vartheta$. 
The linearized error kinematics, and the linearized output are defined according to~\cite{VanGoor2020EquivariantSpaces, vanGoor2022EquivariantEqF, VanGoor2023EquivariantAwareness}
\begin{align}
    &\dot{\varepsilon} \approx \mathbf{A}_{t}^{0}\varepsilon, \\ 
    &\mathbf{A}_{t}^{0} = \Fr{e}{\xizero}\vartheta\left(e\right)\Fr{E}{I}\phi_{\xizero}\left(E\right)\Fr{e}{\xizero}\Lambda\left(e, \mathring{u}\right)\Fr{\varepsilon}{\mathbf{0}}\vartheta^{-1}\left(\varepsilon\right) ,\label{eq_A0}\\
    &\delta\left(h\left(\xi\right)\right) - \delta\left(h\left(\hat{\xi}\right)\right) \approx \mathbf{C}^{0}\varepsilon , \\
    &\mathbf{C}^{0} = \Fr{y}{\yzero}\delta\left(y\right)\Fr{\xi}{\hat{\xi}}h\left(\xi\right)\Fr{e}{\xizero}\phi_{\hat{X}}(e)\Fr{\varepsilon}{\mathbf{0}}\vartheta^{-1}\left(\varepsilon\right) ,\label{eq_C0}
\end{align}
where \cref{eq_A0} is a chain of differentials and derives from differentiating the equivariant error in local coordinates ${\vartheta^{-1}(\phi(\hat{X}^{-1}, \xi))}$, whereas \cref{eq_C0} derives from differentiating ${\delta(h(\xi)) = \delta(h(\phi(\hat{X}, \vartheta^{-1}(\varepsilon))))}$. Detailed derivation of \cref{eq_A0,eq_C0} are found in~\cite[section V.B]{vanGoor2022EquivariantEqF}, and~\cite[section V.C]{vanGoor2022EquivariantEqF} respectively.

If an action $\rho$ of the symmetry group in the output space exists, then this can be exploited to define a different residual. Specifically
\begin{align}
    &\delta\left(h\left(e\right)\right) = \delta\left(\rho_{\hat{X}^{-1}}\left(h\left(\xi\right)\right)\right) \approx \mathbf{C}^{0}\varepsilon,\\
    &\mathbf{C}^{0}\varepsilon = \Fr{y}{\yzero}\delta\left(y\right)\Fr{e}{\xizero}h\left(e\right)\Fr{\varepsilon}{\mathbf{0}}\vartheta^{-1}\left(\varepsilon\right) .\label{eq_C0_rho}
\end{align}
Additionally, the equivariance of the output can be exploited together with the concept of normal coordinates to derive a linearized output with third-order error~\cite{VanGoor2020EquivariantSpaces, vanGoor2022EquivariantEqF, VanGoor2023EquivariantAwareness}.
Define normal coordinates to be ${e = \vartheta^{-1}(\varepsilon) \coloneqq \phi_{\xizero}(\exp_{\grpG}(\varepsilon^{\wedge})))}$, with ${\exp_{\grpG}}$ being the group exponential. A third-order linearization error of the output map can be achieved as follows:
\begin{align}
    &\delta\left(h\left(e\right)\right) = \delta\left(\rho_{\hat{X}^{-1}}\left(h\left(\xi\right)\right)\right) \approx \mathbf{C}^{\star}\varepsilon + \mathbf{O}(\varepsilon^3), \label{eq_eq_out_app}\\
    &\mathbf{C}^{\star}\varepsilon = \frac{1}{2}\Fr{y}{\yzero}\delta\left(y\right)\left(\Fr{E}{I}\rho_E\left(\yzero\right) + \Fr{E}{I}\rho_E\left(\rho_{\hat{X}^{-1}}\left(y\right)\right)\right)\varepsilon^{\wedge} ,\label{eq_C_star}
\end{align}
where $y$ is the output measurement. \Cref{eq_eq_out_app} defines the equivariant residual, and a detailed derivation of \cref{eq_C_star} is reported in \cref{appendix_B_chp}.
Moreover, if the system is not equivariant, thus, if no compatible action $\psi$ of the symmetry group on the input space is found, the state matrix can be computed alternatively according to
\begin{align}
    \begin{split}
    \mathbf{A}_{t}^{0} &= \Fr{e}{\xizero}\vartheta\left(e\right)
    \Fr{\xi}{\hat{\xi}}\phi_{\hat{X}^{-1}}\left(\xi\right)
    \Fr{E}{I}\phi_{\hat{\xi}}\left(E\right)\;\cdot\\
    &\quad\cdot\Fr{\xi}{\phi_{\hat{X}}\left(\xizero\right)}\Lambda\left(\xi, u\right)
    \Fr{e}{\xizero}\phi_{\hat{X}}\left(e\right)
    \Fr{\varepsilon}{\mathbf{0}}\vartheta^{-1}\left(\varepsilon\right),\label{eq_A0_alt}
    \end{split}\\
    \begin{split}
    &= \Fr{e}{\xizero}\vartheta\left(e\right)
    \Fr{E}{I}\phi_{\xizero}\left(E\right)\mathrm{Ad}_{\hat{X}}\;\cdot\\
    &\quad\cdot\Fr{\xi}{\phi_{\hat{X}}\left(\xizero\right)}\Lambda\left(\xi, u\right)
    \Fr{e}{\xizero}\phi_{\hat{X}}\left(e\right)
    \Fr{\varepsilon}{\mathbf{0}}\vartheta^{-1}\left(\varepsilon\right).\label{eq_A0_alt_normal}
    \end{split}
\end{align}

\begin{boxexample}{Direction kinematics local coordinates}{}
Since the direction kinematics exhibits output equivariance, a good choice of local coordinates for the error is represented by normal coordinates ${e = \vartheta^{-1}(\varepsilon) \coloneqq \phi_{\xizero}(\exp_{\grpG}(\varepsilon^{\wedge})))}$. 
Let $\xizero = \mathring{\bm{d}} \in S^2$ represent an arbitrary origin state, then the projection $\phi_{\xizero}$ is written
\begin{equation*}
    \phi_{\xizero}\left(X\right) = \xi = \Vector{}{d}{} = A^\top \mathring{\bm{d}}.
\end{equation*}
An inverse projection $\phi_{\xizero}^{\dagger}\left(\xi\right)$ is given by the rotation between the two direction $\mathring{\bm{d}}$ and $\Vector{}{d}{}$. Specifically, by employing the cross-product to obtain the rotation axis and atan2 to obtain the rotation angle, the rotation between the two directions is written
\begin{align*}
    \phi_{\xizero}^{\dagger}\left(\xi\right) &= A = \exp_{\SO\left(3\right)}\left(-\text{atan2}\left(\mathring{\bm{d}} \times \Vector{}{d}{}, \mathring{\bm{d}}^\top\Vector{}{d}{}\right)\frac{\mathring{\bm{d}} \times \Vector{}{d}{}}{\norm{\mathring{\bm{d}} \times \Vector{}{d}{}}}\right).
\end{align*}
Therefore, local coordinates $\varepsilon$ are written
\begin{equation*}
    \varepsilon = \vartheta\left(e\right) = \log_{\SO\left(3\right)}\left(\phi_{\xizero}^{\dagger}\left(e\right)\right) = -\text{atan2}\left(\mathring{\bm{d}} \times e, \mathring{\bm{d}}^\top e\right)\frac{\mathring{\bm{d}} \times e}{\norm{\mathring{\bm{d}} \times e}}
\end{equation*}

Similarly, let $\yzero = h\left(\xizero\right) = \xizero = \mathring{\bm{d}}$ be the output origin, then a choice of local coordinates for the output is given by
\begin{equation*}
    \delta\left(y\right) = \yzero \times y = \yzero^\wedge y.
\end{equation*}
\end{boxexample}

Define the initial condition of the equivariant filter state ${\hat{X}\left(0\right) = I}$. Let ${\bm{\Sigma} \in \PD\left(m\right) \subset \R^{m \times m}}$ be the Riccati matrix of the error in local coordinates, with initial condition ${\bm{\Sigma}\left(0\right) = \bm{\Sigma}_0}$. Let $\mathbf{A}_{t}^{0}$ be the state matrix defined in \cref{eq_A0,eq_A0_alt,eq_A0_alt_normal}, and $\mathbf{C}^0$ be the output matrix defined in \cref{eq_C0}. Let ${\mathbf{Q}\in \PD\left(m\right) \subset \R^{m \times m}}$ and ${\mathbf{R}\in \PD\left(n\right) \subset \R^{n \times n}}$ be respectively the state gain matrix and the output gain matrix. Chose a right inverse ${\Fr{E}{I}\phi_{\xi_0}\left(E\right)^{\dagger} \st \Fr{E}{I}\phi_{\xi_0}\left(E\right) \Fr{E}{I}\phi_{\xi_0}\left(E\right)^{\dagger} = I}$. Then, in a deterministic setting, the \acl{eqf} is given by the solution of the following equations:
\begin{align}
    &\dot{\hat{X}} = \td \textrm{L}_{\hat{X}}\Lambda(\phi_{\xizero}(\hat{X}), u) + \td \textrm{R}_{\hat{X}}\Delta ,\\
    &\Delta = \Fr{E}{I}\phi_{\xizero}\left(E\right)^{\dagger}\td\vartheta^{-1}\bm{\Sigma}{\mathbf{C}^{0}}^\top(\mathbf{C}^{0}\bm{\Sigma}{\mathbf{C}^{0}}^\top + \mathbf{R})^{-1}\delta(y - h(\phi(\hat{X}, \xizero))) ,\label{eq_deltaC0}\\
    &\dot{\bm{\Sigma}} = \mathbf{A}_{t}^{0}\bm{\Sigma} + \bm{\Sigma}{\mathbf{A}_{t}^{0}}^\top + \mathbf{B}_{t}\mathbf{Q}\mathbf{B}_{t}^{\top} - \mathbf{K}^{0}\mathbf{C}^{0}\bm{\Sigma} ,\label{eq_sigmadotC0}\\
    &\mathbf{K}^{0} = \bm{\Sigma}{\mathbf{C}^{0}}^\top(\mathbf{C}^{0}\bm{\Sigma}{\mathbf{C}^{0}}^\top + \mathbf{D}_t\mathbf{R}\mathbf{D}_t^{\top})^{-1} ,\label{eq_KC0}
\end{align}
with $\mathbf{B}_{t} = \eye_m$ and $\mathbf{D}_t = \eye_n$. 
If the output is equivariant, \cref{eq_deltaC0,eq_sigmadotC0,eq_KC0} change according to
\begin{align}
    &\Delta = \Fr{E}{I}\phi_{\xizero}\left(E\right)^{\dagger}\td\vartheta^{-1}\bm{\Sigma}{\mathbf{C}^{\star}}^\top(\mathbf{C}^{\star}\bm{\Sigma}{\mathbf{C}^{\star}}^\top + \mathbf{R})^{-1}\delta(\rho(\hat{X}^{-1}, y)) ,\label{eq_deltaCstar}\\
    &\dot{\bm{\Sigma}} = \mathbf{A}_{t}^{0}\bm{\Sigma} + \bm{\Sigma}{\mathbf{A}_{t}^{\star}}^\top + \mathbf{B}_{t}\mathbf{Q}\mathbf{B}_{t}^{\top} - \mathbf{K}^{\star}\mathbf{C}^{\star}\bm{\Sigma} ,\label{eq_sigmadotCstar}\\
    &\mathbf{K}^{\star} = \bm{\Sigma}{\mathbf{C}^{\star}}^\top(\mathbf{C}^{\star}\bm{\Sigma}{\mathbf{C}^{\star}}^\top + \mathbf{D}_t\mathbf{R}\mathbf{D}_t^{\top})^{-1} .\label{eq_KCstar}
\end{align}

Furthermore, the geometry of the state space $\calM$ yields a modification of \cref{eq_sigmadotC0,eq_sigmadotCstar} to account for the distortion of the covariance due to parallel transport on a non-flat manifold~\cite{Mahony2022ObserverEquivariance}.
Therefore, \cref{eq_sigmadotC0,eq_sigmadotCstar} are augmented with a \emph{curvature correction} term:
\begin{align}
   &\dot{\bm{\Sigma}} = \mathbf{A}_{t}^{0}\bm{\Sigma} + \bm{\Sigma}{\mathbf{A}_{t}^{0}}^\top + \mathbf{B}_{t}\mathbf{Q}\mathbf{B}_{t}^{\top} - \mathbf{K}^{0}\mathbf{C}^{0}\bm{\Sigma}-(\bm{\Gamma}_{\td\phi_{\xizero}\Delta}\bm{\Sigma} + \bm{\Sigma}\bm{\Gamma}_{\td\phi_{\xizero}\Delta}^{\top}) ,\\
   &\dot{\bm{\Sigma}} = \mathbf{A}_{t}^{0}\bm{\Sigma} + \bm{\Sigma}{\mathbf{A}_{t}^{\star}}^\top + \mathbf{B}_{t}\mathbf{Q}\mathbf{B}_{t}^{\top} - \mathbf{K}^{\star}\mathbf{C}^{\star}\bm{\Sigma}-(\bm{\Gamma}_{\td\phi_{\xizero}\Delta}\bm{\Sigma} + \bm{\Sigma}\bm{\Gamma}_{\td\phi_{\xizero}\Delta}^{\top}),
\end{align}
where $\bm{\Gamma}$ is the connection function for the homogeneous space $\calM$~\cite{Mahony2022ObserverEquivariance}.

In the context of stochastic filters, the uncertainty of the error $\varepsilon$ in local coordinates is represented by its covariance matrix $\bm{\Sigma}$. The matrices $\mathbf{Q}$ and $\mathbf{R}$ represent the covariance of the zero-mean Gaussian noise terms in the process and measurement, respectively. Therefore, considering noisy input measurements ${u_m = u + \eta}$, the matrix $\mathbf{B}_t$ is written
\begin{equation}
\begin{split}
    \mathbf{B}_{t} &= \Fr{e}{\xizero}\vartheta\left(e\right) \Fr{E}{I}\phi_{\xi_0}\left(E\right) \Adsym{\hat{X}}{\Fr{u}{u_m}\Lambda\left(\hat{\xi},u\right)} ,\\
    &= \Fr{e}{\xizero}\vartheta\left(e\right) \Fr{E}{I}\phi_{\xi_0}\left(E\right) \Fr{u}{\mathring{u}_m}\Lambda\left(\xizero,u\right) \psi_{\hat{X}^{-1}} .\label{eq_B}
\end{split}
\end{equation}

Furthermore, considering noisy output measurements $z = h(\xi) + n$, the matrix ${\mathbf{D}_{t}}$ is written, for standard and equivariant output, respectively
\begin{align}
    &\mathbf{D}_{t} = \Fr{y}{\yzero}\delta\left(y\right) ,\label{eq_D_std}\\
    &\mathbf{D}_{t} = \Fr{y}{\yzero}\delta\left(y\right)\Fr{y}{z}\rho_{\hat{X}^{-1}}\left(y\right) .\label{eq_D_equi}
\end{align}

\begin{boxexample}{\ac{eqf} matrices for the direction kinematics}{}
The \ac{eqf} matrices for the direction kinematics are computed according to \cref{eq_A0,eq_C_star,eq_B,eq_D_equi}.

\paragraph{State matrix:}
Given the choice of normal coordinates, the rightmost differential of \cref{eq_A0}, is written ${\Fr{\varepsilon}{\mathbf{0}}\vartheta^{-1}\left(\varepsilon\right)\left[\varepsilon\right] = \Fr{E}{I}\phi_{\xizero}\left(E\right) \circ \td\exp \left[\varepsilon\right]}$. The differential of the exponential is trivial, and it is written ${\td\exp \left[\varepsilon\right] = \varepsilon^{\wedge}}$. The differential of the projection $\phi_{\xizero}$, instead, is computed as follows:
\begin{align*}
    \Fr{\varepsilon}{\mathbf{0}}\vartheta^{-1}\left(\varepsilon\right)\left[\varepsilon\right] &= \Fr{E}{I}\phi_{\xizero}\left(E\right) \circ \td\exp \left[\varepsilon\right]\\
    &= \lim_{t\to0}\left.\frac{1}{t}\left[(I - \alpha t)\mathring{\bm{d}} - \mathring{\bm{d}}\right]\right|_{\alpha = \varepsilon^{\wedge}}\\
    &= -\varepsilon^\wedge \mathring{\bm{d}} = \mathring{\bm{d}}^{\wedge}\varepsilon.
\end{align*}
Continuing, we note that the lift of the direction kinematics ${\Lambda\left(\xi, u\right) = \Vector{}{\omega}{}^{\wedge}}$ does not depend on the state. Hence the differential ${\Fr{e}{\xizero}\Lambda\left(e, \mathring{u}\right)\left[\mathring{\bm{d}}^{\wedge}\varepsilon\right] = \Vector{}{0}{3 \times 3}}$, and as a consequence, ${\mathbf{A}_t^{0} = \Vector{}{0}{3 \times 3}}$.

\paragraph{Output matrix:}
The equivariance of the output, together with the choice of normal coordinates, allows computing the matrix $\mathbf{C}^{\star}$ in \cref{eq_C_star} achieving third-order linearization error. The differential of the output action $\rho$ is computed as follows:
\begin{equation*}
    \Fr{E}{I}\rho_E\left(y\right)\left[\varepsilon^{\wedge}\right] = \lim_{t\to0}\left.\frac{1}{t}\left[(I - \alpha t)y - y\right]\right|_{\alpha = \varepsilon^{\wedge}} = -\varepsilon^{\wedge}y = y^{\wedge}\varepsilon.
\end{equation*}
The differential of the local coordinates of the output is trivial, and it is written $\Fr{y}{\yzero}\delta\left(y\right)\left[\alpha\right] = \yzero^\wedge \alpha$.
Concatenating these differentials as in \cref{eq_C_star} yields
\begin{align*}
    \mathbf{C}^{\star} &= \frac{1}{2}\yzero\left(\yzero^{\wedge} + \left(\rho_{\hat{X}^{-1}}\left(y\right)\right)^{\wedge}\right)\\
    &= \frac{1}{2}\mathring{\bm{d}}^{\wedge}\left(\mathring{\bm{d}}^{\wedge} + \left(\hat{A}y\right)^{\wedge}\right)
\end{align*}

\paragraph{Input noise matrix:}
The lift for the direction kinematics ${\Lambda\left(\xi, u\right) = \Vector{}{\omega}{}^{\wedge}}$ is linear in the input. Hence ${\Fr{u}{u_m}\Lambda\left(\hat{\xi},u\right)\left[\eta\right] = \eta}$. Moreover, given the choice of normal coordinates of the error, the two leftmost differentials of \cref{eq_B} cancel out, yielding
\begin{equation*}
    \mathbf{B}_t = \AdMsym{\hat{X}} = \hat{A}.
\end{equation*}

\paragraph{Output noise matrix:}
The action $\rho$ of the symmetry group on the output space is linear in the output. Therefore, solving \cref{eq_D_equi} is straightforward and the matrix $\mathbf{D}_t$ is written
\begin{equation*}
    \mathbf{D}_t = \mathring{\bm{d}}^{\wedge}\hat{A}.
\end{equation*}

\paragraph{Curvature correction:}
Finally, the Cartan-Schouten (0)-connection $\mathbf{\Gamma}$ is chosen~\cite{Mahony2022ObserverEquivariance}
\begin{equation*}
    \mathbf{\Gamma} = \frac{1}{2}\adMsym{\Delta}.
\end{equation*}
\end{boxexample}

Finally, when the input $u$ and the output measurements $y$ are provided at different rates, the previous equations can be split in the classical propagation-update form, and the filter can be implemented in discrete-time as described in \cref{eq_eqf}.

\begin{boxalgorithm}[label={eq_eqf}]{Equivariant filter}{}
    \SetKwProg{Initialization}{Initialization}{}{end}
    \SetKwProg{Propagation}{Propagation}{}{end}
    \SetKwProg{Update}{Update}{}{end}
    \SetKwInput{kwParam}{Parameters}
    \begin{algorithm}[H]
    \DontPrintSemicolon
    \kwParam{\;
    System state origin (system's initial condition): $\xizero$, \;
    Initial error uncertainty (initial covariance of the error):  $\bm{\Sigma}_0$ \;
    Process noise covariance: $\mathbf{Q}$ \; 
    Measurement noise covariance: $\mathbf{R}$ \;
    }
    \;
    \Initialization{}{
        Initialize filter state to identity: $\hat{X} = I$\;
        Initialize filter covariance to given matrix: $\bm{\Sigma} = \bm{\Sigma}_0$
    }
    \;
    Upon receiving an input measurement $u$\;
    \Propagation{}{
        Compute state matrix $\mathbf{A}_t^0$\;
        Compute input matrix: $\mathbf{B}_t$\;
        Compute state transition matrix: $\mathbf{\Phi} = \exp\left(\mathbf{A}_t^0 \Delta t\right)$\;
        Compute Lift: $\Lambda(\phi_{\xi_0}(\hat{X}), u)$\;
        Propagate filter state: $\hat{X} \gets \hat{X}\exp\left(\Lambda(\phi_{\xi_0}(\hat{X}), u)\Delta t\right)$\;
        Propagate filter covariance: $\bm{\Sigma} \gets \mathbf{\Phi}\bm{\Sigma}\mathbf{\Phi}^T + \mathbf{B}_{t}\mathbf{Q}{\mathbf{B}_{t}}^\top\Delta t$\;
    }
    \;
    Upon receiving an output measurement $y$\;
    \Update{}{
        \uIf{Output is equivariant}{
            Compute C matrix $\mathbf{C} = \mathbf{C}^{\star}$\;
            Compute residual: $\Vector{}{r}{} = \delta(\rho(\hat{X}^{-1}, y))$\;
        }
        \Else{
            Compute C matrix $\mathbf{C} = \mathbf{C}^{0}$\;
            Compute residual: $\Vector{}{r}{} = \delta(y - h(\phi(\hat{X}, \xizero)))$\;
        }
        Compute output matrix: $\mathbf{D}_t$\;
        Compute gain: $\mathbf{K} = \bm{\Sigma}{\mathbf{C}}^\top(\mathbf{C}\bm{\Sigma}{\mathbf{C}}^\top + \mathbf{D}_t\mathbf{R}\mathbf{D}_t^{\top})^{-1}$\;
        Compute innovation: $\Delta = \Fr{E}{I}\phi_{\xizero}\left(E\right)^{\dagger}\td\vartheta^{-1}\bm{\Sigma}{\mathbf{C}}^\top\mathbf{K}\Vector{}{r}{}$\;
        Update filter state: $\hat{X} \gets \exp\left(\Delta\right)\hat{X}$\;
        Update filter covariance: $\bm{\Sigma} \gets \left(\eye - \mathbf{K}\mathbf{C}\right)\bm{\Sigma}$\;
        Compute curvature correction: $\bm{\Gamma}$\;
        Apply curvature correction: $\bm{\Sigma} \gets \exp(-\bm{\Gamma})\bm{\Sigma}\exp(-\bm{\Gamma})^\top$
    }   
    \end{algorithm}
\end{boxalgorithm}

\subsection{Applicability of the \ac{eqf} and its relation with the \ac{iekf}}

It is clear that the \acl{eqf} is an algorithm built on the principles of respecting the geometry of a system and exploiting its symmetry through the equivariant error definition. However, the \acl{iekf}, introduced by Barrau and Bonnabel~\cite{7523335}, also builds on similar principles. Natural questions that one might ask at this point are: \emph{``What is the relation between the \ac{eqf} and the \ac{iekf}, and what are their differences?''}

the former question is answered in~\cite[appendix B]{vanGoor2022EquivariantEqF}. In particular, \emph{the \ac{eqf} specializes to the \ac{iekf} for systems with group affine dynamics~\cite{7523335} on a Lie group, when the state origin $\xizero$ is chosen to be the identity, and when the coordinate chart of the state space $\vartheta$ is chosen to be the exponential map of the Lie group.} Therefore, the \ac{eqf} design methodology applies to any system posed on a homogeneous space and specializes to the \ac{iekf} for group-affine systems, as depicted in \cref{eq_systems}.

In \cref{bins_chp}, we will extend this result and show that the \ac{eqf} not only specialize to the \ac{iekf}; instead, every modern variant of \ac{ekf} can be interpreted as \aclp{eqf} applied to distinct choices of symmetries.

\begin{figure}[htp]
\centering
\includegraphics[width=\linewidth]{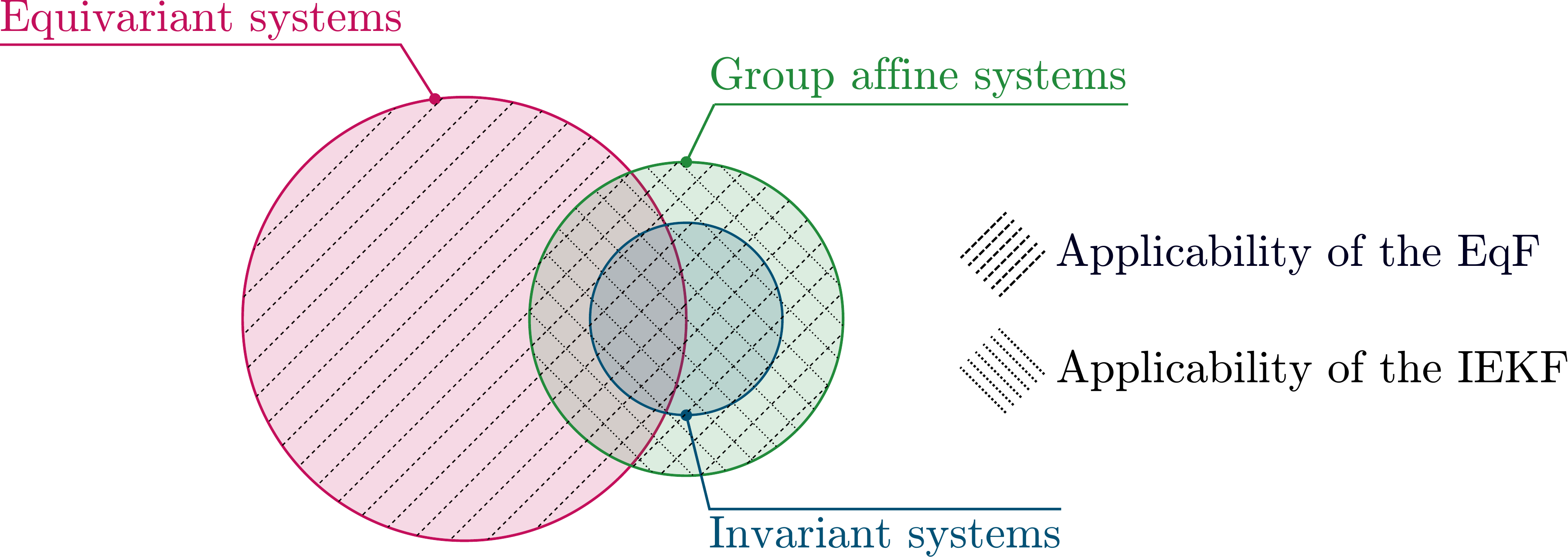}
\caption[Applicability of the \ac{eqf} design methodology.]{The \ac{eqf} design methodology applies to any system posed on a homogeneous space. That is, it applies to both invariant and equivariant systems. In particular, the \ac{eqf} specializes to the well-known \ac{iekf} when the \ac{eqf} design methodology is applied to group-affine and invariant systems.}
\label{eq_systems}
\end{figure}

It is clear that for group-affine systems, there are no differences between the \ac{eqf} and the \ac{iekf}. However, in general, the design methodology and the approach to the estimation problem differ. 

In the \ac{iekf} design methodology, we seek to model the state space of a kinematic system directly on a Lie group and then exploit the invariant error definition to design a filter. This is possible only for systems that possess a natural symmetry $(\grpG, \mathrm{R})$ given by the right translation $\mathrm{R}$ of a Lie group $\grpG$ on the state space of the system $\calG$, given by the $\grpG$-Torsor.

In the \ac{eqf} design methodology, the whole estimation problem is tackled from a different perspective; in particular, we do not seek to model the state space of a kinematic system directly on a Lie group, but we rather seek to find an equivariant symmetry for the system. We believe this change of perspective is crucial and overcomes the limitation of the \ac{iekf} design methodology. First of all, a kinematic system might possess symmetries even when modeling its state space directly on a Lie group is not possible. Furthermore, the same system could possess multiple symmetries with invariance or equivariance properties, which can be a better choice than the natural symmetry for state estimation problems. Specifically, this is the case of the tangent symmetry for biased \aclp{ins} discussed in the following chapters, which represents a major result of this dissertation.

\subsection{Why is the \acl{eqf} good?}
Interesting questions that might arise are \emph{``Why is the \ac{eqf} good?''} or \emph{``What makes the \ac{eqf} good?''} 
Although it should be clear that the \ac{eqf} does not represent a direct solution to an estimation problem but rather a symmetry-based design methodology that respects the geometry of a system; it might still not be fully clear where precisely the advantage of the \ac{eqf} comes from.

The upcoming chapters will clarify that \emph{it is the choice of symmetry the \ac{eqf} is built upon that dictates its performance}. Specifically, an \ac{eqf} based on a symmetry that provides invariance or equivariance allows the definition of a global error with lower linearization error than an \ac{eqf} based on a symmetry that does not provide invariance or equivariance. \emph{The improved linearization of the error dynamics and the output measurement is key in understanding what makes the \ac{eqf} good}.
\chapter[Tangent Group Symmetry for Attitude Systems][Tangent Symmetry for Attitude Systems]{Tangent Group Symmetry for Attitude Systems}\label{bas_chp}
\emph{The present chapter contains results that have been peer-reviewed and published in the IEEE Robotics and Automation Letters~\cite{Fornasier2022OvercomingCalibration}.}
\bigskip

\noindent This chapter introduces the first and easiest example of biased \acl{ins}, that is, the attitude kinematics of a rigid body freely moving in space, equipped with a biased gyroscope and receiving measurements of known directions. The goal of the estimation problem is to simultaneously estimate the attitude of the rigid body and the bias of the gyroscope. 

The results discussed in this chapter are particularly interesting; first of all, we introduce a new symmetry of the attitude kinematics based on the \emph{tangent group of $\SO(3)$}. We tear down the thinking process that has led to the definition of this symmetry group, and we show how the tangent group of $\SO(3)$ is exploited to design an \acl{eqf} for the problem at hand. Moreover, in \cref{bins_chp}, it will be clear that the idea behind the proposed symmetry group generalizes to second order \aclp{ins} and will be the key for modern geometric \ac{ins} filter design. Second, we show the derivation of the proposed \ac{eqf} in detail, and we demonstrate the performance of the proposed filter through a series of experiments, showing that the proposed \ac{eqf} is ideal for low-level attitude estimation of computationally constrained platforms. It is easily implementable, computationally cheap, and superior to the state-of-the-art. 

\begin{figure}[htp]
\centering
\includegraphics[width=\linewidth]{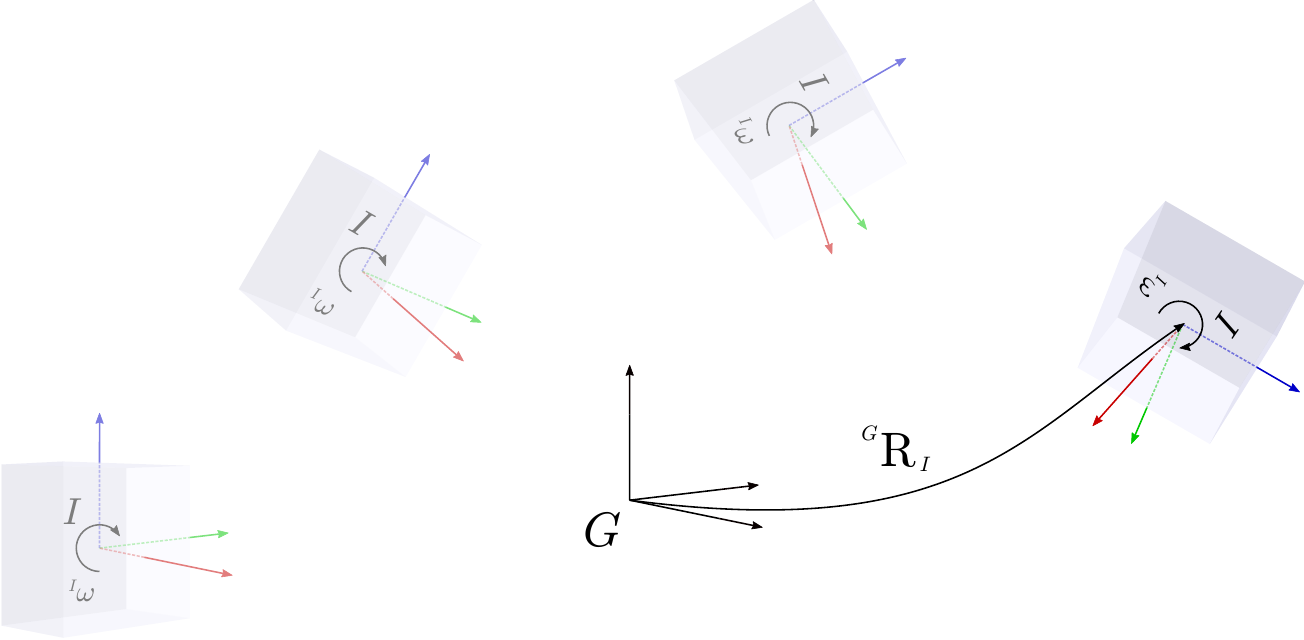}
\caption[Graphical representation of attitude system.]{Graphical representation of the time evolution of a rigid body freely rotating in space, representing the system of interest.}
\label{bas_system}
\end{figure}

\section{The biased attitude system}

Let \frameofref{G} denote the global inertial frame of reference, \frameofref{I} denote the gyroscope frame of reference, and \frameofref{S_i} denote the frame of reference of the $i\,$\ts{th} sensor providing direction measurements. 
Here, we focus on the problem of estimating the rigid body orientation ${\Rot{G}{I}}$ of a moving rigid platform, as well as the gyroscope bias ${\Vector{I}{b_{\bm{\omega}}}{}}$, and the extrinsic calibration ${\Rot{I}{S_i}}$ of the $i\,$\ts{th} direction sensors.
In non-rotating, flat earth assumption, the deterministic (noise-free) system takes the following general form
\begin{subequations}\label{bas_orig_bas}
    \begin{align}
        &\dotRot{G}{I} = \Rot{G}{I}\left(\Vector{I}{\bm{\bm{\omega}}}{} - \Vector{I}{b_{\bm{\omega}}}{}\right)^{\wedge} ,\\
        &\dotVector{I}{b_{\bm{\omega}}}{} = \mathbf{0} ,\\
        &\dotRot{I}{S_i} = \mathbf{0}^{\wedge} \quad \forall\;i=1,\dots,n.
    \end{align}
\end{subequations}
where $\Vector{I}{\bm{\omega}}{}$ is the body-fixed, biased angular velocity measurements provided by the gyroscope. For the sake of generality, note that $n \leq N$, where $N$ is the total number of direction sensors, since there could exist sensors that are already calibrated.

Let ${\gamma = \left(\Rot{G}{I},\, \Vector{I}{b}{\bm{\omega}}\right) \in \torSO\left(3\right) \times \R^{3}}$ denote the system core state, and define ${\mu = \left(\Vector{I}{\bm{\omega}}{},\, \mathbf{0}\right) \subseteq \R^6}$. Let ${\zeta = \left(\Rot{I}{S_1},\, \dots,\, \Rot{I}{S_n}\right) \in \torSO\left(3\right)^n}$ denote the sensors extrinsic calibration states, and define ${\upsilon = \left(\mathbf{0},\, \dots,\, \mathbf{0}\right) \subseteq \R^{3n}}$. Then, the full state of the system writes $\xi = {\left(\gamma,\,\zeta\right) \in \calM \coloneqq \torSO\left(3\right) \times \R^{3} \times \torSO\left(3\right)^n}$, while the system input writes ${u = \left(\mu,\,\upsilon\right) \in \mathbb{L} \subseteq \R^{6+3n}}$.
The full system kinematics in \cref{bas_orig_bas} can be written as $f_u(\xi)$, and in compact affine form as follows:
\begin{equation}\label{bas_bas}
\begin{split}
    \dot{\xi} = f_(\xi) &=\left(\Rot{G}{I}\left(\Vector{I}{\bm{\omega}}{} - \Vector{I}{b}{\bm{\omega}}\right)^{\wedge},\,\mathbf{0}^{\wedge},\,\dots,\,\mathbf{0}^{\wedge}\right) \\
    &= \left(-\Rot{G}{I}\Vector{I}{b}{\bm{\omega}}^{\wedge},\,\mathbf{0}^{\wedge},\,\dots,\,\mathbf{0}^{\wedge}\right) + \left(\Rot{G}{I}\Vector{I}{\bm{\omega}}{}^{\wedge},\,\mathbf{0}^{\wedge},\,\dots,\,\mathbf{0}^{\wedge}\right) ,
\end{split}
\end{equation}

For the estimation problem we consider the measurements of $N$ known directions ${\Vector{G}{d}{1},\, \dots,\, \Vector{G}{d}{N}}$, to be available to the system. The output space is then defined to be $\calN \coloneqq \mathbb{S}^{2N}$, and therefore, the configuration output ${h \AtoB{\calM}{\calN}}$ is written 
\begin{equation}\label{bas_confout}
    \begin{split}
        h\left(\xi\right) = &\left(\Rot{I}{S_1}^{\top}\Rot{G}{I}^{\top}\Vector{G}{d}{1},\,\dots\,\Rot{I}{S_n}^{\top}\Rot{G}{I}^{\top}\Vector{G}{d}{n},\,\dots\,,\right.\\
        &\left.\Rot{G}{I}^{\top}\Vector{G}{d}{n+1},\,\dots\,,\Rot{G}{I}^{\top}\Vector{G}{d}{N}\right) \in \calN .
    \end{split}
\end{equation}

\section{The tangent group $\SO(3) \ltimes \so(3)$ as symmetry group}

For the sake of clarity, in the two following sections, all the superscripts and subscripts associated with reference frames from the state variables and input variables are omitted to improve the readability of the definitions and theorems. 
Moreover, without loss of generality, our analysis is restricted to the case of $N=2$ direction sensors, one with a related calibration state and one assumed calibrated, thus $n=1$. This covers the general case for the filter derivation since the following theorems hold for every $n,\,N>0$.  Any other case can be easily derived, extending the result presented in the following sections.

Note that a nonlinear observability analysis~\cite{StephanM.Weiss2012VisionHelicopters} confirms that $N = 2$ non-parallel measurements is a sufficient condition to render the system fully observable even in the presence of no motion. 

Therefore, we consider the full system state ${\xi = \left(\left(\Rot{}{},\, \Vector{}{b}{}\right),\, \mathbf{C}\right) \in \calM}$, where $\Rot{}{}$ represents the attitude, ${\Vector{}{b}{}}$ the gyroscope bias, and ${\mathbf{C}}$ the calibration state. 
We also consider the output map ${h\left(\xi\right) = \left(\mathbf{C}^{\top}\Rot{}{}^{\top}\Vector{}{d}{1},\,\Rot{}{}^{\top}\Vector{}{d}{2}\right)} \in \calN$.

Before diving directly into the discussion of the proposed symmetry and the derivation of the \acl{eqf} algorithm, let us quickly recall the questions discussed in \cref{eq_chp} and tear down the thinking process to define new symmetries.
First of all, we already know one symmetry for the system of interest, which is the natural symmetry. $\grpG \coloneqq \SO(3) \times \R^3 \times \SO(3)$ is the most intuitive choice, and it is the symmetry that leads to the design of an Imperfect-\ac{iekf}~\cite{7523335, barrau:tel-01247723}. However, this symmetry is not necessarily the best choice. The action of the group on the state space is given by the right translation of the group, and although the natural symmetry leads to equivariance of the system, it does not lead to equivariance of the output. \emph{``Can we derive a symmetry that leads to equivariance of the output?''} Yes, in order to get output equivariance, we would need to choose a different action than the right translation of the group. Let $X = \left(\left(A,\,a\right),\,B\right)$ be an element of the symmetry group $\grpG \coloneqq \SO(3) \times \R^3 \times \SO(3)$, choosing
\begin{equation}\label{bas_phi_direct}
    \phi\left(X, \xi\right) \coloneqq \left(\Rot{}{}A,\, \Vector{}{b}{} - a,\, A^{\top}\mathbf{C}B\right) \in \calM ,
\end{equation}
the symmetry given by $(\grpG, \phi)$ is an equivariant symmetry for the system with equivariance of the output. At this point, we could ask \emph{``Can we find an even better symmetry?''} The answer is, again, yes. Even though there is no rigorous way to prove that one symmetry is better than another one, we can always try to argue based on intuition and verify the intuition via a linearization error analysis or experimental results. 

Consider the representation in \cref{bas_system_symmetries}. In black, the core state variables are represented; in orange, the elements of the symmetry group; and in blue, the new elements of the state space after the action of the symmetry group $\phi$ is applied. On the left side of \cref{bas_system_symmetries}, the symmetry $(\grpG, \phi)$ with $\phi$ in \cref{bas_phi_direct} is represented. Notice that the new bias element $\Vector{}{b}{} - a$ is expressed in the basis given by the gyroscope frame of reference \frameofref{I} before the action $\phi$ is applied. Although this is totally fine, it triggers the question: \emph{``Can we find a symmetry whose action leads to a change of basis of the gyroscope bias to the basis given by the new gyroscope frame of reference after the action $\phi$ is applied?''} as depicted on the right side of \cref{bas_system_symmetries}. The answer is again another yes. In general, the adjoint map $\mathrm{Ad}_X$ acts as a change of basis of the Lie algebra $\gothg$. These considerations have led to the definition of the tangent symmetry for inertial navigation problems.

\begin{figure}[htp]
\centering
\includegraphics[width=\linewidth]{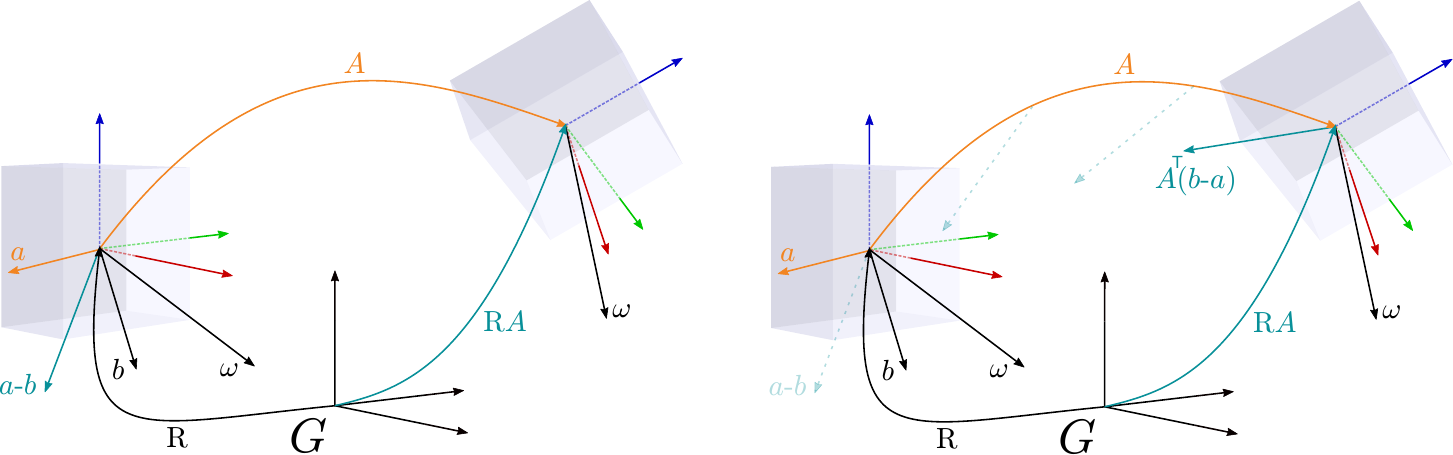}
\caption[Representation of the symmetry of the biased attitude system.]{Representation of the symmetry of the biased attitude system. In black, the core state variables are represented; in orange, the elements of the symmetry group; and in blue, the new elements of the state space after the action of the symmetry group $\phi$ is applied. The left side represents the action in \cref{bas_phi_direct}. The right side represents the action in \cref{bas_phi}.}
\label{bas_system_symmetries}
\end{figure}

\subsection{Equivariance of the system}
Let $X = \left(\left(A,\,a\right),\,B\right)$ be an element of the symmetry group ${\grpG \coloneqq \left(\SO\left(3\right) \ltimes \gothso\left(3\right)\right) \times \SO\left(3\right)}$. Group product and inverse follow those introduced in \cref{math_sdp_sec}, and in particular, the inverse element is written ${X^{-1} = \left(\left(A^{\top},\,-A^{\top}aA\right),\,B^{\top}\right)}$. Let ${X = \left(\left(A_X,\,a_X\right),\,B_X\right)}$ and ${Y = \left(\left(A_Y,\,a_Y\right),\,B_Y\right)}$ the product rule is defined by the semi-direct product in \cref{math_sdp_rule}, as ${XY = \left(\left(A_X A_Y,\,a_X + A_Xa_YA_X^{\top}\right),\,B_X B_Y\right)}$, where $\Adsym{A_X}{a_Y} = A_Xa_YA_X^{\top}$.

\begin{lemma}
Define ${\phi \AtoB{\grpG \times \calM}{\calM}}$ as
\begin{equation}\label{bas_phi}
    \phi\left(X, \xi\right) \coloneqq \left(\Rot{}{}A,\, A^{\top}\left(\Vector{}{b}{} - a^{\vee}\right),\, A^{\top}\mathbf{C}B\right) \in \calM .
\end{equation}
Then, $\phi$ is a transitive right group action of $\grpG$ on $\calM$.
\end{lemma}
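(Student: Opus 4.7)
The plan is to verify the three defining properties directly: the identity axiom $\phi(I,\xi)=\xi$, the right-action composition law $\phi(X,\phi(Y,\xi))=\phi(YX,\xi)$, and transitivity. Since $\calM$ is a cartesian product and the action factors across the three components, each identity can be checked component by component.

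First, I would note that the identity element of $\grpG$ is $I=\bigl((I_3,0^\wedge),I_3\bigr)$, whereupon $\phi(I,\xi) = (R\,I_3,\ I_3^\top(b-0),\ I_3^\top C\, I_3) = \xi$. This handles the identity axiom immediately. For the composition law, I would fix $X=((A_X,a_X),B_X)$ and $Y=((A_Y,a_Y),B_Y)$ and use the semi-direct product rule from \cref{math_sdp_rule} to obtain
\begin{equation*}
YX = \bigl((A_Y A_X,\ a_Y + A_Y a_X A_Y^\top),\ B_Y B_X\bigr).
\end{equation*}
Computing $\phi(X,\phi(Y,\xi))$ one component at a time is routine for the first (rotation) and third (calibration) slots, since these are essentially right translations of $\SO(3)$. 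The second (bias) slot is the only delicate one. There, both sides reduce to checking that
\begin{equation*}
A_X^\top\bigl(A_Y^\top(b-a_Y^\vee) - a_X^\vee\bigr) = (A_Y A_X)^\top\bigl(b-(a_Y + A_Y a_X A_Y^\top)^\vee\bigr),
\end{equation*}
which collapses using the elementary skew identity $(A_Y a_X A_Y^\top)^\vee = A_Y\, a_X^\vee$ for $a_X \in \gothso(3)$. The composition law follows.

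For transitivity, given any two states $\xi_1=(R_1,b_1,C_1)$ and $\xi_2=(R_2,b_2,C_2)$ in $\calM$, I would exhibit an explicit $X=((A,a),B)\in\grpG$ solving $\phi(X,\xi_1)=\xi_2$: set $A=R_1^\top R_2$, then $a^\vee = b_1 - A b_2$, and finally $B = C_1^\top R_1^\top R_2\, C_2$. Each of these is well defined in the respective factor of $\grpG$, so transitivity is immediate.

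The main obstacle is simply bookkeeping around the semi-direct product and the wedge/vee notation on the Lie algebra slot; the adjoint term in the group law has to be translated into the $\Vector{}{b}{}$-coordinates correctly, and the identity $(A_Y a_X A_Y^\top)^\vee = A_Y a_X^\vee$ is the key algebraic fact that makes the composition check close. None of this is deep, but it is the only place where the semi-direct structure (as opposed to a direct product) actually matters, and it is exactly what motivates the chosen form of $\phi$ on the bias component.
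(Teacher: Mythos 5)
Your proposal is correct and follows essentially the same route as the paper: the same componentwise verification of the right-action composition law, the same key identity $(A\,a\,A^{\top})^{\vee} = A\,a^{\vee}$ to absorb the adjoint term in the semi-direct product, and the same explicit group element (with $A = \Rot{}{1}^{\top}\Rot{}{2}$, $a^{\vee} = \Vector{}{b}{1} - A\Vector{}{b}{2}$, $B = \mathbf{C}_1^{\top}\Rot{}{1}^{\top}\Rot{}{2}\mathbf{C}_2$) for transitivity. The only addition is your explicit check of the identity axiom, which the paper omits as trivial.
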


\begin{proof}
Let ${X, Y \in \grpG}$ and $\xi \in \calM$. Then, 
\begin{align*}
    \phi\left(X,\phi\left(Y, \xi\right)\right) &= \phi\left(X,\, \left(\Rot{}{}A_Y,\, A_Y^{\top}\left(\Vector{}{b}{} - a_Y^{\vee}\right),\, A_Y^{\top}\mathbf{C}B_Y\right)\right)\\
    &= \left(\Rot{}{}A_YA_X,\, A_X^{\top}\left(A_Y^{\top}\left(\Vector{}{b}{} - a_Y^{\vee}\right) - a_X^{\vee}\right),\,A_X^{\top}A_Y^{\top}\mathbf{C}B_YB_X\right)\\
    &= \left(\Rot{}{}\left(A_YA_X\right),\, \left(A_YA_X\right)^{\top}\left(\Vector{}{b}{} - \left(a_Y^{\vee} + A_Ya_X^{\vee}\right)\right),\right.\\
    &\quad\left.\left(A_YA_X\right)^{\top}\mathbf{C}\left(B_YB_X\right)\right)\\
    &= \phi\left(YX,\, \xi\right) ,
\end{align*}
where we have used the property of the Adjoint map and the Adjoint matrix ${(Aa^{\vee})^{\wedge} = AaA^{\top}}$. This shows that $\phi$ is a valid right group action. Then, ${\forall \; \xi_1, \xi_2 \in \calM}$ we can always write the group element ${Z = \left(\left(\Rot{}{1}^{\top}\Rot{}{2},\, \Vector{}{b}{1} - \Rot{}{1}^{\top}\Rot{}{2}\Vector{}{b}{2}\right),\,\mathbf{C}_1^{\top}\Rot{}{1}^{\top}\Rot{}{2}\mathbf{C}_2\right)}$, such that
\begin{align*}
    \phi\left(Z, \xi_1\right) &= \left(\left(\Rot{}{1}\Rot{}{1}^{\top}\Rot{}{2},\, \left(\Rot{}{1}^{\top}\Rot{}{2}\right)^{\top}\left(\Vector{}{b}{1} - \Vector{}{b}{1} + \Rot{}{1}^{\top}\Rot{}{2}\Vector{}{b}{2}\right)\right),\right.\\
    &\quad\left.\left(\Rot{}{1}^{\top}\Rot{}{2}\right)^{\top}\mathbf{C}_1\mathbf{C}_1^{\top}\Rot{}{1}^{\top}\Rot{}{2}\mathbf{C}_2\right)\\
    &= \left(\left(\Rot{}{2},\, \Vector{}{b}{2}\right),\,\mathbf{C}_2\right)\\
    &= \xi_2 ,
\end{align*}
which demonstrates the transitive property of the group action.
\end{proof}

\begin{remark}
Note that in case of multiple calibration states, $(n>1)$, we would extend the symmetry group with multiple copies of $\SO(3)$, that is ${X = \left(\left(A,\,a,\right)\,B_1,\,B_2\,\dots\right)}$. As a consequence, the action $\phi$ is extended with ${\phi\left(X, \xi\right) \coloneqq \left(\left(\Rot{}{}A,\, A^{\top}\left(\Vector{}{b}{} - a^{\vee}\right)\right),\, A^{\top}\mathbf{C}_1 B_1,\, A^{\top}\mathbf{C}_2 B_2,\,\dots\right)}$ 
\end{remark}

\begin{lemma}
Define ${\psi \AtoB{\grpG \times \vecL}{\vecL}}$ as
\begin{equation}\label{bas_psi}
    \begin{split}
        \psi\left(X,u\right) &\coloneqq \left(A^{\top}\left(\bm{\omega} - a^{\vee}\right),\, \mathbf{0},\, \mathbf{0}\right) ,
    \end{split}
\end{equation}
Then, $\psi$ is a right group action of $\grpG$ on $\vecL$.
\end{lemma}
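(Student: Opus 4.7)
The statement is that $\psi$ defined in \cref{bas_psi} is a right group action of $\grpG$ on $\vecL$. The plan is to verify the two defining axioms directly: (i) $\psi(I,u)=u$ at the identity element $I=((I,\mathbf{0}^{\wedge}),I)$, and (ii) the compatibility condition $\psi(X,\psi(Y,u))=\psi(YX,u)$ for all $X,Y\in\grpG$, $u\in\vecL$. The identity check is immediate, since with $A=I$ and $a^{\vee}=\mathbf{0}$ the formula collapses to $\psi(I,u)=(\bm{\omega},\mathbf{0},\mathbf{0})=u$.

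For the compatibility axiom, I would first expand the inner action and obtain
\begin{equation*}
\psi(X,\psi(Y,u))=\left(A_X^{\top}\bigl(A_Y^{\top}(\bm{\omega}-a_Y^{\vee})-a_X^{\vee}\bigr),\,\mathbf{0},\,\mathbf{0}\right),
\end{equation*}
and distribute $A_X^{\top}$ across the inner expression to rewrite this as $\bigl((A_YA_X)^{\top}(\bm{\omega}-a_Y^{\vee})-A_X^{\top}a_X^{\vee},\,\mathbf{0},\,\mathbf{0}\bigr)$. Next, using the group product rule inherited from the semi-direct product $\SO(3)\ltimes\gothso(3)$ recalled above the lemma, namely $YX=\bigl((A_YA_X,\,a_Y+A_Ya_XA_Y^{\top}),\,B_YB_X\bigr)$, I would compute $\psi(YX,u)$ directly and invoke the equivariance identity $(A_Ya_XA_Y^{\top})^{\vee}=A_Ya_X^{\vee}$ (the same property of the Adjoint matrix of $\SO(3)$ that was used to verify the $\phi$-action) to simplify. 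This yields
\begin{equation*}
\psi(YX,u)=\bigl((A_YA_X)^{\top}(\bm{\omega}-a_Y^{\vee})-A_X^{\top}A_Y^{\top}A_Ya_x^{\vee},\,\mathbf{0},\,\mathbf{0}\bigr),
\end{equation*}
which, after cancelling $A_Y^{\top}A_Y=I$, coincides exactly with the expression previously derived for $\psi(X,\psi(Y,u))$.

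The verification is essentially a one-line matching of terms once the semi-direct product structure and the identity $(AaA^{\top})^{\vee}=Aa^{\vee}$ are applied; there is no subtle obstacle here. The only point that warrants care is the sign/order convention: because we use the right-action convention, the composition must produce $YX$ (not $XY$), so it is important to keep the ordering of $A_X$ and $A_Y$ consistent with the group multiplication stated just before the lemma. Apart from this bookkeeping, the argument is purely computational and mirrors the structure of the preceding proof that $\phi$ is a right action.
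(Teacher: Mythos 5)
Your proposal is correct and follows essentially the same route as the paper: expand $\psi(X,\psi(Y,u))$, pull $A_X^{\top}$ through, and match it against $\psi(YX,u)$ computed from the semi-direct product rule via the identity $(A_Y a_X A_Y^{\top})^{\vee}=A_Y a_X^{\vee}$. The only differences are cosmetic (you additionally check the identity axiom explicitly, and there is a harmless typo $a_x^{\vee}$ for $a_X^{\vee}$).
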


\begin{proof}
Let ${X, Y \in \grpG}$ and $u \in \vecL$. Then, 
\begin{align*}
    \psi\left(X,\psi\left(Y, u\right)\right) &= \psi\left(X,\left(A_Y^{\top}\left(\bm{\omega} - a_Y^{\vee}\right),\, \mathbf{0},\, \mathbf{0}\right)\right)\\
    &= \left(A_X^{\top}\left(A_Y^{\top}\left(\bm{\omega} - a_Y^{\vee}\right) - a_X^{\vee}\right),\, \mathbf{0},\, \mathbf{0}\right)\\
    &= \left(\left(A_YA_X\right)^{\top}\left(\bm{\omega} - \left(a_Y^{\vee} + A_Ya_X^{\vee}\right)\right),\, \mathbf{0},\, \mathbf{0}\right)\\
    &= \psi\left(YX, u\right) .
\end{align*}
Thus proving that $\psi$ is a valid right group action.
\end{proof}

\begin{remark}
In case of multiple calibration states $(n>1)$, we would simply repeat the last zero entry $n$ time in the action $\psi$. That is ${\psi\left(X,u\right) \coloneqq \left(A^{\top}\left(\bm{\omega} - a^{\vee}\right),\, \mathbf{0},\, \mathbf{0},\, \mathbf{0},\, \dots\right)}$
\end{remark}

\begin{theorem}
The biased attitude system in \cref{bas_bas} is equivariant under the actions $\phi$ in \cref{bas_phi} and $\psi$ in \cref{bas_psi} of the symmetry group $\grpG$. That is it satisfy \cref{eq_equi}:
\begin{equation*}
    f_{\psi_{X}\left(u\right)}\left(\xi\right) = \Phi_{X}f_{u}\left(\xi\right) .
\end{equation*}
\end{theorem}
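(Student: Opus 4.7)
The plan is to verify the defining equivariance identity $f_{\psi_{X}(u)}(\xi) = \td\phi_{X} \circ f_u \circ \phi_{X^{-1}}(\xi)$ by direct computation of both sides and checking that they coincide componentwise. Since $f$ decomposes into a rotation part, a bias part, and a calibration part, I would treat the three components separately; the latter two are trivial because they are identically zero, so essentially only the rotation component requires care.

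First I would compute the left-hand side. Substituting $\psi_{X}(u) = (A^{\top}(\bm{\omega} - a^{\vee}),\, \mathbf{0},\, \mathbf{0})$ from \cref{bas_psi} into \cref{bas_bas} while leaving $\xi = (\Rot{}{}, \Vector{}{b}{}, \mathbf{C})$ unchanged yields
\begin{equation*}
    f_{\psi_{X}(u)}(\xi) = \bigl(\Rot{}{}\bigl(A^{\top}\bm{\omega} - A^{\top}a^{\vee} - \Vector{}{b}{}\bigr)^{\wedge},\, \mathbf{0},\, \mathbf{0}^{\wedge}\bigr).
\end{equation*}

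For the right-hand side I would proceed in three stages. Using $X^{-1} = (A^{\top}, -A^{\top} a A, B^{\top})$ together with the $\SO(3)$ identity $(A^{\top} a A)^{\vee} = A^{\top} a^{\vee}$, I apply $\phi_{X^{-1}}$ to obtain $\phi_{X^{-1}}(\xi) = (\Rot{}{}A^{\top},\, A\Vector{}{b}{} + a^{\vee},\, A\mathbf{C}B^{\top})$. Next I evaluate $f_u$ at this transformed state, which produces the rotational tangent $\Rot{}{}A^{\top}(\bm{\omega} - A\Vector{}{b}{} - a^{\vee})^{\wedge}$ together with zero tangents in the remaining two slots. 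Finally I push this tangent vector forward through $\td\phi_{X}$; the key computation is that right translation by $A$ sends a rotational tangent $R'\Omega'$ at $R'$ to $R'\Omega' A = (R'A)(A^{\top}\Omega' A)$ at $R'A$, so the angular velocity gets rebased by $\mathrm{Ad}_{A^{\top}}$ and the rotational component of $\td\phi_{X} f_u \circ \phi_{X^{-1}}(\xi)$ becomes $\Rot{}{}\bigl(A^{\top}\bm{\omega} - \Vector{}{b}{} - A^{\top} a^{\vee}\bigr)^{\wedge}$, in exact agreement with the left-hand side.

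The main subtlety of the proof lies in seeing that three contributions cancel correctly in the rotational component: the rebasing of the angular velocity by $\mathrm{Ad}_{A^{\top}}$, the inverse-shift $A\Vector{}{b}{} + a^{\vee}$ introduced by $\phi_{X^{-1}}$ on the bias, and the shift $-A^{\top} a^{\vee}$ that appears in the input action $\psi$. This mutual compatibility is precisely what distinguishes the tangent group $\SO(3) \ltimes \gothso(3)$ from the direct product $\SO(3) \times \R^{3}$ and explains \emph{why} the semi-direct structure is the right choice here. The bias and calibration components, by contrast, require no work beyond noting that the actions on those slots are linear (affine in the case of the bias) and that $\mathbf{0}$ is a fixed point of both their forward and their differential maps.
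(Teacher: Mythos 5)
Your proposal is correct and follows essentially the same route as the paper's own proof: both compute the induced action $\Phi_{X}f_{u}(\xi) = \td\phi_{X} \circ f_{u} \circ \phi_{X^{-1}}(\xi)$ by first evaluating $\phi_{X^{-1}}(\xi) = \left(\Rot{}{}A^{\top},\, A\Vector{}{b}{} + a^{\vee},\, A\mathbf{C}B^{\top}\right)$, then pushing the resulting rotational tangent through right translation by $A$ and using the $\SO(3)$ Adjoint identity to rebase the angular velocity, arriving at $\Rot{}{}\left(A^{\top}\bm{\omega} - A^{\top}a^{\vee} - \Vector{}{b}{}\right)^{\wedge}$ in agreement with $f_{\psi_{X}(u)}(\xi)$. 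Your additional remarks on why the three shifts cancel and on the role of the semi-direct structure are sound commentary but do not change the argument.
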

\begin{proof}
Let ${X \in \grpG}$, ${\xi \in \calM}$ and ${u \in \vecL}$, then the inverse of the group action defined in \cref{bas_phi} is written
\begin{equation*}
    \phi\left(X^{-1},\,\xi\right) \coloneqq \left(\left(\Rot{}{}A^{\top},\, A\Vector{}{b}{} + a^{\vee}\right),\, A\mathbf{C}B^{\top}\right) \in \calM .
\end{equation*}
Therefore, computing the induced action defined in \cref{math_induced_action} yields
\begin{align*}
    \Phi_{X}f_{u}\left(\xi\right) &= \left(\left(\Rot{}{}\left(A^{\top}\left(\bm{\omega} - a^{\vee}\right)^{\wedge} - \Vector{}{b}{}^{\wedge}A^{\top}\right)A,\, \mathbf{0}\right),\,\mathbf{0}^{\wedge}\right) \\
    &= \left(\left(\Rot{}{}\left(\left(A^{\top}\left(\bm{\omega} - a^{\vee}\right)\right)^{\wedge} - \Vector{}{b}{}^{\wedge}\right),\, \mathbf{0}\right),\,\mathbf{0}^{\wedge}\right) \\
    &= f_{\psi_{X}\left(u\right)}\left(\xi\right) ,
\end{align*}
proving the equivariance of the system.
\end{proof}

\subsection{Equivariance of the output}
\begin{lemma}
Define ${\rho \AtoB{\grpG \times \calN}{\calN}}$ as
\begin{equation}\label{bas_rho}
    \rho\left(X,y\right) \coloneqq \left(B^{\top}y_{1},\,A^{\top}y_{2}\right) .
\end{equation}
Then, the configuration output defined in \cref{bas_confout}, for $N=2$ and $n=1$ is equivariant.
\end{lemma}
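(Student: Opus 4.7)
The plan is to verify the two conditions that underpin the claim: first, that $\rho$ satisfies the axioms of a right group action on $\calN = \mathbb{S}^2 \times \mathbb{S}^2$, and second, that the configuration output $h$ intertwines the state action $\phi$ with $\rho$ in the sense of \cref{eq_out_equi}, namely $h(\phi(X,\xi)) = \rho(X, h(\xi))$ for all $X \in \grpG$ and $\xi \in \calM$.

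First I would check the right-action axioms for $\rho$. The identity element of $\grpG$ is $I = ((I,0),I)$, for which $\rho(I,y) = (I^\top y_1, I^\top y_2) = y$. For the composition axiom, take $X = ((A_X,a_X),B_X)$ and $Y = ((A_Y,a_Y),B_Y)$ and compute
\begin{align*}
\rho(X,\rho(Y,y)) &= \rho\bigl(X, (B_Y^\top y_1, A_Y^\top y_2)\bigr) \\
&= (B_X^\top B_Y^\top y_1,\; A_X^\top A_Y^\top y_2) \\
&= \bigl((B_YB_X)^\top y_1,\; (A_YA_X)^\top y_2\bigr) = \rho(YX, y),
\end{align*}
which matches the convention for right actions since $YX$ is the product computed via the semi-direct product rule given earlier. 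Note that only the $\SO(3)$ factors of $YX$ appear, so the $\so(3)$ component of the semi-direct product plays no role in this step.

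Second, I would verify equivariance of $h$ by direct substitution. Writing out the output on the transformed state gives
\begin{align*}
h(\phi(X,\xi)) &= \bigl((A^\top \Rot{}{} \mathbf{C} B)^\top (\Rot{}{}A)^\top \Vector{G}{d}{1},\; (\Rot{}{}A)^\top \Vector{G}{d}{2}\bigr)
\end{align*}
after reading off the transformed attitude $\Rot{}{}A$ and transformed calibration $A^\top \mathbf{C} B$ from \cref{bas_phi}. Using $A A^\top = I$, the first component simplifies to $B^\top \mathbf{C}^\top \Rot{}{}^\top \Vector{G}{d}{1}$ and the second to $A^\top \Rot{}{}^\top \Vector{G}{d}{2}$, which is exactly $\rho(X, h(\xi))$ with $h(\xi) = (\mathbf{C}^\top \Rot{}{}^\top \Vector{G}{d}{1}, \Rot{}{}^\top \Vector{G}{d}{2})$.

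There is no real obstacle here; the lemma is essentially a bookkeeping check, and the only subtlety is noticing that the $\so(3)$ component $a$ of the symmetry group does not enter the output action at all, which is consistent with the fact that the bias is unobservable from a single snapshot of direction measurements. I would close by remarking that the extension to general $n \le N$ follows the same pattern, introducing a factor $B_i^\top$ for each calibrated sensor and $A^\top$ for each uncalibrated one, without changing the structure of the argument.
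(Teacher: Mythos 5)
Your proof is correct and follows essentially the same direct-substitution argument as the paper, which simply computes $\rho(X,h(\xi))$ and observes that it equals $h(\phi(X,\xi))$; your additional verification of the right-action axioms for $\rho$ is a harmless extra check that the paper omits. One transcription slip: in your display the transformed calibration read off from $\phi$ should be $A^{\top}\mathbf{C}B$, not $A^{\top}\Rot{}{}\mathbf{C}B$, as your own prose and the subsequent simplification via $AA^{\top} = I$ correctly assume.
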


\begin{proof}
    Let ${X \in \grpG}$ and ${h\left(\xi\right) = \left(\mathbf{C}^{\top}\Rot{}{}^{\top}\Vector{}{d}{1},\,\Rot{}{}^{\top}\Vector{}{d}{2}\right) \in \calN}$ then,
    \begin{equation}
        \rho\left(X, h\left(\xi\right)\right) = \left(B^{\top}\mathbf{C}^{\top}\Rot{}{}^{\top}\Vector{}{d}{1},\,A^{\top}\Rot{}{}^{\top}\Vector{}{d}{2}\right) = h\left(\phi\left(X,\xi\right)\right) .
    \end{equation}
    This proves the output equivariance. 
\end{proof}

\begin{remark}
Here is where we have the major changes when explicitly considering the extrinsic calibration of a sensor. 
In particular, note that for any sensor that does not have an associated calibration states, the ${\rho}$ action is defined by ${A^{\top}y}$, where ${y}$ is the sensor measurement, whereas for any sensor that does have an associated calibration states, the ${\rho}$ action is the sensor measurement ${y}$ pre-multiplied by the element of the symmetry group relative to the sensor extrinsic calibration. 
More concretely, in the case of ${N=n=2}$ the ${\rho}$ action is ${\rho\left(X,y\right) \coloneqq \left(B_1^{\top}y_{1},\,B_2^{\top}y_{2}\right)}$, in the case of ${N=2, n=0}$ the ${\rho}$ action is ${\rho\left(X,y\right) \coloneqq \left(A^{\top}y_{1},\,A^{\top}y_{2}\right)}$, in the mixed case the ${\rho}$ action is as shown in \cref{bas_rho}.
\end{remark}

\subsection{Equivariant Lift and Lifted System}
\begin{theorem}
Define ${\Lambda \AtoB{\calM \times \vecL} \gothg}$ as
\begin{equation}\label{bas_lift}
    \Lambda\left(\xi, u\right) \coloneqq \left(\left(\left(\Vector{}{\bm{\omega}}{} - \Vector{}{b}{}\right)^{\wedge},\, - \left(\Vector{}{\omega}{}^{\wedge}\Vector{}{b}{}\right)^{\wedge}\right),\, \mathbf{C}^{\top}\left(\Vector{}{\bm{\omega}}{} - \Vector{}{b}{}\right)^{\wedge}\mathbf{C}\right) .
\end{equation}
Then, the map ${\Lambda\left(\xi, u\right)}$ is an equivariant lift for the system in \cref{bas_bas} with respect to the defined symmetry group.
\end{theorem}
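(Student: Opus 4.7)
The plan is to verify the two defining properties separately: (i) that $\Lambda$ is a lift, i.e.\ it satisfies \cref{eq_lift}, and (ii) that the lift is equivariant, i.e.\ it satisfies \cref{eq_lift_equi}. Throughout I will denote $\Lambda(\xi,u) = ((\alpha,\beta),\gamma)$ with $\alpha = (\Vector{}{\omega}{}-\Vector{}{b}{})^\wedge$, $\beta = -(\Vector{}{\omega}{}^\wedge \Vector{}{b}{})^\wedge$, and $\gamma = \mathbf{C}^\top(\Vector{}{\omega}{}-\Vector{}{b}{})^\wedge \mathbf{C}$.

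For part (i), I would take a curve $X(t) \in \grpG$ with $X(0) = I$ and $\dot X(0) = \Lambda(\xi,u)$, parametrized as $A(t) = \exp(t\alpha)$, $a(t) = t\beta$, $B(t) = \exp(t\gamma)$, and differentiate $\phi(X(t),\xi)$ componentwise at $t=0$. Using $\frac{d}{dt}\big|_{0} \exp(t\alpha)^\top = -\alpha$, one obtains the three velocities $\dot{\Rot{}{}}(0) = \Rot{}{}\alpha$, $\dot{\Vector{}{b}{}}(0) = -\alpha\Vector{}{b}{} - \beta^\vee$, and $\dot{\mathbf{C}}(0) = -\alpha \mathbf{C} + \mathbf{C}\gamma$. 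Substituting the chosen $\alpha,\beta,\gamma$, the rotation velocity becomes $\Rot{}{}(\Vector{}{\omega}{}-\Vector{}{b}{})^\wedge$ as required, the bias velocity is $-(\Vector{}{\omega}{}-\Vector{}{b}{})^\wedge \Vector{}{b}{} + \Vector{}{\omega}{}^\wedge \Vector{}{b}{} = \Vector{0}$ thanks to the cancellation $\Vector{}{b}{}\times\Vector{}{b}{}=\Vector{0}$, and the calibration velocity is $-(\Vector{}{\omega}{}-\Vector{}{b}{})^\wedge \mathbf{C} + \mathbf{C}\mathbf{C}^\top(\Vector{}{\omega}{}-\Vector{}{b}{})^\wedge \mathbf{C} = \Vector{0}$. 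This matches $f(\xi,u)$ in \cref{bas_bas}, establishing \cref{eq_lift}. The ``mysterious'' $\beta = -(\Vector{}{\omega}{}^\wedge\Vector{}{b}{})^\wedge$ is precisely what is needed to kill the extra $-\alpha \Vector{}{b}{}$ term that arises from the change of basis of the bias in $\phi$.

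For part (ii), I would compute both sides of \cref{eq_lift_equi}. Using the adjoint formula for the semi-direct product $\SO(3)\ltimes\so(3)$ derived in \cref{math_sdp_sec} together with the trivial adjoint action of the $\SO(3)$ factor on itself, the left-hand side unfolds to
\[
\Adsym{X^{-1}}{\Lambda(\xi,u)} = \bigl((A^\top \alpha A,\; A^\top\beta A - [A^\top a A,\, A^\top\alpha A]),\; B^\top\gamma B\bigr).
\]
For the right-hand side I would substitute $\phi_X(\xi) = (\Rot{}{}A,\, A^\top(\Vector{}{b}{}-a^\vee),\, A^\top \mathbf{C} B)$ and $\psi_X(u) = (A^\top(\Vector{}{\omega}{}-a^\vee),\Vector{0},\Vector{0})$ directly into the definition of $\Lambda$. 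The first component matches immediately via $(A^\top v)^\wedge = A^\top v^\wedge A$; the third component matches by the same identity applied twice, since $a^\vee$ cancels between $\Vector{}{\omega}{}$ and $\Vector{}{b}{}$ in $\omega'-b' = A^\top(\Vector{}{\omega}{}-\Vector{}{b}{})$.

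The main obstacle I anticipate is the second component, where one must show
\[
A^\top\beta A - [A^\top a A, A^\top \alpha A] = -\bigl(A^\top(\Vector{}{\omega}{}-a^\vee)\times A^\top(\Vector{}{b}{}-a^\vee)\bigr)^\wedge.
\]
To handle this, I would use the bracket identity $[u^\wedge,v^\wedge] = (u\times v)^\wedge$ to rewrite the commutator as a cross product of $A^\top a^\vee$ and $A^\top(\Vector{}{\omega}{}-\Vector{}{b}{})$, then pull $A^\top$ out of the right-hand side using the $\SO(3)$-equivariance of the cross product $(A^\top u)\times(A^\top v) = A^\top(u\times v)$, and finally expand $(\Vector{}{\omega}{}-a^\vee)\times(\Vector{}{b}{}-a^\vee)$ using bilinearity and $a^\vee\times a^\vee = \Vector{0}$. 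The surviving terms $\Vector{}{\omega}{}\times \Vector{}{b}{} + a^\vee\times\Vector{}{\omega}{} - a^\vee\times \Vector{}{b}{}$ on both sides then agree term by term. This completes the verification that $\Lambda$ is an equivariant lift.
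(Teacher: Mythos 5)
Your proposal is correct and follows essentially the same route as the paper: verify the lift condition by differentiating $\phi_\xi$ in the direction $\Lambda$ (your curve parametrization is just an explicit way of computing that differential, and the cancellations $\Vector{}{b}{}\times\Vector{}{b}{}=\Vector{0}$ and $\mathbf{C}\mathbf{C}^\top = \eye$ are exactly the ones the paper uses), then verify equivariance via the semi-direct product Adjoint. The only cosmetic difference is that you check $\Adsym{X^{-1}}{\Lambda(\xi,u)} = \Lambda(\phi_X(\xi),\psi_X(u))$ while the paper checks the equivalent identity $\Adsym{X}{\Lambda(\phi_X(\xi),\psi_X(u))} = \Lambda(\xi,u)$; the term-by-term cross-product bookkeeping is the same in both.
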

\begin{proof}
Let ${\xi \in \calM}, u \in \vecL$, then solving \cref{eq_lift} as in \cref{eq_lift_example}
\begin{align*}
    \td\phi_{\xi}\left[\Lambda\left(\xi, u\right)\right] &= \left(\left(\Rot{}{}\left(\Vector{}{\bm{\omega}}{} - \Vector{}{b}{}\right)^{\wedge},\, - \left(\Vector{}{\bm{\omega}}{} - \Vector{}{b}{}\right)^{\wedge}\Vector{}{b}{} - \left(- \Vector{}{\omega}{}^{\wedge}\Vector{}{b}{}\right)\right),\right.\,\\
    &\quad\left. \mathbf{C}\mathbf{C}^{\top}\left(\Vector{}{\bm{\omega}}{} - \Vector{}{b}{}\right)^{\wedge}\mathbf{C} - \left(\Vector{}{\bm{\omega}}{} - \Vector{}{b}{}\right)^{\wedge}\mathbf{C}\right)\\
    &= \left(\left(\Rot{}{}\left(\Vector{}{\bm{\omega}}{} - \Vector{}{b}{}\right)^{\wedge},\, \mathbf{0}\right),\, \mathbf{0}^{\wedge}\right)\\
    &= f_0\left(\xi\right) + f_{u}\left(\xi\right) .
\end{align*}
To demonstrate the equivariance of the lift, we proceed by showing that the condition in \cref{eq_lift_equi} holds. Let ${\xi \in \calM}, u \in \vecL$ and ${X \in \grpG}$, then
\begin{align*}
    &\Adsym{X}{\Lambda\left(\phi_{X}\left(\xi\right),\psi_{X}\left(u\right)\right)} = X\Lambda\left(\phi_{X}\left(\xi\right),\psi_{X}\left(u\right)\right)X^{-1}\\
    &= \left(\left(\left(AA^{\top}\left(\Vector{}{\bm{\omega}}{} - \Vector{}{b}{}\right)\right)^{\wedge},\, -\left(A\left(A^{\top}\left(\Vector{}{\bm{\omega}}{} - a\right)\right)^{\wedge}A^{\top}\left(\Vector{}{b}{} - a\right) + \left(AA^{\top}\left(\Vector{}{\bm{\omega}}{} - \Vector{}{b}{}\right)\right)^{\wedge}a\right)^{\wedge}\right), \right.\\
    &\quad\left. B\left(B^{\top}\mathbf{C}^{\top}\left(\Vector{}{\bm{\omega}}{} - \Vector{}{b}{}\right)^{\wedge}\mathbf{C}B\right)B^{\top}\right) ,\\
    &=\left(\left(\left(\Vector{}{\bm{\omega}}{} - \Vector{}{b}{}\right)^{\wedge},\,-\left(\left(\Vector{}{\bm{\omega}}{} - a\right)^{\wedge}\left(\Vector{}{b}{} - a\right)+\left(\Vector{}{\bm{\omega}}{} - \Vector{}{b}{}\right)^{\wedge}a\right)^{\wedge}\right),\,\mathbf{C}^{\top}\left(\Vector{}{\bm{\omega}}{} - \Vector{}{b}{}\right)^{\wedge}\mathbf{C}\right) ,\\
    &=\left(\left(\left(\Vector{}{\bm{\omega}}{} - \Vector{}{b}{}\right)^{\wedge},\,-\left(\Vector{}{\bm{\omega}}{}^{\wedge}\Vector{}{b}{}\right)^{\wedge}\right),\,\mathbf{C}^{\top}\left(\Vector{}{\bm{\omega}}{} - \Vector{}{b}{}\right)^{\wedge}\mathbf{C}\right)\\
    &= \Lambda\left(\xi, u\right) ,
\end{align*}
proving the equivariance of the lift.
\end{proof}

The lift $\Lambda$ in \cref{bas_lift} associates the system input with the Lie algebra of the symmetry group and allows the construction of a lifted system on the symmetry group as described in \cref{eq_sym_sec}. Let $X \in \grpG$ be the state of the lifted system, and let $\xizero \in \calM$ be the selected origin, then the lifted system is written as
\begin{equation}
    \dot{X} = \td L_{X}\Lambda\left(\phi_{\xizero}\left(X\right), u\right) = X\Lambda\left(\phi_{\xizero}\left(X\right), u\right).
\end{equation}

\section{Equivariant filter design}
Let ${\Lambda}$ be the equivariant lift defined in \cref{bas_lift}, and ${\hat{X} \in \grpG}$ be the \acl{eqf} state, with initial condition ${\hat{X}\left(0\right) = I}$, the identity element of the symmetry group $\grpG$. Choose an arbitrary state origin  $\xizero$, note that we would generally derive the \ac{eqf} matrices with generic $\xizero$ and then set $\xizero$ with given initial conditions, however, for the sake of simplicity, in this chapter we derive the \ac{eqf} matrices with identity $\xizero$. 
Furthermore, as discussed in \cref{eq_chp}, the equivariant error evolves on the homogenous space $\calM$. Hence, a set of local coordinates on the homogeneous space needs to be defined. The choice of local coordinates is free; however, a natural choice is represented by exponential coordinates. Define
\begin{equation}\label{bas_local_coords}
    \varepsilon = \vartheta\left(e\right) = \vartheta\left(e_{R},\,e_{b},\, e_{C}\right) = \left(\log\left(e_{R}\right)^{\vee},\, e_{b},\, \log\left(e_{C}\right)^{\vee}\right) \in \R^{9} ,
\end{equation}
with ${\vartheta\left(\xizero\right) = \mathbf{0} \in \R^{9}}$. Note that since the output is equivariant, we could have chosen normal coordinates for improved linearization of the output map; however, for the sake of simplicity, we opted for exponential coordinates. The derivation for second-order \aclp{ins} with arbitrary $\xizero$ and normal coordinates is shown in \cref{appendix_A_chp}.

Let us now derive the \ac{eqf} matrices, starting with the state matrix $\mathbf{A}_{t}^{0}$ given by the solution of \cref{eq_A0}. The first step is to compute the differential of the coordinate chart. Given the choice of exponential coordinates, the solution is trivial, ${\Fr{\varepsilon}{\mathbf{0}}\vartheta^{-1}\left(\varepsilon\right)\left[\varepsilon\right] = \varepsilon^{\wedge}.}$
The second step involves deriving the differential of the lift in the direction ${\varepsilon^{\wedge}}$, that is ${\Fr{e}{\xizero}\Lambda\left(e, \mathring{u}\right)\Fr{\varepsilon}{\mathbf{0}}\vartheta^{-1}\left(\varepsilon\right)\left[\varepsilon\right] = \Fr{e}{\xizero}\Lambda\left(e, \mathring{u}\right)\left[\varepsilon^{\wedge}\right]}$. Let ${\mathring{u} \coloneqq \psi\left(\hat{X}^{-1}, u\right)  = \left(\mathring{\Vector{}{\omega}{}},\,\mathbf{0},\,\mathbf{0},\,\mathbf{0}\right)}$ be the origin input, the derivation of the differential of the lift can be broken down into two distinct derivations:
\begin{align*}
     \left(\mathring{\Vector{}{\omega}{}}^{\wedge} - \varepsilon_{b}^{\wedge},\, - \left(\mathring{\Vector{}{\omega}{}}^{\wedge}\varepsilon_{b}\right)^{\wedge}\right) - \left(\mathring{\Vector{}{\omega}{}}^{\wedge},\, \Vector{}{0}{}\right) &= \left(- \varepsilon_{b}^{\wedge},\,  - \left(\mathring{\Vector{}{\omega}{}}^{\wedge}\varepsilon_{b}\right)^{\wedge}\right) ,\\
     \left(I - \varepsilon_C^{\wedge}\right)\left(\mathring{\Vector{}{\omega}{}} - \varepsilon_{b}\right)^{\wedge}\left(I + \varepsilon_C^{\wedge}\right) - \mathring{\Vector{}{\omega}{}}^{\wedge} &= - \varepsilon_{b}^{\wedge} - \varepsilon_C^{\wedge}\mathring{\Vector{}{\omega}{}}^{\wedge} + \mathring{\Vector{}{\omega}{}}^{\wedge}\varepsilon_C^{\wedge} = \adsym{\mathring{\Vector{}{\omega}{}}^{\wedge}}{\varepsilon_C^{\wedge}} - \varepsilon_{b}^{\wedge} .
\end{align*}
Thus
\begin{align*}
    \Fr{e}{\xizero}\Lambda\left(e, \mathring{u}\right)\Fr{\varepsilon}{\mathbf{0}}\vartheta^{-1}\left(\varepsilon\right)\left[\varepsilon\right] &= \Fr{e}{\xizero}\Lambda\left(e, \mathring{u}\right)\left[\varepsilon^{\wedge}\right]\\
    &= \left(\left(- \varepsilon_{b}^{\wedge},\,  - \left(\mathring{\Vector{}{\omega}{}}^{\wedge}\varepsilon_{b}\right)^{\wedge}\right),\, \adsym{\mathring{\Vector{}{\omega}{}}^{\wedge}}{\varepsilon_C^{\wedge}} - \varepsilon_{b}^{\wedge}\right) ,\\
    &= \left(\left(- \varepsilon_{b}^{\wedge},\,  - \left(\mathring{\Vector{}{\omega}{}}^{\wedge}\varepsilon_{b}\right)^{\wedge}\right),\, \left(\mathring{\Vector{}{\omega}{}}^{\wedge}\varepsilon_C\right)^{\wedge} - \varepsilon_{b}^{\wedge}\right) .
\end{align*}
The next step in the derivation of the differential of the projection $\phi_{\xizero}$ in the direction ${\alpha = \left(\left(- \varepsilon_{b}^{\wedge},\,  - \left(\mathring{\Vector{}{\omega}{}}^{\wedge}\varepsilon_{b}\right)^{\wedge}\right),\, \adsym{\mathring{\Vector{}{\omega}{}}^{\wedge}}{\varepsilon_C^{\wedge}} - \varepsilon_{b}^{\wedge}\right)}$:
\begin{equation*}
    \Fr{E}{I}\phi_{\xizero}\left(E\right)\left[\alpha\right] = \left(\left(\alpha_R^{\wedge},\, -\alpha_b^{\wedge}\right),\, \alpha_C^{\wedge} - \alpha_R^{\wedge}\right).
\end{equation*}
Substituting ${\alpha = \left(\left(- \varepsilon_{b}^{\wedge},\,  - \left(\mathring{\Vector{}{\omega}{}}^{\wedge}\varepsilon_{b}\right)^{\wedge}\right),\, \adsym{\mathring{\Vector{}{\omega}{}}^{\wedge}}{\varepsilon_C^{\wedge}} - \varepsilon_{b}^{\wedge}\right)}$ yields
\begin{equation*}
    \Fr{E}{I}\phi_{\xizero}\left(E\right)\Fr{e}{\xizero}\Lambda\left(e, \mathring{u}\right)\Fr{\varepsilon}{\mathbf{0}}\vartheta^{-1}\left(\varepsilon\right)\left[\varepsilon\right] = \left(\left(- \varepsilon_{b}^{\wedge},\, \left(\mathring{\Vector{}{\omega}{}}^{\wedge}\varepsilon_{b}\right)^{\wedge}\right),\, \left(\mathring{\Vector{}{\omega}{}}^{\wedge}\varepsilon_C\right)^{\wedge}\right).
\end{equation*}
Finally, given the choice of exponential coordinates, the solution of the last differential is again trivial, $\Fr{e}{\xizero}\vartheta\left(e\right)\left[\alpha^{\wedge}\right] = \alpha$. Therefore, \cref{eq_A0} is solved by
\begin{equation}
    \mathbf{A}_{t}^{0} = \begin{bmatrix}
        \mathbf{0} & -\mathbf{I} & \mathbf{0}\\
        \mathbf{0} & \mathring{\Vector{}{\omega}{}}^{\wedge} & \mathbf{0}\\
        \mathbf{0} & \mathbf{0} & \mathring{\Vector{}{\omega}{}}^{\wedge}
    \end{bmatrix} \in \R^{9 \times 9} .\label{bas_A0}
\end{equation}

\begin{remark}\label{bas_err_linearization_remark}
There is an important aspect to be noticed in the linearization of the equivariant error in \cref{bas_A0}. \emph{The linearization of the attitude error $\dot{\varepsilon_{R}} = -\varepsilon_{b}$ is exact, and the attitude error dynamics is autonomous.} The nonlinearities are shifted to the bias and calibration error dynamics, which either are slowly time-varying or possess zero dynamics. This is a fundamental difference to the Imperfect-\ac{iekf}, in which the linearized attitude error dynamics is given by $\dot{\varepsilon_{R}} = -\hatRot{}{}\varepsilon_{b}$.
\end{remark}

Similar differentials are involved in the solution of \cref{eq_C0,eq_B,eq_D_equi}. In particular, let ${\yzero = h(\xizero) = (\Vector{}{d}{1}, \Vector{}{d}{2}) \in \calN}$ be the origin output then define local coordinates of the output as follows:
\begin{equation*}
    \delta(y) = (\Vector{}{d}{1}^{\wedge}y_{1}, \Vector{}{d}{2}^{\wedge}y_{2}).
\end{equation*}
Solving the differentials in \cref{eq_C0_rho} yields
\begin{equation}
    \mathbf{C}^{0} = \begin{bmatrix}
        (\Vector{}{d}{1}^{\wedge})^{2} & \mathbf{0} & (\Vector{}{d}{1}^{\wedge})^{2}\\
        (\Vector{}{d}{2}^{\wedge})^{2} & \mathbf{0} & \mathbf{0}
    \end{bmatrix} \in \R^{6 \times 9} ,\label{bas_C0}
\end{equation}
Specifically, the rightmost differential $\Fr{\varepsilon}{\mathbf{0}}\vartheta^{-1}\left(\varepsilon\right)$ results in the identity due to the choice of exponential coordinates, thus $\Fr{\varepsilon}{\mathbf{0}}\vartheta^{-1}\left(\varepsilon\right)\left[\varepsilon\right] = \varepsilon$. The second differential, $\Fr{e}{\xizero}h(e)$, results in $\Vector{}{d}{1}^{\wedge}(\varepsilon_R + \varepsilon_C)$ for the case of a sensor calibrated online, and $\Vector{}{d}{2}^{\wedge}\varepsilon_R$ for the case of an already calibrated sensor. Similarly, $\Fr{y}{\yzero}\delta(y)$ results in $\Vector{}{d}{1}^{\wedge}$ and $\Vector{}{d}{2}^{\wedge}$. Combining them, the matrix in \cref{bas_C0}, is derived.

Note that the major difference in the \ac{eqf} filter matrices for the case of an already calibrated sensor compared to an estimated online extrinsic calibration is in the $\mathbf{C}^{0}$ matrix. Specifically, the known direction $\Vector{}{d}{*}$ appears only in the first block column of the $\mathbf{C}^{0}$ matrix for the case of already calibrated sensors, whereas it appears on both the first block column and on the block column that corresponds to the extrinsic calibration of that specific sensor, in the case of an estimated online extrinsic calibration. 

The input and the output matrix, defined respectively in \cref{eq_B,eq_D_equi}, are written
\begin{align}
    &\mathbf{B}_{t} = 
    \begin{bmatrix}
        \hat{A} & \mathbf{0} & \mathbf{0}\\
        \mathbf{0} & \hat{A} & \mathbf{0}\\
        \mathbf{0} & \mathbf{0} & \hat{B}
    \end{bmatrix} \in \R^{9 \times 9} ,\\
    &\mathbf{D}_{t} = \begin{bmatrix}
        \hat{B} & \mathbf{0}\\
        \mathbf{0} & \hat{A}
    \end{bmatrix} \in \R^{6 \times 6}.
\end{align}

\subsection{Practical discrete-time implementation}
Consider a moving robotic platform, an \ac{uav}, for example, equipped with a gyroscope and receiving body-frame direction measurements from a magnetometer $y_{1}$, thus measuring the fixed direction to magnetic north $\Vector{}{d}{1}$, and a spatial direction measurements of from two \ac{gnss} receivers $y_{2}$, thus a fixed measurement of a time-varying direction $\Vector{}{d}{2}$, as depicted in \cref{bas_uav_system}.

\begin{figure}[htp]
\centering
\includegraphics[width=0.75\linewidth]{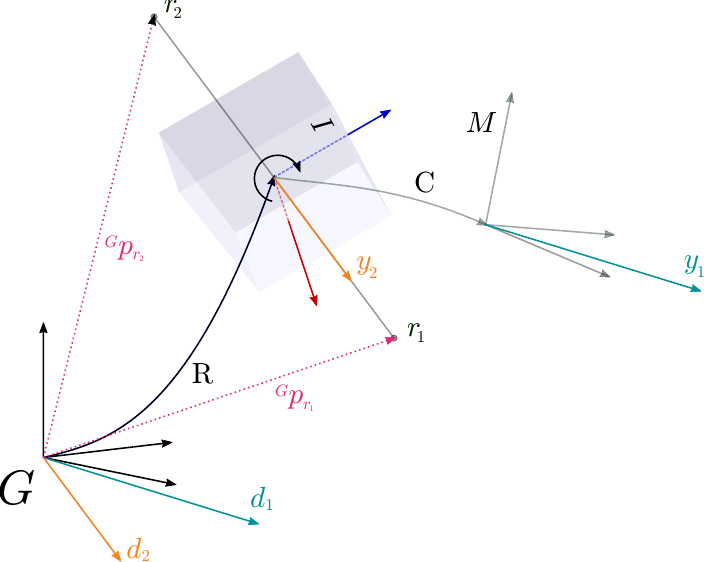}
\caption[Graphical representation of states and measurements of the biased attitude system.]{Graphical representation of the system of interest, and the corresponding body-frame direction measurement $y_{1}$ of a fixed direction $\Vector{}{d}{1}$, and a fixed spatial direction measurement $y_{2}$ of a time-varying direction $\Vector{}{d}{2}$.}
\label{bas_uav_system}
\end{figure}

It is particularly interesting to understand how a spatial direction measurement of a body-frame reference direction is constructed from the position measurements of two \ac{gnss} receivers. The platform is equipped with two \ac{gnss} receivers, $\Vector{}{r}{1}$ and $\Vector{}{r}{2}$, placed with sufficient baseline between each other as shown in \cref{bas_uav_system}. At the intersection of the baseline between the \ac{gnss} receivers and one of the three planes spanned by the body frame axes, we place a virtual \ac{gnss} frame \frameofref{g} with the y-axis aligned along the baseline. 
In our problem formulation, the virtual frame \frameofref{g} overlaps with the body frame \frameofref{I} (i.e., $\Rot{I}{g}=\eye_3$), and the baseline between the \ac{gnss} receivers is set to $1$\si[per-mode = symbol]{\meter}. 
We model the body-frame direction measurement as ${y_{2} = \left[0\,1\,0\right]^{\top}}$, resulting from the measurement model in \cref{bas_confout} with a time varying spatial reference direction $\Vector{}{d}{2}$, given by the rotation of the y-axis of the \frameofref{g} frame into the global inertial frame \frameofref{G}. The spatial reference direction $\Vector{}{d}{2}$ can be easily constructed from the raw \ac{gnss} measurements as follows:
\begin{equation}\label{bas_gnss_dir}
    \Vector{}{d}{2} = \frac{\left(\Vector{G}{p}{r_1}-\Vector{G}{p}{r_2}\right)}{\norm{\left(\Vector{G}{p}{r_1}-\Vector{G}{p}{r_2}\right)}}.
\end{equation}

The very last step to implement the proposed \ac{eqf} in discrete-time, as in \cref{eq_eqf}, is to derive the state transition matrix ${\mathbf{\Phi} \coloneqq \exp\left(\mathbf{A}_t^{0} \Delta t\right)}$. Generally, we would compute the matrix exponential numerically; however, we can derive a closed-form analytic solution in this case. Specifically, let $\Delta t$ be the time step in between gyro measurements, then, the \ac{eqf} state-transition matrix ${\mathbf{\Phi}\left(t+\Delta t, t\right) = \mathbf{\Phi}}$ is written as
\begin{equation*}\label{bas_state_transition_matrix}
    \mathbf{\Phi} = \begin{bmatrix}
    \eye & \mathbf{\Phi}_{12} & \mathbf{0}\\
    \mathbf{0} & \mathbf{\Phi}_{22} & \mathbf{0}\\
    \mathbf{0} & \mathbf{0} & \mathbf{\Phi}_{22}
    \end{bmatrix} \in \R^{9\times 9},
\end{equation*}
with
\begin{align*}
    \mathbf{\Phi}_{12} &= -\left(\Delta t \eye + \mathbf{\Psi}_{1}\Vector{}{\bm{\omega}}{0}^{\wedge} + \mathbf{\Psi}_{2} \Vector{}{\bm{\omega}}{0}^{\wedge}\Vector{}{\bm{\omega}}{0}^{\wedge}\right),\\
    \mathbf{\Phi}_{22} &= \eye + \mathbf{\Psi}_{3}\Vector{}{\bm{\omega}}{0}^{\wedge} + \mathbf{\Psi}_{1}\Vector{}{\bm{\omega}}{0}^{\wedge}\Vector{}{\bm{\omega}}{0}^{\wedge},\\
    \mathbf{\Psi}_{1} &= \frac{1-\cos\left(\norm{\Vector{}{\bm{\omega}}{0}} \Delta t\right)}{\norm{\Vector{}{\bm{\omega}}{0}}^2},\\
    \mathbf{\Psi}_{2} &= \frac{\norm{\Vector{}{\bm{\omega}}{0}} \Delta t - \sin\left(\norm{\Vector{}{\bm{\omega}}{0}} \Delta t{1}\right)}{\norm{\Vector{}{\bm{\omega}}{0}}^3},\\
    \mathbf{\Psi}_{3} &= \frac{\sin\left(\norm{\Vector{}{\bm{\omega}}{0}} \Delta t\right)}{\norm{\Vector{}{\bm{\omega}}{0}}}.
\end{align*}

Finally, for practical implementation aspects, it is useful to derive the state transition matrix $\mathbf{\Phi}$ in the limit of small angular velocity. therefore, taking the limit ${\lim_{\norm{\Vector{}{\bm{\omega}}{0}}\to0} \mathbf{\Phi}}$ and applying L'H\^{o}pital's rule yields
\begin{align*}
    \mathbf{\Phi}_{12} &\simeq -\Delta T\left(\eye + \frac{\Delta T}{2}\Vector{}{\bm{\omega}}{0}^{\wedge} + \frac{\Delta T^2}{6}\Vector{}{\bm{\omega}}{0}^{\wedge}\Vector{}{\bm{\omega}}{0}^{\wedge}\right),\\
    \mathbf{\Phi}_{22} &\simeq \eye + \Delta T\Vector{}{\bm{\omega}}{0}^{\wedge} + \frac{\Delta T^2}{2}\Vector{}{\bm{\omega}}{0}^{\wedge}\Vector{}{\bm{\omega}}{0}^{\wedge}.
\end{align*}

\section{Experiments and comparison with IEKF}
In this section, we compare the proposed \acl{eqf} design based on the proposed tangent symmetry with the Imperfect-\ac{iekf}~\cite{7523335, barrau:tel-01247723}, referred to as \ac{iekf}, in text, figures, and tables for improved readability. 

\subsection{Simulation-based experiment}
The comparison is done first on a simulation-based experiment of an \ac{uav} equipped with a gyroscope and receiving body-frame direction measurements from a magnetometer and spatial direction measurements from two \ac{gnss} receivers, as described at the end of the previous section. Ground-truth data is generated through the Gazebo/RotorS framework~\cite{furrer_rotors_2016} that simulates a \ac{uav}’s flight and sensor behavior by realistically modeling the \ac{uav}’s dynamics and sensor measurements.

The performance of the \ac{eqf} and \ac{iekf} were evaluated on a 100-run Monte-Carlo simulation consisting of different \SI{70}{\second} long Lissajous trajectories with different levels of excitation. In each run, the two filters were randomly initialized with a wrong attitude of ${10}$\si[per-mode = symbol]{\degree} error standard deviation per axis, identity calibration, and zero bias. The initial covariance is set large enough to cover the initialization error.
In order to replicate a realistic scenario, simulated gyroscope measurements provided to the filters at \SI{200}{\hertz} included Gaussian noise and a non-zero, time-varying bias modeled as a random walk process. 
The continuous-time standard deviation of the measurement noise are ${\sigma_{\Vector{}{w}{}} = 8.73\cdot10^{-4}}$\si[per-mode = symbol]{\radian\per\sqrt\second}, and ${\sigma_{\Vector{}{b}{w}} = 1.75\cdot10^{-5}}$\si[per-mode = symbol]{\radian\per\second\sqrt\second} for simulated gyroscope measurements and bias respectively. Zero mean white Gaussian noise was added to the two direction measurements $y_{1}$, and $y_{2}$ and they were provided to the filters at \SI{100}{\hertz}, and \SI{20}{\hertz} respectively. The discrete-time standard deviation of the (unit-less) direction measurement noise densities are ${\sigma_{y_{1}} = 0.2}$, and ${\sigma_{y_{2}} = 0.1}$. The matrices $\mathbf{Q}$ and $\mathbf{R}$ on both filters were set to reflect the measurement noise covariance.

\Cref{bas_tab_rmse} reports the averaged \acs{rmse} of the filter states over the 100 runs. In the table, $T$ denotes the transient phase considered the first \SI{35}{\second}, whereas $A$ denotes the asymptotic phase considered the last \SI{35}{\second} of each run. \Cref{bas_sim_error_plots} shows the evolution of the averaged filter states error and its sample standard deviation over the first \SI{25}{\second} of the \SI{70}{\second} trajectories for a better overview over the transient phase. In general, it is interesting to note the improved tracking and transient performance of the proposed \ac{eqf}, and the ability of the filter to converge quickly despite the heavily wrong initializations (the mean error norms for both filters start at roughly \SI{20}{\degree} and \SI{35}{\degree} for the orientation and calibration states respectively).
The \ac{eqf} derived from the proposed tangent symmetry outperforms the state-of-the-art \ac{iekf} in both transient response and asymptotic behavior. The non-ideal conditions in the real-world experiments that follow make the differences between the approaches even more apparent.

\begin{figure}[htp]
\centering
\includegraphics[width=\linewidth]{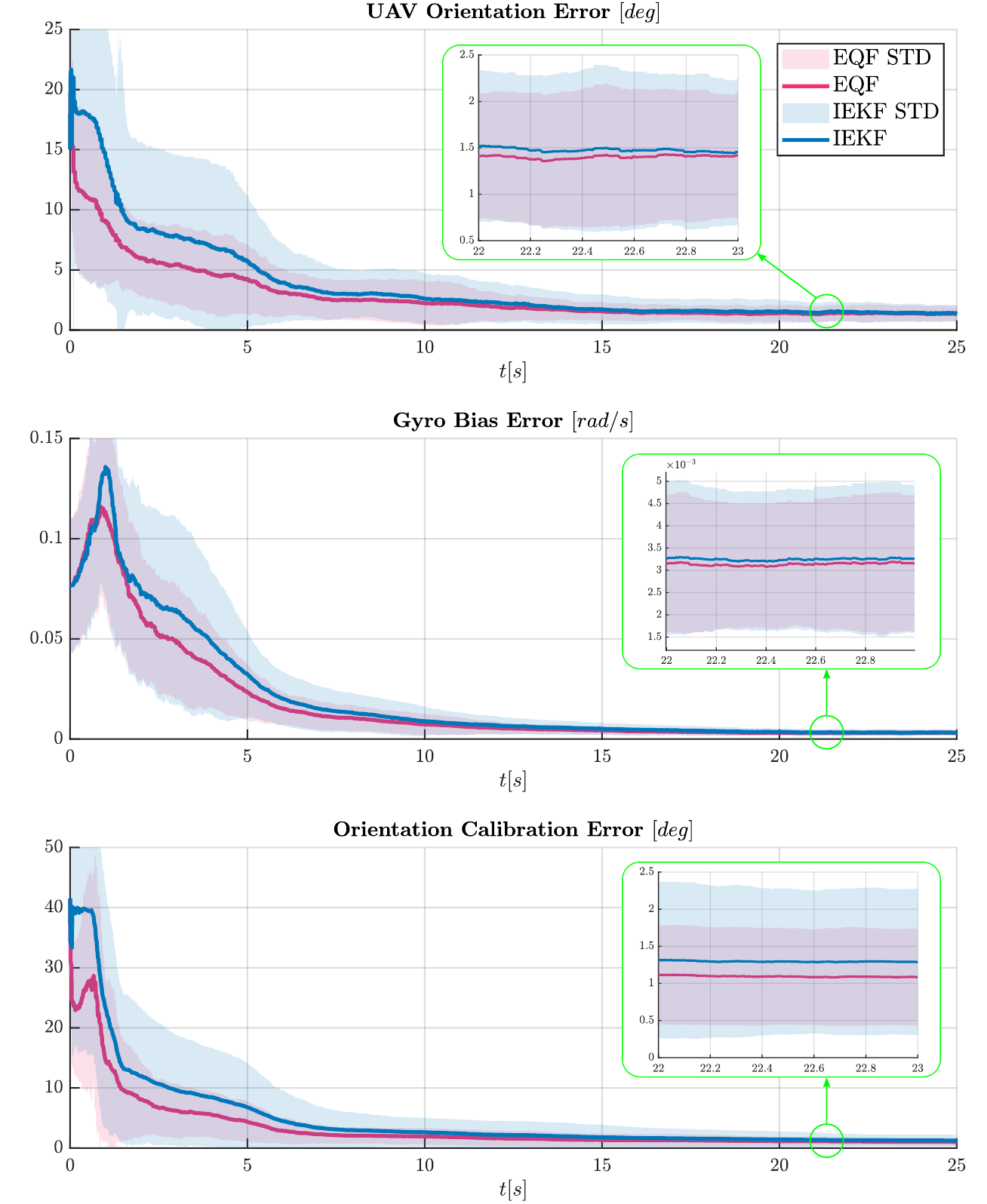}
\figrefcaption{\cite{Fornasier2022OvercomingCalibration}}
\caption[Averaged error plots of the proposed \ac{eqf} and the \ac{iekf} for the Monte-Carlo simulation.]{Averaged norm of the \ac{uav} attitude, magnetometer calibration, and gyroscope bias error over the 100 runs. The plots only depict the first \SI{25}{\second} of the \SI{70}{\second} trajectories showing the improved transient of the proposed \ac{eqf}. The analysis of the full trajectories is reflected in the numbers in \cref{bas_tab_rmse}.}
\label{bas_sim_error_plots}
\end{figure}
\ifdefined\includetblr
\begin{table}[htp]
    \setlength\tabcolsep{5pt}
    \centering
    \tabrefcaption{\cite{Fornasier2022OvercomingCalibration}}
    \captiontitlefont{\scshape\small}
    \captionnamefont{\scshape\small}
    \captiondelim{}
    \captionstyle{\centering\\}
    \caption{Monte-Carlo simulation transient and asymptotic averaged \acs{rmse}.}
    \begin{tblr}{
        rows = {m},
        column{1} = {c},
        column{2} = {c},
        column{3} = {c},
        column{4} = {c},
    }
    \toprule
    \acs{rmse} & Attitude [\si[per-mode = symbol]{\degree}] & bias [\si[per-mode = symbol]{\radian\per\second}] & Calibration [\si[per-mode = symbol]{\degree}] \\
    \midrule
    \ac{eqf} $(T)$ & $\mathbf{3.5331}$ & $\mathbf{0.0280}$ & $\mathbf{5.7892}$ \\
    \ac{iekf} $(T)$ & $4.9497$ & $0.0323$ & $8.0480$\\
    \midrule
    \ac{eqf} $(A)$ & $\mathbf{1.3870}$ & $\mathbf{0.0035}$ & $\mathbf{0.6989}$ \\
    \ac{iekf} $(A)$ & $1.3995$ & $\mathbf{0.0035}$ & $0.7798$\\
    \bottomrule
    \end{tblr}\label{bas_tab_rmse}
\end{table}
\fi

\subsection{Indoor real-world experiment}
In this second experiment, we compare against accurate ground-truth in a real scenario with non-ideal conditions such as multi-rate and unsynchronized sensor measurements and measurement dropouts. Specifically, an indoor dataset was recorded with an AscTec Hummingbird \ac{uav} flying an aggressive trajectory in a motion capture-equipped room for \SI{140}{\second}. Gyroscope measurements, as well as full 6-DOF pose measurements of the platform, were available at \SI{330}{\hertz}. The continuous-time standard deviation of the gyroscope measurement and bias noise are set to  ${\sigma_{\Vector{}{w}{}} = 0.013}$\si[per-mode = symbol]{\radian\per\sqrt\second}, and ${\sigma_{\Vector{}{b}{w}} = 0.0013}$\si[per-mode = symbol]{\radian\per\second\sqrt\second} respectively.
The pose measurements from the motion capture system were used to re-create the previously discussed scenario and, therefore, to manufacture measurements $y_{1}$ and $y_{2}$ of directions $\Vector{}{d}{1}$ and $\Vector{}{d}{2}$. To replicate the non-ideal conditions of real-world measurements, $y_{1}$, and $y_{2}$ were generated at \SI{100}{\hertz} and \SI{25}{\hertz} respectively. A dropout rate of $\sim10\%$ was actively induced for the magnetometer measurements. Moreover, the measurements were perturbed with zero mean white Gaussian noise with (unit-less) discrete-time standard deviations ${\sigma_{y_{1}} = 0.1}$, and ${\sigma_{y_{2}} = 0.01}$. 
The two filters were both initialized with a wrong attitude that corresponds to the Euler angles (ypr) ${\hatVector{G}{\theta}{}\simeq\left[70\;-40\;30\right]^{\top}}$ in degrees, and calibration corresponding to the Euler angles (ypr) ${\hatVector{I}{\phi}{}=\left[-90\;-60\;130\right]^{\top}}$ in degrees, and zero bias. The ground-truth initial attitude corresponds to the Euler angles (ypr) ${\Vector{G}{\theta}{}\simeq\left[90\;0\;0\right]^{\top}}$ in degrees, and the ground-truth extrinsic calibration corresponds to the Euler angles (ypr) ${\Vector{I}{\phi}{}=\left[30\;5\;25\right]^{\top}}$ in degrees. The matrices $\mathbf{Q}$ and $\mathbf{R}$ on both filters were set to reflect the measurement noise covariance.

The choice of initializing the filters with extremely wrong attitude and calibration states has been made to trigger the worst possible scenario and to highlight the estimator's transient behaviors. \Cref{bas_hummy_error_plots} shows the transients of the attitude error, extrinsic calibration error, and the bias norm for the first \SI{45}{\second} of the trajectory. The proposed \ac{eqf} clearly outperforms the \ac{iekf}, in particular, the attitude error norm of the \ac{eqf} stabilizes below \SI{10}{\degree} in $\sim$\SI{3}{\second}, and below \SI{5}{\degree} in $\sim$\SI{10}{\second}, while the \ac{iekf} requires respectively $\sim$\SI{15}{\second}, and $\sim$\SI{30}{\second}. In terms of calibration states error, the proposed \ac{eqf} shows a transient almost an order of magnitude faster, converging below \SI{5}{\degree} in just $\sim$\SI{5}{\second} compared to the $\sim$\SI{45}{\second} of the \ac{iekf}. Although no ground-truth information is available for the gyroscope bias, we compared the norm of the bias vector for the two filters. The rationale behind such comparison is that the norm of the bias vector is supposed to quickly decrease almost to zero since the AscTec platforms perform a gyro calibration upon system start. Results in \cref{bas_hummy_error_plots} show a faster decrement of that norm for the proposed \ac{eqf} compared to the \ac{iekf}, and therefore, a faster convergence of the bias state is assumed. As a final remark, in this flight, the \ac{uav} was taking off vertically for the first 2.5 seconds, thus being in a regime of low angular excitation. In such a situation, we can observe that the \ac{iekf} struggles to extract information for the state to converge, whereas the \ac{eqf} is able to do so.

\begin{figure}[htp]
\centering
\includegraphics[width=\linewidth]{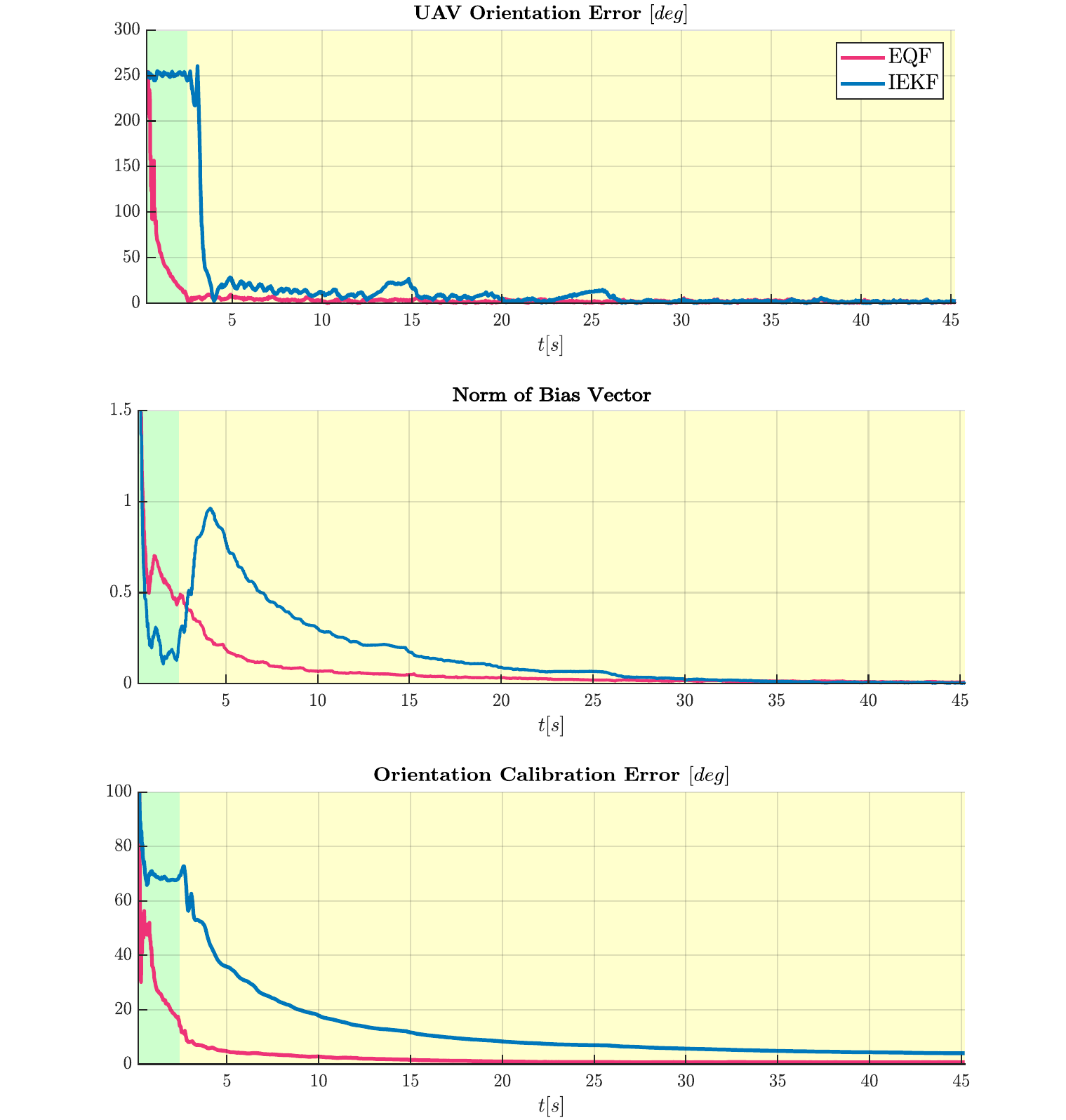}
\figrefcaption{\cite{Fornasier2022OvercomingCalibration}}
\caption[Error plots of the proposed \ac{eqf} and the \ac{iekf} for the indoor experiment.]{Norm of the \ac{uav} attitude, magnetometer calibration error, and norm of the estimated bias vector for the indoor experiment. The shaded green region represents the vertical take-off phase, while the yellow shaded region represents the mission phase. Note the better convergence of the \ac{eqf} with respect to the \ac{iekf} and the \ac{eqf}'s ability to attain a good convergence rate of the state estimate even in the presence of low angular excitation during the vertical take-off phase.}
\label{bas_hummy_error_plots}
\end{figure}

\subsection{Outdoor real-world experiment}
In this last experiment, we run and compare the proposed \ac{eqf} formulation and \ac{iekf} on an outdoor real-world dataset. 
Although no ground truth information is available for this outdoor experiment, we show an application of the filters in a field scenario with real sensor data from a magnetometer and two \ac{rtk} \ac{gnss} receivers.

The outdoor dataset was recorded by flying a Twins twinFOLD SCIENCE quadrotor for \SI{130}{\second}, equipped with a Holybro Pixhawk 4 FCU with \acs{imu}, magnetometer and two Ublox \ac{rtk} \ac{gnss} receivers as shown in \cref{bas_twins}. Gyroscope measurements were available at \SI{200}{\hertz}, magnetometer (calibrated for soft and hard-iron effects), and \ac{gnss} measurements were available at \SI{90}{\hertz} and \SI{8}{\hertz} respectively. The continuous-time standard deviation of the gyroscope measurement noise are ${\sigma_{\Vector{}{w}{}} = 1.75\cdot10^{-4}}$\si[per-mode = symbol]{\radian\per\sqrt\second}, and ${\sigma_{\Vector{}{b}{w}} = 8.73\cdot10^{-6}}$\si[per-mode = symbol]{\radian\per\second\sqrt\second} (obtained with Allan variance technique). Discrete-time standard deviations of the (unit-less) direction measurements were considered to be ${\sigma_{y_{1}} = 0.1}$, and ${\sigma_{y_{2}} = 0.01}$.
Again, the two filters' gain matrices, $\mathbf{Q}$ and $\mathbf{R}$, were set according to the measurement noise. 
In this experiment, we simulate a scenario of mid-air filter re-initialization from a completely wrong initial estimate (in the absence of prior information). Both filters were initialized with a wrong consecutive rotation (ypr) of $\sim$\SI{30}{\degree} along each axis for both attitude and calibration states and with initial gyro bias set to zero. 
Ground-truth information is not available for this experiment; however, the magnetometer extrinsic calibration is known to be almost identity and, as the baseline for the attitude, we refer to the converged estimates of an online available state-of-the-art modular multi-sensor fusion framework, MaRS~\cite{Brommer2021}, obtained by feeding \ac{imu}, \ac{gnss} positions and velocities as sensor measurements.

\begin{figure}[htp]
\centering
\includegraphics[width=\linewidth]{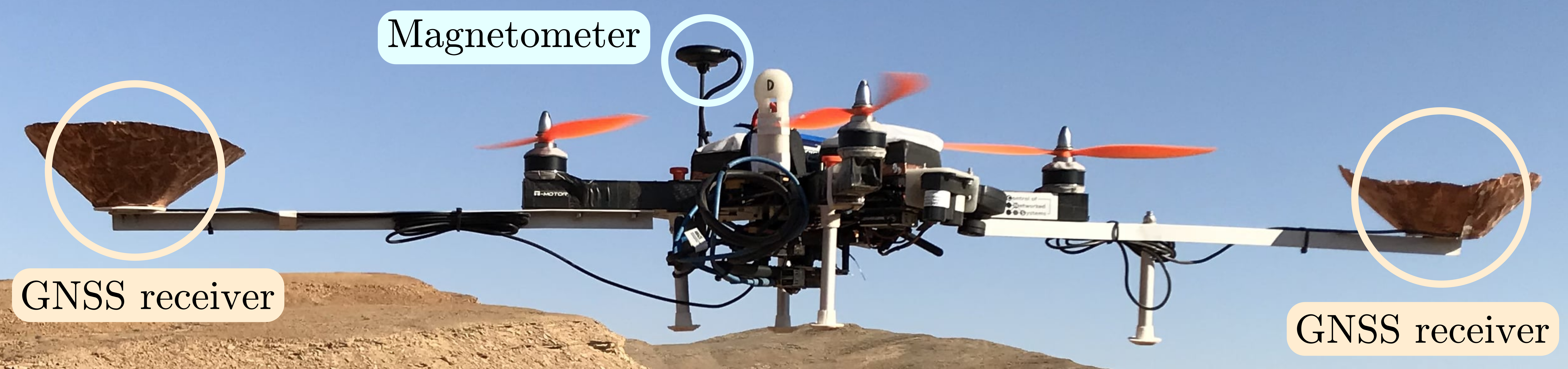}
\figrefcaption{\cite{Fornasier2022OvercomingCalibration}}
\caption[Platform used for the outdoor field experiment.]{Platform used for the outdoor field experiment, equipped with two \acs{rtk} \acs{gnss} receivers and a magnetometer integrated within the compass module.}
\label{bas_twins}
\end{figure}

Interesting aspects of this experiment are the poor quality of the magnetometer readings as well as the reduced amount of excitation the platform was subjected to.
Only three noticeable rotations were performed during the flight: a small combined rotation at about second~$5$, and two yaw-only rotations of \SI{180}{\degree} at about seconds~$25$, and~$45$. This low angular excitation is a common but challenging situation for the attitude estimation problem, including calibration and bias states, in \ac{uav} applications.

\cref{bas_twins_error_plots} shows the filters' attitude error, as well as the norm of the orientation calibration angular vector and the norm of the bias vector for the first \SI{45}{\second} of the trajectory. The attitude error is computed against the estimate of MaRS, previously initialized and converged (for $t\leq0$ with reference to \cref{bas_twins_error_plots}). Again, the rationale behind comparing the norm of the aforementioned vectors is that both the extrinsic calibration angular vector and the gyroscope bias vector are almost zero. Hence, the norm of the estimate of these vectors is expected to decrease. This provides a good measure of the transient performance of the filter. Both filters' performance showed in \cref{bas_twins_error_plots} are quite remarkable indeed. 

\begin{figure}[htp]
\centering
\includegraphics[width=0.95\linewidth]{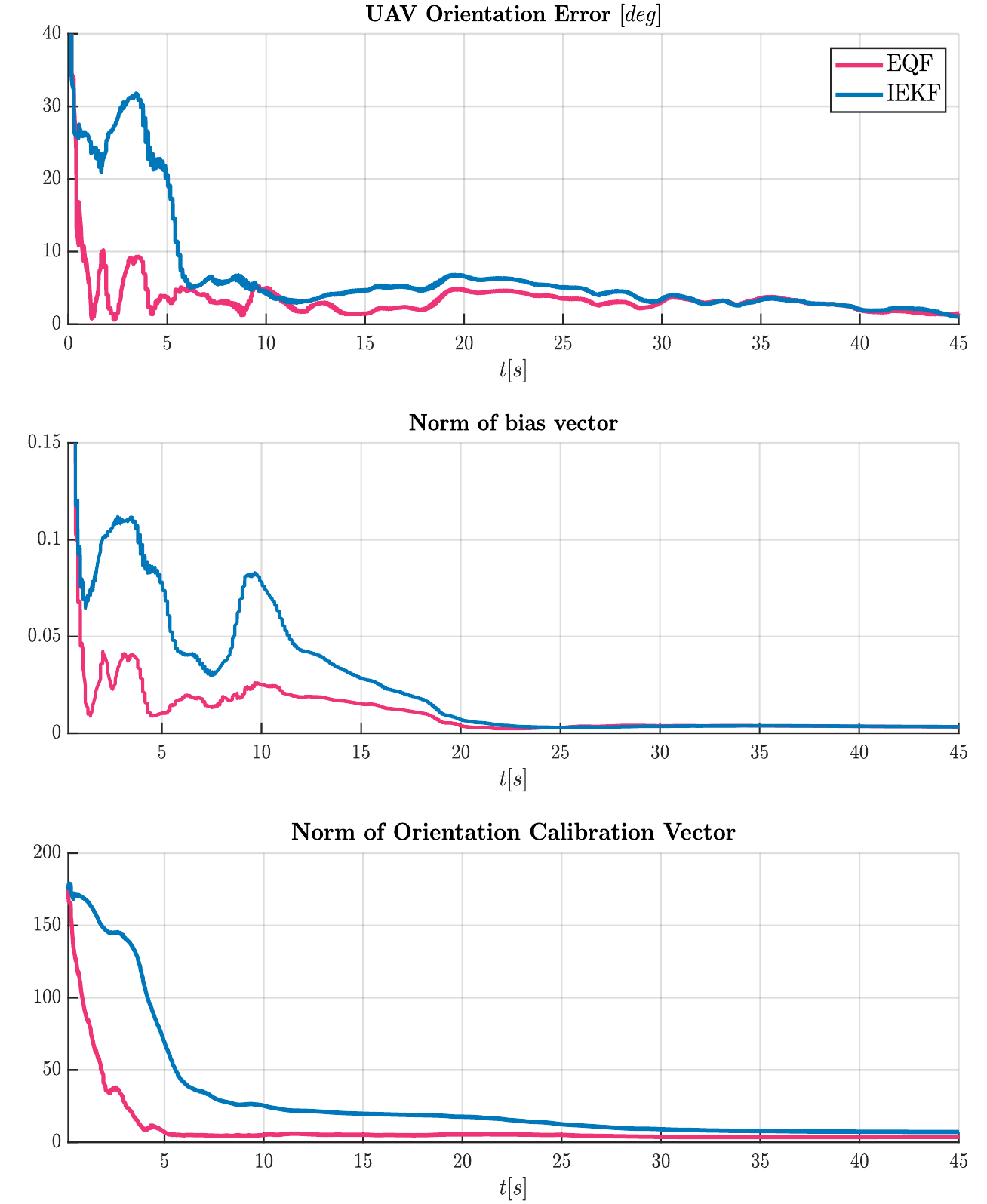}
\figrefcaption{\cite{Fornasier2022OvercomingCalibration}}
\caption[Error plots of the proposed \ac{eqf} and the \ac{iekf} for the outdoor experiments.]{Error plots of the proposed \ac{eqf} and the \ac{iekf} for the first \SI{45}{\second} after noticeably wrong initialization. The plots show the error in the estimated attitude of the \ac{uav}, and due to the absence of ground-truth information, the norm of the angular calibration, and the angular velocity bias vectors. The proposed \ac{eqf} shows noticeably better performance compared to the state-of-the-art \ac{iekf}}.
\label{bas_twins_error_plots}
\end{figure}

The lack of ground truth and the low excitation make a quantitative analysis of these results challenging. That said, \cref{bas_twins_error_plots} clearly shows qualitatively the ability of the proposed \ac{eqf} in real-world applications to quickly converge when initialized with a wrong estimate in quasi-stationary regime, providing a faster and more reliable estimate than the state-of-the-art.

\section{Chapter conclusion}
It is clear now that \emph{tangent group symmetries} play a crucial role in the context of \aclp{ins} and form the main results in this dissertation. Specifically, in this chapter, we presented a novel equivariant symmetry of the attitude kinematics based on the tangent group of $\SO(3)$ and developed an \acl{eqf} formulation based on the proposed symmetry. The proposed \ac{eqf} formulation can include an arbitrary number $N$ of generic direction measurements, being either body-frame fixed or spatial directions measurements covering the majority of real-world direction-sensor modalities. 

Statistically relevant results from simulated as well as real-world experiments showed that the proposed \ac{eqf} outperforms the state-of-the-art Imperfect-\ac{iekf} in both transient and asymptotic tracking performance. The presented results are of particular relevance when considering the practical applicability of the presented \ac{eqf} to real-world scenarios and the importance of having estimators that are able to converge quickly, without any prior knowledge, and in the presence of high initial errors. Furthermore, non-ideal effects present in real-world data are often the cause of poor filter performance. The proposed discrete-time \ac{eqf} implementation exhibited robust performance against such non-idealities of real-world sensor data. 

The lower performance of the Imperfect-\ac{iekf} in these experiments is assumed to stem from the natural symmetry not properly ``respecting'' the geometry of the gyro bias and modeling them as "tacked on" states. As highlighted in \cref{bas_err_linearization_remark}, it becomes evident that the choice of symmetry affects the linearized error dynamics, ultimately impacting the overall performance of the filter.

This chapter shows that exploiting the underlying symmetry of equivariant systems is of paramount importance for designing better estimators. The next chapter will investigate how these ideas apply to general second-order \aclp{ins}.
\chapter[Equivariant Symmetries for Inertial Navigation Systems][Equivariant Symmetries for INS]{Equivariant Symmetries for Inertial Navigation Systems}\label{bins_chp}
\emph{The present chapter contains results that have been peer-reviewed and published in the IEEE International Conference on Robotics and Automation (ICRA)~\cite{Fornasier2022EquivariantBiases}, as well as results submitted to Automatica and currently under revision~\cite{Fornasier2023EquivariantSystems}.}
\bigskip

\noindent This chapter extends the results introduced in the previous chapter to second-order inertial kinematic systems. In a sentence, \emph{this chapter establishes the general theory of equivariant \aclp{ins} providing a general framework for equivariant \ac{ins} filter design.}

We start this chapter by introducing different symmetries of second-order \aclp{ins} and discussing their properties in the context of \acl{eqf} design. Specifically, we discuss the role of the \emph{tangent group of $\SE_2(3)$} and how different symmetries can be derived as variations of it. 
We analyze the linearized error dynamics and derive an \acl{eqf} for each of the proposed symmetries. Moreover, we formalize the fundamental results that \emph{every filter can be derived as an \acl{eqf} with a specific choice of symmetry, and the choice of symmetry is, in fact, the only difference between different versions of \ac{ins} filters}. 

Finally, the theoretical results presented in this chapter are validated via extensive simulation of a position-based navigation scenario. Moreover, this application provides a convenient framework to introduce an additional result, that is, a general methodology to cast global-referenced measurements into body-referenced measurements that are compatible with the symmetries, making them suitable for many measurement types.

\begin{figure}[htp]
\centering
\includegraphics[width=0.9\linewidth]{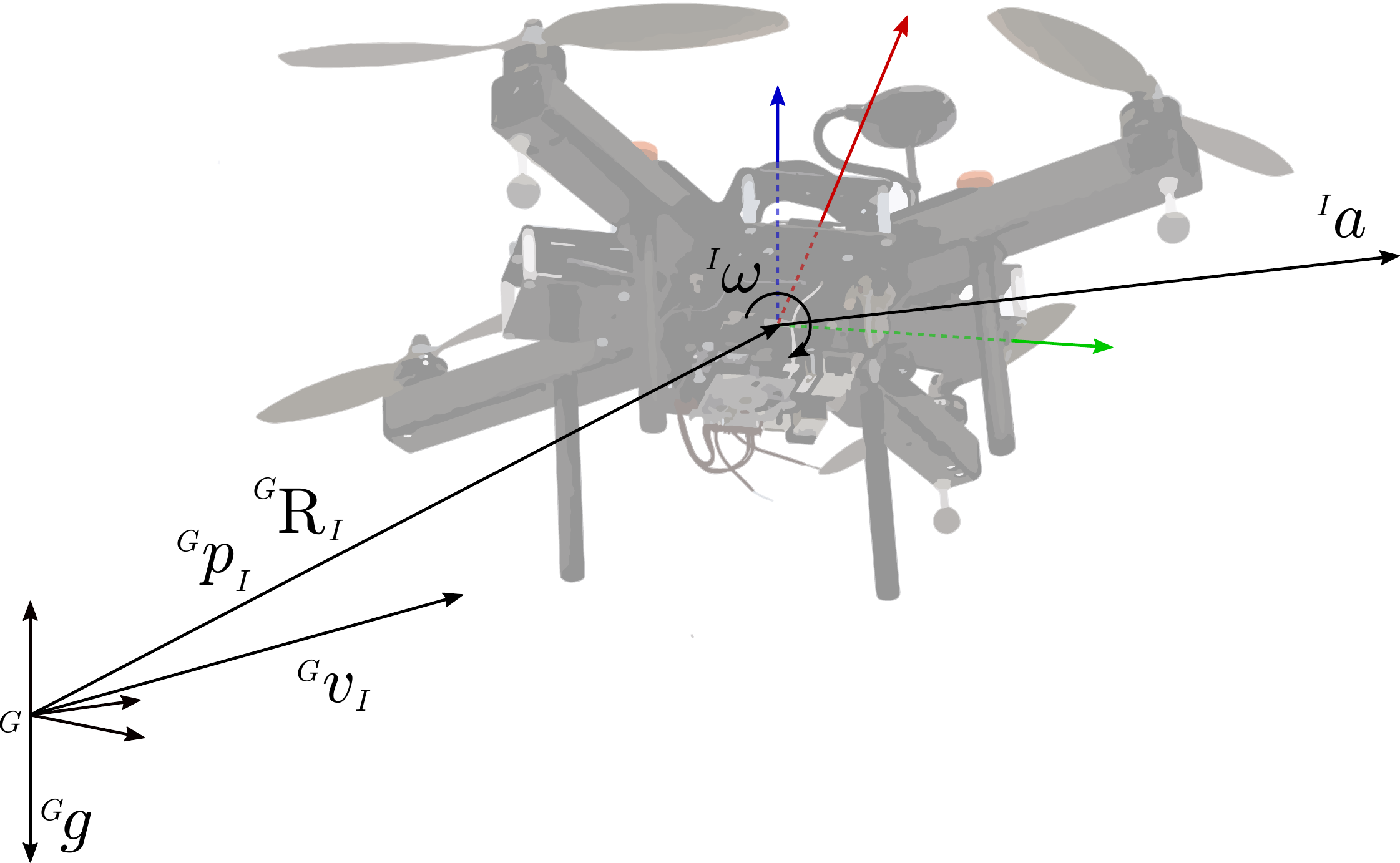}
\caption{Graphical representation of a \ac{uav} freely moving in space, representing an \acl{ins}.}
\label{bins_system}
\end{figure}

\section{The biased inertial navigation system}

Consider a mobile robot equipped with an \ac{imu} providing angular velocity and acceleration measurements, as well as other sensors providing partial direct or indirect state measurements (e.g., a \ac{gnss} receiver providing position measurements or a magnetometer providing direction measurements), as depicted in \cref{bins_system}.
Let \frameofref{G} denote the global inertial frame of reference and \frameofref{I} denote the \ac{imu} frame of reference. In non-rotating, flat earth assumption, the deterministic (noise-free) continuous-time biased inertial navigation system is
\begin{subequations}\label{bins_bins}
    \begin{align}
        &\dot{\Rot{G}{I}} = \Rot{G}{I}\left(\Vector{I}{\bm{\omega}}{} - \Vector{I}{b}{\bm{\omega}}\right)^{\wedge} ,\\
        &\dotVector{G}{v}{I} =  \Rot{G}{I}\left(\Vector{I}{a}{} - \Vector{I}{b}{a}\right) + \Vector{G}{g}{} ,\\
        &\dotVector{G}{p}{I} = \Vector{G}{v}{I} ,\label{bins_pdot}\\
        &\dotVector{I}{b_{\bm{\omega}}}{} = \Vector{I}{\tau}{\bm{\omega}} ,\\
        &\dotVector{I}{b_{a}}{} = \Vector{I}{\tau}{a} .
    \end{align}
\end{subequations}
Here, $\Rot{G}{I}$ denotes the rigid body orientation, and $\Vector{G}{p}{I}$ and $\Vector{G}{v}{I}$ denote the rigid body position and velocity expressed in the \frameofref{G} frame, respectively.
These variables are termed the \emph{navigation states}.
The gravity vector $\Vector{G}{g}{}$ is expressed in frame \frameofref{G}. 
The gyroscope measurement and accelerometer measurement are written $\Vector{I}{\bm{\omega}}{}$ and $\Vector{I}{\bm{a}}{}$ respectively. 
The two biases $\Vector{I}{b_{\bm{\omega}}}{}$ and $\Vector{I}{b_{\bm{a}}}{}$ are termed the \emph{bias states}. 
The inputs $\Vector{I}{\tau}{\bm{\omega}}$, $\Vector{I}{\tau}{a}$ are used to model the biases' dynamics and are zero when the biases are modeled as constant quantities.

The state space is $\calM = \torSO(3) \times \R^3 \times \R^3 \times \R^3 \times \R^3$ where $\torSO(3)$ is the $\SO(3)$-torsor modeling the rigid body orientation, and the 4 copies of $\R^3$ model velocity, position, and angular velocity and linear acceleration bias, respectively. An element of the state space and an element of the input space are written, respectively
\begin{align}
    &\xi =  \left(\Rot{G}{I}, \Vector{G}{v}{I}, \Vector{G}{p}{I}, \Vector{I}{b_{\bm{\omega}}}{}, \Vector{I}{b_{a}}{} \right) \in \calM ,\label{bins_state}\\
    & u = \left(\Vector{I}{\bm{\omega}}{}, \Vector{I}{a}{}, \Vector{I}{\tau}{\bm{\omega}}, \Vector{I}{\tau}{a}\right) \in \vecL \subset \R^{12} .\label{bins_input}
\end{align}

For the sake of clarity of the presentation, in the following sections, we drop subscripts and superscripts from state, input, and output variables and adopt the lean notation defined in \cref{bins_conversion_table}.

\ifdefined\includetblr
\begin{table}[htp]
    \centering
    \tabrefcaption{\cite{Fornasier2023EquivariantSystems}}
    \setlength\tabcolsep{5pt}
    \captiontitlefont{\scshape\small}
    \captionnamefont{\scshape\small}
    \captiondelim{}
    \captionstyle{\centering\\}
    \caption{Descriptive-Lean Notation Conversion Table.}
    \begin{tblr}
    {
        rows = {m},
        column{1} = {c},
        column{2} = {c},
        column{3} = {c},
    }
    \toprule
    Description & Descriptive notation & Lean notation \\
    \midrule
    Rigid body orientation & $\Rot{G}{I}$ & $\Rot{}{}$ \\
    Rigid body velocity & $\Vector{G}{v}{I}$ & $\Vector{}{v}{}$ \\
    Rigid body position & $\Vector{G}{p}{I}$ & $\Vector{}{p}{}$ \\
    Angular velocity measurement & $\Vector{I}{\bm{\omega}}{}$ & $\Vector{}{\bm{\omega}}{}$ \\
    Gyroscope bias & $\Vector{I}{b}{\bm{\omega}}$ & $\Vector{}{b}{\bm{\omega}}$ \\
    Acceleration measurement & $\Vector{I}{a}{}$ & $\Vector{}{a}{}$ \\
    Accelerometer bias & $\Vector{I}{b}{\bm{a}}$ & $\Vector{}{b}{\bm{a}}$ \\
    \bottomrule
    \end{tblr}
    \label{bins_conversion_table}
\end{table}
\fi

In the following sections, we analyze different symmetries of the biased \acl{ins} under the lens of equivariance; that is, we show how those symmetries relate to classical filter design when exploited within the \acl{eqf} framework and how every filter can be derived as an \acl{eqf} under an appropriate choice of symmetry.

\section{Special orthogonal group symmetry ${\grpE: \SO(3) \times \R^{12}}$}\label{bins_mekf_sym_sec}

Lie group theory was first applied to navigation systems to overcome the limitation and the singularities of using Euler angles as the parameterization of the attitude of a rigid body. Alternative attitude parameterizations were originally formulated on the quaternion group~\cite{Lefferts1982KalmanEstimation, markley2003attitude}; however, the modern approach directly models attitude on the special orthogonal group $\SO(3)$.

Consider the state space of second-order inertial kinmatic systems in \cref{bins_state}, define the symmetry group ${\grpE \coloneqq \SO(3) \times \R^{12}}$, and let ${X = \left(A, a, b, \alpha, \beta\right) \in \grpE}$, where ${A \in \SO(3)}$, ${a, b, \alpha, \beta \in \R^3}$.
Let ${X = \left(A_X, a_X, b_X, \alpha_X, \beta_X\right)}$, ${Y = \left(A_Y, a_Y, b_Y, \alpha_Y, \beta_Y\right)}$ be two elements of the symmetry group, then the group product is written  ${XY = \left(A_XA_Y, a_X + a_Y, b_X + b_Y, \alpha_X + \alpha_Y, \beta_X + \beta_Y\right)}$.
The inverse of an element $X$ is defined as ${X^{-1} = \left(A^\top, -a, -b, -\alpha, -\beta\right)}$.

\begin{lemma}
Define ${\phi \AtoB{\grpE \times \calM}{\calM}}$ as
\begin{equation}
    \phi\left(X, \xi\right) \coloneqq \left(\Rot{}{}A, \Vector{}{v}{} + a, \Vector{}{p}{} + b, \Vector{}{b}{\bm{\omega}} + \alpha, \Vector{}{b}{a} + \beta\right) \in \calM .
\end{equation}
Then, $\phi$ is a transitive right group action of $\grpE$ on $\calM$.
\end{lemma}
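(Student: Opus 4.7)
The plan is to verify the three defining properties of a transitive right group action directly, which for this symmetry reduce to straightforward algebra because the $\SO(3)$ factor acts by right multiplication on attitude while the $\R^{12}$ factor acts by componentwise vector addition on the remaining four state slots; the two actions do not interact.

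First, I would check the identity axiom. The identity of $\grpE$ is $I = (I_3, \mathbf{0}, \mathbf{0}, \mathbf{0}, \mathbf{0})$, and substituting into the definition of $\phi$ immediately gives $\phi(I,\xi) = \xi$ since right-multiplying by $I_3$ and adding the zero vector are trivial. Next, I would verify the right-action composition rule $\phi(X, \phi(Y,\xi)) = \phi(YX, \xi)$. Plugging $\phi(Y,\xi) = (\Rot{}{}A_Y,\, \Vector{}{v}{} + a_Y,\, \Vector{}{p}{} + b_Y,\, \Vector{}{b}{\bm{\omega}} + \alpha_Y,\, \Vector{}{b}{a} + \beta_Y)$ into $\phi(X,\cdot)$ yields $(\Rot{}{}A_Y A_X,\, \Vector{}{v}{} + a_Y + a_X,\, \ldots)$. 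Comparing with the product $YX = (A_Y A_X,\, a_Y + a_X,\, b_Y + b_X,\, \alpha_Y + \alpha_X,\, \beta_Y + \beta_X)$ introduced before the lemma, the two expressions coincide. This confirms $\phi$ is a valid right group action.

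Finally, for transitivity, given any $\xi_1, \xi_2 \in \calM$ with $\xi_i = (\Rot{}{i}, \Vector{}{v}{i}, \Vector{}{p}{i}, \Vector{}{b}{\bm{\omega},i}, \Vector{}{b}{a,i})$, I would exhibit the explicit group element
\begin{equation*}
    Z = \left(\Rot{}{1}^\top \Rot{}{2},\, \Vector{}{v}{2} - \Vector{}{v}{1},\, \Vector{}{p}{2} - \Vector{}{p}{1},\, \Vector{}{b}{\bm{\omega},2} - \Vector{}{b}{\bm{\omega},1},\, \Vector{}{b}{a,2} - \Vector{}{b}{a,1}\right) \in \grpE,
\end{equation*}
and verify by direct substitution that $\phi(Z, \xi_1) = \xi_2$. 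Since $Z$ exists for arbitrary $\xi_1, \xi_2$, the induced projection $\phi_{\xi_1}$ is surjective, establishing transitivity.

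There is no substantive obstacle here: the proof is the direct analogue of the one given for the tangent symmetry in \cref{bas_chp} but simplified because $\grpE$ is a direct product rather than a semi-direct one, so the adjoint terms that appeared in the previous chapter are absent. The main thing to be careful about is the order of multiplication in the $\SO(3)$ slot when checking the right-action identity, which is why the choice $Z = (\Rot{}{1}^\top \Rot{}{2}, \ldots)$ rather than $(\Rot{}{2} \Rot{}{1}^\top, \ldots)$ is the correct one for transitivity.
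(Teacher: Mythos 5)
Your proof is correct and follows exactly the pattern the paper uses for the analogous lemmas (the paper in fact omits the proof of this particular one, proving only the semi-direct-product cases such as \cref{bins_phi_tg_proof}): verify the right-action composition $\phi(X,\phi(Y,\xi))=\phi(YX,\xi)$ against the stated group product, then exhibit the explicit element $Z$ mapping $\xi_1$ to $\xi_2$. Your observation that the adjoint terms vanish here because $\grpE$ is a direct product, and your care with the multiplication order in the $\SO(3)$ slot, are both right.
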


The existence of a transitive group action of the symmetry group $\grpE$ on the state space $\calM$ guarantees the existence of a lift~\cite{Mahony2020EquivariantDesign}.

\begin{theorem}
Define the map ${\Lambda \AtoB{\calM \times \vecL}{\gothGrpE}}$ by
\begin{equation*}
    \Lambda\left(\xi, u\right) \coloneqq \left(\Lambda_1\left(\xi, u\right), \cdots, \Lambda_5\left(\xi, u\right)\right).
\end{equation*}
where ${\Lambda_1 \AtoB{\calM \times \vecL}{\gothso(3)}}$, and ${\Lambda_2, \cdots, \Lambda_5 \AtoB{\calM \times \vecL}{\R^{3}}}$ are given by
\begin{align}
    &\Lambda_1\left(\xi, u\right) \coloneqq \left(\Vector{}{\bm{\omega}}{} - \Vector{}{b}{\bm{\omega}}\right)^{\wedge} ,\\
    &\Lambda_2\left(\xi, u\right) \coloneqq \Rot{}{}\left(\Vector{}{a}{} - \Vector{}{b}{a}\right) + \Vector{}{g}{} ,\\
    &\Lambda_3\left(\xi, u\right) \coloneqq \Vector{}{v}{I} ,\\
    &\Lambda_4\left(\xi, u\right) \coloneqq \Vector{}{\tau}{\bm{\omega}} ,\\
    &\Lambda_4\left(\xi, u\right) \coloneqq \Vector{}{\tau}{a} .
\end{align}
Then, ${\Lambda}$ is a lift for the system in \cref{bins_bins} with respect to the symmetry group ${\grpE \coloneqq \SO(3) \times \R^{12}}$.
\end{theorem}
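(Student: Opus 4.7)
The plan is to verify the defining lift equation directly. Recall from \cref{eq_lift} that $\Lambda$ is a lift provided
\begin{equation*}
\td_{\phi_\xi}\bigl[\Lambda(\xi, u)\bigr] = f(\xi, u) \quad \forall \xi \in \calM,\; u \in \vecL.
\end{equation*}
Since $\grpE = \SO(3) \times \R^{12}$ is a direct product, both the group action $\phi$ and its differential decompose componentwise, which reduces the check to five independent equalities. This is the essential observation that makes the argument routine.

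First I would compute the pointwise differential $\Fr{X}{I}\phi_\xi(X)$. Writing a generic tangent direction as $\eta = (\eta_1, \eta_2, \eta_3, \eta_4, \eta_5) \in \gothGrpE \cong \so(3) \oplus \R^{12}$, the $\SO(3)$ factor contributes
\begin{equation*}
\Fr{A}{I} \Rot{}{}A\,[\eta_1] = \lim_{t\to 0}\tfrac{1}{t}\bigl[\Rot{}{}\exp(t\eta_1) - \Rot{}{}\bigr] = \Rot{}{}\eta_1,
\end{equation*}
while each $\R^3$ factor contributes the identity map on its tangent space, since the action is by translation. Hence
\begin{equation*}
\td_{\phi_\xi}[\eta] = \bigl(\Rot{}{}\eta_1,\; \eta_2,\; \eta_3,\; \eta_4,\; \eta_5\bigr).
\end{equation*}

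Next, I substitute $\eta = \Lambda(\xi, u)$ as defined in the statement and compare with the right-hand side $f(\xi, u)$ read off from \cref{bins_bins}. Component by component: $\Rot{}{}\Lambda_1 = \Rot{}{}(\Vector{}{\bm{\omega}}{} - \Vector{}{b}{\bm{\omega}})^\wedge$ matches $\dotRot{}{}$; $\Lambda_2 = \Rot{}{}(\Vector{}{a}{} - \Vector{}{b}{a}) + \Vector{}{g}{}$ matches $\dotVector{}{v}{}$; $\Lambda_3 = \Vector{}{v}{}$ matches $\dotVector{}{p}{}$; and $\Lambda_4, \Lambda_5$ trivially reproduce the bias dynamics $\Vector{}{\tau}{\bm{\omega}}, \Vector{}{\tau}{a}$. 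The five equalities together establish the lift equation and conclude the proof.

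The main obstacle is essentially bookkeeping rather than genuine difficulty: one must be careful with the identification $\gothGrpE \cong \so(3) \oplus \R^{12}$ and with the distinction between the $\SO(3)$ factor (where the differential picks up the factor $\Rot{}{}$) and the Euclidean factors (where it does not). Note that the theorem only claims $\Lambda$ is a lift, not that it is equivariant with respect to some input action $\psi$; a separate verification of the equivariance condition \cref{eq_lift_equi} would be required if that stronger property were desired, and in fact the direct-product structure of $\grpE$ combined with the nontrivial dependence of $\Lambda_2$ on $\Rot{}{}$ suggests that equivariance will not hold here, a point that presumably motivates the richer symmetry groups introduced later in the chapter.
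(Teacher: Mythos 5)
Your proof is correct: the componentwise verification of the lift equation \cref{eq_lift}, using the fact that $\phi$ and its differential split over the direct-product factors of $\grpE$, is exactly the technique the paper applies in its other lift proofs (e.g., for the tangent group $\grpD$). The paper itself omits an explicit proof here, appealing only to the existence of a lift guaranteed by transitivity of $\phi$, so your direct check is, if anything, more complete; your closing remark that only the plain lift property (not equivariance) is claimed is also consistent with the paper, which reserves the label ``equivariant lift'' for the semi-direct-product symmetries.
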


Later in \cref{bins_mekf_linan}, the relation between the choice of symmetry for \acl{eqf} design and classical filter design becomes clear. In particular, it is shown that an \ac{eqf} designed exploiting this symmetry results in the well-known \ac{mekf}~\cite{Lefferts1982KalmanEstimation}.

\section{Extended special Euclidean group ${\grpF: \SE_2(3) \times \R^{6}}$}\label{bins_iekf_sym_sec}

Exploiting the extended special Euclidean group $\SE_2(3)$ to model the navigation states of \ac{ins} problems~\cite{7523335, barrau:tel-01247723, barrau2015ekf, Barrau2020APreintegration, Brossard2021AssociatingEarth} has been one of the major developments in \ac{ins} filtering in the past 10 years. 

Consider ${\xi = \left(\Pose{}{}, \Vector{}{b}{}\right) \in \calM \coloneqq \torSE_2(3) \times \R^{6}}$ to be the state space of the biased \ac{ins} in \cref{bins_bins}. ${\Pose{}{} = \left(\Rot{}{}, \Vector{}{v}{}, \Vector{}{p}{}\right) \in \torSE_2(3)}$ is the systems's extended pose~\cite{Brossard2021AssociatingEarth}, which includes the orientation, the velocity and the position of the rigid body. Note that $\Pose{}{}$ can be written in matrix form as described in \cref{math_se23_sec}. ${\Vector{}{b}{} = \left(\Vector{}{b}{\bm{\omega}}, \Vector{}{b}{\bm{a}}\right) \in \R^{6}}$ denotes the \ac{imu} biases.

Let ${u = \left(\Vector{}{w}{}, \bm{\tau}\right) \in \vecL \subseteq \R^{12}}$ denote the system input, where ${\Vector{}{w}{} = \left(\Vector{}{\bm{\omega}}{}, \Vector{}{a}{}\right) \in \R^{6}}$ denotes the input given by the \ac{imu} readings, and ${\bm{\tau} = \left(\bm{\tau_{\omega}}, \bm{\tau_{a}}\right) \in \R^{6}}$ denotes the input for the \ac{imu} biases.

Define the matrices $\mathbf{W}, \mathbf{B}, \mathbf{N}, \mathbf{G}$ to be
\begin{equation*}
    \begin{tblr}{ll}
        \mathbf{W} = \begin{bmatrix}
        \angv^{\wedge} & \acc & \mathbf{0}_{3\times 1}\\
        \mathbf{0}_{1\times 3} & 0 & 0\\
        \mathbf{0}_{1\times 3} & 0 & 0\\
        \end{bmatrix}, & 
        \mathbf{B} = \begin{bmatrix}
        \bw^{\wedge} & \ba & \mathbf{0}_{3\times 1}\\
        \mathbf{0}_{1\times 3} & 0 & 0\\
        \mathbf{0}_{1\times 3} & 0 & 0\\
        \end{bmatrix}, \\
        \mathbf{N} = \begin{bmatrix}
        \mathbf{0}_{3\times 3} & \mathbf{0}_{3\times 1} & \mathbf{0}_{3\times 1}\\
        \mathbf{0}_{1\times 3} & 0 & 1\\
        \mathbf{0}_{1\times 3} & 0 & 0\\
        \end{bmatrix}, &
        \mathbf{G}  = \begin{bmatrix}
        \mathbf{0}_{3\times 3} & \Vector{}{g}{} & \mathbf{0}_{3\times 1}\\
        \mathbf{0}_{1\times 3} & 0 & 0\\
        \mathbf{0}_{1\times 3} & 0 & 0\\
        \end{bmatrix}.
    \end{tblr}
\end{equation*}
Then, the system in \cref{bins_bins} may then be written as
\begin{subequations}\label{bins_bins_se23}
    \begin{align}
        &\dot{\Pose{}{}} = \Pose{}{}\left(\mathbf{W} - \mathbf{B} + \mathbf{N}\right) + \left(\mathbf{G} - \mathbf{N}\right)\Pose{}{} ,\\
        &\dotVector{}{b}{} =  \bm{\tau} .
    \end{align}
\end{subequations}

Define the symmetry group ${\grpF \coloneqq \SE_2(3) \times \R^{6}}$, and let ${X = \left(D, \delta\right) \in \grpF}$, where ${D = \left(A, a, b\right)\in \SE_2(3)}$, $A \in \SO(3)$, $a, b \in \R^3$, ${\delta \in \R^6}$.
Let ${X = \left(D_X, \delta_X\right)}$, ${Y = \left(D_Y, \delta_Y\right)}$ be two elements of the symmetry group, then the group product is written  ${XY = \left(D_XD_Y, \delta_X + \delta_Y\right)}$.
The inverse of an element $X$ is given by ${X^{-1} = \left(D^{-1}, -\delta\right)}$.

\begin{lemma}
Define ${\phi \AtoB{\grpF \times \calM}{\calM}}$ as
\begin{equation}
    \phi\left(X, \xi\right) \coloneqq \left(\Pose{}{}D, \Vector{}{b}{} + \delta\right) \in \calM .
\end{equation}
Then, $\phi$ is a transitive right group action of $\grpF$ on $\calM$.
\end{lemma}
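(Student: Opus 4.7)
The plan is to verify in sequence the three requirements: identity preservation, the right-action composition law, and transitivity. Each will follow by direct computation using the definitions just introduced, so the proof will be short; the only subtlety is keeping track of the order of group multiplication so that the right-action convention (rather than a left-action convention) is correctly realised.

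First I would check the identity axiom. Writing $I = (I_{\SE_2(3)}, \Vector{}{0}{}) \in \grpF$ and evaluating $\phi(I,\xi)$ gives $(\Pose{}{} I_{\SE_2(3)}, \Vector{}{b}{} + \Vector{}{0}{}) = (\Pose{}{}, \Vector{}{b}{}) = \xi$, since $I_{\SE_2(3)}$ is the identity of $\SE_2(3)$ and zero is the additive identity on $\R^6$. Next I would verify the composition law $\phi(X,\phi(Y,\xi)) = \phi(YX,\xi)$ corresponding to a right action. Expanding,
\begin{align*}
\phi(X,\phi(Y,\xi)) &= \phi\!\left(X,\, (\Pose{}{} D_Y,\, \Vector{}{b}{} + \delta_Y)\right) \\
&= \left(\Pose{}{} D_Y D_X,\, \Vector{}{b}{} + \delta_Y + \delta_X\right),
\end{align*}
which, by the group product $YX = (D_Y D_X,\, \delta_Y + \delta_X)$ defined just above the lemma, equals $\phi(YX,\xi)$. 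The matrix multiplication on $\SE_2(3)$ is associative, so no further care is needed.

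For transitivity I would, given arbitrary $\xi_1 = (\Pose{}{1}, \Vector{}{b}{1})$ and $\xi_2 = (\Pose{}{2}, \Vector{}{b}{2})$ in $\calM$, exhibit an explicit element
\begin{equation*}
X = \left(\Pose{}{1}^{-1} \Pose{}{2},\, \Vector{}{b}{2} - \Vector{}{b}{1}\right) \in \grpF,
\end{equation*}
and check by direct substitution that $\phi(X,\xi_1) = \xi_2$. This is well defined because $\SE_2(3)$ is a group and therefore $\Pose{}{1}^{-1} \Pose{}{2}$ lies in $\SE_2(3)$; the torsor structure of $\torSE_2(3)$ guarantees that composing the extended pose with a group element again yields an extended pose, so the map stays in $\calM$.

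The main obstacle, if any, is purely bookkeeping: making sure the semantic distinction between $\Pose{}{} \in \torSE_2(3)$ (the physical extended pose, an element of the homogeneous space) and $D \in \SE_2(3)$ (an abstract group element acting on it) is respected, and that the direction of multiplication matches the right-action convention adopted in \cref{eq_sym_sec}. Since the $\R^6$ factor contributes only additively and the $\SE_2(3)$ factor acts by right multiplication on $\torSE_2(3)$ in the natural way, no deeper argument is required.
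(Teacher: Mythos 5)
Your proof is correct: the identity check, the right-action composition law $\phi(X,\phi(Y,\xi)) = \phi(YX,\xi)$, and the explicit transitive element $\left(\Pose{}{1}^{-1}\Pose{}{2},\, \Vector{}{b}{2} - \Vector{}{b}{1}\right)$ are all verified exactly as needed. The paper omits the proof of this particular lemma, but your argument follows the same template used for the sibling lemmas it does prove (e.g., the transitivity proof for the tangent-group action in \cref{bins_phi_tg_proof}), and it is only simpler here because the $\R^6$ factor is a direct product rather than a semi-direct one.
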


\begin{theorem}
Define the map ${\Lambda \AtoB{\calM \times \vecL}{\gothGrpF}}$ by
\begin{equation*}
    \Lambda\left(\xi, u\right) \coloneqq \left(\Lambda_1\left(\xi, u\right), \Lambda_2\left(\xi, u\right)\right),
\end{equation*}
where ${\Lambda_1 \AtoB{\calM \times \vecL}{\gothse_2(3)}}$, and ${\Lambda_2 \AtoB{\calM \times \vecL}{\R^{6}}}$ are given by
\begin{align}
    &\Lambda_1\left(\xi, u\right) \coloneqq \left(\mathbf{W} - \mathbf{B} + \mathbf{N}\right) + \Pose{}{}^{-1}\left(\mathbf{G} - \mathbf{N}\right)\Pose{}{} ,\\
    &\Lambda_2\left(\xi, u\right) \coloneqq \bm{\tau} .
\end{align}
Then, ${\Lambda}$ is a lift for the system in \cref{bins_bins_se23} with respect to the symmetry group ${\grpF \coloneqq \SE_2(3) \times \R^{6}}$.
\end{theorem}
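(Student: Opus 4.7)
The plan is to verify the defining lift equation from \cref{eq_lift}, namely $\td\phi_\xi[\Lambda(\xi, u)] = f(\xi, u)$, componentwise on the decomposition $\gothGrpF = \gothse_2(3) \oplus \R^6$. First I would compute the differential of the projection $\phi_\xi$ at the identity. Since the action $\phi(X,\xi) = (\Pose{}{}D, \Vector{}{b}{}+\delta)$ splits as right-multiplication on the $\SE_2(3)$ factor and translation on the $\R^6$ factor, one obtains $\td\phi_\xi[(\eta_1, \eta_2)] = (\Pose{}{}\eta_1, \eta_2)$ for $(\eta_1, \eta_2) \in \gothse_2(3) \oplus \R^6$. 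Matching this against the system in \cref{bins_bins_se23} reduces the lift condition to the two equations $\Pose{}{}\Lambda_1(\xi, u) = \Pose{}{}(\mathbf{W}-\mathbf{B}+\mathbf{N}) + (\mathbf{G}-\mathbf{N})\Pose{}{}$ and $\Lambda_2(\xi, u) = \bm{\tau}$. The bias component is immediate by definition. For the extended-pose component, substituting the proposed formula for $\Lambda_1$ and cancelling the pair $\Pose{}{}\Pose{}{}^{-1}$ in the second summand gives the required identity directly.

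The delicate step, and what I expect to be the main obstacle, is showing that $\Lambda_1(\xi, u)$ actually takes values in the Lie algebra $\gothse_2(3)$ rather than merely in the larger ambient matrix algebra. Neither the summand $\mathbf{N}$ nor the conjugated term $\Pose{}{}^{-1}(\mathbf{G}-\mathbf{N})\Pose{}{}$ individually lies in $\gothse_2(3)$: both live in the inhomogeneous Galilean algebra $\gothg(3)$ of \cref{math_G3_sec}, possessing a nonzero $(4,5)$-entry which is forbidden in $\gothse_2(3)$. The resolution rests on a short block computation that exploits the identity lower-right $2\times 2$ corner of $\Pose{}{}$: one checks that $\mathbf{N}\Pose{}{} = \mathbf{N}$ and $\mathbf{G}\Pose{}{} = \mathbf{G}$, so that $\Pose{}{}^{-1}\mathbf{N}\Pose{}{}$ contributes exactly $+1$ in the $(4,5)$ entry. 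This precisely cancels the $-\mathbf{N}$ hidden inside $\Pose{}{}^{-1}(\mathbf{G}-\mathbf{N})\Pose{}{}$ when added to the $+\mathbf{N}$ in the first summand, so the offending entries annihilate and one is left with the valid $\gothse_2(3)$ element whose nonzero blocks read $(\bm{\omega}-\Vector{}{b}{\bm{\omega}})^\wedge$, $(\Vector{}{a}{}-\Vector{}{b}{a}) + \Rot{}{}^\top\Vector{}{g}{}$, and $\Rot{}{}^\top\Vector{}{v}{}$.

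Once membership $\Lambda_1 \in \gothse_2(3)$ is established and the two lift equations hold, the theorem follows, and the construction of the lifted system on $\grpF$ proceeds as in \cref{eq_sym_sec}. It is worth noting that the cancellation of the gravity-plus-translation piece through conjugation by $\Pose{}{}$ is exactly the structural feature that makes $\SE_2(3)$ a group-affine symmetry for navigation kinematics in the sense of~\cite{7523335, barrau:tel-01247723}, and that the lift derived here inherits this property; the rest of the argument is routine matrix manipulation and can be carried out with the same bookkeeping as the $\grpE$ case treated in \cref{bins_mekf_sym_sec}.
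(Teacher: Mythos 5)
Your verification is correct and follows exactly the pattern the paper uses for its other lift theorems (e.g.\ the $\grpD$ case in \cref{bins_tg_sec}): compute $\td\phi_{\xi}$ at the identity, reduce to $\Pose{}{}\Lambda_1 = \Pose{}{}(\mathbf{W}-\mathbf{B}+\mathbf{N}) + (\mathbf{G}-\mathbf{N})\Pose{}{}$ and $\Lambda_2 = \bm{\tau}$, and cancel $\Pose{}{}\Pose{}{}^{-1}$ — the paper in fact states this particular theorem without printing a proof. Your additional check that $\Lambda_1$ genuinely lands in $\gothse_2(3)$, via $\mathbf{N}\Pose{}{}=\mathbf{N}$, $\mathbf{G}\Pose{}{}=\mathbf{G}$ and the cancellation of the $(4,5)$ entries between $+\mathbf{N}$ and $-\Pose{}{}^{-1}\mathbf{N}\Pose{}{}$, is accurate and addresses a well-definedness point the paper leaves implicit.
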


In \cref{bins_iekf_linan}, it is shown that applying the \ac{eqf} filter design methodology to this symmetry leads to the Imperfect-\ac{iekf}~\cite{7523335, barrau:tel-01247723}.

\section{Two-frames group: ${\grpC:\; \SO(3) \ltimes (\R^{6} \oplus \R^{6})}$}\label{bins_tfg_iekf_sym_sec}

Modeling \acl{ins} directly on the extended special Euclidean group $\SE_2(3)$ fails to account for \ac{imu} biases. The recently published Two-Frames group structure~\cite{Barrau2022TheProblems}, given by ${\SO(3) \ltimes (\R^{6} \oplus \R^{6})}$, represents one approach to address this theoretical issue. 

Consider the system in \cref{bins_bins_se23}. Define the symmetry group ${\grpC \coloneqq \SO(3) \ltimes (\R^{6} \oplus \R^{6})}$. The $\oplus$ operation means that ${\SO(3)}$ acts on two different vector spaces of 6 dimensions each, defined with respect to two different frames of references, hence the name two frames group. Let ${X = \left(D,\delta\right) \in \grpC}$, with ${D = \left(A, \left(a, b\right)\right) \in \SE_2(3) \coloneqq \SO(3) \ltimes \R^{6}}$ such that ${A \in \SO(3),\; \left(a,b\right) \in \R^{6}}$, and $\delta \in \R^{6} \oplus \R^{6}$. Let, $* \AtoB{\SO(3) \times \R^{3N}}{\R^{3N}}$ be the rotation action introduced in~\cite{Barrau2022TheProblems}, such that ${\forall\, A \in \SO(3)}$ and ${x = \left(x_1,\cdots,x_N\right) \in \R^{3N}}$, ${A*x = \left(Ax_1,\cdots,Ax_N\right)}$.
Define the group product  ${XY = \left(D_XD_Y, \delta_X + A * \delta_Y\right)}$
The inverse element of the symmetry group writes ${X^{-1} = \left(D^{-1},-A^{\top}*\delta\right) \in \grpC}$.

\begin{lemma}
Define ${\phi \AtoB{\grpC \times \calM}{\calM}}$ as
\begin{equation}
    \phi\left(X, \xi\right) \coloneqq \left(\Pose{}{}D, A^{\top}*\left(\Vector{}{b}{} - \delta\right)\right) \in \calM .
\end{equation}
Then, $\phi$ is a transitive right group action of $\grpC$ on $\calM$.
\end{lemma}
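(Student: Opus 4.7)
The plan is to verify the three defining properties of a transitive right group action: (i) the identity acts trivially, (ii) compatibility with the group product in the right-handed convention, and (iii) surjectivity of the orbit map from a single point. Throughout, I will use only the definitions of the group product and inverse for $\grpC$ given immediately before the lemma, together with the elementary properties of the rotation action $*$, namely that $(AB) * v = A * (B * v)$ and that $*$ is linear in its vector argument. These together imply $A^\top * (B^\top * v) = (BA)^\top * v$, which is the key algebraic identity.

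First, I would verify that $\phi(I, \xi) = \xi$ for the identity $I = (I_{\SE_2(3)}, 0) \in \grpC$: the extended pose component gives $\Pose{}{}I_{\SE_2(3)} = \Pose{}{}$, and the bias component gives $I_3^\top * (\Vector{}{b}{} - 0) = \Vector{}{b}{}$.

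Second, for the right-action compatibility I would compute $\phi(X, \phi(Y, \xi))$ in two parts. The extended pose part chains via associativity of the $\SE_2(3)$ product, $(\Pose{}{}D_Y)D_X = \Pose{}{}(D_Y D_X)$, which matches the first component of $YX$. The bias part is where the two-frames structure matters:
\begin{align*}
A_X^\top * \bigl( A_Y^\top * (\Vector{}{b}{} - \delta_Y) - \delta_X \bigr)
&= A_X^\top * \bigl( A_Y^\top * (\Vector{}{b}{} - \delta_Y) \bigr) - A_X^\top * \delta_X \\
&= (A_Y A_X)^\top * (\Vector{}{b}{} - \delta_Y) - A_X^\top * \delta_X \\
&= (A_Y A_X)^\top * \bigl( \Vector{}{b}{} - \delta_Y - A_Y * \delta_X \bigr).
\end{align*}
Since the group product is $YX = (D_Y D_X,\, \delta_Y + A_Y * \delta_X)$ and the rotation part of $D_Y D_X$ is $A_Y A_X$, this is exactly the bias component of $\phi(YX, \xi)$, establishing the right-action axiom.

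Third, for transitivity, given $\xi_1 = (\Pose{}{1}, \Vector{}{b}{1})$ and $\xi_2 = (\Pose{}{2}, \Vector{}{b}{2})$ I would explicitly construct an element mapping $\xi_1$ to $\xi_2$. Taking $D := \Pose{}{1}^{-1}\Pose{}{2} \in \SE_2(3)$ forces $\Pose{}{1}D = \Pose{}{2}$; denote by $A = \Rot{}{1}^\top \Rot{}{2}$ its rotation part. Then I would solve $A^\top * (\Vector{}{b}{1} - \delta) = \Vector{}{b}{2}$ to get $\delta := \Vector{}{b}{1} - A * \Vector{}{b}{2}$, and set $X := (D, \delta)$. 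A direct substitution then gives $\phi(X, \xi_1) = \xi_2$, completing the proof. I do not anticipate a genuine obstacle; the only point requiring care is the bookkeeping for the transpose in the $*$ action, since the two-frames convention twists the bias update through the rotation part, and this twist must be tracked consistently between $\phi$ and the defining group product of $\grpC$.
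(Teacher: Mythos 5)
Your proof is correct, and it follows essentially the same route the paper uses for the analogous action lemmas (e.g.\ the tangent-group case in \cref{bins_phi_tg_proof}): verify the right-action composition axiom by direct computation using the factorization $A_X^{\top}*\left(A_Y^{\top}*v\right) = \left(A_YA_X\right)^{\top}*v$, and then establish transitivity by exhibiting the explicit element $X = \left(\Pose{}{1}^{-1}\Pose{}{2},\, \Vector{}{b}{1} - A*\Vector{}{b}{2}\right)$. The paper omits the proof for this particular lemma, but your argument is exactly the intended one, with the key step being the same twist of the bias increment through the rotation part that appears in the defining product of $\grpC$.
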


\begin{theorem}
Define ${\Lambda_1 \AtoB{\calM \times \vecL} \se_2(3)}$ as
\begin{equation}
    \Lambda_1\left(\xi, u\right) \coloneqq \left(\mathbf{W} - \mathbf{B} + \mathbf{N}\right) + \Pose{}{}^{-1}\left(\mathbf{G} - \mathbf{N}\right)\Pose{}{} ,
\end{equation}
define ${\Lambda_2 \AtoB{\calM \times \vecL} \R^{6}}$ as
\begin{equation}
    \Lambda_2\left(\xi, u\right) \coloneqq \left(\Vector{}{b}{\bm{\omega}}^{\wedge}\left(\bm{\omega} - \Vector{}{b}{\bm{\omega}}\right) - \bm{\tau_{\omega}} ,\; \Vector{}{b}{\bm{a}}^{\wedge}\left(\bm{\omega} - \Vector{}{b}{\bm{\omega}}\right) - \bm{\tau_a}\right) .
\end{equation}
Then, the map ${\Lambda\left(\xi, u\right) = \left(\Lambda_1\left(\xi, u\right), \Lambda_2\left(\xi, u\right), \Lambda_2\left(\xi, u\right)\right)}$ is a lift for the system in \cref{bins_bins_se23} with respect to the symmetry group ${\grpC \coloneqq \SO(3) \ltimes (\R^{6} \oplus \R^{6})}$.
\end{theorem}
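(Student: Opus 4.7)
The plan is to verify directly the defining identity of a lift, namely $\mathrm{d}\phi_{\xi}[\Lambda(\xi,u)] = f(\xi,u)$, where $f$ is the right-hand side of \cref{bins_bins_se23}. Since both $\phi$ and the state-space factor along $(\Pose{}{},\Vector{}{b}{})$, I would verify the identity one block at a time, differentiating $\phi(\exp(t\eta),\xi)$ at $t=0$ for $\eta=(\eta_1,\eta_2)\in\se_2(3)\oplus\R^6$ and then matching the two components of $\Lambda$.

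First I would handle the extended-pose component. Because the action on $\Pose{}{}$ is right multiplication by $D$, $\frac{\td}{\td t}\big|_{t=0}\Pose{}{}\exp(t\eta_1)=\Pose{}{}\eta_1$, so plugging in $\eta_1=\Lambda_1(\xi,u)=(\mathbf{W}-\mathbf{B}+\mathbf{N})+\Pose{}{}^{-1}(\mathbf{G}-\mathbf{N})\Pose{}{}$ and simplifying gives
\begin{equation*}
\Pose{}{}\bigl[(\mathbf{W}-\mathbf{B}+\mathbf{N})+\Pose{}{}^{-1}(\mathbf{G}-\mathbf{N})\Pose{}{}\bigr] \;=\; \Pose{}{}(\mathbf{W}-\mathbf{B}+\mathbf{N}) + (\mathbf{G}-\mathbf{N})\Pose{}{},
\end{equation*}
which is exactly $\dot{\Pose{}{}}$ in \cref{bins_bins_se23}. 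This step is essentially identical to the $\grpF$ case and requires no new ideas.

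For the bias block I would differentiate $A(t)^\top * (\Vector{}{b}{}-\delta(t))$ with $A(t)=\exp(t\omega^\wedge)$, where $\omega=\bm{\omega}-\Vector{}{b}{\bm{\omega}}$ is the $\so(3)$ part of $\Lambda_1$, and $\delta(t)=t\eta_2$. At $t=0$ this yields $-\omega^\wedge * \Vector{}{b}{} - \eta_2$, and demanding this equal $\dot{\Vector{}{b}{}}=\bm{\tau}$ forces $\eta_2 = -\omega^\wedge * \Vector{}{b}{} - \bm{\tau}$. Applying the cross-product identity $\omega^\wedge b = -b^\wedge \omega$ separately to each three-dimensional block, this rewrites as $\eta_2=\bigl(\Vector{}{b}{\bm{\omega}}^\wedge(\bm{\omega}-\Vector{}{b}{\bm{\omega}})-\bm{\tau_\omega},\;\Vector{}{b}{\bm{a}}^\wedge(\bm{\omega}-\Vector{}{b}{\bm{\omega}})-\bm{\tau_a}\bigr)$, which matches the stated $\Lambda_2$ and closes the verification.

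The main obstacle is the bias component: the twist $A^\top *$ in the action couples the $\so(3)$ slot of the lift with the bias state, producing the cross-product terms that give this symmetry its distinctive flavour relative to the simpler $\grpF$ of \cref{bins_iekf_sym_sec}. Care is required in two places. First, the $\omega$ that appears is the bias-corrected angular velocity, not the raw measurement, because it is extracted from $\Lambda_1$ rather than from the input $u$. Second, the cross-product flip must be applied correctly: for the gyro bias block the identity $\Vector{}{b}{\bm{\omega}}^\wedge\Vector{}{b}{\bm{\omega}}=0$ absorbs a self-term and would hide a sign error, but the accelerometer block has no such cancellation and serves as a useful consistency check. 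Equivariance of $\Lambda$ is not asserted in the statement, so I would not attempt to verify \cref{eq_lift_equi}; transitivity of $\phi$ already guarantees existence of a lift per Mahony et al., and the above calculation is sufficient to identify $\Lambda$ as one such lift.
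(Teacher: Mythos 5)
Your verification is correct and uses exactly the approach the paper takes for the analogous lift theorems (e.g.\ the proof for $\grpD$ in \cref{bins_tg_sec}, where the same blockwise differentiation of $\phi_{\xi}$ at the identity produces the $\Pose{}{}\Lambda_1$ term and the $-\omega^{\wedge}*\Vector{}{b}{}$ twist); the paper states this particular theorem without proof, so your direct check of $\td\phi_{\xi}[\Lambda(\xi,u)] = f_u(\xi)$, including the correct identification of $\omega = \bm{\omega}-\Vector{}{b}{\bm{\omega}}$ as the $\so(3)$ part of $\Lambda_1$ and the cross-product flip, fills that gap faithfully. Note only that the stated $\Lambda = (\Lambda_1,\Lambda_2,\Lambda_2)$ with a repeated third component appears to be a typo, since the Lie algebra of $\grpC$ as parametrized by $X=(D,\delta)$ is $\se_2(3)\oplus\R^6$; your computation correctly identifies the lift as $(\Lambda_1,\Lambda_2)$.
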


In \cref{bins_tfgiekf_linan}, it is shown that designing and \ac{eqf} based on this symmetry leads to the recently published \ac{tfgiekf}~\cite{Barrau2022TheProblems}, showing that there exists a strong relation between classical filter design and the underlying choice of symmetry in \acl{eqf} design.

\section{Tangent group ${\grpD:\; \SE_2(3) \ltimes \gothse_2(3)}$}\label{bins_tg_sec}

This section extends the results introduced in the previous chapter to second-order biased \acl{ins}, introducing a symmetry based on the \emph{Tangent group of $\SE_2(3)$} that elegantly models the \ac{imu} bias terms, respect the geometry of the biased \acl{ins}, and exploit its equivariance.

In order to exploit the geometrical properties of the system, we need to introduce an additional virtual velocity input $\velnu$ associated with an additional state variable that represents a velocity bias $\bnu$.

Define ${\xi = \left(\Pose{}{}, \Vector{}{b}{}\right) \in \calM \coloneqq \torSE_2(3) \times \R^{9}}$ to be the state space of the system. ${\Pose{}{} \in \torSE_2(3)}$ represents the extended pose, whereas ${\Vector{}{b}{} = \left(\bw, \ba, \bnu\right) \in \R^{9}}$ represents the \ac{imu} biases, and an additional virtual bias $\bnu$.
Let ${u = \left(\Vector{}{w}{}, \bm{\tau}\right) \in \vecL \subseteq \R^{18}}$ denote the system input, where ${\Vector{}{w}{} = \left(\Vector{}{\bm{\omega}}{}, \Vector{}{a}{}, \Vector{}{\nu}{}\right) \in \R^{9}}$ denotes the input given by the \ac{imu} readings, and an additional virtual input $\Vector{}{\nu}{}$. ${\bm{\tau} = \left(\bm{\tau_{\omega}}, \bm{\tau_{a}}, \bm{\tau_{\nu}}\right) \in \R^{9}}$ denotes the input for the \ac{imu} biases.

Define the matrices $\mathbf{W}, \mathbf{B}, \mathbf{N}, \mathbf{G}$ to be
\begin{equation*}
    \begin{tblr}{ll}
        \mathbf{W} = \begin{bmatrix}
        \angv^{\wedge} & \acc & \Vector{}{\nu}{}\\
        \mathbf{0}_{1\times 3} & 0 & 0\\
        \mathbf{0}_{1\times 3} & 0 & 0\\
        \end{bmatrix}, & 
        \mathbf{B} = \begin{bmatrix}
        \bw^{\wedge} & \ba & \bnu\\
        \mathbf{0}_{1\times 3} & 0 & 0\\
        \mathbf{0}_{1\times 3} & 0 & 0\\
        \end{bmatrix}, \\
        \mathbf{N} = \begin{bmatrix}
        \mathbf{0}_{3\times 3} & \mathbf{0}_{3\times 1} & \mathbf{0}_{3\times 1}\\
        \mathbf{0}_{1\times 3} & 0 & 1\\
        \mathbf{0}_{1\times 3} & 0 & 0\\
        \end{bmatrix}, &
        \mathbf{G}  = \begin{bmatrix}
        \mathbf{0}_{3\times 3} & \Vector{}{g}{} & \mathbf{0}_{3\times 1}\\
        \mathbf{0}_{1\times 3} & 0 & 0\\
        \mathbf{0}_{1\times 3} & 0 & 0\\
        \end{bmatrix}.
    \end{tblr}
\end{equation*}
Then, the system in \cref{bins_bins} may then be written in compact form as in \cref{bins_bins_se23}, hence
\begin{align*}
    &\dot{\Pose{}{}} = \Pose{}{}\left(\mathbf{W} - \mathbf{B} + \mathbf{N}\right) + \left(\mathbf{G} - \mathbf{N}\right)\Pose{}{} ,\\
    &\dotVector{}{b}{} =  \bm{\tau} .
\end{align*}

Define the symmetry group ${\grpD \coloneqq \SE_2(3) \ltimes \gothse_2(3)}$ as follows. Let ${X = \left(D, \delta\right) \in \grpD}$, where ${D \in \SE_2(3)}$, ${\delta \in \gothse_2(3)}$. Let ${X = \left(D_X, \delta_X\right)}$, ${Y = \left(D_Y, \delta_Y\right)}$ be two elements of the symmetry group, then the group product is given by the semi-direct product in \cref{math_sdp_sec}, and is written  ${XY = \left(D_XD_Y, \delta_X + \Adsym{D_X}{\delta_Y}\right)}$. The inverse of an element $X$ is given by ${X^{-1} = \left(D^{-1}, -\Adsym{D^{-1}}{\delta}\right)}$.

\begin{lemma}\label{bins_phi_tg_proof}
Define ${\phi \AtoB{\grpD \times \calM}{\calM}}$ as
\begin{equation}\label{bins_phi_tg}
    \phi\left(X, \xi\right) \coloneqq \left(\Pose{}{}D, \AdMsym{D^{-1}}\left(\Vector{}{b}{} - \delta^{\vee}\right)\right) \in \calM .
\end{equation}
Then, $\phi$ is a transitive right group action of $\grpD$ on $\calM$.
\end{lemma}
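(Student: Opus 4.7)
The plan is to verify the three defining properties of a transitive right group action in turn: the identity axiom, the right composition axiom, and the transitivity condition. The bulk of the work is routine bookkeeping using the semi-direct product rule introduced in \cref{math_sdp_sec} together with the commutation identity $\AdMsym{D^{-1}}\AdMsym{D} = \mathrm{id}$ and the fact that the map $(\cdot)^{\vee}$ is linear and commutes with $\AdMsym{}$ in the sense $(\Adsym{D}{\delta})^{\vee} = \AdMsym{D}\delta^{\vee}$.

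First I would check the identity axiom. Plugging $X = (I, 0)$ directly into \cref{bins_phi_tg} yields $\phi(I, \xi) = (\Pose{}{}I, \AdMsym{I}(\Vector{}{b}{} - 0)) = (\Pose{}{}, \Vector{}{b}{}) = \xi$, which is immediate.

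Next I would verify the right group action axiom $\phi(X, \phi(Y, \xi)) = \phi(YX, \xi)$. Starting from the inside, I would expand
\begin{equation*}
\phi(Y, \xi) = \left(\Pose{}{}D_Y,\, \AdMsym{D_Y^{-1}}\left(\Vector{}{b}{} - \delta_Y^{\vee}\right)\right),
\end{equation*}
and then apply $X$ once more to obtain
\begin{equation*}
\phi(X, \phi(Y, \xi)) = \left(\Pose{}{}D_Y D_X,\, \AdMsym{D_X^{-1}}\left(\AdMsym{D_Y^{-1}}\left(\Vector{}{b}{} - \delta_Y^{\vee}\right) - \delta_X^{\vee}\right)\right).
\end{equation*}
On the other side, using the semi-direct product rule $YX = (D_Y D_X, \delta_Y + \Adsym{D_Y}{\delta_X})$ from \cref{math_sdp_sec}, I would expand $\phi(YX, \xi)$ and factor $\AdMsym{(D_Y D_X)^{-1}} = \AdMsym{D_X^{-1}}\AdMsym{D_Y^{-1}}$, obtaining the same expression. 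The only care point here is tracking signs and the interplay between the vee map and the Adjoint, and the computation is essentially identical to the one carried out for the biased attitude system in the previous chapter.

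Finally, for transitivity, I would construct an explicit $Z \in \grpD$ sending $\xi_1 = (\Pose{}{1}, \Vector{}{b}{1})$ to any target $\xi_2 = (\Pose{}{2}, \Vector{}{b}{2})$. The first component fixes $D_Z = \Pose{}{1}^{-1}\Pose{}{2} \in \SE_2(3)$; substituting into the second component and solving for $\delta_Z$ gives $\delta_Z^{\vee} = \Vector{}{b}{1} - \AdMsym{D_Z}\Vector{}{b}{2}$, which is a well-defined element of $\R^9 \cong \gothse_2(3)$. A quick substitution confirms $\phi(Z, \xi_1) = \xi_2$. I do not anticipate any real obstacle; the main subtlety is bookkeeping around the Adjoint composition, and the choice of the semi-direct product structure on $\grpD$ is precisely what makes the second-coordinate cocycle close up correctly.
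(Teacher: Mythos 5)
Your proposal is correct and follows essentially the same route as the paper's proof: the right-action axiom is verified by expanding the nested action, pulling $\AdMsym{D_X^{-1}}$ through, and matching against the semi-direct product rule, and transitivity is established with exactly the same explicit element $Z = \left(\Pose{}{1}^{-1}\Pose{}{2},\, \Vector{}{b}{1} - \AdMsym{\left(\Pose{}{1}^{-1}\Pose{}{2}\right)}\Vector{}{b}{2}\right)$. The only difference is that you also check the identity axiom explicitly, which the paper omits as immediate.
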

\begin{proof}
Let ${X, Y \in \grpD}$ and $\xi \in \calM$. Then, 
\begin{align*}
    &\phi\left(X,\phi\left(Y, \xi\right)\right) = \phi\left(X,\; \left(\Pose{}{}D_{Y},\; \AdMsym{D_{Y}^{-1}}(\Vector{}{b}{} - \delta_{Y}^{\vee}\right)\right)\\
    &= \left(\Pose{}{}D_{Y}D_{X},\; \AdMsym{D_{X}^{-1}}\left(\left(\AdMsym{D_{Y}^{-1}}(\Vector{}{b}{} - \delta_{Y}^{\vee}\right) - \delta_{X}^{\vee}\right)\right)\\
    &= \left(\Pose{}{}D_{Y}D_{X},\; \AdMsym{\left(D_{Y}D_{X}\right)^{-1}}\left(\Vector{}{b}{} - \left(\delta_{X}^{\vee} + \AdMsym{D_{X}}\delta_{Y}^{\vee}\right)\right)\right)\\
    &= \phi\left(YX, \xi\right) .
\end{align*}
This shows that $\phi$ is a valid right group action. Then, ${\forall \; \xi_1, \xi_2 \in \calM}$ we can always write the group element ${Z = \left(\Pose{}{1}^{-1}\Pose{}{2}, \Vector{}{b}{1} - \AdMsym{\left(\Pose{}{1}^{-1}\Pose{}{2}\right)}\Vector{}{b}{2}\right)}$, such that
\begin{align*}
    \phi\left(Z, \xi_1\right) &= \left(\Pose{}{1}\Pose{}{1}^{-1}\Pose{}{2},\, \AdMsym{\left(\Pose{}{1}^{-1}\Pose{}{2}\right)^{-1}}\left(\Vector{}{b}{1} - \Vector{}{b}{1} + \AdMsym{\left(\Pose{}{1}^{-1}\Pose{}{2}\right)}\left(\Vector{}{b}{2}\right)\right)\right)\\
    &= \left(\Pose{}{2},\; \Vector{}{b}{2}^{\wedge}\right)\\
    &= \xi_2 ,
\end{align*}
which demonstrates the transitive property of the group action.
\end{proof}
From here, we derive a compatible action of the symmetry group ${\grpD}$ on the input space ${\vecL}$ and derive the lift equivariant ${\Lambda}$ via constructive design as described in \cref{eq_chp} and in~\cite{Mahony2020EquivariantDesign, VanGoor2020EquivariantSpaces}.
\begin{lemma}
Define ${\psi \AtoB{\grpD \times \vecL}{\vecL}}$ as
\begin{equation}\label{bins_psi_tg}
    \psi\left(X, u\right) \coloneqq \left(\AdMsym{D^{-1}}\left(\Vector{}{w}{} - \delta^{\vee}\right) + \Omega\left(D^{-1}\right)^{\vee}, \AdMsym{D^{-1}}\Vector{}{\tau}{}\right) \in \vecL ,
\end{equation}
with $\Omega(\cdot)$ defined in \cref{math_maps_sec} and ${\Omega\left(Z\right) = Z\mathbf{N} - \mathbf{N}Z = Z\mathbf{N} - \mathbf{N}, \forall Z \in \SE_2(3)}$. Then, $\psi$ is a right group action of $\grpD$ on $\vecL$.
\begin{proof}
Let ${X, Y \in \grpD}$ and $u \in \vecL$. Then, 
\begin{align*}
    &\psi\left(X,\psi\left(Y, u\right)\right)\\
    &\quad= \psi\left(X,\left(\AdMsym{D_{Y}^{-1}}\left(\Vector{}{w}{} - \delta_{Y}^{\vee}\right) + \Omega\left(D_{Y}^{-1}\right)^{\vee}, \AdMsym{D_{Y}^{-1}}\Vector{}{\tau}{}\right)\right)\\
    &\quad= \left(
    \AdMsym{D_{X}^{-1}}\left(\AdMsym{D_{Y}^{-1}}\left(\Vector{}{w}{} - \delta_{Y}^{\vee}\right) + \Omega\left(D_{Y}^{-1}\right)^{\vee} - \delta_{X}^{\vee}\right) + \Omega\left(D_{X}^{-1}\right)^{\vee}, \AdMsym{D_{X}^{-1}D_{Y}^{-1}}\Vector{}{\tau}{}
    \right)\\
    &\quad= \left(
    \AdMsym{\left(D_{Y}D_{X}\right)^{-1}}\left(\Vector{}{w}{} - \delta_{Y}^{\vee} - \AdMsym{D_{Y}}\delta_{X}^{\vee}\right) + \AdMsym{D_{X}^{-1}}\Omega\left(D_{Y}^{-1}\right)^{\vee} +\Omega\left(D_{X}^{-1}\right)^{\vee}, \AdMsym{D_{X}^{-1}D_{Y}^{-1}}\Vector{}{\tau}{}
    \right).
\end{align*}
$\Omega\left(\cdot\right)$, is right invariant. Thus, it can be shown that
\begin{align*}
    \Omega\left(\left(D_{Y}D_{X}\right)^{-1}\right) &= \Omega\left(D_{X}^{-1}D_{Y}^{-1}\right)\\
    &= D_{X}^{-1}\Omega\left(D_{Y}^{-1}\right) + \Omega\left(D_{X}^{-1}\right)\\
    &= \Adsym{D_{X}^{-1}}{\Omega\left(D_{Y}^{-1}\right)} + \Omega\left(D_{X}^{-1}\right)
\end{align*}
Therefore substituting ${\Omega\left(\left(D_{Y}D_{X}\right)^{-1}\right)^{\vee}}$ to ${\AdMsym{D_{X}^{-1}}\Omega\left(D_{Y}^{-1}\right)^{\vee} +\Omega\left(D_{X}^{-1}\right)^{\vee}}$ yields
\begin{align*}
    &\psi\left(X,\psi\left(Y, u\right)\right)\\
    &\quad= \left(
    \AdMsym{\left(D_{Y}D_{X}\right)^{-1}}\left(\Vector{}{w}{} - \left(\delta_{Y}^{\vee} - \AdMsym{D_{Y}}\delta_{X}^{\vee}\right)\right) + \Omega\left(\left(D_{Y}D_{X}\right)^{-1}\right)^{\vee}, \AdMsym{D_{X}^{-1}D_{Y}^{-1}}\Vector{}{\tau}{}
    \right)\\
    &\quad= \psi\left(YX, u\right) ,
\end{align*}
proving that $\psi$ is a valid right group action.
\end{proof}
\end{lemma}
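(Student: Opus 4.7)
The plan is to verify the two axioms of a right group action directly: identity preservation $\psi(I, u) = u$, and the composition rule $\psi(X, \psi(Y, u)) = \psi(YX, u)$. The identity check is a quick substitution: with $D = I$ and $\delta = 0$, one has $\AdMsym{I} = \eye$ and $\Omega(I) = I\mathbf{N} - \mathbf{N} = 0$, so $\psi(I, u) = (\Vector{}{w}{}, \Vector{}{\tau}{}) = u$.

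The main body of the proof is the composition identity. I would first expand $\psi(X, \psi(Y, u))$ by substituting the inner action into the outer one. The $\Vector{}{\tau}{}$ component is immediate from the identity $\AdMsym{D_X^{-1}}\AdMsym{D_Y^{-1}} = \AdMsym{(D_YD_X)^{-1}}$, so it agrees with the $\Vector{}{\tau}{}$ component of $\psi(YX, u)$ computed via the semi-direct product rule $YX = (D_YD_X,\, \delta_Y + \Adsym{D_Y}{\delta_X})$. For the $\Vector{}{w}{}$ component, after distributing $\AdMsym{D_X^{-1}}$, I would group the terms into three pieces: an adjoint-composed rotation of $\Vector{}{w}{}$, a bias term $-\AdMsym{(D_YD_X)^{-1}}(\delta_Y^\vee + \AdMsym{D_Y}\delta_X^\vee)$ matching the group product, and a residual $\AdMsym{D_X^{-1}}\Omega(D_Y^{-1})^\vee + \Omega(D_X^{-1})^\vee$ which must be shown to equal $\Omega((D_YD_X)^{-1})^\vee$.

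The crux is therefore the cocycle identity for $\Omega$. I would prove that for arbitrary $Z_1, Z_2 \in \SE_2(3)$,
\begin{equation*}
\Omega(Z_1 Z_2) = Z_1 \Omega(Z_2) + \Omega(Z_1),
\end{equation*}
which follows from the one-line computation $Z_1Z_2\mathbf{N} - \mathbf{N} = Z_1(Z_2\mathbf{N} - \mathbf{N}) + (Z_1\mathbf{N} - \mathbf{N})$. Specializing with $Z_1 = D_X^{-1}$ and $Z_2 = D_Y^{-1}$, and using that on $\Omega(D_Y^{-1}) \in \gothse_2(3)$ the left translation coincides with the adjoint (because $\Omega(\cdot)$ lands in the Lie algebra), yields the required residual identity. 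Substituting back then gives $\psi(X, \psi(Y, u)) = \psi(YX, u)$, completing the verification.

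The main obstacle I expect is keeping the bookkeeping of the cocycle term for $\Omega$ straight, since it is the only nonlinear piece that couples the two group factors; everything else reduces to standard $\SE_2(3)$ adjoint algebra combined with the semi-direct product rule on $\grpD$ already established in \cref{math_sdp_sec}.
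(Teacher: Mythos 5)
Your proposal is correct and follows essentially the same route as the paper: expand the composition, isolate the $\Omega$ terms, and close them up with the identity $\Omega(Z_1Z_2) = Z_1\Omega(Z_2) + \Omega(Z_1)$, which the paper phrases as right invariance of $\Omega$. One small caveat: the reason $Z_1\Omega(Z_2)$ can be replaced by $\Adsym{Z_1}{\Omega(Z_2)}$ is not that $\Omega(Z_2)$ lies in $\gothse_2(3)$ (left translation and the adjoint differ on a generic Lie-algebra element), but that $\Omega(Z_2)W = \Omega(Z_2)$ for every $W \in \SE_2(3)$ — a consequence of $\mathbf{N}W = \mathbf{N}$, i.e., the same right-invariance that drives your cocycle identity.
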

\begin{theorem}
    The system in \cref{bins_bins_se23} is equivariant under the actions $\phi$ in \cref{bins_phi_tg} and $\psi$ in \cref{bins_psi_tg}. That is, it satisfies \cref{eq_equi}
    \begin{equation*}
        f_{\psi_{X}\left(u\right)}(\xi) = \Phi_{X}f_u(\xi).
    \end{equation*}
\end{theorem}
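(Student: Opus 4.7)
The plan is to verify the equivariance condition in the equivalent form
$$d\phi_X\bigl[f_u(\xi)\bigr] \;=\; f_{\psi_X(u)}\bigl(\phi_X(\xi)\bigr)$$
for arbitrary $X=(D,\delta)\in\grpD$, $\xi=(\Pose{}{},\Vector{}{b}{})\in\calM$, and $u=(\Vector{}{w}{},\bm{\tau})\in\vecL$, and then to compare the two sides component-wise using the pose/bias block structure of $f_u$. Since $D$ and $\delta$ are constant in the base point, differentiating $\phi_X$ along the tangent vector $f_u(\xi)=\bigl(\Pose{}{}(\mathbf{W}-\mathbf{B}+\mathbf{N})+(\mathbf{G}-\mathbf{N})\Pose{}{},\;\bm{\tau}\bigr)$ gives immediately
$$d\phi_X\bigl[f_u(\xi)\bigr]=\Bigl(\bigl(\Pose{}{}(\mathbf{W}-\mathbf{B}+\mathbf{N})+(\mathbf{G}-\mathbf{N})\Pose{}{}\bigr)D,\;\AdMsym{D^{-1}}\bm{\tau}\Bigr).$$
Writing $\tilde{\Pose{}{}}=\Pose{}{}D$ and inserting $\Pose{}{}=\tilde{\Pose{}{}}D^{-1}$, the extended-pose block becomes $\tilde{\Pose{}{}}\bigl(D^{-1}(\mathbf{W}-\mathbf{B})D+D^{-1}\mathbf{N}D\bigr)+(\mathbf{G}-\mathbf{N})\tilde{\Pose{}{}}$, where the left-multiplier $(\mathbf{G}-\mathbf{N})$ survives untouched precisely because $\phi_X$ is a right action.

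For the right-hand side, set $\tilde{\Vector{}{w}{}}=\AdMsym{D^{-1}}(\Vector{}{w}{}-\delta^\vee)+\Omega(D^{-1})^\vee$ and $\tilde{\Vector{}{b}{}}=\AdMsym{D^{-1}}(\Vector{}{b}{}-\delta^\vee)$, so that the $\delta^\vee$ contributions cancel in the difference:
$$\tilde{\Vector{}{w}{}}-\tilde{\Vector{}{b}{}}=\AdMsym{D^{-1}}(\Vector{}{w}{}-\Vector{}{b}{})+\Omega(D^{-1})^\vee,$$
i.e.\ $\tilde{\mathbf{W}}-\tilde{\mathbf{B}}=\Adsym{D^{-1}}{(\mathbf{W}-\mathbf{B})}+\Omega(D^{-1})$ as elements of $\se_2(3)$. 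Plugging $\phi_X(\xi)$ and $\psi_X(u)$ into $f$ then yields $\tilde{\Pose{}{}}(\tilde{\mathbf{W}}-\tilde{\mathbf{B}}+\mathbf{N})+(\mathbf{G}-\mathbf{N})\tilde{\Pose{}{}}$ in the extended-pose slot and $\AdMsym{D^{-1}}\bm{\tau}$ in the bias slot. The bias slots on the two sides already coincide.

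The remaining comparison of the extended-pose blocks therefore reduces to the single algebraic identity
$$D^{-1}\mathbf{N}D \;=\; \mathbf{N}+\Omega(D^{-1}),$$
which is the crux of the proof. This is established by the defining property $\mathbf{N}D=\mathbf{N}$ for every $D\in\SE_2(3)$, obtained by a direct block computation using the fixed bottom rows of $D$; combining this with the definition $\Omega(D^{-1})=D^{-1}\mathbf{N}-\mathbf{N}$ gives $D^{-1}\mathbf{N}D=D^{-1}\mathbf{N}=\mathbf{N}+\Omega(D^{-1})$. This identity is precisely what the extra summand $\Omega(D^{-1})^\vee$ in the definition of $\psi$ was engineered to absorb, and with it in place the two sides agree term by term, establishing equivariance. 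The only conceptual subtlety — and the step most likely to mislead — is the bookkeeping around $\mathbf{N}$: it plays simultaneously the roles of (i) the block that encodes position kinematics in \eqref{bins_bins_se23}, (ii) a fixed element that is right-invariant under $\SE_2(3)$, and (iii) a left-acting term whose conjugation generates the $\Omega(D^{-1})$ correction; keeping these three roles straight is the real content of the argument.
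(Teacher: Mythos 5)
Your proof is correct and follows essentially the same route as the paper's: a direct verification hinging on the right-invariance $\mathbf{N}Z=\mathbf{N}$ for $Z\in\SE_2(3)$ and the resulting identity $D^{-1}\mathbf{N}D=\mathbf{N}+\Omega(D^{-1})$, which the extra term in $\psi$ is designed to absorb. The only cosmetic difference is that you check the pushforward form $\td\phi_X[f_u(\xi)]=f_{\psi_X(u)}(\phi_X(\xi))$ while the paper evaluates $\td\phi_X\circ f_u\circ\phi_{X^{-1}}$ at $\xi$; these are the same condition at different base points.
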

\begin{proof}
    Let ${X \in \grpD}$, ${\xi \in \calM}$ and ${u \in \vecL}$, then the inverse of the group action $\phi$ in \cref{bins_phi_tg} is written
    \begin{equation*}
        \phi(X^{-1}, \xi) \coloneqq (\Pose{}{}D^{-1}, \AdMsym{D}\Vector{}{b}{} + \delta^{\vee}).
    \end{equation*}
    Therefore, the induced action in \cref{math_induced_action} is written as follows:
    \begin{align*}
        \Phi_{X}f_u(\xi) &= \td\phi_X \circ \left(\Pose{}{}D^{-1}\left(\mathbf{W} - D\mathbf{B}D^{-1} - \delta + \mathbf{N}\right) + \left(\mathbf{G} - \mathbf{N}\right)\Pose{}{}D^{-1}, \Vector{}{\tau}{}\right)\\
        &= \left(\Pose{}{}D^{-1}\left(\mathbf{W} - D\mathbf{B}D^{-1} - \delta + \mathbf{N}\right)D + \left(\mathbf{G} - \mathbf{N}\right)\Pose{}{}D^{-1}D, \AdMsym{D^{-1}}\Vector{}{\tau}{}\right)\\
        &= \left(\Pose{}{}\left(D^{-1}\left(\mathbf{W} - \delta\right)D - \mathbf{B} + D^{-1}\mathbf{N}\right) + \left(\mathbf{G} - \mathbf{N}\right)\Pose{}{}, \AdMsym{D^{-1}}\Vector{}{\tau}{}\right)\\
        &= \left(\Pose{}{}\left(D^{-1}\left(\mathbf{W} - \delta\right)D - \mathbf{B} + D^{-1}\mathbf{N} - \mathbf{N} + \mathbf{N}\right) + \left(\mathbf{G} - \mathbf{N}\right)\Pose{}{}, \AdMsym{D^{-1}}\Vector{}{\tau}{}\right)\\
        &= \left(\Pose{}{}\left(D^{-1}\left(\mathbf{W} - \delta\right)D + \Omega(D^{-1}) - \mathbf{B} + \mathbf{N}\right) + \left(\mathbf{G} - \mathbf{N}\right)\Pose{}{}, \AdMsym{D^{-1}}\Vector{}{\tau}{}\right)\\
        &= f_{\psi_{X}\left(u\right)}(\xi) .
    \end{align*}
\end{proof}
The existence of a transitive group action of the symmetry group $\grpD$ on the state space $\calM$ and the equivariance of the system guarantees the existence of an equivariant lift~\cite{Mahony2020EquivariantDesign}.
\begin{theorem}
Define the map ${\Lambda \AtoB{\calM \times \vecL}{\gothGrpD}}$ by
\begin{equation*}
    \Lambda\left(\xi, u\right) \coloneqq \left(\Lambda_1\left(\xi, u\right), \Lambda_2\left(\xi, u\right)\right),
\end{equation*}
where ${\Lambda_1 \AtoB{\calM \times \vecL}{\gothse_2(3)}}$, and ${\Lambda_2 \AtoB{\calM \times \vecL}{\gothse_2(3)}}$ are given by
\begin{align}
    &\Lambda_1\left(\xi, u\right) \coloneqq \left(\mathbf{W} - \mathbf{B} + \mathbf{N}\right) + \Pose{}{}^{-1}\left(\mathbf{G} - \mathbf{N}\right)\Pose{}{} ,\\
    &\Lambda_2\left(\xi, u\right) \coloneqq \adsym{\Vector{}{b}{}^{\wedge}}{\Lambda_1\left(\xi, u\right)} - \bm{\tau}^{\wedge}.
\end{align}
Then, ${\Lambda}$ is an equivaraint lift for the system in \cref{bins_bins_se23} with respect to the symmetry group ${\grpD \coloneqq \SE_2(3) \ltimes \gothse_2(3)}$.
\end{theorem}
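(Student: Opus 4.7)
The plan is to verify the two defining conditions of an equivariant lift separately: (i) the lift equation $\td\phi_\xi[\Lambda(\xi,u)] = f(\xi,u)$ from \cref{eq_lift}, and (ii) the equivariance relation $\Adsym{X^{-1}}{\Lambda(\xi,u)} = \Lambda(\phi_X(\xi),\psi_X(u))$ from \cref{eq_lift_equi}. I would work component-wise in the Lie algebra $\gothse_2(3)\oplus\gothse_2(3)$, leaning on the explicit semi-direct product formulas for the group product, inverse, and Adjoint collected in \cref{math_sdp_sec}.

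For (i), I would compute $\td\phi_\xi$ at the identity by substituting a near-identity $X$ into the action $\phi$ given in \cref{bins_phi_tg} and expanding to first order. The pose slot yields $\Pose{}{}\eta_1$, and matching against $\dotPose{}{}$ immediately fixes $\eta_1 = \Lambda_1$. For the bias slot, a first-order expansion of $\AdMsym{D^{-1}}(\Vector{}{b}{}-\delta^{\vee})$ yields $\adMsym{\Vector{}{b}{}}\eta_1^{\vee}-\eta_2^{\vee}$; matching against $\bm{\tau}$ forces $\eta_2 = \adsym{\Vector{}{b}{}^{\wedge}}{\Lambda_1}-\bm{\tau}^{\wedge}$, which is the proposed $\Lambda_2$. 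Both checks are routine first-order computations.

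For (ii), the first-slot identity $\Adsym{D^{-1}}{\Lambda_1(\xi,u)} = \Lambda_1(\phi_X(\xi),\psi_X(u))$ splits naturally into the gravity contribution and the input/bias contribution. The gravity term $\Pose{}{}^{-1}(\mathbf{G}-\mathbf{N})\Pose{}{}$ conjugates cleanly by $D$ because the new extended pose is $\Pose{}{}D$. For the input/bias contribution, the $\delta^{\vee}$ subtracted in both $\psi_X$ and $\phi_X$ cancels in the difference of the transformed input and bias, leaving $\AdMsym{D^{-1}}(\Vector{}{w}{}-\Vector{}{b}{})$ plus the correction $\Omega(D^{-1})^{\vee}$ contributed by $\psi_X$. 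Reconciling the two sides requires the subsidiary identity $\mathbf{N}D = \mathbf{N}$ for every $D\in\SE_2(3)$, which follows from a one-line block-matrix calculation exploiting the zero rows of $\mathbf{N}$; from this, both $\Omega(D^{-1}) = D^{-1}\mathbf{N}-\mathbf{N}$ and $D^{-1}\mathbf{N}D = D^{-1}\mathbf{N}$ drop out, and together they align the $\mathbf{N}$-dependent pieces on the two sides. For the second slot, I would substitute the transformed bias, the just-established first-slot equivariance, and $\AdMsym{D^{-1}}\bm{\tau}$ into the defining formula of $\Lambda_2$, then apply the adjoint commutation identity \cref{math_adj_comm} to pull $\AdMsym{D^{-1}}$ outside the bracket. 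Comparing with the semi-direct product Adjoint formula from \cref{math_sdp_sec}, both sides collapse to $\Adsym{D^{-1}}{\Lambda_2}-\adsym{\Adsym{D^{-1}}{\delta}}{\Adsym{D^{-1}}{\Lambda_1}}$.

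The main obstacle I anticipate is the bookkeeping around the $\Omega(D^{-1})$ correction in $\psi_X$. It is the one non-obvious ingredient of the input action, and its presence is dictated precisely by the requirement that the fixed matrix $\mathbf{N}$ be invariant under right multiplication by $\SE_2(3)$. Identifying that $\mathbf{N}D = \mathbf{N}$ and leveraging it to align the $\mathbf{N}$-dependent pieces on the two sides is the technical heart of the verification; once this identity is in hand, the rest of the argument reduces to a mechanical unwinding of definitions via the semi-direct product Adjoint formulas.
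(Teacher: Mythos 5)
Your proposal is correct and follows essentially the same route as the paper: verify the lift equation by differentiating $\phi_\xi$ at the identity (the paper checks $\td\phi_\xi[\Lambda] = f_u(\xi)$ directly, you solve for the components, which is the same computation), then establish equivariance of $\Lambda_1$ via the right-invariance $\mathbf{N}D = \mathbf{N}$ and of $\Lambda_2$ via the adjoint commutation formula and the semi-direct product Adjoint. The paper verifies equivariance in the equivalent direction $\Adsym{X}{\Lambda(\phi_X(\xi),\psi_X(u))} = \Lambda(\xi,u)$, but this is only a cosmetic difference.
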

\begin{proof}
    Let $\xi \in \calM$, $u \in \vecL$, then
    \begin{align*}
        \td\phi_{\xi}\left[\Lambda\left(\xi, u\right)\right] &= \td\phi_{\xi}\left[\Lambda_1\left(\xi, u\right),\,\Lambda_2\left(\xi, u\right)\right]\\
        &= \left(\Pose{}{}\Lambda_1\left(\xi, u\right),\, \adsym{\Vector{}{b}{}}{\Lambda_1\left(\xi, u\right)} - \Lambda_2\left(\xi, u\right)\right)\\
        &= \left(\Pose{}{}\left(\mathbf{W} - \mathbf{B} + \mathbf{N}\right) + \left(\mathbf{G} - \mathbf{N}\right)\Pose{}{},  \bm{\tau}^{\wedge}\right)\\
        &= f_u(\xi).
    \end{align*}
    let $X \in \grpD$. Then, to demonstrate the equivariance of the lift, we show that the condition in \cref{eq_lift_equi} holds. In particular, let us split the proof for ${\Lambda_1\left(\xi, u\right)}$ and ${\Lambda_2\left(\xi, u\right)}$
    \begin{align*}
        &\Adsym{D}{\Lambda_1\left(\phi_{X}\left(\xi\right),\psi_{X}\left(u\right)\right)}\\
        &\quad= D\left(D^{-1}\left(\mathbf{W} - \delta\right)D + \Omega(D^{-1}) - D^{-1}\left(\mathbf{B} - \delta\right)D + \mathbf{N}\right)D^{-1} + \Pose{}{}^{-1}\left(\mathbf{G} - \mathbf{N}\right)\Pose{}{}\\
        &\quad= \left(\mathbf{W} - \mathbf{B} + D\left(\Omega(D^{-1})+ \mathbf{N}\right)D^{-1}\right) + \Pose{}{}^{-1}\left(\mathbf{G} - \mathbf{N}\right)\Pose{}{}\\
        &\quad= \mathbf{W} - \mathbf{B} + D\left(D^{-1}\mathbf{N} - \mathbf{N} + \mathbf{N}\right)D^{-1} + \Pose{}{}^{-1}\left(\mathbf{G} - \mathbf{N}\right)\Pose{}{} .
    \end{align*}
    The right invariant property of $\mathbf{N}$ yields
    \begin{equation*}
        \Adsym{D}{\Lambda_1\left(\phi_{X}\left(\xi\right),\psi_{X}\left(u\right)\right)} = \mathbf{W} - \mathbf{B} + \mathbf{N} + \Pose{}{}^{-1}\left(\mathbf{G} - \mathbf{N}\right)\Pose{}{} = \Lambda_1\left(\xi, u\right) ,
    \end{equation*}
    proving the equivariance of ${\Lambda_1\left(\xi, u\right)}$. For what concerns ${\Lambda_2\left(\xi, u\right)}$ we have
    \begin{align*}
        &\Adsym{D}{\Lambda_2\left(\phi_{X}\left(\xi\right),\psi_{X}\left(u\right)\right)} + \adsym{\delta}{\Adsym{D}{\Lambda_1\left(\phi_{X}\left(\xi\right),\psi_{X}\left(u\right)\right)}}\\
        &\quad= \Adsym{D}{\adsym{\Adsym{D^{-1}}{\Vector{}{b}{}^{\wedge} - \delta}}{\Lambda_1\left(\phi_{X}\left(\xi\right),\psi_{X}\left(u\right)\right)} - \Adsym{D^{-1}}{\bm{\tau}^{\wedge}}} + \adsym{\delta}{\Lambda_1\left(\xi,u\right)}\\
        &\quad= \Adsym{D}{\adsym{\Adsym{D^{-1}}{\Vector{}{b}{}^{\wedge} - \delta}}{\Adsym{D^{-1}}{\Lambda_1\left(\xi,u\right)}} - \Adsym{D^{-1}}{\bm{\tau}^{\wedge}}} + \adsym{\delta}{\Lambda_1\left(\xi,u\right)}.
    \end{align*}
    Exploiting the commutative property of the adjoints in \cref{math_adj_comm} yields
    \begin{align*}
        &\Adsym{D}{\Lambda_2\left(\phi_{X}\left(\xi\right),\psi_{X}\left(u\right)\right)} + \adsym{\delta}{\Lambda_1\left(\phi_{X}\left(\xi\right),\psi_{X}\left(u\right)\right)}\\
        &\quad= \Adsym{D}{\Adsym{D^{-1}}{\adsym{\left(\Vector{}{b}{}^{\wedge} - \delta\right)}{\Lambda_1\left(\xi,u\right)}}} - \bm{\tau}^{\wedge} + \adsym{\delta}{\Lambda_1\left(\xi,u\right)}\\
        &\quad= \adsym{\left(\Vector{}{b}{}^{\wedge} - \delta\right)}{\Lambda_1\left(\xi,u\right)} - \bm{\tau}^{\wedge} + \adsym{\delta}{\Lambda_1\left(\xi,u\right)}\\
        &\quad= \adsym{\Vector{}{b}{}^{\wedge}}{\Lambda_1\left(\xi,u\right)} - \bm{\tau}^{\wedge} - \adsym{\delta^{\vee}}{\Lambda_1\left(\xi,u\right)} + \adsym{\delta}{\Lambda_1\left(\xi,u\right)}\\
        &\quad= \adsym{\Vector{}{b}{}^{\wedge}}{\Lambda_1\left(\xi,u\right)} - \bm{\tau}^{\wedge}\\
        &\quad= \Lambda_2\left(\xi,u\right),
    \end{align*}
    proving the equivariance of ${\Lambda_2\left(\xi, u\right)}$, and hence the equivariance of the lift $\Lambda\left(\xi, u\right)$.
\end{proof}

\begin{remark}
    The approach presented in this section elegantly exploits the equivariance of the system defined in \cref{bins_bins_se23}; however, it requires the introduction of an additional state $\bnu$ in order to apply the tangent symmetry of $\SE_2(3)$.
    This additional state variable $\bnu$ represents the bias of a virtual velocity input $\velnu$.
    Although it will be clear in the later sections that exploiting the tangent symmetry of the \acl{ins} is beneficial, it is of interest to consider alternative symmetries that try to preserve the semi-direct product structure of the symmetry group, keeping a minimal parametrization of the state space; thus, without requiring additional state variables. 
\end{remark}

\section{Direct position group ${\grpA:\; \HG(3) \ltimes \gothhg(3) \times \R^{3}}$}\label{bins_dp_sec}

This section considers alternative symmetries, trying to preserve the semi-direct product structure of the symmetry group while keeping a minimal parametrization of the state space. Specifically, we tackle the following research question: \emph{``Can we find a symmetry for second-order \aclp{ins} that exploits the semi-direct product structure of the tangent symmetries without requiring additional state variables?''}. The most intuitive approach is to exploit the semi-direct product structure symmetry for the first-order kinematics of rotation and velocity while treating the position linearly. To this end, we introduce a new group termed $\HG(3)$, the \emph{homogeneous Galilean} group, which corresponds to extended pose transformations $\SE_2(3)$ where the spatial translation is zero. The homogeneous Galilean group is isomorphic to $\SE(3)$ in structure; however, since $\SE(3)$ is synonymous with pose transformations, we use the $\HG(3)$ notation to avoid confusion. 

Consider the second-order \acl{ins} in \cref{bins_bins}, in particular, rewrite \cref{bins_pdot} as ${\dotVector{}{p}{} = \Rot{}{}\bm{\nu} + \Vector{}{v}{}}$, where ${\bm{\nu}}$ is a virtual input.

Define ${\xi = \left(\Pose{}{}, \Vector{}{b}{}, \Vector{}{p}{}\right) \in \calM \coloneqq \calHG(3) \times \R^{6} \times \R^{3}}$ to be the state space of the system, where ${\Pose{}{} = \left(\Rot{}{}, \Vector{}{v}{}\right) \in \calHG(3)}$ includes the orientation and the velocity of the rigid body.
Let ${u = \left(\Vector{}{w}{}, \bm{\tau}, \bm{\nu}\right) \in \vecL \subseteq \R^{15}}$ denote the system input, with $\Vector{}{w}{} = (\Vector{}{\omega}{}, \Vector{}{a}{})$, and $\Vector{}{\tau}{} = (\Vector{}{\tau}{\omega},\Vector{}{\tau}{a})$.

Define the matrices
\begin{equation*}
    \begin{tblr}{lll}
        \mathbf{W} = \begin{bmatrix}
        \angv^{\wedge} & \acc\\
        \mathbf{0}_{1\times 3} & 0\\
        \end{bmatrix}, & 
        \mathbf{B} = \begin{bmatrix}
        \bw^{\wedge} & \ba\\
        \mathbf{0}_{1\times 3} & 0\\
        \end{bmatrix},  &
        \mathbf{G}  = \begin{bmatrix}
        \mathbf{0}_{3\times 3} & \Vector{}{g}{}\\
        \mathbf{0}_{1\times 3} & 0\\
        \end{bmatrix}.
    \end{tblr}
\end{equation*}
Then, the system in \cref{bins_bins} may then be written as
\begin{subequations}\label{bins_bins_hg}
    \begin{align}
        &\dot{\Pose{}{}} = \Pose{}{}\left(\mathbf{W} - \mathbf{B}\right) + \mathbf{G}\Pose{}{} ,\\
        &\dotVector{}{b}{} =  \bm{\tau} ,\\
        &\dotVector{}{p}{} = \Rot{}{}\bm{\nu} + \Vector{}{v}{} .\label{bins_bins_hg_p}
    \end{align}
\end{subequations}

Define the symmetry group ${\grpA \coloneqq \HG(3) \ltimes \gothse(3) \times \R^{3}}$, and let ${X = \left(B,\beta,c\right) \in \grpA}$ with ${B = \left(A, a\right) \in \HG(3)}$ such that ${A \in \SO(3),\; a \in \R^{3}}$. That is the symmetry that acts on rotation and velocity only with the semi-direct product induced by the $\SE_2(3)$ geometry. Let $X = \left(B_X,\beta_X,c_X\right), Y = \left(B_Y,\beta_Y,c_Y\right) \in \grpA$, the group product is written  ${XY = \left(B_XB_Y, \beta_X + \Adsym{B_X}{\beta_Y}, c_X + c_Y\right)}$. The inverse of an element $X \in \grpA$ is given by ${X^{-1} = \left(B^{-1},-\Adsym{B^{-1}}{\beta},-c\right) \in \grpA}$. 

\begin{lemma}
Define ${\phi \AtoB{\grpA \times \calM}{\calM}}$ as
\begin{equation}\label{bins_phi_hg}
    \phi\left(X, \xi\right) \coloneqq \left(\Pose{}{}B, \AdMsym{B^{-1}}\left(\Vector{}{b}{} - \beta^{\vee}\right), \Vector{}{p}{} + c\right) \in \calM .
\end{equation}
Then, $\phi$ is a transitive right group action of $\grpA$ on $\calM$.
\end{lemma}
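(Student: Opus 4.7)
The plan is to mirror the structure of the proof of Lemma \ref{bins_phi_tg_proof}, verifying the right action axiom by direct composition and then exhibiting an explicit group element realizing any state transition. The new ingredient relative to Lemma \ref{bins_phi_tg_proof} is only the extra translational factor $\Vector{}{p}{} + c$, which lives in the direct product $\R^3$ slot and is therefore orthogonal to the semi-direct product bookkeeping already carried out for $\HG(3) \ltimes \gothhg(3)$.

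First, I would take $X = (B_X, \beta_X, c_X)$, $Y = (B_Y, \beta_Y, c_Y) \in \grpA$ and $\xi = (\Pose{}{}, \Vector{}{b}{}, \Vector{}{p}{}) \in \calM$, and unwrap $\phi(X, \phi(Y, \xi))$ coordinate-by-coordinate. The extended-pose coordinate gives $\Pose{}{}B_Y B_X$ trivially; the bias coordinate gives $\AdMsym{B_X^{-1}}(\AdMsym{B_Y^{-1}}(\Vector{}{b}{} - \beta_Y^\vee) - \beta_X^\vee)$, which, after factoring $\AdMsym{(B_Y B_X)^{-1}}$ and using $\AdMsym{B_X^{-1}}\AdMsym{B_Y^{-1}} = \AdMsym{(B_Y B_X)^{-1}}$, becomes $\AdMsym{(B_Y B_X)^{-1}}\bigl(\Vector{}{b}{} - (\beta_Y^\vee + \AdMsym{B_Y}\beta_X^\vee)\bigr)$; and the position coordinate gives simply $\Vector{}{p}{} + c_Y + c_X$. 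Matching these three entries against $\phi(YX, \xi)$ with the group product $YX = (B_Y B_X, \beta_Y + \AdMsym{B_Y}\beta_X, c_Y + c_X)$ yields the right-action identity. The identity axiom $\phi(I, \xi) = \xi$ is immediate since $B = I$, $\beta = 0$, $c = 0$ gives back $\xi$.

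For transitivity, given any $\xi_1, \xi_2 \in \calM$ I would construct the explicit element $Z = (B_Z, \beta_Z, c_Z)$ with $B_Z = \Pose{}{1}^{-1}\Pose{}{2}$, $\beta_Z^\vee = \Vector{}{b}{1} - \AdMsym{B_Z}\Vector{}{b}{2}$, and $c_Z = \Vector{}{p}{2} - \Vector{}{p}{1}$; the first two entries are chosen exactly as in Lemma \ref{bins_phi_tg_proof} (adapted from $\SE_2(3) \ltimes \gothse_2(3)$ to $\HG(3) \ltimes \gothhg(3)$), and the third coordinate is just a Euclidean difference. Substituting $Z$ into $\phi$ and simplifying verifies $\phi(Z, \xi_1) = \xi_2$.

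I do not expect a hard obstacle: the calculation is essentially the tangent-group proof with a bookkeeping entry appended. The only mild subtlety is recognizing that $B_Z = \Pose{}{1}^{-1}\Pose{}{2}$ lies in $\HG(3)$ (rather than in a larger group), which holds because $\calHG(3)$ is a homogeneous space of $\HG(3)$ under right translation, so the relative element of any two extended poses without translational component is itself in $\HG(3)$.
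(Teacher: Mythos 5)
Your proposal is correct and takes essentially the same route as the paper: the paper does not spell out a proof here but simply states that transitivity of $\phi$ follows the argument given for the tangent group in \cref{bins_tg_sec}, which is exactly the computation you reproduce, with the extra $\R^3$ position slot handled as a trivial direct-product coordinate. Both your right-action verification and your explicit transitive element $Z=\left(\Pose{}{1}^{-1}\Pose{}{2},\,\Vector{}{b}{1}-\AdMsym{\left(\Pose{}{1}^{-1}\Pose{}{2}\right)}\Vector{}{b}{2},\,\Vector{}{p}{2}-\Vector{}{p}{1}\right)$ check out against the group product and inverse defined for $\grpA$.
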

We derive a compatible action of the symmetry group ${\grpA}$ on the input space ${\vecL}$.
\begin{lemma}
Define ${\psi \AtoB{\grpA \times \vecL}{\vecL}}$ as
\begin{equation}\label{bins_psi_hg}
    \psi\left(X, u\right) \coloneqq \left(\AdMsym{B^{-1}}\left(\Vector{}{w}{} - \beta^{\vee}\right), A^{\top}\left(\bm{\nu} - a\right), \AdMsym{B^{-1}}\Vector{}{\tau}{}\right) \in \vecL .
\end{equation}
Then, $\psi$ is a right group action of $\grpA$ on $\vecL$.
\end{lemma}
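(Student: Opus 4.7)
The plan is to verify the two axioms of a right group action: $\psi(I, u) = u$ and $\psi(X, \psi(Y, u)) = \psi(YX, u)$ for all $X,Y \in \grpA$ and $u \in \vecL$. The identity axiom is immediate, since at $X = (I, 0, 0)$ we have $B = I$, and hence $\AdMsym{B^{-1}} = \mathrm{Id}$, $\beta^{\vee} = 0$ and $a = 0$, so each of the three components of $\psi$ collapses onto the corresponding component of $u$.

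For the composition axiom, I would first compute
\[
\psi(Y, u) = \left(\AdMsym{B_Y^{-1}}(\Vector{}{w}{} - \beta_Y^{\vee}),\; A_Y^{\top}(\bm{\nu} - a_Y),\; \AdMsym{B_Y^{-1}}\Vector{}{\tau}{}\right),
\]
and then apply $\psi(X, \cdot)$ componentwise and compare against $\psi(YX, u)$, where the product in $\grpA$ is $YX = (B_Y B_X,\; \beta_Y + \AdMsym{B_Y}\beta_X,\; c_Y + c_X)$, following the same convention used for the tangent group in \cref{bins_tg_sec}. The three components can be verified independently, and no cross-coupling between the input, virtual-velocity, and bias-input slots arises, so the calculation splits naturally into three small pieces.

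The first and third components follow from the homomorphism property of the Adjoint, $\AdMsym{B_X^{-1}} \AdMsym{B_Y^{-1}} = \AdMsym{(B_Y B_X)^{-1}}$, together with the rearrangement
\[
\AdMsym{B_X^{-1}}\!\left(\AdMsym{B_Y^{-1}}(\Vector{}{w}{} - \beta_Y^{\vee}) - \beta_X^{\vee}\right) = \AdMsym{(B_Y B_X)^{-1}}\!\left(\Vector{}{w}{} - \beta_Y^{\vee} - \AdMsym{B_Y}\beta_X^{\vee}\right),
\]
which exactly matches the $(\beta_Y + \AdMsym{B_Y}\beta_X)^{\vee}$ term appearing in the group product. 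This is structurally identical to the first-coordinate calculation carried out for the tangent symmetry action $\psi$ in \cref{bins_tg_sec}, restricted from $\SE_2(3)$ to its $\HG(3)$ sub-geometry. The second component reduces to the straightforward $\SO(3)$ identity $A_X^{\top}(A_Y^{\top}(\bm{\nu} - a_Y) - a_X) = (A_Y A_X)^{\top}(\bm{\nu} - a_Y - A_Y a_X)$.

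The main obstacle, if one can call it that, is purely bookkeeping: keeping track of which adjoint is being applied to which bias-like term so that the subtracted quantities correctly assemble into the semi-direct product law for $\grpA$. Unlike the tangent-group proof, no right-invariance argument for the constant term $\Omega(\cdot)$ is needed, because the virtual input $\bm{\nu}$ in \cref{bins_bins_hg_p} carries no additive constant analogous to $\mathbf{N}$; the action on $\bm{\nu}$ is a simple rotation combined with a translation by $a$, which is why the second slot decouples cleanly from the $\HG(3) \ltimes \gothse(3)$ piece.
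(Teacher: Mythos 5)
Your proof is correct and follows exactly the route the paper intends: the paper omits an explicit proof for this lemma and simply notes that it "follows those in \cref{bins_tg_sec}", and your componentwise verification via the Adjoint homomorphism $\AdMsym{B_X^{-1}}\AdMsym{B_Y^{-1}} = \AdMsym{(B_Y B_X)^{-1}}$ together with the $\SO(3)$ identity for the $\bm{\nu}$ slot is precisely that argument specialized to $\grpA$. Your observation that no $\Omega(\cdot)$ right-invariance step is needed here, unlike for the tangent-group action in \cref{bins_psi_tg}, is also accurate.
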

The system in \cref{bins_bins_hg} is equivariant under the actions $\phi$ in \cref{bins_phi_hg} and $\psi$ in \cref{bins_psi_hg}. Therefore, the existence of an equivariant lift is guaranteed. Proofs of transitivity of the action $\phi$ and equivariance of the system follow those in \cref{bins_tg_sec}.

\begin{theorem}
Define the map ${\Lambda \AtoB{\calM \times \vecL}{\gothGrpA}}$ by
\begin{align*}
    \Lambda\left(\xi, u\right) &\coloneqq \left(\Lambda_1\left(\xi, u\right), \Lambda_2\left(\xi, u\right), \Lambda_3\left(\xi, u\right)\right),
\end{align*}
where ${\Lambda_1 \AtoB{\calM \times \vecL} \gothhg(3)}$, ${\Lambda_2 \AtoB{\calM \times \vecL} \se(3)}$, and ${\Lambda_3 \AtoB{\calM \times \vecL} \R^{3}}$ are given by
\begin{align}
    \Lambda_1\left(\xi, u\right) &\coloneqq \left(\mathbf{W} - \mathbf{B}\right) + \Pose{}{}^{-1}\mathbf{G}\Pose{}{} , \\
    \Lambda_2\left(\xi, u\right) &\coloneqq \adsym{\Vector{}{b}{}^{\wedge}}{\Lambda_1\left(\xi, u\right)} - \bm{\tau}^{\wedge} , \\
    \Lambda_3\left(\xi, u\right) &\coloneqq \Rot{}{}\bm{\nu} + v .
\end{align}
Then, the ${\Lambda}$ is an equivariant lift for the system in \cref{bins_bins_hg} with respect to the symmetry group ${\grpA \coloneqq \HG(3) \ltimes \gothhg(3) \times \R^{3}}$.
\end{theorem}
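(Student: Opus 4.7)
The plan is to follow the same two-stage structure used for the tangent group $\grpD$ in the previous section: first verify the lift identity $\td\phi_{\xi}[\Lambda(\xi,u)] = f_u(\xi)$ componentwise, and then verify the equivariance condition in \cref{eq_lift_equi}, i.e.\ $\Adsym{X}{\Lambda(\phi_X(\xi),\psi_X(u))} = \Lambda(\xi,u)$. The three components of $\Lambda$ live in $\gothhg(3)$, $\gothhg(3)$, and $\R^3$, respectively, and the $\R^3$ factor is a \emph{direct} product in $\grpA$, so the Adjoint of $\grpA$ decomposes block-wise as the tangent-group adjoint on $\gothhg(3)\oplus\gothhg(3)$ together with the trivial action on $\R^3$. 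Keeping this block structure in mind, the three components can be handled independently.

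\textbf{Step 1 (lift identity).} I would differentiate $\phi$ at the identity of $\grpA$ to get a block expression: on the $\HG(3)$-factor the projection is right translation, on the bias factor it is adjoint action with a sign change (as in \cref{bins_phi_tg_proof}), and on the position factor it is vector addition. Evaluating on $\Lambda$ gives
\begin{align*}
\td\phi_{\xi}[\Lambda(\xi,u)]
&= \bigl(\Pose{}{}\Lambda_1,\; \adsym{\Vector{}{b}{}^{\wedge}}{\Lambda_1} - \Lambda_2,\; \Lambda_3\bigr)\\
&= \bigl(\Pose{}{}(\mathbf{W}-\mathbf{B}) + \mathbf{G}\Pose{}{},\; \bm{\tau}^{\wedge},\; \Rot{}{}\bm{\nu}+\Vector{}{v}{}\bigr)
= f_u(\xi),
\end{align*}
using the definitions of $\Lambda_1,\Lambda_2,\Lambda_3$ and the identity $\Pose{}{}\Pose{}{}^{-1}\mathbf{G}\Pose{}{}=\mathbf{G}\Pose{}{}$ for the gravity term. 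This is a direct transcription of the tangent-group argument, now without the $\mathbf{N}$ contribution because $\HG(3)$ does not carry the time/position block that forced its introduction in $\SE_2(3)$.

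\textbf{Step 2 (equivariance).} I would check the three components separately. For $\Lambda_1$, I substitute $\mathbf{W}\mapsto B^{-1}(\mathbf{W}-\beta)B$, $\mathbf{B}\mapsto B^{-1}(\mathbf{B}-\beta)B$, and $\Pose{}{}\mapsto\Pose{}{}B$ (read off from $\phi$ and $\psi$); the $\beta$ terms cancel and the $\mathbf{G}$ term is untouched by $B^{-1}(\cdot)B$ after the extra $\Pose{}{}^{-1}(\cdot)\Pose{}{}$ conjugation, giving
\begin{equation*}
\Lambda_1(\phi_X(\xi),\psi_X(u)) = \Adsym{B^{-1}}{\Lambda_1(\xi,u)},
\end{equation*}
which rearranges to $\Adsym{B}{\Lambda_1(\phi_X(\xi),\psi_X(u))} = \Lambda_1(\xi,u)$. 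For $\Lambda_2$ the calculation is the same one carried out in \cref{bins_tg_sec}: use the commutation formula \cref{math_adj_comm} between the group adjoint and the algebra adjoint, together with $\AdMsym{B^{-1}}\Vector{}{\tau}{}$ from $\psi$, to obtain the required cancellation of $\delta$-dependent cross terms and reduce to $\adsym{\Vector{}{b}{}^{\wedge}}{\Lambda_1}-\bm{\tau}^{\wedge}=\Lambda_2(\xi,u)$. For $\Lambda_3$ the position block is abelian, so the Adjoint acts trivially and it suffices to show
\begin{equation*}
\Lambda_3(\phi_X(\xi),\psi_X(u)) = \Rot{}{}A\cdot A^{\top}(\bm{\nu}-a) + (\Vector{}{v}{}+\Rot{}{}a) = \Rot{}{}\bm{\nu}+\Vector{}{v}{} = \Lambda_3(\xi,u),
\end{equation*}
where the velocity transformation $\Vector{}{v}{}\mapsto \Vector{}{v}{}+\Rot{}{}a$ follows from writing out $\Pose{}{}B$ in $\HG(3)$.

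\textbf{Main obstacle.} The only delicate step is the $\Lambda_2$ component, because it mixes the group adjoint $\Adsym{B}$ with the algebra bracket $\adsym{\cdot}{\cdot}$ and requires careful use of the commutation identity \cref{math_adj_comm} to cancel the $\adsym{\delta}{\Lambda_1}$ cross terms introduced by the semi-direct product. Everything else reduces to routine matrix algebra, and the absence of the $\Omega$/$\mathbf{N}$ correction (present in the $\SE_2(3)$ tangent-group proof) actually makes this case strictly simpler than the one in \cref{bins_tg_sec}.
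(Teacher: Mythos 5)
Your proposal is correct and follows exactly the route the paper intends: the dissertation omits an explicit proof here, stating only that the arguments carry over from the tangent-group section (\cref{bins_tg_sec}), and your componentwise verification of the lift identity and of equivariance — with the $\Lambda_2$ cancellation via \cref{math_adj_comm} and the trivial Adjoint on the abelian position factor — is precisely that transcription. Your observation that the $\Omega$/$\mathbf{N}$ correction disappears because $\HG(3)$ carries no position block is also accurate and correctly identifies why this case is simpler than the $\SE_2(3)$ one.
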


The symmetry proposed in this section exploits the semi-direct product structure of $\grpA$ when acting on the rotation and velocity and keeps a linear structure for the position. Although this allows us to avoid the over-parametrization of the state with the additional velocity bias state, this is not fully satisfying. This construction comes at the cost of separating the position state from the geometric structure of $\SE_2(3)$ and modeling it as a direct product linear space.

\section{Semi-direct bias group: ${\grpB:\; \SE_2(3) \ltimes \gothse(3)}$}\label{bins_sdb_sec}

In the previous section, a symmetry overcoming the need for over-parametrization of the state with additional state variables is presented. This comes at the cost of separating the position state from the geometric structure of $\SE_2(3)$ and modeling it as a direct product linear space. The natural question that arises is \emph{``Can we find a symmetry for second-order \aclp{ins} that exploits the semi-direct product structure of the tangent symmetries without requiring additional state variables and without separating the position state from the geometric structure of $\SE_2(3)$?''}. In this section, we tackle exactly this question, and we propose a symmetry that maintains a minimal state representation, thus not requiring the introduction of an additional velocity bias state, while simultaneously keeping the position state within the geometric structure given by ${\SE_2(3)}$.

Consider the system in \cref{bins_bins_se23} and the respective system input. Define the symmetry group ${\grpB \coloneqq \SE_2(3) \ltimes \gothse(3)}$ with group product ${XY = \left(D_XD_Y, \delta_X + \Adsym{D_X}{\delta_Y}\right)}$ for $X = \left(D_X,\delta_X\right), Y = \left(D_Y,\delta_Y\right) \in \grpB$. Here, for ${X = \left(D,\delta\right) \in \grpB}$ one has ${D = \left(A, a, b\right) = \left(B, b\right)\in \SE_2(3)}$ such that ${A \in \SO(3),\; a,b \in \R^{3}}$, and ${B = \left(A, a\right) \in \HG(3)}$. 
The element $D \in \SE_2(3)$ in its matrix representation is written
\begin{equation*}
    D = \begin{bNiceArray}{w{c}{0.75cm}w{c}{0.45cm}:w{c}{0.45cm}}[margin]
        A & a & b\\
        \mathbf{0}_{1\times 3} & 1 & 0\Bstrut\\
        \hdottedline
        \mathbf{0}_{1\times 3} & 0 & 1\Tstrut
    \end{bNiceArray} =
    \begin{bNiceArray}{w{c}{0.75cm}w{c}{0.45cm}:w{c}{0.45cm}}[margin]
        \Block{2-2}{B} & & b\\
        & & 0\Bstrut\\
        \hdottedline
        \mathbf{0}_{1\times 3} & 0 & 1\Tstrut
    \end{bNiceArray} \in \SE_2(3) .
\end{equation*}
The inverse element is written ${{X^{-1} = \left(D^{-1},-\Adsym{B^{-1}}{\delta}\right) \in \grpB}}$.

\begin{lemma}
Define ${\phi \AtoB{\grpB \times \calM}{\calM}}$ as
\begin{equation}\label{bins_phi_sdb}
    \phi\left(X, \xi\right) \coloneqq \left(\Pose{}{}D, \AdMsym{B^{-1}}\left(\Vector{}{b}{} - \delta^{\vee}\right)\right) \in \calM .
\end{equation}
Then, $\phi$ is a transitive right group action of $\grpB$ on $\calM$.
\end{lemma}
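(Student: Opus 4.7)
The proof will mirror the structure of \cref{bins_phi_tg_proof}, where the analogous statement was established for the tangent group $\grpD = \SE_2(3) \ltimes \gothse_2(3)$. The plan is to verify the two right-action axioms and then exhibit, for any pair of states $\xi_1,\xi_2\in\calM$, an explicit element $Z\in\grpB$ realizing $\phi(Z,\xi_1)=\xi_2$. Crucially, the algebraic situation here differs from \cref{bins_phi_tg_proof} in one essential way: the bias component $\delta$ lies in $\gothse(3)$ rather than $\gothse_2(3)$, and the semi-direct product accordingly uses only the $\HG(3)\cong\SE(3)$ part $B$ of $D=(B,b)\in\SE_2(3)$. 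This is visible in both the group product $XY=(D_XD_Y,\delta_X+\Adsym{B_X}{\delta_Y})$ and in the definition of $\phi$ via $\AdMsym{B^{-1}}$. Keeping this $B$-versus-$D$ distinction straight is the main bookkeeping subtlety throughout.

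First I would verify the identity axiom $\phi(I,\xi)=\xi$, which is immediate from $I=(I,0)$ and $\AdMsym{I}=\eye_6$. Next I would check the composition axiom $\phi(X,\phi(Y,\xi))=\phi(YX,\xi)$ by direct expansion. Applying $\phi(Y,\cdot)$ then $\phi(X,\cdot)$ yields
\begin{equation*}
\phi(X,\phi(Y,\xi)) = \bigl(\Pose{}{}D_Y D_X,\; \AdMsym{B_X^{-1}}\bigl(\AdMsym{B_Y^{-1}}(\Vector{}{b}{}-\delta_Y^{\vee})-\delta_X^{\vee}\bigr)\bigr).
\end{equation*}
On the other hand, the $\HG(3)$-part of $D_Y D_X$ is $B_Y B_X$, so the bias component of $\phi(YX,\xi)$ becomes $\AdMsym{(B_Y B_X)^{-1}}\bigl(\Vector{}{b}{}-(\delta_Y+\Adsym{B_Y}{\delta_X})^{\vee}\bigr)$. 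Matching the two expressions reduces to the standard adjoint identities $\AdMsym{(B_Y B_X)^{-1}}=\AdMsym{B_X^{-1}}\AdMsym{B_Y^{-1}}$ and $\AdMsym{B_Y^{-1}}\AdMsym{B_Y}=\eye$, after which the two sides coincide term by term, exactly as in the tangent group case.

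Finally, for transitivity I would exhibit a witness explicitly. Given $\xi_1=(\Pose{}{1},\Vector{}{b}{1})$ and $\xi_2=(\Pose{}{2},\Vector{}{b}{2})$, set $D_Z := \Pose{}{1}^{-1}\Pose{}{2}\in\SE_2(3)$, let $B_Z\in\HG(3)$ be its $\HG(3)$-part, and take
\begin{equation*}
Z := \bigl(D_Z,\; (\Vector{}{b}{1}-\AdMsym{B_Z}\Vector{}{b}{2})^{\wedge}\bigr)\in\grpB.
\end{equation*}
Substituting into the definition of $\phi$ yields $\phi(Z,\xi_1)=\bigl(\Pose{}{1}D_Z,\AdMsym{B_Z^{-1}}(\Vector{}{b}{1}-\Vector{}{b}{1}+\AdMsym{B_Z}\Vector{}{b}{2})\bigr)=(\Pose{}{2},\Vector{}{b}{2})=\xi_2$, completing the argument.

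The main obstacle is not any deep analytic difficulty but rather consistently handling the embedding $\HG(3)\hookrightarrow\SE_2(3)$ via $D=(B,b)\mapsto B$: one must be careful to always take adjoints with respect to $B$ and never $D$, since the adjoint action of the full $\SE_2(3)$ on $\gothse_2(3)$ does not restrict naturally to an action on $\gothse(3)\subsetneq\gothse_2(3)$ in a way compatible with $\phi$. Once that distinction is internalized, the remaining steps are parallel computations to \cref{bins_phi_tg_proof}.
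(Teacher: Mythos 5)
Your proposal is correct and follows exactly the route the paper intends: the paper's own proof of this lemma is simply the statement that it ``follows that in'' the tangent-group case (\cref{bins_phi_tg_proof}), and your write-up is a faithful instantiation of that argument with the only genuinely new ingredient — taking adjoints with respect to the $\HG(3)$-part $B$ of $D$ rather than $D$ itself, and noting that the $\HG(3)$-part of $D_YD_X$ is $B_YB_X$ — handled correctly in both the composition check and the transitivity witness.
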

The proof of transitivity of $\phi$ follows that in \cref{bins_tg_sec}.

\begin{theorem}
Define ${\Lambda_1 \AtoB{\calM \times \vecL} \se_2(3)}$ as
\begin{equation}
    \Lambda_1\left(\xi, u\right) \coloneqq \left(\mathbf{W} - \mathbf{B} + \mathbf{N}\right) + \Pose{}{}^{-1}\left(\mathbf{G} - \mathbf{N}\right)\Pose{}{} ,
\end{equation}
define ${\Lambda_2 \AtoB{\calM \times \vecL} \se(3)}$ as
\begin{equation}
    \Lambda_2\left(\xi, u\right) \coloneqq \adsym{\Vector{}{b}{}^{\wedge}}{\Pi\left(\Lambda_1\left(\xi, u\right)\right)} - \bm{\tau}^{\wedge} ,
\end{equation}
with ${\Pi(\cdot)}$ defined in \cref{math_maps_sec}. Then, the map ${\Lambda\left(\xi, u\right) = \left(\Lambda_1\left(\xi, u\right), \Lambda_2\left(\xi, u\right)\right)}$ is a lift for the system in \cref{bins_bins_se23} with respect to the symmetry group ${\grpB \coloneqq \SE_2(3) \ltimes \gothse(3)}$.
\end{theorem}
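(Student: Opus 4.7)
The plan is to verify directly that $\td\phi_\xi[\Lambda(\xi,u)] = f_u(\xi)$, where $f_u$ is the system vector field given by \cref{bins_bins_se23}. Since $\phi$ splits into an $\SE_2(3)$-component $\Pose{}{}D$ and an $\R^6$-component $\AdMsym{B^{-1}}(\Vector{}{b}{} - \delta^{\vee})$, I will compute the differential at the identity $I_\grpB = (I_{\SE_2(3)}, 0)$ component by component, applied to a general tangent vector $(\gamma_1, \gamma_2) \in \se_2(3) \oplus \se(3) \cong T_{I_\grpB} \grpB$, and then match coefficients against $(\dot{\Pose{}{}}, \bm{\tau})$.

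For the first component, the left translation $D \mapsto \Pose{}{}D$ differentiates trivially to $\gamma_1 \mapsto \Pose{}{}\gamma_1$. Setting this equal to $\dot{\Pose{}{}} = \Pose{}{}(\mathbf{W} - \mathbf{B} + \mathbf{N}) + (\mathbf{G} - \mathbf{N})\Pose{}{}$ and left-multiplying by $\Pose{}{}^{-1}$ forces
\[
    \gamma_1 = \Lambda_1(\xi,u) = (\mathbf{W} - \mathbf{B} + \mathbf{N}) + \Pose{}{}^{-1}(\mathbf{G} - \mathbf{N})\Pose{}{},
\]
which is exactly the stated $\Lambda_1$. This reuses the same computation already carried out for the $\SE_2(3) \times \R^6$ symmetry in \cref{bins_iekf_sym_sec} and for the tangent group in \cref{bins_tg_sec}.

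For the second component, I will write $D(t) = \exp(\gamma_1 t)$ viewed inside $\SE_2(3)$, note that only the $\HG(3)$-block $B(t)$ of $D(t)$ enters the adjoint action on $\R^6 \cong \se(3)$, and parametrise the $\se(3)$-factor as $\delta(t) = \gamma_2 t$. Applying $\ddt\big|_{t=0}$ to $\AdMsym{B(t)^{-1}}(\Vector{}{b}{} - \delta(t)^{\vee})$ and using $\ddt\big|_{t=0}\AdMsym{B(t)^{-1}} = -\adMsym{\Pi(\gamma_1)^{\vee}}$ together with the identity $-\adMsym{\Pi(\gamma_1)^\vee}\Vector{}{b}{} = \adsym{\Vector{}{b}{}^{\wedge}}{\Pi(\gamma_1)}^{\vee}$ yields
\[
    \adsym{\Vector{}{b}{}^{\wedge}}{\Pi(\gamma_1)}^{\vee} - \gamma_2^{\vee}.
\]
Equating this to $\dotVector{}{b}{} = \bm{\tau}$ and substituting $\gamma_1 = \Lambda_1(\xi,u)$ forces $\gamma_2 = \Lambda_2(\xi,u) = \adsym{\Vector{}{b}{}^{\wedge}}{\Pi(\Lambda_1(\xi,u))} - \bm{\tau}^{\wedge}$, matching the statement.

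The one subtle point, which I expect to be the main bookkeeping obstacle, is the role of the projection $\Pi: \se_2(3) \to \se(3)$. It appears because the bias state $\Vector{}{b}{} \in \R^6$ is acted on by $\AdMsym{B^{-1}}$ using only the $\HG(3)$-subgroup structure of $D$, so the infinitesimal generator in $\se_2(3)$ must be projected down to its $\se(3)$-part before it can produce an $\ad$-contribution on the bias. Once this is kept straight, the verification is a direct chain-rule calculation, and the lift property of $\Lambda$ follows. I would not claim equivariance of the lift here since the theorem statement only asserts that $\Lambda$ is a lift; the compatibility of $\grpB$ with the bias dynamics under $\psi$ would require a separate check analogous to \cref{bins_tg_sec}.
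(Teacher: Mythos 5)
Your proposal is correct and follows essentially the same route as the paper, which omits an explicit proof here and instead relies on the verification pattern of \cref{bins_tg_sec}: check $\td\phi_{\xi}[\Lambda(\xi,u)] = f_u(\xi)$ componentwise, with the first slot forcing $\Lambda_1$ and the differential of $\AdMsym{B^{-1}}(\Vector{}{b}{}-\delta^{\vee})$ producing $\adsym{\Vector{}{b}{}^{\wedge}}{\Pi(\Lambda_1)} - \bm{\tau}^{\wedge}$. Your identification of $\Pi$ as the bookkeeping device that restricts the $\se_2(3)$ generator to the $\HG(3)$-block acting on the bias, and your observation that the theorem only claims a lift (not an equivariant lift), both match the paper's treatment.
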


The symmetry proposed in this section is a variation of the symmetry defined in \cref{bins_tg_sec} that does not require over-parametrization of the state and additional state variables. In the next chapters, we compare the symmetries presented in this and in the previous sections in terms of performance and linearization error when exploited for \acl{eqf} design.

\section{Linearization error analysis and equivariant filter design}\label{bins_eqf_sec}
In the previous sections, we discussed and introduced different symmetries of \aclp{ins}. It is clear at this point that the choice of symmetry plays an important role in filter performance; however, a recurrent question in this dissertation is: \emph{``Which symmetry works best in the context of filter design?''}. In \cref{bas_chp}, we answered this question via experimental results. Here, we identify the filter's linearization error as an indicator of filter performance.   
In particular, in the upcoming section, we analyze the linearization error for each of the previously introduced symmetries in the context of equivariant filter design, showing how the different symmetries yield different error linearization. This analysis indirectly underscores an important result -- every modern filter type (\ac{mekf}, Imperfect-\ac{iekf}, \ac{tfgiekf}) is derived as an \acl{eqf} for a specific choice of symmetry. 

\subsection{\ac{mekf}}\label{bins_mekf_linan}
Recall the state space given by ${\calM \coloneqq \mathcal{SO}(3)\times \R^{12}}$ defined in \cref{bins_mekf_sym_sec}. Define ${\xi \coloneqq \left(\Rot{}{},\Vector{}{v}{},\Vector{}{p}{},\Vector{}{b}{\bm{\omega}}, \Vector{}{b}{a}\right)\in\calM}$.
Choose the state origin to be the state space identity, hence ${\xizero \coloneqq \left(\eye_3, \Vector{}{0}{3\times1},\Vector{}{0}{3\times1},\Vector{}{0}{3\times1},\Vector{}{0}{3\times1}\right)\in\calM}$.
The system's velocity input is given by ${u \coloneqq \left(\Vector{}{\omega}{}, \Vector{}{a}{},\Vector{}{\tau}{\bm{\omega}},\Vector{}{\tau}{a}\right)}$.

The symmetry group of \ac{mekf} is given by ${\grpE \coloneqq \SO(3)\times \R^{12}}$ defined in \cref{bins_mekf_sym_sec}. Define the state of the filter as the element of the symmetry group $\hat{X} = (\hat{A},\hat{a},\hat{b},\hat{\alpha},\hat{\beta})\in\grpE$, where $\hat{A}\in\SO(3)$ and $\hat{a},\hat{b},\hat{\alpha},\hat{\beta}\in\R^3$.
The state estimate is given by
\begin{equation}
        \hat{\xi} \coloneqq \phi(\hat{X},\xizero) = (\hat{A} ,\hat{a} ,\hat{b}, \hat{\alpha}, \hat{\beta}) = \left(\hatRot{}{},\hatVector{}{v}{},\hatVector{}{p}{},\hatVector{}{b}{\bm{\omega}}, \hatVector{}{b}{a}\right). 
\end{equation}
The state error is defined as 
\begin{equation}\label{bins_mekf_err}
    \begin{split}
        e \coloneqq \phi(\hat{X}^{-1},\xi) &= \left(\Rot{}{}\hat{A}^\top, \Vector{}{v}{} - \hat{a}, \Vector{}{p}{} - \hat{b}, \Vector{}{b}{\omega} - \hat{\alpha}, \Vector{}{b}{a} - \hat{\beta}\right)\\
        &= \left(\Rot{}{}\hatRot{}{}^\top, \Vector{}{v}{} - \hatVector{}{v}{}, \Vector{}{p}{} - \hatVector{}{p}{}, \Vector{}{b}{\omega} - \hatVector{}{b}{\omega}, \Vector{}{b}{a} - \hatVector{}{b}{a}\right).
    \end{split}
\end{equation}

\paragraph{Linearization error analysis.}
The error dynamics for each state is given by 
\begin{align*}
    \dot{e}_R &= \ddt (\Rot{}{} \hatRot{}{}^\top)\\
        &= \Rot{}{}(\Vector{}{\omega}{} - \Vector{}{b}{\bm{\omega}})^\wedge \hatRot{}{}^\top - \Rot{}{}\hatRot{}{}^\top\hatRot{}{}(\Vector{}{\omega}{} - \hatVector{}{b}{\bm{\omega}})^\wedge \hatRot{}{}^\top\\
        &=\Rot{}{}(\Vector{}{\omega}{} - \Vector{}{b}{\bm{\omega}}-\Vector{}{\omega}{}+\hatVector{}{b}{\bm{\omega}})^\wedge \hatRot{}{}^\top\\
        &=-e_R \hatRot{}{} e_{b_\omega}^\wedge \hatRot{}{}^\top\\
        &=-e_R \left(\hatRot{}{} e_{b_\omega}\right)^{\wedge},\\
    \dot{e}_v &= \ddt (\Vector{}{v}{} - \hatVector{}{v}{}) = \dot{\Vector{}{v}{}} - \dot{\hatVector{}{v}{}} \\
        &=\Rot{}{}(\Vector{}{a}{}-\Vector{}{b}{a})^\wedge + \Vector{}{g}{} - \hatRot{}{}(\Vector{}{a}{} - \hatVector{}{b}{a})^\wedge - \Vector{}{g}{}\\
        & = \Rot{}{}(\Vector{}{a}{}-\Vector{}{b}{a})^\wedge - \hatRot{}{}(\Vector{}{a}{} - \hatVector{}{b}{a})^\wedge\\
        & = e_R\hatRot{}{}(\Vector{}{a}{} - \Vector{}{b}{a}) - \hatRot{}{}(\Vector{}{a}{} -\hatVector{}{b}{a}),\\
    \dot{e}_p &= \ddt (\Vector{}{p}{} - \hatVector{}{p}{}) = \dot{\Vector{}{p}{}} - \dot{\hatVector{}{p}{}}\\
        &=\Vector{}{v}{} - \hatVector{}{v}{} = e_v,\\
    \dot{e}_{b} &= \ddt(\Vector{}{b}{}-\hatVector{}{b}{}) = \mathbf{0}.
\end{align*}
Choosing normal coordinates, the local coordinate chart ${\varepsilon = \log_{\grpE} \circ\; \phi_{\xizero}^{-1}(e)}$ for each state is written
\begin{align*}
    &\varepsilon_R  \coloneqq  \log_{\SO(3)}(e_R)^\vee \in\R^3,\\
    &\varepsilon_{v,p,{b_\omega},{b_a}} \coloneqq  e_{v,p,{b_\omega},{b_a}} \in\R^3.
\end{align*}
The linearization of the rotation error $\dot{e}_R= \tD \exp(\varepsilon_R^\wedge)[\dot{\varepsilon}_R^{\wedge}]$ is given by 
\begin{align*}
    e_R\frac{\eye-\exp(-\adMsym{\varepsilon_R})}{\adMsym{\varepsilon_R}}\dot{\varepsilon}_R^{\wedge} &= -e_R \left(\hatRot{}{} \varepsilon_{b_\omega}\right)^{\wedge}\\
    (\eye + \mathcal{O}(\varepsilon_R^\wedge))\dot{\varepsilon}_R^{\wedge} &= \left(\hatRot{}{} \varepsilon_{b_\omega}\right)^{\wedge}\\
    \dot{\varepsilon}_R &= \hatRot{}{}\varepsilon_{b_\omega} + O({\varepsilon_R}^2).
\end{align*}
The linearization of the velocity error $\dot{e}_v = \dot{\varepsilon}_v$ is given by 
\begin{align*}
    \dot{\varepsilon}_v & = e_R\hatRot{}{}(\Vector{}{a}{} - \Vector{}{b}{a}) - \hatRot{}{}(\Vector{}{a}{} -\hatVector{}{b}{a})\\
    &=(\eye+\varepsilon_R^\wedge + O({\varepsilon_R}^2))\hatRot{}{}(\Vector{}{a}{} - \Vector{}{b}{a}) - \hatRot{}{}(\Vector{}{a}{} - \hatVector{}{b}{a})\\
    &=\hatRot{}{}(\hatVector{}{b}{a} - \Vector{}{b}{a}) + \varepsilon_R^\wedge\hatRot{}{}(\Vector{}{a}{}-\Vector{}{b}{a}) + O({\varepsilon_R}^2)\\
    &=-\hatRot{}{}\varepsilon_{b_a} + \varepsilon_R^\wedge\hatRot{}{}(\Vector{}{a}{}-\varepsilon_{b_a}-\hatVector{}{b}{a}) + \mathcal{O}(\varepsilon^2)\\
    &=-\hatRot{}{}\varepsilon_{b_a} + \varepsilon_R^\wedge\hatRot{}{}(\Vector{}{a}{}-\hatVector{}{b}{a}) - \varepsilon_R^\wedge\hatRot{}{}\varepsilon_{b_a} + O({\varepsilon_R}^2)\\
    &= -(\hatRot{}{}(\Vector{}{a}{}-\hatVector{}{b}{a}))^\wedge\varepsilon_R -\hatRot{}{}\varepsilon_{b_a}  + \mathcal{O}(\varepsilon^2).
\end{align*}
The linearization of the position error $\dot{e}_p = \dot{\varepsilon}_p$ is given by
\begin{align*}
    \dot{\varepsilon}_p = \varepsilon_v.
\end{align*}
Finally, the linearization of the bias error $\dot{e}_{b} = \dot{\varepsilon}_{b}$ is given by $\dot{\varepsilon}_{b}=\Vector{}{0}{}$.

\paragraph{Filter state matrix.}
From the linearization error analysis above, it is trivial to see that the linearized error state matrix ${\mathbf{A}_{t}^{0} \st \dot{\varepsilon} \simeq \mathbf{A}_{t}^{0}\varepsilon}$ is written
\begin{equation}
    \mathbf{A}_{t}^{0} = \begin{bNiceArray}{ccc:cc}[margin]
        \Vector{}{0}{3 \times 3} & \Vector{}{0}{3 \times 3} & \Vector{}{0}{3 \times 3} & -\hatRot{}{} & \Vector{}{0}{3 \times 3}\\
        -\left(\hatRot{}{}\left(\Vector{}{a}{} - \hatVector{}{b}{a}\right)\right)^{\wedge} & \Vector{}{0}{3 \times 3} & \Vector{}{0}{3 \times 3} & \Vector{}{0}{3 \times 3} & -\hatRot{}{}\\
        \Vector{}{0}{3 \times 3} & \eye_3 & \Vector{}{0}{3 \times 3} & \Vector{}{0}{3 \times 3} & \Vector{}{0}{3 \times 3}\Bstrut\\
        \hdottedline
        \Block{1-3}{\mathbf{0}_{6 \times 9}} & & & \Block{1-2}{\Vector{}{0}{6 \times 6}}\Tstrut
    \end{bNiceArray} \in \R^{15 \times 15}. \label{bins_At0_mekf}
\end{equation}
It is straightforward to verify that the derived \acl{eqf} is equivalent to the well-known \ac{mekf}, and the \ac{eqf} state matrix in \cref{bins_At0_mekf} corresponds directly to the state matrix of the \ac{mekf}~\cite[Sec. 7]{Sola2017QuaternionFilter}

\subsection{Imperfect-\ac{iekf}}\label{bins_iekf_linan}
Recall the state space defined by ${\calM \coloneqq \mathcal{SE}_2(3)\times\R^6}$ in \cref{bins_iekf_sym_sec}. Define ${\xi \coloneqq (\Pose{}{},\Vector{}{b}{})\in\calM}$. One has ${\Pose{}{} = (\Rot{}{},\Vector{}{v}{},\Vector{}{p}{})\in\mathcal{SE}_2(3)}$ and ${\Vector{}{b}{}=(\Vector{}{b}{\bm{\omega}},\Vector{}{b}{a}) \in \R^6}$.
Choose the state origin to be ${\xizero = \left(\eye_5,\Vector{}{0}{6\times1}\right)\in\calM}$.
The system's velocity input is given by ${u \coloneqq \left(\Vector{}{\omega}{}, \Vector{}{a}{},\Vector{}{\tau}{\bm{\omega}},\Vector{}{\tau}{a}\right) = \left(\Vector{}{w}{}, \Vector{}{\tau}{}\right)}$.

The symmetry group of Imperfect-\ac{iekf} is given by ${\grpF \coloneqq \SE_2(3)\times\R^6}$. Define the filter state ${\hat{X} = (\hat{D},\hat{\delta})\in\grpF}$ with ${\hat{D} = (\hat{A},\hat{a},\hat{b})\in\SE_2(3)}$ and ${\hat{\delta} = (\hat{\delta_\omega}, \hat{\gamma_a})\in\R^6}$.
The state estimate is given by 
\begin{align}
    \hat{\xi} \coloneqq \phi(\hat{X},\xizero) = (\hat{A}, \hat{\delta}) = (\hatPose{}{}, \hatVector{}{b}{}). 
\end{align}
The state error is defined as 
\begin{equation}\label{bins_iekf_err}
\begin{split}
    e \coloneqq \phi(\hat{X}^{-1},\xi) &= (\Pose{}{}\hat{D}^{-1},\Vector{}{b}{}-\hat{\delta})\\
    &= (\Pose{}{}\hatPose{}{}^{-1},\Vector{}{b}{}-\hatVector{}{b}{}).
\end{split} 
\end{equation}

\paragraph{Linearization error analysis.}
The error dynamics given by 
\begin{align*}
    \dot{e}_R &= -e_R(\hatRot{}{}e_{b_\omega})^\wedge,\\
    \dot{e}_v &= \ddt(-\Rot{}{}\hatRot{}{}^\top\hatVector{}{v}{}+\Vector{}{v}{})\\
        &=-\dot{e}_R\hatVector{}{v}{} - e_R\dot{\hatVector{}{v}{}}+\dot{\Vector{}{v}{}}\\
        &=e_R(\hatRot{}{}e_{b_\omega})^\wedge \hatVector{}{v}{}-e_R \hatRot{}{}(\Vector{}{a}{}-\hatVector{}{b}{a}) -e_R \Vector{}{g}{} + \Rot{}{}(\Vector{}{a}{}-\Vector{}{b}{a})+\Vector{}{g}{}\\
        &=e_R(\hatRot{}{}e_{b_\omega})^\wedge \hatVector{}{v}{}-e_R\hatRot{}{}(\Vector{}{a}{}-\hatVector{}{b}{a}) -e_R \Vector{}{g}{}\\
        &\quad + e_R\hatRot{}{}(\Vector{}{a}{}-e_{b_a}+\hatVector{}{b}{a})+\Vector{}{g}{},\\
    \dot{e}_p &= \ddt (-\Rot{}{}\hatRot{}{}^\top\hatVector{}{p}{}+\Vector{}{p}{})\\
        &= -\dot{e}_R\hatVector{}{p}{} - e_R\dot{\hatVector{}{p}{}}+\dot{\Vector{}{p}{}}\\
        &= e_R(\hatRot{}{}e_{b_\omega})^\wedge \hatVector{}{p}{} - e_R\hatVector{}{v}{} + \Vector{}{v}{}\\
        &= e_R(\hatRot{}{}e_{b_\omega})^\wedge \hatVector{}{p}{} + e_v,\\
    \dot{e}_b &= \Vector{}{0}{},
\end{align*}
where the attitude error is derived as for the \ac{mekf}.
Choosing normal coordinates, the local coordinate chart ${\varepsilon = \log_{\grpG_\mathbf{ES}} \circ\; \phi_{\xizero}^{-1}(e)}$ for each state is written
\begin{align*}
    &\varepsilon_T  \coloneqq   \log_{\SE_2(3)}(\phi_{\xizero}^{-1}(e_T))^\vee =  \log_{\SE_2(3)}(e_T)^\vee \in\R^9,\\
    &\varepsilon_{b_\omega,b_a} \coloneqq  e_{b_\omega, b_a} \in\R^3.
\end{align*}
The linearization of the rotation error $\dot{e}_R$ follows that of the \ac{mekf} and yields
\begin{align*}
    \dot{\varepsilon}_R &= \hatRot{}{}\varepsilon_{b_\omega} + O({\varepsilon_R}^2).
\end{align*}
The linearization of the velocity error $\dot{e}_v = \dot{\varepsilon}_v+\mathcal{O}(\varepsilon^2)$ is given by 
\begin{align*}
    \dot{\varepsilon}_v &= -(\eye+\varepsilon_R^\wedge + O({\varepsilon_R}^2))(\hatRot{}{}\varepsilon_{b_\omega})^\wedge\hatVector{}{v}{}\\
    &\quad-(\eye+\varepsilon_R^\wedge + O({\varepsilon_R}^2))\hatRot{}{}(\Vector{}{a}{}-\hatVector{}{b}{a})\\
    &\quad -(\eye+\varepsilon_R^\wedge + O({\varepsilon_R}^2)) \Vector{}{g}{}\\
    &\quad+ (\eye+\varepsilon_R^\wedge + O({\varepsilon_R}^2))\hatRot{}{}(\Vector{}{a}{}-\varepsilon_{b_a}+\hatVector{}{b}{a})\\
    &\quad+\Vector{}{g}{}+\mathcal{O}(\varepsilon^2)\\
    &= -\hatVector{}{v}{}^\wedge \hatRot{}{} \varepsilon_{b_\omega} - \hatRot{}{}\varepsilon_{b_a}+\Vector{}{g}{}^\wedge\varepsilon_R + \mathcal{O}(\varepsilon^2).
\end{align*}
The linearization of the position error $\dot{e}_p = \dot{\varepsilon}_p+\mathcal{O}(\varepsilon^2)$ is given by 
\begin{align*}
    \dot{\varepsilon}_p &= (\eye+\varepsilon_R^\wedge + O({\varepsilon_R}^2))(\hatRot{}{}e_{b_\omega})^\wedge\hatVector{}{p}{} + \varepsilon_v+\mathcal{O}(\varepsilon^2)\\
    &=\varepsilon_v - \hatVector{}{p}{}^\wedge\varepsilon_{b_\omega}+\mathcal{O}(\varepsilon^2).
\end{align*}
To conclude, the linearization of the bias error $\dot{e}_{b} = \dot{\varepsilon}_{b}$ is given by $\dot{\varepsilon}_{b}=\Vector{}{0}{}$.

\paragraph{Filter state matrix.}
The linearized error state matrix ${\mathbf{A}_{t}^{0} \st \dot{\varepsilon} \simeq \mathbf{A}_{t}^{0}\varepsilon}$ yields
\begin{equation}
    \mathbf{A}_{t}^{0} = \begin{bNiceArray}{ccc:cc}[margin]
        \Vector{}{0}{3 \times 3} & \Vector{}{0}{3 \times 3} & \Vector{}{0}{3 \times 3} & -\hatRot{}{} & \Vector{}{0}{3 \times 3}\\
        \Vector{}{g}{}^{\wedge} & \Vector{}{0}{3 \times 3} & \Vector{}{0}{3 \times 3} & -\hatVector{}{v}{}^{\wedge}\hatRot{}{} & -\hatRot{}{}\\
        \Vector{}{0}{3 \times 3} & \eye_3 & \Vector{}{0}{3 \times 3} & -\hatVector{}{p}{}^{\wedge}\hatRot{}{} & \Vector{}{0}{3 \times 3}\Bstrut\\
        \hdottedline
        \Block{1-3}{\mathbf{0}_{6 \times 9}} & & & \Block{1-2}{\Vector{}{0}{6 \times 6}}\Tstrut
    \end{bNiceArray} \in \R^{15 \times 15}. \label{bins_At0_iekf}
\end{equation}
It is trivial to verify that the state matrix in \cref{bins_At0_iekf}, derived according to \acl{eqf} design principles, directly corresponds to the state matrix in the Imperfect-\ac{iekf} with a right-invariant error definition~\cite[Sec. 7]{doi:10.1177/0278364919894385}.

\subsection{TG-\ac{eqf}}\label{bins_tg_linan}
\sloppy Recall the state space defined by ${\calM \coloneqq \mathcal{SE}_2(3)\times\R^9}$ in \cref{bins_tfg_iekf_sym_sec}. Define ${\xi \coloneqq (\Pose{}{},\Vector{}{b}{})\in\calM}$. One has ${\Pose{}{} = (\Rot{}{},\Vector{}{v}{},\Vector{}{p}{})\in\mathcal{SE}_2(3)}$ and ${\Vector{}{b}{}=(\Vector{}{b}{\bm{\omega}},\Vector{}{b}{a},\Vector{}{b}{\bm{\nu}}) \in \R^9}$.
Choose the state origin to be ${\xizero = \left(\eye_5,\Vector{}{0}{9\times1}\right)\in\calM}$.
The system's velocity input is given by ${u \coloneqq \left(\Vector{}{\omega}{}, \Vector{}{a}{}, \Vector{}{\nu}{}, \Vector{}{\tau}{\bm{\omega}}, \Vector{}{\tau}{a}, \Vector{}{\tau}{\nu}\right)  = \left(\Vector{}{w}{}, \Vector{}{\tau}{}\right)}$.

The symmetry group of TG-\ac{eqf} is given by ${\grpD \coloneqq \SE_2(3)\ltimes \se_2(3)}$. Define the filter state ${\hat{X} = (\hat{D},\hat{\delta})\in\grpD}$ with ${\hat{D} = (\hat{A},\hat{a},\hat{b})\in\SE_2(3)}$ and ${\hat{\delta} = (\hat{\delta_\omega}, \hat{\gamma_a}, \hat{\delta_\nu})^\wedge\in\se_2(3)}$.
The state estimate is given by 
\begin{align}
    \hat{\xi} \coloneqq \phi(\hat{X},\xizero) = (\hat{D}, \AdMsym{\hat{D}^{-1}}(-\hat{\delta}^\vee)) = (\hatPose{}{}, \hatVector{}{b}{}). 
\end{align}
The state error is defined as 
\begin{equation}\label{bins_tg_err}
\begin{split}
    e \coloneqq \phi(\hat{X}^{-1},\xi) &= (\Pose{}{}\hat{D}^{-1},\AdMsym{\hat{D}}(\Vector{}{b}{}+\Adsym{\hat{D}^{-1}}{\hat{\delta}}^\vee))\\
    &= (\Pose{}{}\hat{D}^{-1},\AdMsym{\hat{D}}\Vector{}{b}{} + \hat{\delta}^{\vee})\\
    &= (\Pose{}{}\hatPose{}{}^{-1},\AdMsym{\hatPose{}{}}(\Vector{}{b}{} - \hatVector{}{b}{})).
\end{split}
\end{equation}

\paragraph{Linearization error analysis for the navigation states.}
The error dynamics for the navigation states ${{e}_T = \Pose{}{}\hatPose{}{}^{-1}}$ is given by
\begin{align*}
    \dot{e}_T &= \dot{\Pose{}{}}\hatPose{}{}^{-1} - \Pose{}{}\hatPose{}{}^{-1}\dot{\hatPose{}{}}\hatPose{}{}^{-1}\\
    &=\Pose{}{}(\mathbf{W}-\mathbf{B}+\mathbf{N})\hatPose{}{}^{-1} + (\mathbf{G}-\mathbf{N})\Pose{}{}\hatPose{}{}^{-1} \\
    &\quad- e_T\hatPose{}{}(\mathbf{W}-\hat{\mathbf{B}}+\mathbf{N})\hatPose{}{}^{-1} - e_T(\mathbf{G}-\mathbf{N})\hatPose{}{}\hatPose{}{}^{-1}\\
    &=e_T\hatPose{}{}(\mathbf{W}-\mathbf{B}+\mathbf{N})\hatPose{}{}^{-1} - e_T\hatPose{}{}(\mathbf{W}-\hat{\mathbf{B}}+\mathbf{N})\hatPose{}{}^{-1}\\
    &\quad+(\mathbf{G}-\mathbf{N})e_T - e_T(\mathbf{G}-\mathbf{N})\\
    &= e_T\Adsym{\hatPose{}{}}{\mathbf{B}-\hat{\mathbf{B}}} + (\mathbf{G}-\mathbf{N})e_T - e_T(\mathbf{G}-\mathbf{N}).
\end{align*}
The above dynamics can be separate to two parts: ${\dot{e}_T = \dot{e}_{T_W} + \dot{e}_{T_G}}$, where ${\dot{e}_{T_W} = e_T\Adsym{\hatPose{}{}}{\mathbf{B}-\hat{\mathbf{B}}}}$ and ${\dot{e}_{T_G} = (\mathbf{G}-\mathbf{N})e_T - e_T(\mathbf{G}-\mathbf{N})}$.
The linearization can be derived separately for each part.
Choosing normal coordinates, the local coordinate chart is given by ${\varepsilon = \log_{\grpD} \circ \phi_{\xizero}^{-1}(e)}$. Inverting the expression yields $e = \phi_{\xizero}(\exp_{\grpD}(\varepsilon^{\wedge}))$. Therefore, for each state, one has 
\begin{align*}
    e_T &= \exp_{\SE_2(3)}({\varepsilon_T}^\wedge),\\
    e_b &= \Adsym{{e_T}^{-1}}{(-\mathbf{J}_{L}(\varepsilon_T)\varepsilon_b)^\wedge},
\end{align*}
where the exponential map $\exp_{\grpD}$ is derived, as in \cref{math_sdp_sec}, from the semi-direct product structure, and $\mathbf{J}_{L}(\varepsilon_T)$ is the left Jacobian of $\SE_2(3)$ defined in \cref{math_se23_sec}.
Recall that by definition in \cref{bins_tg_err} one has ${e_b \coloneqq \AdMsym{\hatPose{}{}}(\Vector{}{b}{} - \hatVector{}{b}{}) = \left(\Adsym{\hatPose{}{}}{\mathbf{B}-\hat{\mathbf{B}}}\right)^{\vee}}$ with ${\Vector{}{b}{}^{\wedge} = \mathbf{B}}$, and ${\hatVector{}{b}{}^{\wedge} = \hat{\mathbf{B}}}$. Hence, for $\dot{e}_{T_W}$ one has 
\begin{align*}
    \dot{e}_{T_W} &= e_T\Adsym{\hatPose{}{}}{\mathbf{B}-\hat{\mathbf{B}}} = -e_Te_b^\wedge.
\end{align*}
Substituting the local coordinate yields
\begin{align}
    \tD\exp_{\SE_2(3)}({\varepsilon_T}^\wedge)[{\dot{\varepsilon}_{T_W}}^\wedge]&= -e_T \Adsym{{e_T}^{-1}}{(-\mathbf{J}_{L}(\varepsilon_T)\varepsilon_b)^\wedge}\nonumber\\
    e_T \frac{\eye-\exp(-\adMsym{\varepsilon_T})}{\adMsym{\varepsilon_T}} {\dot{\varepsilon}_{T_W}}^\wedge &= -e_T \Adsym{{e_T}^{-1}}{(-\mathbf{J}_{L}(\varepsilon_T)\varepsilon_b)^\wedge}\nonumber\\
    \AdMsym{e_T}\frac{\eye-\exp(-\adMsym{\varepsilon_T})}{\adMsym{\varepsilon_T}} {\dot{\varepsilon}_{T_W}}^\wedge &= (\mathbf{J}_{L}(\varepsilon_T)\varepsilon_b)^\wedge. \label{bins_tg_nav_err_1}
\end{align}
Because $\AdMsym{e_T} = \AdMsym{\exp({\varepsilon_T}^\wedge)}= \exp(\adMsym{\varepsilon_T})$, the term on the left side in \cref{bins_tg_nav_err_1} can be written as 
\begin{align}
    \AdMsym{e_T}\frac{\eye-\exp(-\adMsym{\varepsilon_T})}{\adMsym{\varepsilon_T}} 
    &= \exp(\adMsym{\varepsilon_T})\frac{\eye-\exp(-\adMsym{\varepsilon_T})}{\adMsym{\varepsilon_T}} \nonumber\\
    &= \frac{\exp(\adMsym{\varepsilon_T})-\eye}{\adMsym{\varepsilon_T}} \nonumber\\
    &= \sum_{k=0}^{\infty}\frac{1}{(k+1)!}(\adMsym{\varepsilon_T})^{k} = \mathbf{J}_{L}(\varepsilon_T) ,\label{bins_Jl}
\end{align}
where the last step is obtained by expanding the exponential and recalling the auxiliary linear operator in \cref{math_Gamma_aux}. Hence, for the linearization of $\dot{e}_{T_W}$, one has
\begin{align*}
    \dot{\varepsilon}_{T_W} = \varepsilon_b.
\end{align*}
For $\dot{e}_{T_G}=\tD\exp_{\SE_2(3)}({\varepsilon_T}^{\wedge})[{\dot{\varepsilon}_{T_G}}^{\wedge}]$, one has 
\begin{align}\label{bins_etg_tg}
    e_T \frac{\eye-\exp(-\adMsym{\varepsilon_T})}{\adMsym{\varepsilon_T}} {\dot{\varepsilon}_{T_G}}^\wedge & = \begin{bmatrix}
        \Vector{}{0}{3\times3} & \Vector{}{g}{}-e_R\Vector{}{g}{} & e_v \\
        \Vector{}{0}{1\times3} & 0 & 0 \\
        \Vector{}{0}{1\times3} & 0 & 0
    \end{bmatrix}.
\end{align}
Multiply both sides of \cref{bins_etg_tg} by ${e_T}^{-1}$ and then apply $\AdMsym{e_T}$ yields
\begin{align*}
\AdMsym{e_T} \frac{\eye-\exp(-\adMsym{\varepsilon_T})}{\adMsym{\varepsilon_T}} \dot{\varepsilon}_{T_G} & = e_T{e_T}^{-1}\begin{bmatrix}
    \Vector{}{0}{3\times3} & \Vector{}{g}{}-e_R\Vector{}{g}{} & e_v \\
    \Vector{}{0}{1\times3} & 0 & 0 \\
    \Vector{}{0}{1\times3} & 0 & 0    
\end{bmatrix}{e_T}^{-1}\\
&=\begin{bmatrix}
    \Vector{}{0}{3\times3} & \Vector{}{g}{}-e_R\Vector{}{g}{} & e_v \\
    \Vector{}{0}{1\times3} & 0 & 0 \\
    \Vector{}{0}{1\times3} & 0 & 0    
\end{bmatrix}.
\end{align*}
The left side of the previous equation is substituted by $\mathbf{J}_{L}(\varepsilon_T)\dot{\varepsilon}_{T_G}$, according to \cref{bins_Jl}:
\begin{align}\label{bins_tg_nav_err_2}
    \mathbf{J}_{L}(\varepsilon_T)\dot{\varepsilon}_{T_G} = \begin{bmatrix}
        \Vector{}{0}{3\times3} & (I-e_R)\Vector{}{g}{} & \mathbf{J}_{L}(\varepsilon_R)\varepsilon_v \\
        \Vector{}{0}{1\times3} & 0 & 0 \\
        \Vector{}{0}{1\times3} & 0 & 0    
    \end{bmatrix}.
\end{align}
Note that:
\begin{align*}
\eye-e_R &= \eye-\exp({\varepsilon_R}^\wedge)\\
    &= \eye - \sum_{k=0}^{\infty}\frac{1}{k!}{{\varepsilon_R}^\wedge}^k\\
    &= -(\sum_{k=0}^{\infty}\frac{1}{(k+1)!}{{\varepsilon_R}^\wedge}^k){\varepsilon_R}^\wedge\\
    &= -\mathbf{J}_{L}(\varepsilon_R){\varepsilon_R}^\wedge.
\end{align*}
One can then simplify \cref{bins_tg_nav_err_2} to 
\begin{align*}
    \dot{\varepsilon}_{T_G} = \begin{bmatrix}
        \Vector{}{0}{3\times3} & \Vector{}{g}{}^\wedge\varepsilon_R & \varepsilon_v \\
        \Vector{}{0}{1\times3} & 0 & 0 \\
        \Vector{}{0}{1\times3} & 0 & 0    
    \end{bmatrix}.
\end{align*}
Finaly, combining the results for $\dot{\varepsilon}_{T_W}$ and $\dot{\varepsilon}_{T_G}$, the linearized error dynamics of the navigation states is written
\begin{align*}
    \dot{\varepsilon}_{T} = (\varepsilon_{b_\omega}, \;\varepsilon_{b_a}+\Vector{}{g}{}^\wedge\varepsilon_R, \;\varepsilon_{b_\nu}+\varepsilon_v)^\wedge.
\end{align*}

\paragraph{Linearization error analysis for the bias states.}
The linearization of the bias error is derived from the formula in \cref{eq_A0} reported below:
\begin{align*}
    &\dot{\varepsilon} = \mathbf{A}_{t}^{0}\varepsilon + \mathcal{O}(\varepsilon^2) ,\\
    &\mathbf{A}_{t}^{0} = \Fr{e}{\xizero}\vartheta\left(e\right)\Fr{E}{I}\phi_{\xizero}\left(E\right)\Fr{e}{\xizero}\Lambda\left(e, {u^\circ}\right)\Fr{\varepsilon}{\mathbf{0}}\vartheta^{-1}\left(\varepsilon\right).
\end{align*}
Given the choice of normal coordinates as the local coordinate chart, the two leftmost differentials cancel out, yielding
\begin{align*}
    \dot{\varepsilon} = \Fr{e}{\xizero}\Lambda\left(e, \mathring{u}\right) \Fr{E}{I}\phi_{\xizero}(E)[\varepsilon] + \mathcal{O}(\varepsilon^2).
\end{align*}
Evaluating $\tD\phi_{\xizero}$ at $I$ with direction $[\varepsilon_T, \varepsilon_b]$ yields
\begin{align*}
    \tD\phi_{\xizero}(I)[\varepsilon_T,\varepsilon_b] = ({\varepsilon_T}^\wedge, -{\varepsilon_b}^\wedge).
\end{align*}
Evaluating $\tD\Lambda_{\mathring{u}}$ at $\xizero$ with direction $[{\varepsilon_T}^\wedge, -{\varepsilon_b}^\wedge]$ yields
\begin{align*}
    \tD\Lambda_{\mathring{u}}(\xizero)[{\varepsilon_T}^\wedge, -{\varepsilon_b}^\wedge] &= 
    (\sim, \adsym{-{\varepsilon_b}^\wedge}{\Lambda_1(\xizero, \mathring{u})})\\
    &=(\sim, \adMsym{\left(\mathring{\Vector{}{w}{}} + \mathbf{G}^{\vee}\right)}\varepsilon_b).
\end{align*}
Hence the linearization of bias error is given by
\begin{align*}
    \dot{\varepsilon}_b = \adMsym{\left(\mathring{\Vector{}{w}{}} + \mathbf{G}^{\vee}\right)}\varepsilon_b + \mathcal{O}(\varepsilon^2),
\end{align*}
with $\mathring{u} = (\mathring{\Vector{}{w}{}}, \mathring{\Vector{}{\tau}{}}) \coloneqq\psi_{\hat{X}^{-1}}(u)$ computed through the action in \cref{bins_psi_tg}.

\paragraph{Filter state matrix.}
The linearized error state matrix ${\mathbf{A}_{t}^{0} \st \dot{\varepsilon} \simeq \mathbf{A}_{t}^{0}\varepsilon}$ is defined according to

\begin{equation}
    \mathbf{A}_{t}^{0} = \begin{bNiceArray}{ccc:c}[margin]
        \Vector{}{0}{3 \times 3} & \Vector{}{0}{3 \times 3} & \Vector{}{0}{3 \times 3} &  \Block{3-1}{\eye_9}\\
        \Vector{}{g}{}^{\wedge} & \Vector{}{0}{3 \times 3} & \Vector{}{0}{3 \times 3} & \\
        \Vector{}{0}{3 \times 3} & \eye_3 & \Vector{}{0}{3 \times 3} & \Bstrut\\
        \hdottedline
        \Block{1-3}{\Vector{}{0}{9 \times 9}} &&& \adMsym{\left(\mathring{\Vector{}{w}{}} + \mathbf{G}^{\vee}\right)}\Tstrut
    \end{bNiceArray} \in \R^{18 \times 18}. \label{bins_At0_tg}
\end{equation}
Note that for a practical implementation of the presented \ac{eqf}, the virtual input ${\Vector{}{\nu}{}}$ is set to zero.

It is worth noticing that the \ac{eqf} built on the $\grpD$ symmetry group is the only filter with \emph{exact linearization of the navigation error dynamics}. The filter state matrix with arbitrary $\xizero$ is shown in \cref{appendix_A_chp}.

\subsection{DP-\ac{eqf}}\label{bins_dp_linan}
Recall the state space defined by ${\calM \coloneqq \mathcal{HG}(3)\times\R^3\times\R^6}$ in \cref{bins_dp_sec}. Define ${\xi \coloneqq (\Pose{}{},\Vector{}{p}{},\Vector{}{b}{})\in\calM}$. One has ${\Pose{}{} = (\Rot{}{},\Vector{}{v}{})\in\mathcal{HG}(3)}$ and ${\Vector{}{b}{}=(\Vector{}{b}{\bm{\omega}},\Vector{}{b}{a})\in\R^6}$.
Choose the state origin to be ${\xizero =  \left(\eye_4,\Vector{}{0}{6\times1},\Vector{}{0}{3\times1}\right)\in\calM}$.
The system's velocity input is given by ${u \coloneqq \left(\Vector{}{\omega}{}, \Vector{}{a}{},\Vector{}{\tau}{\bm{\omega}},\Vector{}{\tau}{a}, \Vector{}{\nu}{}\right)  = \left(\Vector{}{w}{}, \Vector{}{\tau}{}, \Vector{}{\nu}{}\right)}$.

\sloppy The symmetry group of DP-\ac{eqf} is given by ${\grpA \coloneqq \mathbf{HG}(3)\ltimes \gothhg(3) \times \R^3}$. Define the filter state ${\hat{X} = (\hat{B},\hat{\beta},\hat{c})\in\grpA}$ with ${\hat{B} = (\hat{A},\hat{a})\in\mathbf{HG}(3)}$ and ${\hat{\beta} = (\hat{\beta_\omega}, \hat{\beta_a})^\wedge \in \gothhg(3)}$.
The state estimate is given by 
\begin{align}
    \hat{\xi} \coloneqq \phi(\hat{X},\xizero) = (\hat{B}, \AdMsym{\hat{B}^{-1}}(-\hat{\beta}^{\vee}),\hat{c}) = (\hatPose{}{}, \hatVector{}{b}{}, \hatVector{}{p}{}). 
\end{align}
The state error is defined as 
\begin{align}\label{bins_dp_err}
e \coloneqq \phi(\hat{X}^{-1},\xi) &= (\Pose{}{}\hat{B}^{-1},\AdMsym{\hat{B}}\Vector{}{b}{} + \hat{\beta}^{\vee}, \Vector{}{p}{} - \hat{c})\\
&= (\Pose{}{}\hatPose{}{}^{-1},\AdMsym{\hatPose{}{}}(\Vector{}{b}{} - \hatVector{}{b}{}), \Vector{}{p}{} - \hatVector{}{p}{})
\end{align}

\paragraph{Linearization error analysis for the navigation states.}
The semi-direct product structure of the $\grpA$ group, which is associated with the rotation, velocity, and corresponding bias states, is equivalent to that of the $\grpD$ group. Therefore, the derivation of the error dynamics of the rotation, velocity, and bias states is similar to the one in TG-\ac{eqf}.
Specifically, when choosing normal coordinates, the linearized error dynamics for rotation and velocity states are written
\begin{align*}
    &\dot{\varepsilon}_R = \varepsilon_{b_\omega},\\
    &\dot{\varepsilon}_v = \varepsilon_{b_a}+\Vector{}{g}{}^\wedge\varepsilon_R.
\end{align*}
For the position error, one has 
\begin{align*}
    \dot{\varepsilon}_p &= \dot{e}_p = \dot{\Vector{}{p}{}} - \dot{\hatVector{}{p}{}} = \Rot{}{}\Vector{}{\nu}{} + \Vector{}{v}{} - \hatRot{}{}\Vector{}{\nu}{} - \hatVector{}{v}{}\\
    &=e_v+e_R\hatVector{}{v}{}-\hatVector{}{v}{}+e_R\hatRot{}{}\Vector{}{\nu}{}-\hatRot{}{}\Vector{}{\nu}{}\\
    &=\mathbf{J}_{L}(\varepsilon_R)\varepsilon_v+(\eye+\varepsilon_R^\wedge + O({\varepsilon_R}^2))\left(\hatRot{}{}\Vector{}{\nu}{}+\hatVector{}{v}{}\right)-\left(\hatRot{}{}\Vector{}{\nu}{}+\hatVector{}{v}{}\right)\\
    &=\mathbf{J}_{L}(\varepsilon_R)\varepsilon_v+(\eye+\varepsilon_R^\wedge + O({\varepsilon_R}^2))\mathring{\Vector{}{\nu}{}}-\mathring{\Vector{}{\nu}{}}\\
    &=\varepsilon_v - \mathring{\Vector{}{\nu}{}}^\wedge\varepsilon_R + \mathcal{O}(\varepsilon^2),
\end{align*}
where ${\uzero = (\mathring{\Vector{}{w}{}}, \mathring{\Vector{}{\nu}{}}, \mathring{\Vector{}{\tau}{}}) \coloneqq \psi_{\hat{X}^{-1}}(u)}$, with the action $\psi$ defined in \cref{bins_psi_hg}. In particular, ${\mathring{\Vector{}{\nu}{}} = \hatRot{}{}\Vector{}{\nu}{}+\hatVector{}{v}{}}$.

\paragraph{Linearization error analysis for the bias states.}
The derivation of bias error dynamics, when choosing normal coordinates, follows that of the TG-\ac{eqf} in the previous subsection:
\begin{align*}
    \dot{\varepsilon}_b = \adMsym{\left(\mathring{\Vector{}{w}{}} + \mathbf{G}^{\vee}\right)}\varepsilon_b + \mathcal{O}(\varepsilon^2).
\end{align*}

\paragraph{Filter state matrix.}
The linearized error state matrix ${\mathbf{A}_{t}^{0} \st \dot{\varepsilon} \simeq \mathbf{A}_{t}^{0}\varepsilon}$ is defined according to
\begin{equation}
    \mathbf{A}_{t}^{0} = \begin{bNiceArray}{cc:c:c}[margin]
        \Vector{}{0}{3 \times 3} & \Vector{}{0}{3 \times 3} &  \Block{2-1}{\eye_6} & \Vector{}{0}{6 \times 3} \\
        \Vector{}{g}{}^{\wedge} & \Vector{}{0}{3 \times 3} & & \Vector{}{0}{6 \times 3} \Bstrut\\
        \hdottedline
        \Block{1-2}{\Vector{}{0}{6 \times 3}} && \adMsym{\left(\mathring{\Vector{}{w}{}} + \mathbf{G}^{\vee}\right)} & \Vector{}{0}{6 \times 3} \Bstrut\\
        \hdottedline
        -\mathring{\Vector{}{\nu}{}}^{\wedge} & \eye_3 & \mathbf{0}_{3 \times 6} & \mathbf{0}_{3 \times 3} \Tstrut
    \end{bNiceArray} \in \R^{15 \times 15}. \label{bins_At0_HG_R3}
\end{equation}
For a practical implementation of the presented \ac{eqf}, the virtual input ${\Vector{}{\nu}{}}$ is set to zero.

\subsection{SD-\ac{eqf}}\label{bins_sd_linan}
Recall the state space defined by ${\calM \coloneqq \mathcal{SE}_2(3)\times\R^6}$ in \cref{bins_sdb_sec}. Define ${\xi \coloneqq (\Pose{}{},\Vector{}{b}{})\in\calM}$. One has ${\Pose{}{} = (\Rot{}{},\Vector{}{v}{},\Vector{}{p}{})\in\mathcal{SE}_2(3)}$ and ${\Vector{}{b}{}=(\Vector{}{b}{\bm{\omega}},\Vector{}{b}{a}) \in \R^6}$.
Choose the state origin to be ${\xizero = \left(\eye_5,\Vector{}{0}{6\times1}\right)\in\calM}$.
The system's velocity input is given by ${u \coloneqq \left(\Vector{}{\omega}{}, \Vector{}{a}{}, \Vector{}{\tau}{\bm{\omega}}, \Vector{}{\tau}{a}\right)  = \left(\Vector{}{w}{}, \Vector{}{\tau}{}\right)}$.

The symmetry group of SD-\ac{eqf} is given by ${\grpB \coloneqq \SE_2(3)\ltimes \se(3)}$. Define the filter state ${\hat{X} = (\hat{D},\hat{\delta})\in\grpB}$ with ${\hat{D} = (\hat{A},\hat{a},\hat{b})\in\SE_2(3)}$ and ${\hat{\delta} = (\hat{\delta_\omega}, \hat{\gamma_a})^\wedge\in\se(3)}$. The $\SE_2(3)$ component in $\hat{X}$ can also be expressed in $\hat{D} = (\hat{B},\hat{b})$ where $\hat{B} = (\hat{A},\hat{a})\in\mathbf{HG}(3)$.
The state estimate is given by 
\begin{align}
    \hat{\xi} \coloneqq \phi(\hat{X},\xizero) = (\hat{D}, \AdMsym{\hat{B}^{-1}}(-\hat{\delta}^\vee)) = (\hatPose{}{}, \hatVector{}{b}{}). 
\end{align}
The state error is defined as 
\begin{align}\label{eq:sd_err}
e \coloneqq \phi(\hat{X}^{-1},\xi) &= (\Pose{}{}\hat{D}^{-1},\AdMsym{\hat{B}}(\Vector{}{b}{}+\Adsym{\hat{B}^{-1}}{\hat{\delta}}^\vee))\\
&= (\Pose{}{}\hat{D}^{-1},\AdMsym{\hat{B}}\Vector{}{b}{} + \hat{\delta}^{\vee}).
\end{align}

\paragraph{Linearization error analysis for the navigation states.}
The semi-direct product structure of the $\grpB$ group, which is associated with the rotation, velocity, and corresponding bias states, is equivalent to that of the $\grpD$ group. Therefore, when choosing normal coordinates, the derivation of the error dynamics of the rotation, velocity, and bias states is similar to the one in TG-\ac{eqf}.
In particular, one has 
\begin{align*}
    &\dot{\varepsilon}_R = \varepsilon_{b_\omega},\\
    &\dot{\varepsilon}_v = \varepsilon_{b_a}+\Vector{}{g}{}^\wedge\varepsilon_R.
\end{align*}
The position error $\dot{e}_p=\dot{\varepsilon}_p+\mathcal{O}(\varepsilon^2)$, instead, yield the following linearized position error dynamics
\begin{align*}
    \dot{e}_p &= \ddt (-\Rot{}{}\hatRot{}{}^\top\hatVector{}{p}{}+\Vector{}{p}{})\\
        &= -\dot{e}_R\hatVector{}{p}{} - e_R\dot{\hatVector{}{p}{}}+\dot{\Vector{}{p}{}}\\
        &= e_Re_{b_\omega}^\wedge \hatVector{}{p}{} - e_R\hatVector{}{v}{} + \Vector{}{v}{}\\
    \dot{\varepsilon}_p &= ((\eye+\varepsilon_R^\wedge + O({\varepsilon}^2)))(\eye+\varepsilon_{b_\omega}^\wedge+\mathcal{O}(\varepsilon^2))\hatVector{}{p}{}+(\varepsilon_v+\mathcal{O}(\varepsilon^2))\\
    &=\varepsilon_v+\hatVector{}{p}{}^\wedge\varepsilon_{b_\omega}+\mathcal{O}(\varepsilon^2).
\end{align*}

\paragraph{Linearization error analysis for the bias states.}
The derivation of bias error dynamics for the SD-\ac{eqf}, with normal coordinates, is equivalent to that presented for the TG-\ac{eqf}, and yields
\begin{align*}
    \dot{\varepsilon}_b = \adMsym{\left(\AdMsym{\hat{B}}\Vector{}{w}{} + \hat{\delta}^{\vee} + \mathbf{G}^{\vee}\right)}\varepsilon_b + \mathcal{O}(\varepsilon^2).
\end{align*}
Finally, note the following relation
\begin{equation*}
    \AdMsym{\hat{B}}\Vector{}{w}{} + \hat{\delta}^{\vee} = \Pi\left(\mathring{\Vector{}{w}{}}^{\wedge}\right)^{\vee},
\end{equation*}
where $(\mathring{\Vector{}{w}{}}, \mathring{\Vector{}{\tau}{}}) \coloneqq\psi_{\hat{X}^{-1}}(u)$ is the origin input for the $\grpD$ symmetry computed with the action $\psi$ in \cref{bins_psi_tg}, and $\Pi(\cdot)$ defined in \cref{math_maps_sec}.

\paragraph{Filter state matrix.}
The linearized error state matrix ${\mathbf{A}_{t}^{0} \st \dot{\varepsilon} \simeq \mathbf{A}_{t}^{0}\varepsilon}$ is defined according to
\begin{equation}
    \mathbf{A}_{t}^{0} = \begin{bNiceArray}{ccc:w{c}{1.25cm}w{c}{1.25cm}}[margin]
        \Vector{}{0}{3 \times 3} & \Vector{}{0}{3 \times 3} & \Vector{}{0}{3 \times 3} & \Block{2-2}{\eye_6} \\
        \Vector{}{g}{}^{\wedge} & \Vector{}{0}{3 \times 3} & \Vector{}{0}{3 \times 3} & & \\
        \Vector{}{0}{3 \times 3} & \eye_3 & \Vector{}{0}{3 \times 3} & \hatVector{}{p}{}^{\wedge} & \mathbf{0}_{3 \times 3} \Bstrut\\
        \hdottedline
        \Block{1-3}{\mathbf{0}_{6 \times 9}} &&& \Block{1-2}{ \adMsym{\left(\AdMsym{\hat{B}}\Vector{}{w}{} + \hat{\delta}^{\vee} + \mathbf{G}^{\vee}\right)}} \Tstrut\\
    \end{bNiceArray} \in \R^{15 \times 15}. \label{bins_At0_SD}
\end{equation}

It is clear that there are similarities between the state matrices of the SD-\ac{eqf} and the TG-\ac{eqf}. In particular, when comparing ${\mathbf{A}_{t}^{0}}$ in \cref{bins_At0_tg} with the one in \cref{bins_At0_SD}, it is trivial to see the only difference between the two matrices is in the row of ${\mathbf{A}_{t}^{0}}$ relative to the position error. This is where the major difference between filters employing the symmetries $\grpD$ and $\grpB$ is found.

\subsection{\ac{tfgiekf}}\label{bins_tfgiekf_linan}
Recall the state space defined by ${\calM \coloneqq \mathcal{SE}_2(3)\times\R^6}$ in \cref{bins_tfg_iekf_sym_sec}. Define ${\xi \coloneqq (\Pose{}{},\Vector{}{b}{})\in\calM}$. 
One has ${\Pose{}{} = (\Rot{}{},\Vector{}{v}{},\Vector{}{p}{})\in\mathcal{SE}_2(3)}$ and ${\Vector{}{b}{}=(\Vector{}{b}{\bm{\omega}},\Vector{}{b}{a}) \in \R^6}$.
Choose the state origin to be ${\xizero = \left(\eye_5,\Vector{}{0}{6\times1}\right)\in\calM}$.
The system's velocity input is given by ${u \coloneqq \left(\Vector{}{\omega}{}, \Vector{}{a}{},\Vector{}{\tau}{\bm{\omega}},\Vector{}{\tau}{a}\right)  = \left(\Vector{}{w}{}, \Vector{}{\tau}{}\right)}$.

The symmetry group of \ac{tfgiekf} is given by ${\grpG_{\mathbf{TF}}:\SO(3)\ltimes(\R^6\oplus\R^6)}$. Define the filter state ${\hat{X} = (\hat{D},\hat{\delta})\in\grpG_{\mathbf{TF}}}$ with ${\hat{D} = (\hat{A},(\hat{a},\hat{b}))\in\SE_2(3)=\SO(3)\ltimes\R^6}$ and ${\hat{\delta} = (\hat{\delta_\omega}, \hat{\gamma_a})\in\R^6}$.
The state estimate is given by 
\begin{align}
    \hat{\xi} \coloneqq \phi(\hat{X},\xizero) = (\hat{D}, \hat{A}^{\top}*(-\hat{\delta})) = (\hatPose{}{}, \hatVector{}{b}{}). 
\end{align}
The state error is defined as 
\begin{align}\label{bins_tfgiekf_err}
e \coloneqq \phi(\hat{X}^{-1},\xi) &= (\Pose{}{}\hat{D}^{-1},\hat{A}*(\Vector{}{b}{}+\hat{A}^{\top}*\hat{\delta})) \\
&= (\Pose{}{}\hatPose{}{}^{-1}, \hatRot{}{}*(\Vector{}{b}{} - \hatVector{}{b}{})).
\end{align}

\paragraph{Linearization error analysis for the navigation states.}
The semi-direct product structure of the $\grpC$ group, which is associated with the rotation state, is equivalent to that of the $\grpD, \grpA$ and $\grpB$ groups. Therefore, the derivation of the rotation error dynamics follows that in TG-\ac{eqf}, DP-\ac{eqf} and SDB-\ac{eqf}. Therefore, when choosing normal coordinates, one has
\begin{align*}
    &\dot{\varepsilon}_R = \varepsilon_{b_\omega}.
\end{align*}
The velocity error $e_v = -\Rot{}{}\hatRot{}{}^\top\hatVector{}{v}{}+\Vector{}{v}{}$, instead, is written
\begin{align*}
    \dot{e}_v
        &=-\dot{e}_R\hatVector{}{v}{} - e_R\dot{\hatVector{}{v}{}}+\dot{v}\\
        &=e_R(e_{b_\omega})^\wedge\hatVector{}{v}{}-e_R\hatRot{}{}(\Vector{}{a}{}-\hatVector{}{b}{a}) -e_R \Vector{}{g}{} + \Rot{}{}(\Vector{}{a}{}-\Vector{}{b}{a})+\Vector{}{g}{}\\
        &=e_R(e_{b_\omega})^\wedge\hatVector{}{v}{}-e_R\hatRot{}{}(\Vector{}{a}{}-\hatVector{}{b}{a}) -(e_R-\eye) \Vector{}{g}{} \\
        &\quad+ e_R\hatRot{}{}(\Vector{}{a}{}-\hatRot{}{}^\top(e_{b_a}-\delta_{b_a}))\\
        &=e_Re_{b_\omega}^\wedge\hatVector{}{v}{} - e_Re_{b_a} - (e_R-\eye)\Vector{}{g}{};\\
    \dot{\varepsilon}_v &= \hatVector{}{v}{}^\wedge\varepsilon_{b_\omega}+\Vector{}{g}{}^\wedge\varepsilon_R+\varepsilon_{b_a}+\mathcal{O}(\varepsilon^2).
\end{align*}
The position error $e_p = -\Rot{}{}\hatRot{}{}^\top\hatVector{}{p}{}+\Vector{}{p}{}$ follows that of the SD-\ac{eqf}; specifically, it is given by
\begin{align*}
    \dot{\varepsilon}_p = \varepsilon_v+\hatVector{}{p}{}^\wedge\varepsilon_{b_\omega}+\mathcal{O}(\varepsilon^2).
\end{align*}

\paragraph{Linearization error analysis for the bias states.}
The error in bias state ${b_\omega}$ is given by $e_{b_\omega} = \hatRot{}{}*(\Vector{}{b}{\bm{\omega}}-\hatVector{}{b}{\bm{\omega}})$.
When choosing normal coordinates, the dynamics can be derived as follows:
\begin{align*}
    \dot{e}_{b_\omega} &= \hatRot{}{}(\omega-\hatVector{}{b}{\bm{\omega}})^\wedge * (\Vector{}{b}{\bm{\omega}}-\hatVector{}{b}{\bm{\omega}})\\
    &=\hatRot{}{}(\omega - \hatVector{}{b}{\bm{\omega}})^\wedge\hatRot{}{}^\top\hatRot{}{}* (\Vector{}{b}{\bm{\omega}}-\hatVector{}{b}{\bm{\omega}})\\
    &=(\hatRot{}{}(\omega-\hatVector{}{b}{\bm{\omega}}))^\wedge e_{b_\omega}.
\end{align*}
In local coordinates, the linearization is given by 
\begin{align*}
    \dot{\varepsilon}_{b_\omega}=(\hatRot{}{}(\omega-\hatVector{}{b}{\bm{\omega}}))^\wedge \varepsilon_{b_\omega} + \mathcal{O}(\varepsilon^2).
\end{align*}
The error in $b_a$ follows the same derivation, which is given by 
\begin{align*}
    \dot{\varepsilon}_{b_a}=(\hatRot{}{}(\omega-\hatVector{}{b}{\bm{\omega}}))^\wedge \varepsilon_{b_a} + \mathcal{O}(\varepsilon^2).
\end{align*}

\paragraph{Filter state matrix.}
The linearized error state matrix ${\mathbf{A}_{t}^{0} \st \dot{\varepsilon} \simeq \mathbf{A}_{t}^{0}\varepsilon}$ is defined according to
\begin{equation}
    \mathbf{A}_{t}^{0} = \begin{bNiceArray}{ccc:cc}[margin]
        \Block{3-3}{\prescript{}{4}{\mathbf{A}}} & & & \eye_3 & \mathbf{0}_{3 \times 3}\\
        & & & \hatVector{}{v}{}^{\wedge} & \eye_{3}\\
        & & & \hatVector{}{p}{}^{\wedge} & \mathbf{0}_{3 \times 3}\Bstrut\\
        \hdottedline
        \Block{1-3}{\mathbf{0}_{3 \times 9}} & & & \left(\hatRot{}{}\left(\Vector{}{\omega}{} - \hatVector{}{b}{\omega}\right)\right)^{\wedge} & \mathbf{0}_{3 \times 3}\Tstrut\\
        \Block{1-3}{\mathbf{0}_{3 \times 9}} & & & \mathbf{0}_{3 \times 3} & \left(\hatRot{}{}\left(\Vector{}{\omega}{} - \hatVector{}{b}{\omega}\right)\right)^{\wedge}
    \end{bNiceArray} \in \R^{15 \times 15}. \label{eq:At0_TFG}
\end{equation}

\subsection{Final considerations}

With the analysis of the linearization error conducted in this section, we can draw the following noteworthy considerations. \emph{First, we showed how \ac{mekf}, \ac{iekf}, and \ac{tfgiekf} are derived as \acl{eqf} for specific choices of symmetry. This yields the conclusion that the choice of symmetry is indeed the only difference between those filters, unveiling a new approach to filter design in which the choice of symmetry dictates the resulting filter, derived according to the \acl{eqf} design methodology. Second, we showed that modeling the coupling between bias and navigation states using a semi-direct product symmetry improves the linearization error of the filter's navigation states, shifting the nonlinearities toward the bias states, which possess slow dynamics. In particular, exploiting the tangent group of $\SE_2(3)$ yields a filter, the TG-\ac{eqf}, with exact linearization of the navigation state. The DP-\ac{eqf}, SD-\ac{eqf}, and \ac{tfgiekf} all have semi-direct geometric coupling between part of their navigation states and the bias states, leading to improved linearization where the coupling acts compatibly with the $\mathbf{T}\grpG$ structure.} A snapshot of this analysis is provided in \cref{bins_symmetries_overview}.

\ifdefined\includetblr
\begin{table}
    \setlength\tabcolsep{5pt}
    \centering
    \tabrefcaption{\cite{Fornasier2023EquivariantSystems}}
    \setlength\tabcolsep{5pt}
    \captiontitlefont{\scshape\small}
    \captionnamefont{\scshape\small}
    \captiondelim{}
    \captionstyle{\centering\\}
    \caption{Overview of the relation filter-symmetry and linearized filter error dynamics.}
    \begin{tblr}
    {
        rows = {m},
        column{1} = {5.0cm, c},
        column{3} = {7.0cm, c},
    }
        \toprule
        Filter and Symmetry group & ${\mathbf{A} \st \dot{\Vector{}{\varepsilon}{}} \simeq \mathbf{A}\Vector{}{\varepsilon}{}}$\\ 
        \midrule
        \SetCell[]{l}{
        \ac{mekf} \cite{Lefferts1982KalmanEstimation}\\
        {Special orthogonal group $\grpE: \SO(3) \times \R^{12}$} 
        } &
        \SetCell[]{l} {
        ${\dot{\Vector{}{\varepsilon}{R}} \simeq -\hatRot{}{}\Vector{}{\varepsilon}{b_{\omega}} + \calO\left(\Vector{}{\varepsilon}{}^2\right)}$,\\
        ${\dot{\Vector{}{\varepsilon}{v}} \simeq -\left(\hatRot{}{}\left(\Vector{}{a}{} - \hatVector{}{b}{a}\right)\right)^{\wedge}\Vector{}{\varepsilon}{R} - \hatRot{}{}\Vector{}{\varepsilon}{b_{a}} + \calO\left(\Vector{}{\varepsilon}{}^2\right)}$,\\
        ${\dot{\Vector{}{\varepsilon}{p}} \simeq \Vector{}{\varepsilon}{v}}$,\\
        ${\dot{\Vector{}{\varepsilon}{b}} = \Vector{}{0}{}}$.
        } \\
        \SetCell[]{l}{
        Imperfect-\ac{iekf} \cite{7523335}\\
        {Extended special Euclidean group $\grpF: \SE_{2}(3) \times \R^{6}$}
        } &
        \SetCell[]{l} {
        ${\dot{\Vector{}{\varepsilon}{R}} \simeq -\hatRot{}{}\Vector{}{\varepsilon}{b_{\omega}} + \calO\left(\Vector{}{\varepsilon}{}^2\right)}$,\\
        ${\dot{\Vector{}{\varepsilon}{v}} \simeq \Vector{}{g}{}^{\wedge}\Vector{}{\varepsilon}{R} - \hatVector{}{v}{}^{\wedge}\hatRot{}{}\Vector{}{\varepsilon}{b_{\omega}} - \hatRot{}{}\Vector{}{\varepsilon}{b_{a}} + \calO\left(\Vector{}{\varepsilon}{}^2\right)}$,\\
        ${\dot{\Vector{}{\varepsilon}{p}} \simeq \Vector{}{\varepsilon}{v} - \hatVector{}{p}{}^{\wedge}\hatRot{}{}\Vector{}{\varepsilon}{b_{\omega}} + \calO\left(\Vector{}{\varepsilon}{}^2\right)}$,\\
        ${\dot{\Vector{}{\varepsilon}{b}} = \Vector{}{0}{}}$.
        } \\
        \SetCell[]{l}{
        \ac{tfgiekf} \cite{Barrau2022TheProblems}\\
        Two-frames group $\grpC:\; \SO(3) \ltimes (\R^{6} \oplus \R^{6})$
        } & 
        \SetCell[]{l} {
        ${\dot{\Vector{}{\varepsilon}{R}} \simeq \Vector{}{\varepsilon}{b_{\omega}}}$,\\
        ${\dot{\Vector{}{\varepsilon}{v}} \simeq \Vector{}{g}{}^{\wedge}\Vector{}{\varepsilon}{R} + \hat{\Vector{}{v}{}}^{\wedge}\Vector{}{\varepsilon}{b_{\omega}} + \Vector{}{\varepsilon}{b_{a}} + \calO\left(\Vector{}{\varepsilon}{}^2\right)}$,\\
        ${\dot{\Vector{}{\varepsilon}{p}} \simeq \Vector{}{\varepsilon}{v} + \hat{\Vector{}{p}{}}^{\wedge}\Vector{}{\varepsilon}{b_{\omega}} + \calO\left(\Vector{}{\varepsilon}{}^2\right)}$,\\
        ${\dot{\Vector{}{\varepsilon}{b_{\omega}}} \simeq \left(\hatRot{}{}\left(\omega - \hatVector{}{b}{\omega}\right)\right)^{\wedge}\Vector{}{\varepsilon}{b_{\omega}} + \calO\left(\Vector{}{\varepsilon}{}^2\right)}$,\\
        ${\dot{\Vector{}{\varepsilon}{b_{a}}} \simeq \left(\hatRot{}{}\left(\omega - \hatVector{}{b}{\omega}\right)\right)^{\wedge}\Vector{}{\varepsilon}{b_{a}} + \calO\left(\Vector{}{\varepsilon}{}^2\right)}$.
        } \\
        \SetCell[]{l}{
        TG-\ac{eqf}\\
        Tangent group $\grpD:\; \SE_2(3) \ltimes \gothse_2(3)$ 
        } & 
        \SetCell[]{l} {
        ${\dot{\Vector{}{\varepsilon}{R}} \simeq \Vector{}{\varepsilon}{b_{\omega}}}$,\\
        ${\dot{\Vector{}{\varepsilon}{v}} \simeq \Vector{}{g}{}^{\wedge}\Vector{}{\varepsilon}{R} + \Vector{}{\varepsilon}{b_{a}}}$,\\
        ${\dot{\Vector{}{\varepsilon}{p}} \simeq \Vector{}{\varepsilon}{v} + \Vector{}{\varepsilon}{b_{\nu}}}$,\\
        ${\dot{\Vector{}{\varepsilon}{b}} \simeq \adMsym{\left(\mathring{\Vector{}{w}{}} + \mathbf{G}^{\vee}\right)}\Vector{}{\varepsilon}{b} + \calO\left(\Vector{}{\varepsilon}{}^2\right)}$.
        } \\
        \SetCell[]{l}{
        DP-\ac{eqf}\\
        Direct position group $\grpA:\; \HG(3) \ltimes \gothhg(3) \times \R^{3}$
        } &
        \SetCell[]{l} {
        ${\dot{\Vector{}{\varepsilon}{R}} \simeq \Vector{}{\varepsilon}{b_{\omega}}}$,\\
        ${\dot{\Vector{}{\varepsilon}{v}} \simeq \Vector{}{g}{}^{\wedge}\Vector{}{\varepsilon}{R} + \Vector{}{\varepsilon}{b_{a}}}$,\\
        ${\dot{\Vector{}{\varepsilon}{p}} \simeq \Vector{}{\varepsilon}{v} + \hatVector{}{v}{}^{\wedge}\Vector{}{\varepsilon}{R} + \calO\left(\Vector{}{\varepsilon}{}^2\right)}$,\\
        ${\dot{\Vector{}{\varepsilon}{b}} \simeq \adMsym{\left(\mathring{\Vector{}{w}{}} + \mathbf{G}^{\vee}\right)}\Vector{}{\varepsilon}{b} + \calO\left(\Vector{}{\varepsilon}{}^2\right)}$.
        } \\ 
        \SetCell[]{l}{
        SD-\ac{eqf}\\
        Semi-direct bias group $\grpB:\; \SE_2(3) \ltimes \gothse(3)$
        } &
        \SetCell[]{l} {
        ${\dot{\Vector{}{\varepsilon}{R}} \simeq \Vector{}{\varepsilon}{b_{\omega}}}$,\\
        ${\dot{\Vector{}{\varepsilon}{v}} \simeq \Vector{}{g}{}^{\wedge}\Vector{}{\varepsilon}{R} + \Vector{}{\varepsilon}{b_{a}}}$,\\
        ${\dot{\Vector{}{\varepsilon}{p}} \simeq \Vector{}{\varepsilon}{v} + \hatVector{}{p}{}^{\wedge}\Vector{}{\varepsilon}{b_{\omega}} + \calO\left(\Vector{}{\varepsilon}{}^2\right)}$,\\
        ${\dot{\Vector{}{\varepsilon}{b}} \simeq \adMsym{\left(\AdMsym{\hat{B}}\Vector{}{w}{} + \hat{\delta}^{\vee} + \mathbf{G}^{\vee}\right)}\Vector{}{\varepsilon}{b} + \calO\left(\Vector{}{\varepsilon}{}^2\right)}$.
        } \\
        \bottomrule
    \end{tblr}\label{bins_symmetries_overview}
\end{table}
\fi

\section{Position-based navigation}\label{bins_pos_nav_sec}

The previous analysis makes it evident that exploiting the tangent symmetry group ${\grpD = \SE_2(3) \ltimes \se_2(3)}$ for filter design results in an \ac{eqf} with exact linearization of the navigation states. In the upcoming section, we delve into the practical application of the filters and symmetries introduced earlier, specifically addressing the problem of position-based navigation. Finally, through extensive simulation results, we verify our theoretical findings, highlighting that any of the \ac{iekf}, \ac{tfgiekf}, TG-\ac{eqf}, DP-\ac{eqf}, and SD-\ac{eqf} are good candidates for high performance \ac{ins} filter design with the TG-\ac{eqf} demonstrating superior performance.

\subsection{Problem formulation and filters' output matrices}

Consider the \acl{ins} in \cref{bins_bins} and the global position measurement model
\begin{equation}\label{bins_position_output}
    h(\xi) = \Vector{}{p}{} \in \calN \subset \R^{3}.
\end{equation}
It is straightforward to verify that $\grpE, \grpA$ symmetries possess linear output for global position measurement since the position has a linear symmetry. However, this is not the case for ${\grpF, \grpD, \grpB, \grpC}$ symmetries; for global position measurements, they possess neither linear output nor output equivariance. This aspect opens up a deeper discussion that is often recurrent in the context of geometric filter design. Specifically, kinematic systems generally possess two output types, \emph{body-referenced measurement type} and \emph{global-referenced measurement type}. An example of \acl{ins} with both measurement types was already discussed in \cref{bas_chp}.

Inspired by the solution proposed in \cref{bas_chp}, where the global-referenced measurement was modeled as a fixed body-referenced measurement of a time-varying global-referenced direction, \emph{we derive here a general result that allows reformulating global-referenced measurements of navigation states into fixed body-referenced measurements by imposing nonlinear constraints~\cite{Julier2007OnConstraints} that are compatible with the symmetry, leading to a unified framework suitable for many measurement types.} The result is derived for global position measurements, but it is general and not limited to them. It applies to various global-referenced measurements of navigation states. Consider the system of interest in \cref{bins_bins} and the measurement model defined in \cref{bins_position_output}.
\begin{lemma}
    Let ${\Vector{}{\pi}{} \in \R^{3}}$ be a physical global-referenced measurement of a position ${\Vector{}{p}{} \in \R^{3}}$. Define a new measurement model $h\left(\xi\right)$, describing the body-referenced difference between the measurement and the position state as follows:
    \begin{equation}\label{bins_position_trick_output}
         h\left(\xi\right) = \Rot{}{}^\top\left(\Vector{}{\pi}{} - \Vector{}{p}{}\right) \in \R^{3} .
    \end{equation}
    Let ${y = h\left(\xi\right) \in \calN}$ be a measurement defined according to the model in \cref{bins_position_trick_output}, define ${\rho \AtoB{\grpG \times \R^{3}}{\R^{3}}}$ as
    \begin{equation}\label{bins_rho_position}
        \rho\left(X,y\right) \coloneqq A^\top\left(y - b\right).
    \end{equation}
    Then, the configuration output defined in \cref{bins_position_trick_output} is equivariant for any of the ${\grpF, \grpD, \grpB, \grpC}$ symmetries.
\end{lemma}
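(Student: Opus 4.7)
The plan is to verify the two requirements in turn: that $\rho$ defined in \cref{bins_rho_position} is a right group action on $\R^3$, and that $h\bigl(\phi(X,\xi)\bigr)=\rho\bigl(X,h(\xi)\bigr)$ holds for every $X$ in each of the four candidate symmetry groups. The unifying observation that makes the four cases collapse to one is that in every one of $\grpF$, $\grpD$, $\grpB$, $\grpC$, the group element carries an $\SE_2(3)$-like component $D=(A,a,b)$ whose action on the navigation sub-state $(\Rot{}{},\Vector{}{v}{},\Vector{}{p}{})$ is precisely right multiplication of the extended pose matrix $\Pose{}{}$ by $D$. Reading off the $(1,3)$- and $(1,1)$-blocks of $\Pose{}{}D$ gives, in all four cases,
\[
\Rot{}{}\longmapsto \Rot{}{}A,\qquad \Vector{}{p}{}\longmapsto \Vector{}{p}{}+\Rot{}{}b,
\]
while the differing bias updates in $\phi$ do not appear in $h$.

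For the right-action axiom, let $X,Y\in\grpG$ with navigation components $D_X=(A_X,a_X,b_X)$ and $D_Y=(A_Y,a_Y,b_Y)$; then $YX$ has navigation part $D_Y D_X$, whose rotation block is $A_Y A_X$ and position block is $A_Y b_X + b_Y$. A direct computation gives
\[
\rho\bigl(X,\rho(Y,y)\bigr)=A_X^{\top}\bigl(A_Y^{\top}(y-b_Y)-b_X\bigr)=(A_YA_X)^{\top}\bigl(y-(A_Y b_X+b_Y)\bigr)=\rho(YX,y),
\]
and $\rho(I,y)=y$ is immediate, so $\rho$ is a right action.

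For equivariance, I would substitute the transformed state into \cref{bins_position_trick_output}. Since $\Vector{}{\pi}{}$ is a physical global-referenced measurement that does not transform under the symmetry, and $(\Rot{}{},\Vector{}{p}{})$ transforms as above, I get
\[
h\bigl(\phi(X,\xi)\bigr)=(\Rot{}{}A)^{\top}\bigl(\Vector{}{\pi}{}-\Vector{}{p}{}-\Rot{}{}b\bigr)=A^{\top}\Rot{}{}^{\top}(\Vector{}{\pi}{}-\Vector{}{p}{})-A^{\top}b=A^{\top}\bigl(h(\xi)-b\bigr)=\rho\bigl(X,h(\xi)\bigr).
\]
Because nothing in this calculation depends on the bias block of $\phi$, it yields all four cases simultaneously.

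The only step that requires any care — and the main obstacle — is the uniform identification in the first paragraph: for $\grpC$ the navigation factor is written as $\SO(3)\ltimes\R^6$ rather than as a matrix in $\SE_2(3)$, so I would verify explicitly that, under the obvious isomorphism, the two-frames action on $\Pose{}{}$ coincides with right multiplication by the associated matrix $D=(A,a,b)$. Once that matrix-form identification is in place, the proof collapses to the single display above, and the claim follows uniformly for $\grpF$, $\grpD$, $\grpB$, $\grpC$.
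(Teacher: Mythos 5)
Your proof is correct, and it is exactly the direct verification the paper uses for its analogous output-equivariance lemmas (e.g.\ for the biased attitude system in \cref{bas_chp}); the paper itself omits the proof of this particular lemma. Your worry about $\grpC$ is already resolved by the paper's own definition, which writes the two-frames navigation component as $D=(A,(a,b))\in\SE_2(3)$ acting by $\Pose{}{}D$, so the uniform identification holds by construction.
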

The noise-free value for $y$ is zero and the output innovation ${\delta\left(\rho_{\hat{X}^{-1}}\left({y}\right)\right) = \rho_{\hat{X}^{-1}}\left({y}\right) - \Vector{}{\pi}{} = \hatVector{}{p}{} - \Vector{}{\pi}{}}$ measures the mismatch of the observer state in reconstructing the true state up to noise in the raw measurement ${\Vector{}{\pi}{}}$. 

\emph{As a final remark, note that this is the same discussion as that in~\cite{7523335, Barrau2022TheProblems} about right/left-invariant observation types. The result presented here directly applies to \ac{iekf} design and allows to reformulate left-invariant observations as right-invariant observations.}

Back to the problem of position-based navigation, let us exploit the equivariance of the output in \cref{bins_rho_position} to derive an output linearization with third-order linearization error for the ${\grpF, \grpD, \grpB, \grpC}$ symmetries. according to \cref{eq_C0,eq_C_star} the output matrices for each of the filters discussed are written
\begin{align*}
    &\text{\ac{mekf}:} 
    && \mathbf{C}^{0} = 
    \begin{bmatrix}
        \mathbf{0}_{3 \times 6} & \eye_3 & \mathbf{0}_{3 \times 6}\\
    \end{bmatrix} \in \R^{3 \times 15},\\
    &\text{Imperfect-\ac{iekf}:} 
    && \mathbf{C}^{\star} = 
    \begin{bmatrix}
        \frac{1}{2}\left(y + \hatVector{}{p}{}\right)^{\wedge} & \mathbf{0}_{3 \times 3} & -\eye_3 & \mathbf{0}_{3 \times 6}\\
    \end{bmatrix} \in \R^{3 \times 15},\\
    &\text{\ac{tfgiekf}:} 
    && \mathbf{C}^{\star} = 
    \begin{bmatrix}
        \frac{1}{2}\left(y + \hatVector{}{p}{}\right)^{\wedge} & \mathbf{0}_{3 \times 3} & -\eye_3 & \mathbf{0}_{3 \times 6}\\
    \end{bmatrix} \in \R^{3 \times 15},\\
    &\text{TG-\ac{eqf}:} 
    && \mathbf{C}^{\star} = 
    \begin{bmatrix}
        \frac{1}{2}\left(y + \hatVector{}{p}{}\right)^{\wedge} & \mathbf{0}_{3 \times 3} & -\eye_3 & \mathbf{0}_{3 \times 9}
        \end{bmatrix} \in \R^{3 \times 18},\\
    &\text{DP-\ac{eqf}:} 
    && \mathbf{C}^{\star} = 
    \begin{bmatrix}
        \mathbf{0}_{3 \times 3} & \mathbf{0}_{3 \times 3} & \mathbf{0}_{3 \times 6} & \eye_3\\
    \end{bmatrix} \in \R^{3 \times 15},\\
    &\text{SD-\ac{eqf}:} 
    && \mathbf{C}^{\star} = 
    \begin{bmatrix}
        \frac{1}{2}\left(y + \hatVector{}{p}{}\right)^{\wedge} & \mathbf{0}_{3 \times 3} & -\eye_3 & \mathbf{0}_{3 \times 6}\\
    \end{bmatrix} \in \R^{3 \times 15}.
\end{align*}
Moreover, for the specific case of the TG-\ac{eqf}, an additional constraint can be imposed on the virtual bias ${\Vector{}{b}{\nu}}$; that is, an additional measurement in the form of ${h\left(\xi\right) = \Vector{}{b}{\nu} = \Vector{}{0}{} \in \R^{3}}$ can be considered, leading to the following output matrix
\begin{equation}
    \mathbf{C}^{0} = \begin{bmatrix}
        \mathbf{0}_{3 \times 3} & \mathbf{0}_{3 \times 3} & \mathbf{0}_{3 \times 3} & -\hatRot{}{}^\top & \mathbf{0}_{3 \times 3} & \hatRot{}{}^\top \hatVector{}{p}{}^{\wedge}
        \end{bmatrix} \in \R^{3 \times 18}. \label{bins_C0_bnu_tgeqf}
\end{equation}

\subsection{Monte-Carlo simulation}

To experimentally confirm the theoretical results of the linearization error analysis, we undertake an extensive experimental evaluation of the position-based navigation problem. Specifically, in this experiment, a Monte-Carlo simulation is conducted, including four hundred runs of a simulated \ac{uav} equipped with an \ac{imu}, receiving acceleration and angular velocity measurements at $200$\si[per-mode = symbol]{\hertz}, as well as global position measurements at $10$\si[per-mode = symbol]{\hertz}, simulating a \ac{gnss} receiver. In order to simulate realistic flight conditions, we selected the initial $80$\si[per-mode = symbol]{\second} from four sequences in the Euroc dataset's vicon room~\cite{Burri2016TheDatasets} as reference trajectories. For each sequence, we generated a hundred runs, incorporating synthetic \ac{imu} data and position measurements while varying the initial conditions for the position (distributed normally around zero with $1$\si[per-mode = symbol]{\meter} standard deviation per axis) and the attitude (distributed normally around zero with $20$\si[per-mode = symbol]{\degree} standard deviation per axis). The ground truth \ac{imu} biases are randomly generated every run following a Gaussian distribution with standard deviation of $0.01$\si[per-mode = symbol]{\radian\per\second\sqrt{\second}} for the gyro bias and $0.01$\si[per-mode = symbol]{\meter\per\second\squared\sqrt{\second}} for the accelerometer bias. 
To simulate realistic global position measurement, additive Gaussian noise with a standard deviation of $0.2$\si[per-mode = symbol]{\meter} per axis is added.

For a fair comparison, we were careful to use the same prior distributions and noise parameters for all filters. 
This includes accounting for the different scaling and transformations of noise due to the input and state parametrizations for the different geometries. 
Similarly, each filter shares the same input and output measurement noise covariance adapted to the particular symmetry of the filter. 
The validity of the noise models can be verified in the Average Energy plot in \cref{bins_mc_errors}, which plots the \acf{anees}~\cite{Li2012EvaluationTests}. 
Here, all filters initialize with unity \ac{anees}, demonstrating that the prior sampling and observer response corresponds to the stochastic prior used, and all filters converge towards unity \ac{anees} as expected from a filter driven by Gaussian noise. 
All the filters are initialized at the identity (zero attitude, zero position, zero velocity, and zero biases).

The primary plots in \cref{bins_mc_errors} are \acs{rmse} plots for the navigation states (on the top) and the bias states (on the bottom).  It is clear that the \ac{mekf} filter demonstrates worse performance than the modern geometric filters. 
There is little difference visible in the transient and asymptotic error response of the navigation states for the modern filters. 
The remaining attitude error is due to a yaw error, which is poorly observable in this scenario.
The position and velocity errors converge to the noise limits of the measurement signals. 
In contrast, there are clear differences visible in the transient response of the bias states.  
The filters are divided roughly into three categories: 
the three filters with semi-direct bias symmetry (TG-\ac{eqf}, DP-\ac{eqf} and SD-\ac{eqf}) appear to display the best transient response in both gyroscope and accelerometer bias. 
The TG-\ac{eqf} appears slightly better in the gyroscope bias. 
The \ac{iekf} and \ac{tfgiekf}, which have the $\SE_2(3)$ symmetry but do not use a semi-direct geometry for the bias geometry, have almost identical bias transient. 
The accelerometer bias, in particular, is clearly separated from the filters with the semi-direct group symmetry. 
Finally, the \ac{mekf} suffers from not modeling the $\SE_2(3)$ symmetry at all. 

The average energy plot provides an additional important analysis tool. 
This plot shows the \ac{anees}~\cite{Li2012EvaluationTests} defined as 
\[
{\text{ANEES} = \frac{1}{nM}\sum_{i = 1}^{M}\varepsilon_{i}^{\top}\mathbf{\Sigma}_{i}^{-1}\varepsilon_{i}},
\]
where $\varepsilon$ is the specific filter error state, $\mathbf{\Sigma}$ is the error covariance, ${M = 400}$ is the number of runs in the Monte-Carlo simulation, and $n$ is the dimension of the state space.  
The \ac{anees} provides a measure of the consistency of the filter estimate. 
An \ac{anees} of unity means that the observed error variance corresponds exactly to the estimated covariance of the information state. 
When \ac{anees} is larger than unity, it indicates that the filter is overconfident; that is, the observed error is larger than the estimate of the state covariance predicts. 
All ``pure'' extended Kalman filters tend to be overconfident since their derivation ignores linearization errors in the model. 
The closer to an \ac{anees} of unity that a filter manages is directly correlated to the consistency of the filter estimate and is usually linked to smaller linearization errors. 
To provide numeric results, we have averaged the \ac{anees} values over the transient and asymptotic sections of the filter response and presented them in \cref{bins_mc_anees}. 
Here, it is clear that the TG-\ac{eqf} is superior, the four filters \ac{iekf}, \ac{tfgiekf}, DP-\ac{eqf} and SD-\ac{eqf} are similar, and the \ac{mekf} is worst. 
The \ac{anees} of the \ac{mekf} diverges to over seven before converging, corresponding to an overconfidence of a factor of seven standard deviations in the state error. 
Such a level of overconfidence is dangerous in a real-world scenario and may indeed lead to divergence of the filter estimate in certain situations. 
Note that in practice, overconfidence in a filter is avoided by inflating the process noise model covariance to account for linearization error in the model. 
A more consistent filter requires a smaller covariance inflation and has correspondingly more confidence in its model than a filter that is less consistent. 

In conclusion, the TG-\ac{eqf} exhibits the best convergence rate, particularly in orientation and \ac{imu} biases, as well as the best consistency of all the filters. 
We believe that this performance can be traced back to the coupling of the \ac{imu} bias with the navigation states that are inherent in the semi-direct product structure of the symmetry group $\grpD$ and the exact linearization of the navigation error dynamics (\cref{bins_symmetries_overview}). 
Note that the bias states are poorly observable states and possess slow dynamics. 
Consequently, moving the linearization error into these states heuristically appears better than leaving the linearization error in the main navigation states that are much more dynamic. 

\ifdefined\includetblr
\begin{table}[htp]
    \centering
    \tabrefcaption{\cite{Fornasier2023EquivariantSystems}}
    \setlength\tabcolsep{5pt}
    \captiontitlefont{\scshape\small}
    \captionnamefont{\scshape\small}
    \captiondelim{}
    \captionstyle{\centering\\}
    \caption[Results of the different filters for the Monte-Carlo simulation of the position-based navigation problem.]{Results of the 400 runs Monte-Carlo simulation for the position-based navigation problem. \acs{anees} in the first half (transient (T)) and the second half (asymptotic (A)) of the trajectory length.}
    \begin{tblr}
    {
        rows = {m},
        column{1} = {c},
        column{2} = {c},
        column{3} = {c},
    }
    \toprule
    Filter & \acs{anees} (T) & \acs{anees} (A) \\
    \midrule
    \ac{mekf} & $3.11$ & $1.69$ \\
    Imperfect-\ac{iekf} & $1.36$ & $1.40$ \\
    \ac{tfgiekf} & $1.71$ & $1.43$ \\
    TG-\ac{eqf} & $\mathbf{1.20}$ & $\mathbf{1.22}$ \\
    DP-\ac{eqf} & $1.44$ & $1.42$ \\
    SD-\ac{eqf} & $1.32$ & $1.44$ \\
    \bottomrule
    \end{tblr}\label{bins_mc_anees}
\end{table}
\fi

\begin{figure}[htp]
\centering
\includegraphics[width=\linewidth]{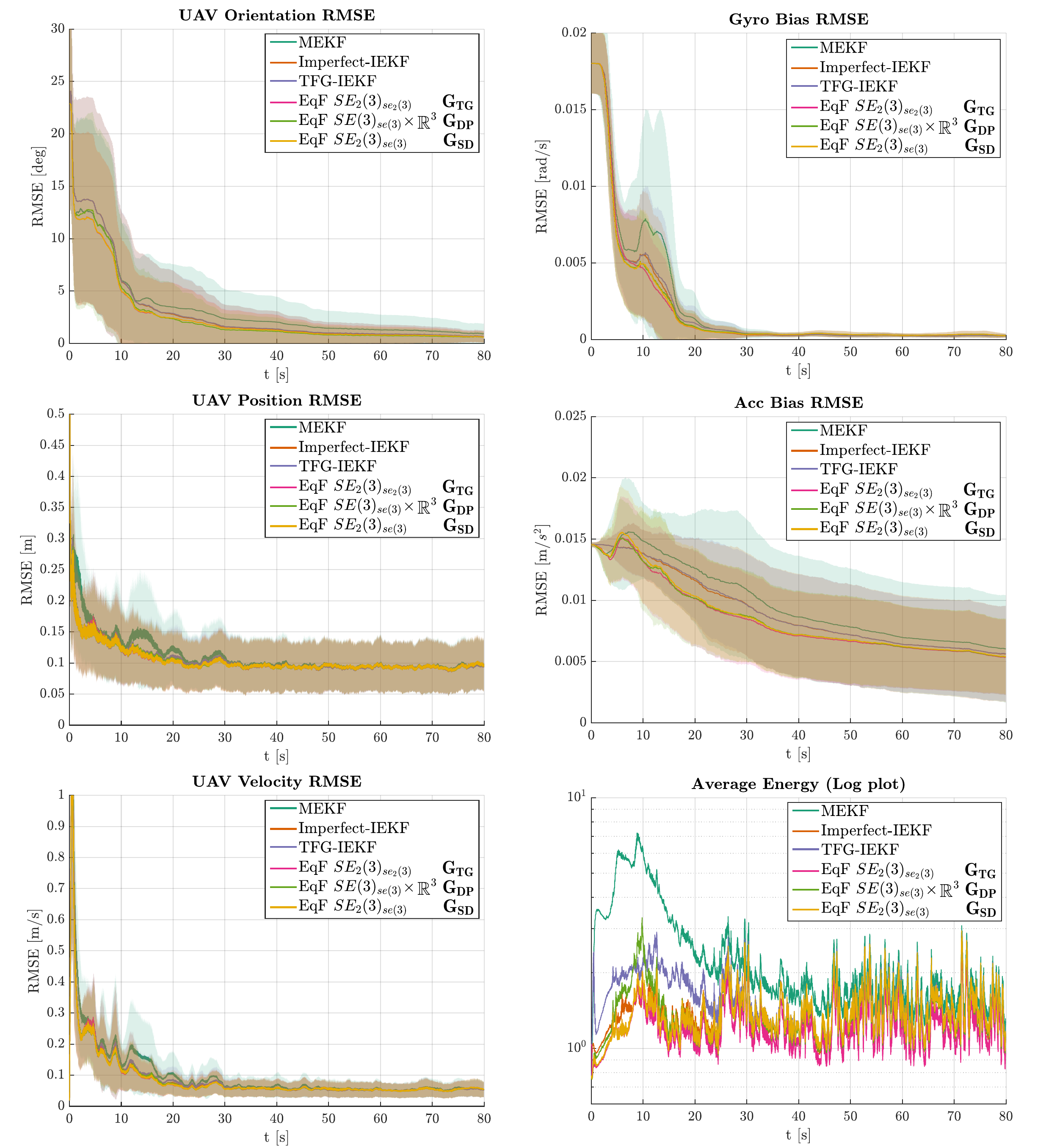}
\figrefcaption{\cite{Fornasier2023EquivariantSystems}}
\caption[Error plots of the different filters for the Monte-Carlo simulation of the position-based navigation problem.]{Results of the 400 runs Monte-Carlo simulation for the position-based navigation problem. Average \acs{rmse} and sample variance (shaded) of the filters' states, and full filter energy. {\color{DarkOrange2} Orange: Imperfect-\ac{iekf}}. {\color{SlateBlue4} Purple: \ac{tfgiekf}}. {\color{Magenta2} Magenta: TG-\ac{eqf}}. {\color{OliveDrab3} Green: DP-\ac{eqf}}. {\color{Goldenrod1} Yellow: SD-\ac{eqf}}.}
\label{bins_mc_errors}
\end{figure}

\section{Chapter conclusion}
With this chapter, we presented the general theory of symmetries and \acl{eqf} design for second-order \aclp{ins}. We discussed different symmetries for the \ac{ins} problem and carried out an analysis of the linearization error as an indicator of the performance of various well-known geometric filters, including \ac{mekf}, Imperfect-\ac{iekf}, and \ac{tfgiekf}. 

Noteworthy considerations, forming the main result of this chapter, were derived from the analysis of the linearization error. \emph{Modeling the coupling between the navigation states and the bias states exploiting the semi-direct product structure of the tangent group of $\SE_2(3)$ yields a filter with exact linearization of the navigation states. Moreover, the choice of symmetry represents the only difference among modern geometric filters for the \ac{ins} problem, and every filter is derived according to the \ac{eqf} design methodology for a specific choice of symmetry.}

The results presented in the linearization error analysis were confirmed through an extensive Monte-Carlo simulation in the context of position-based navigation. As a direct consequence of the problem formulation, we derived an additional result; in particular, \emph{we discussed how global-referenced measurements of the navigation states are reformulated as fixed body-frame measurements that are compatible with the symmetry, and hence yielding equivariance of the output.}

The next chapters will focus on extending and applying the methodology presented and discussed in this chapter to more complex multi-sensor based \aclp{ins}.

\chapter[Equivariant Multi-sensor Fusion for Robust Inertial Navigation Systems][Equivariant Multi-sensor Fusion]{Equivariant Multi-sensor Fusion for Robust Inertial Navigation Systems}\label{ms_bins_chp}

\emph{The present chapter contains results that have been peer-reviewed and published in the IEEE International Conference on Advanced Robotics (ICAR)~\cite{Scheiber2023RevisitingApproach}, as well as results submitted to IEEE International Conference on Robotics and Automation (ICRA).}
\bigskip

Results in the previous chapters have shown that respecting the geometry of the \acl{ins} and exploiting its symmetry is of paramount importance in designing high-performance \ac{ins} filters. The present chapter discusses how the results presented in \cref{bins_chp} are extended and applied to the multi-sensor fusion problem in the context of \acl{ins}. Specifically, this chapter starts by discussing how the symmetries presented in \cref{bins_chp} are extended to account for sensor extrinsic parameters and extra state variables commonly encountered in inertial navigation problems.

The symmetry extension presented here plays a pivotal role in designing robust state estimation algorithms capable of fusing multiple sources of information to achieve robust autonomous navigation. Specifically, in this chapter, we design the first equivariant self-calibrating multi-sensor fusion algorithm for robust state estimation based on \ac{gnss}, magnetic and inertial sensors. The algorithm is targeted for \ac{uav} navigation and is presented as a case study for implementation in the ArduPilot autopilot system, the world's most adopted autopilot system for autonomous navigation of unmanned vehicles.

\section{Inertial navigation system symmetries extension}\label{ms_bins_ext_sec}
Modern inertial navigation and multi-sensor fusion problems seek to estimate the navigation state of a robotic platform by means of multiple sources of information. The rotation, position, velocity, and \ac{imu} bias states are often extended with extra variable modeling parameters of interest. In particular, extra states include sensor calibration states such as \ac{gnss} receivers lever arms, magnetometers rotations, cameras, and lidars roto-translations, as well as other physical states of interest such as the geomagnetic direction and visual landmarks. In this section, we show how the symmetries presented in the previous chapter are extended to account for these extra variables, and we provide examples that cover common situations in inertial navigation and multi-sensor fusion problems.

Let $\xi_{I} \in \calM_{\calI}$ be the \ac{ins} states of an \acl{ins}, thus, rotation, velocity, position, and \ac{imu} biases. Let $\xi_{C} \in \calM_{\calC}$ be extra states of interest. The full system's state $\xi \in \calM$ is posed on the direct product ${\calM \coloneqq \calM_{\calI} \times \calM_{\calC}}$. Let $\grpG_{\mathbf{I}}$ be the symmetry group of the \ac{ins} states, and $\grpG_{\mathbf{C}}$ be the symmetry group of the extra states; the system's symmetry group $\grpG$ is given by the product ${\grpG \coloneqq \grpG_{\mathbf{I}} \cdot \grpG_{\mathbf{C}}}$, where $\cdot$ represents either a direct product or a semi-direct product.

This structure will be exploited to concatenate and compose symmetries to account for different extra states in self-calibrating inertial navigation systems and multi-sensor fusion problems. In particular, \cref{ms_bins_states_table} provides an overview of the base symmetry groups modeling typical states encountered in \aclp{ins}. 

\ifdefined\includetblr
\begin{table}[htp]
    \centering
    \setlength\tabcolsep{5pt}
    \captiontitlefont{\scshape\small}
    \captionnamefont{\scshape\small}
    \captiondelim{}
    \captionstyle{\centering\\}
    \caption{Symmetry extensions for common extra states commonly encountered in inertial navigation problems}
    \begin{tblr}
    {
        rows = {m},
        column{1} = {c},
        column{2} = {c},
        column{2} = {c},
        column{3} = {c},
    }
    \toprule
    State variable & Symmetry group & Product & State action $\phi$ \\
    \midrule
    \SetCell[]{c}{
    Rotational calibration\\
    $\PoseS{}{} \in \mathcal{SO}(3)$
    } & $X \in \SO(3)$ & $\times$ & $A^{\top}\PoseS{}{}X$ \\
    \SetCell[]{c}{
    Pose calibration\\
    $\PoseS{}{} \in \mathcal{SE}(3)$
    } & $X \in \SE(3)$ & $\times$ & $C^{-1}\PoseS{}{}X$ \\
    \SetCell[]{c}{
    Position calibration\\
    of 3D global measurement\\
    $\Vector{}{t}{} \in \R^3$
    } & $X \in \R^3$ & $\ltimes$ & $A^{\top}(\Vector{}{t}{}-X)$ \\
    \SetCell[]{c}{
    Direction state\\
    $\Vector{}{d}{} \in \mathcal{S}^2$
    } & $X \in \SO(3)$ & $\times$ & $\Rot{}{}X^{\top}\Rot{}{}^{\top}\Vector{}{d}{}$ \\
    \SetCell[]{c}{
    Vector state\\
    $\Vector{}{l}{} \in \R^3$
    } & $X \in \SOT(3)$ & $\times$ & $\Pose{}{}X^{-1}\Pose{}{}^{-1}\Vector{}{l}{}$ \\
    \bottomrule
    \end{tblr}
    \label{ms_bins_states_table}
\end{table}
\fi

The following examples will clarify how these base symmetry groups are used and combined to build complex symmetries for generic systems. Note that although we choose $\grpD$ to be the symmetry group of the \ac{ins} state in the following examples, the methodology presented in this chapter is not limited to this choice, and the symmetry group of the \ac{ins} state could be any of the symmetry groups presented in \cref{bins_chp}.

\begin{boxexample}{Self-calibrating position-based navigation}{}
    Consider the \acl{ins} discussed in \cref{bins_chp,bins_bins_se23}. Let ${\xi_{I} \in \torSE_2(3) \times \R^6}$ be an element of the \ac{ins} state space, and $\grpD$ be the symmetry group. Consider the case a global position sensor is attached to the robotic platform with an unknown lever arm. Let $\xi_{C} = \Vector{}{t}{} \in \R^3$ be an extra state variable representing this unknown lever arm. The full system states is defined as ${\xi = (\xi_{I}, \xi_{C}) \in \calM \coloneqq \torSE_2(3) \times \R^6 \times \R^3}$. Define $\grpG \coloneqq \grpD \ltimes \R^3$ to be the symmetry group, and ${A = \Gamma(D) \in \SO(3) \subset \grpD}$, with $\Gamma(\cdot)$ defined in \cref{math_maps_sec}. Then elements of the symmetry group and the state action ${\phi \AtoB{\grpG \times \calM}{\calM}}$ are defined as follows:
    \begin{align*}
        &X = (X_{I}, X_{C}) \in \grpG,\\
        &X^{-1} = (X_{I}^{-1}, -A^{\top}X_{C}) \in \grpG,\\
        &\phi(X, \xi) = (\cdots, A^{\top}(\Vector{}{t}{}-X_{C})) \in \calM.\\
    \end{align*}
    
    Consider the following measurement model:
    \begin{equation*}
        h\left(\xi\right) = \Rot{}{}^\top\left(\Vector{}{\pi}{} - (\Vector{}{p}{} + \Rot{}{}\Vector{}{t}{})\right) \in \R^{3} .
    \end{equation*}
    Let ${y = h\left(\xi\right) \in \R^3}$ be a measurement defined according to the model above, then the output is equivariant under the symmetry group $\grpG$. Define ${\rho \AtoB{\grpG \times \R^{3}}{\R^{3}}}$ as
    \begin{equation*}
        \rho_{X}(y) = A^{\top}(y - b + X_{C}).
    \end{equation*}
    Then, it is straightforward to verify that ${h(\phi_{X}(\xi)) = \rho_{X}(h(\xi))}$.

    With this example, we showed how the $\grpD$ symmetry of the \ac{ins} state is extended to account for the lever arm of a global position sensor and how the resulting symmetry not only preserves the property of the $\grpD$ symmetry discussed in \cref{bins_tg_sec,bins_tg_linan} but also possess equivariance in the output.
\end{boxexample}

\begin{boxexample}[label={ms_bins_sot3_example}]{Camera-based navigation with camera extrinsic self-calibration and single visual landmark}{}
    Consider the \acl{ins} discussed in \cref{bins_chp,bins_bins_se23}. Let ${\xi_{I} \in \torSE_2(3) \times \R^6}$ be an element of the \ac{ins} state space, and $\grpD$ be the symmetry group. Consider the case of a camera mounted on a robotic platform with an unknown roto-translation, observing a single visual landmark in the world. Let ${\xi_{C} = (\PoseS{}{}, \Vector{}{l}{}) \in \torSE(3) \times \R^3}$ represent the unknown camera extrinsic calibration and the visual landmark. The full system states is defined as ${\xi = (\xi_{I}, \xi_{C}) \in \calM \coloneqq \torSE_2(3) \times \R^6 \times \torSE(3) \times \R^3}$. Define $\grpG \coloneqq \grpD \ltimes \SE(3) \times \SOT(3)$ to be the symmetry group, and ${C = \Theta(D) \in \SE(3) \subset \grpD}$, with $\Theta(\cdot)$ defined in \cref{math_maps_sec}. Then elements of the symmetry group and the state action ${\phi \AtoB{\grpG \times \calM}{\calM}}$ are defined as follows:
    \begin{align*}
        &X = (X_{I}, X_{C}, X_{L}) \in \grpG,\\
        &X^{-1} = (X_{I}^{-1}, X_{C}^{-1}, (X_{Q}^{\top}, \frac{1}{X_{q}})) \in \grpG,\\
        &\phi(X, \xi) = (\cdots, C^{-1}\mathbf{S}X_{C}, \Pose{}{}\PoseS{}{}X_{C} * (X_{L}^{-1}(\Pose{}{}\PoseS{}{})^{-1} * \Vector{}{l}{}) \in \calM,
    \end{align*}
    with ${X_{L} \coloneqq (X_{Q}, X_{q}) \in \SOT(3)}$, and $* \AtoB{\torSE(3) \times \R^3}{\R^3}$ defined by ${\PoseP{}{} * \Vector{}{x}{} = \Rot{}{}\Vector{}{x}{} + \Vector{}{p}{}}$  for all ${\PoseP{}{} = \left(\Rot{}{}, \Vector{}{p}{}\right) \in \torSE(3),\; \Vector{}{x}{} \in \R^3}$.

    Consider the following measurement model:
    \begin{equation*}
        h\left(\xi\right) = \pi_{\mathbb{S}^{2}}((\Pose{}{}\PoseS{}{})^{-1} * \Vector{}{l}{}) \in \mathbb{S}^{2},
    \end{equation*}
    where ${\pi_{\mathbb{S}^{2}}(\cdot)}$ represents the projection on the unit sphere.
    Let ${y = h\left(\xi\right) \in \mathbb{S}^{2}}$ be a measurement defined according to the model above, then the output is equivariant under the symmetry group $\grpG$. Define ${\rho \AtoB{\grpG \times S^{2}}{S^{2}}}$ as
    \begin{equation*}
        \rho_{X}(y) = X_{Q}^{\top}y.
    \end{equation*}
    Then, it can be verified that
    \begin{align*}
        h(\phi_{X}(\xi)) &= \pi_{\mathbb{S}^{2}}((\Pose{}{}\PoseS{}{}X_{C})^{-1}\Pose{}{}\PoseS{}{}X_{C} * (X_{L}^{-1}(\Pose{}{}\PoseS{}{})^{-1} * \Vector{}{l}{}))\\
        &= \pi_{\mathbb{S}^{2}}(X_{L}^{-1}(\Pose{}{}\PoseS{}{})^{-1} * \Vector{}{l}{})\\
        &= \frac{\frac{1}{X_{q}}X_{Q}^{-1}(\Pose{}{}\PoseS{}{})^{-1} * \Vector{}{l}{}}{\norm{\frac{1}{X_{q}}X_{Q}^{-1}(\Pose{}{}\PoseS{}{})^{-1} * \Vector{}{l}{}}}\\
        &= \frac{\cancel{\frac{1}{X_{q}}}X_{Q}^{-1}(\Pose{}{}\PoseS{}{})^{-1} * \Vector{}{l}{}}{\cancel{\frac{1}{X_{q}}}\norm{X_{Q}^{-1}(\Pose{}{}\PoseS{}{})^{-1} * \Vector{}{l}{}}}\\
        &= X_{Q}^{-1}\frac{(\Pose{}{}\PoseS{}{})^{-1} * \Vector{}{l}{}}{\norm{(\Pose{}{}\PoseS{}{})^{-1} * \Vector{}{l}{}}}\\
        &= X_{Q}^{-1}\pi_{\mathbb{S}^{2}}((\Pose{}{}\PoseS{}{})^{-1} * \Vector{}{l}{})\\
        &= \rho_{X}(h(\xi)).
    \end{align*}

    With this example, we showed how the $\grpD$ symmetry of the \ac{ins} state is extended to account for extra states representing a fixed visual landmark, and camera extrinsic calibration. Specifically, we showed how multiple extensions of the $\grpD$ symmetry, shown in \cref{ms_bins_states_table}, are combined together to achieve output equivariance for the given measurement.
\end{boxexample}

\section{Symmetry extension for equivariant robust state estimation: the ArduPilot case study}\label{ms_bins_ap_sec}

This section addresses the challenge of designing an \acl{eqf} for the ArduPilot autopilot system, providing a comprehensive case study that combines the results presented so far and applies the developed theoretical framework to real-world scenarios. To this end, we propose a filter that is capable of fusing multiple sources of information from \ac{gnss}, magnetometer, and inertial sensors. Moreover, it incorporates an \emph{equivariant formulation of velocity-type measurements} and a simple \emph{innovation-covariance inflation strategy that seamlessly handles \ac{gnss} outliers and shifts} without requiring coding of a whole set of exception cases. We use the data wealth of the ArduPilot community to identify, highlight, and address the most common real-world challenges in \ac{ins} state estimation, including sensor self-calibration, robustness in static conditions, \ac{gnss} outliers and shifts, and robustness to faulty \acp{imu}. The proposed filter is evaluated with simulated and real-world data from the Ardupilot community to demonstrate its performance in known cases where existing filters fail without careful exception handling or case-specific tuning. 

\subsection{System and symmetry definition}
Consider a robotic platform equipped with an \ac{imu}, a \ac{gnss} receiver, and a magnetometer. Consider the system in \cref{bins_chp,bins_bins} extended with two extra variables modeling the rotational calibration of the magnetometer and the lever arm of the \ac{gnss} receiver. 
\begin{subequations}\label{ms_bins_ap_bins}
    \begin{align}
        &\dot{\Rot{G}{I}} = \Rot{G}{I}\left(\Vector{I}{\bm{\omega}}{} - \Vector{I}{b}{\bm{\omega}}\right)^{\wedge} ,\\
        &\dotVector{G}{v}{I} =  \Rot{G}{I}\left(\Vector{I}{a}{} - \Vector{I}{b}{a}\right) + \Vector{G}{g}{} ,\\
        &\dotVector{G}{p}{I} = \Vector{G}{v}{I} ,\\
        &\dotVector{I}{b_{\bm{\omega}}}{} = \Vector{I}{\tau}{\bm{\omega}} ,\\
        &\dotVector{I}{b_{a}}{} = \Vector{I}{\tau}{a} ,\\
        &\dotVector{I}{t}{} = \Vector{I}{\zeta}{} ,\\
        &\dot{\Rot{I}{M}} = \Rot{I}{M}\Vector{M}{\mu}{}^{\wedge} .
    \end{align}
\end{subequations}
$\Rot{G}{I}$ denotes the rigid body orientation, and $\Vector{G}{p}{I}$ and $\Vector{G}{v}{I}$ denote the rigid body position and velocity expressed in a global frame of reference \frameofref{G}. $\Vector{G}{g}{}$ denotes the gravity vector expressed in \frameofref{G}. $\Vector{I}{t}{}$ and $\Rot{I}{M}$ represent the \ac{gnss} lever arm and the magnetometer rotational calibration respectively. The inputs $\Vector{I}{\zeta}{}$, $\Vector{M}{\mu}{}$ are used to model the calibration states dynamics and are zero for rigid calibrations. The gyroscope and accelerometer measurement are written $\Vector{I}{\bm{\omega}}{}$ and $\Vector{I}{\bm{a}}{}$ respectively. $\Vector{I}{b_{\bm{\omega}}}{}$ and $\Vector{I}{b_{\bm{a}}}{}$ represent their biases. The inputs $\Vector{I}{\tau}{\bm{\omega}}$, $\Vector{I}{\tau}{a}$ are used to model the biases' dynamics and are zero when the biases are modeled as constant quantities. 

An element $\xi$ of the system's state and an element $u$ of the input space are respectively written
\begin{align*}
&\xi =  \left(\Rot{G}{I}, \Vector{G}{v}{I}, \Vector{G}{p}{I}, \Vector{I}{b_{\bm{\omega}}}{}, \Vector{I}{b_{a}}{}, \Vector{I}{t}{}, \Rot{I}{M}\right) \in \calM ,\\
&u = \left(\Vector{I}{\bm{\omega}}{}, \Vector{I}{a}{}, \Vector{I}{\tau}{\bm{\omega}}, \Vector{I}{\tau}{a}, \Vector{I}{\zeta}{}, \Vector{I}{\mu}{}\right) \in \vecL \subset \R^{18} .
\end{align*}

Furthermore, Define the matrices $^I\mathbf{W}, ^I\mathbf{B}, \mathbf{N}, ^G\mathbf{G}$ to be
\begin{equation*}
    \begin{tblr}{ll}
        ^I\mathbf{W} = \begin{bmatrix}
        \Vector{I}{\bm{\omega}}{}^{\wedge} & \Vector{I}{a}{} & \mathbf{0}_{3\times 1}\\
        \mathbf{0}_{1\times 3} & 0 & 0\\
        \mathbf{0}_{1\times 3} & 0 & 0\\
        \end{bmatrix}, & 
        ^I\mathbf{B} = \begin{bmatrix}
        \Vector{I}{b_{\bm{\omega}}}{}^{\wedge} & \Vector{I}{b_{a}}{} & \mathbf{0}_{3\times 1}\\
        \mathbf{0}_{1\times 3} & 0 & 0\\
        \mathbf{0}_{1\times 3} & 0 & 0\\
        \end{bmatrix}, \\
        \mathbf{N} = \begin{bmatrix}
        \mathbf{0}_{3\times 3} & \mathbf{0}_{3\times 1} & \mathbf{0}_{3\times 1}\\
        \mathbf{0}_{1\times 3} & 0 & 1\\
        \mathbf{0}_{1\times 3} & 0 & 0\\
        \end{bmatrix}, &
        ^G\mathbf{G}  = \begin{bmatrix}
        \mathbf{0}_{3\times 3} & \Vector{G}{g}{} & \mathbf{0}_{3\times 1}\\
        \mathbf{0}_{1\times 3} & 0 & 0\\
        \mathbf{0}_{1\times 3} & 0 & 0\\
        \end{bmatrix}.
    \end{tblr}
\end{equation*}
Than the system in \cref{ms_bins_ap_bins} can be written in a compact form similar to \cref{bins_bins_se23}:
\begin{subequations}\label{ms_bins_ap_bins_se23}
    \begin{align}
        &\dot{\Pose{}{}} = \Pose{G}{I}\left(^I\mathbf{W} - ^I\mathbf{B} + \mathbf{N}\right) + \left(^G\mathbf{G} - \mathbf{N}\right)\Pose{G}{I} ,\\
        &\dotVector{}{b}{} = \Vector{I}{\tau}{} ,\\
        &\dotVector{I}{t}{} = \Vector{I}{\zeta}{} ,\\
        &\dot{\Rot{I}{M}} = \Rot{I}{M}\Vector{M}{\mu}{}^{\wedge} .
    \end{align}
\end{subequations}

Define the three measurement models of interest.
First, consider the case where measurements of the known magnetic north direction ${\Vector{G}{m}{}}$ are received in the magnetometer frame \frameofref{M}, thus $\Vector{M}{m}{}$.
The output space associated with such measurements is $\calN \coloneqq \mathbb{S}^{2}$, and the configuration output ${h_m \AtoB{\calM}{\calN_m}}$ is given by
\begin{equation}\label{ms_bins_confout_dir}
h_m\left(\xi\right) = \Rot{I}{M}^\top\Rot{G}{I}^\top\Vector{G}{m}{} \in \calN_m .
\end{equation}
Second, consider the case where position measurements ${\Vector{G}{\pi}{} = \Vector{G}{p}{I} + \Rot{G}{I}\Vector{I}{t}{}}$, are received from a \ac{gnss} receiver. The associated output space is $\calN_{p} \coloneqq \R^{3}$. The configuration output ${h_p \AtoB{\calM}{\calN_p}}$ is defined according to \cref{bins_position_trick_output}
\begin{equation}\label{ms_bins_confout_pos}
     h_p\left(\xi\right) = \Rot{G}{I}^\top\left(\Vector{G}{\pi}{} - \left(\Vector{G}{p}{I} + \Rot{G}{I}\Vector{I}{t}{}\right)\right) \in \calN_p .
\end{equation}
Finally, consider the case where velocity measurements ${\Vector{G}{\nu}{} = \Vector{G}{v}{I} + \Rot{G}{I}\Vector{I}{\omega}{}^{\wedge}\Vector{I}{t}{}}$, are received from a \ac{gnss} receiver.
The associated output space is $\calN_{v} \coloneqq \R^{3}$. To achieve third-order linearization error of the output map through equivariance, we propose the idea of extending the configuration output in \cref{bins_position_trick_output,ms_bins_confout_pos} to the velocity measurements. Thus, we define the configuration output ${h_v \AtoB{\calM \times \vecL}{\calN_v}}$ as
\begin{equation}\label{ms_bins_confout_vel}
    h_v\left(\xi, u\right) = \Rot{G}{I}^\top\left(\Vector{G}{\nu}{} - \left(\Vector{G}{v}{I} + \Rot{G}{I}\Vector{I}{\omega}{}^{\wedge}\Vector{I}{t}{}\right)\right) \in \calN_v .
\end{equation}
In the latter two measurement models, position and velocity measurements for the filter are reformulated by constructing the vectors $\Vector{G}{\pi}{}$, and $\Vector{G}{\nu}{}$ with raw, noisy position and velocity measurements from the sensors. 
For the sake of readability, subscripts and superscripts are omitted in the following sections.

For the \ac{ins} states in \cref{ms_bins_ap_bins}, we have chosen the best-performing symmetry with minimal state representation, that is, the ${\grpB \coloneqq \left(\SE_2(3) \ltimes \gothse(3)\right)}$ symmetry, presented in \cref{bins_sdb_sec,bins_sd_linan}. The $\grpB$ symmetry is extended to account for calibration states as described at the beginning of this chapter in \cref{ms_bins_ext_sec} and in \cref{ms_bins_states_table}. Define $\grpG \coloneqq \grpD \ltimes \R^{3} \times \SO(3)$ to be the symmetry group, and ${A = \Gamma(D) \in \SO(3) \subset \grpB}$, ${B = \chi(D) \in \SE(3) \subset \grpB}$, ${C = \Theta(D) \in \SE(3) \subset \grpB}$, with ${\Gamma(\cdot), \chi(\cdot), \Theta(\cdot)}$ defined in \cref{math_maps_sec}. Elements of the symmetry group and the state action ${\phi \AtoB{\grpG \times \calM}{\calM}}$ are defined as follows:
\begin{align*}
    &X = ((D, \delta), \gamma, E) \in \grpG,\\
    &X^{-1} = ((D^{-1}, -\Adsym{D^{-1}}{\delta}), -A^{\top}\gamma, E^{\top}),\\
    &\phi(X, \xi) = \left(\Pose{}{}D, \AdMsym{B^{-1}}\left(\Vector{}{b}{} - \delta^{\vee}\right), A^\top\left(\Vector{}{t}{} - \gamma\right), A^\top\mathbf{S}E\right),
\end{align*}

The actions of the symmetry group on the output space for the outputs in \cref{ms_bins_confout_dir,ms_bins_confout_pos} are similar to those in \cref{bas_rho,bins_rho_position}, and are written
\begin{align}
    &\rho_m \AtoB{\grpG \times \calN_m}{\calN_m} && \rho_m(X, y_m) = E^\top y_m,\label{ms_bins_rho_m}\\
    &\rho_p \AtoB{\grpG \times \calN_p}{\calN_p} && \rho_p(X, y_p) = A^\top\left(y_p - b + \gamma^{\vee}\right),\label{ms_bins_rho_p}\
\end{align}
For what concerns the configuration output in \cref{ms_bins_confout_vel}, the results presented in \cref{eq_sym_sec}, in particular the output equivariance condition in \cref{eq_out_equi}, need to be extended to measurements model that depends on the input variable $u$. Such configuration output is said to be equivariant if 
\begin{equation}\label{ms_bins_output_equi_extended}
    h(\phi(X,\xi), \theta(X,u)) = \rho(X, u, h(\xi, u)).
\end{equation}
$\forall X \in \grpG$, $\xi \in \calM, u \in \vecL$, and for right-handed actions $\theta \AtoB{\grpG \times \vecL}{\vecL}$, and $\rho \AtoB{\grpG \times \vecL \times \calN}{\calN}$. 
\begin{theorem}
Define $\rho_v \AtoB{\grpG \times \vecL \times \calN_v}{\calN_v}$, and $theta \AtoB{\grpG \times \vecL}{\vecL}$ as
\begin{align}
    &\rho_v \AtoB{\grpG \times \vecL \times \calN_v}{\calN_v} && \rho_v(X, u, y_v) = A^{\top}(y_v - a + \Vector{}{\omega}{}^{\wedge}\gamma),\label{ms_bins_rho_v}\\
    &\theta \AtoB{\grpG \times \vecL}{\vecL} && \theta(X, u) = \left(A^{\top}\Vector{}{\omega}{}, \Vector{}{a}{}, \Vector{}{\tau}{\bm{\omega}}, \Vector{}{a}{}, \Vector{}{\zeta}{}, \Vector{}{\mu}{}\right) .
\end{align}
Then, according to \cref{ms_bins_output_equi_extended}, the configuration output in \cref{ms_bins_confout_vel} is equivariant under actions $\rho_v$ and $\theta$ of the symmetry group onto the output space $\calN_v$ and input space $\vecL$ respectively.
\end{theorem}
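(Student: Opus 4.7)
The plan is to verify the equivariance identity \cref{ms_bins_output_equi_extended} by direct substitution, exactly in the spirit of the output-equivariance check carried out for $h_p$ in the previous subsection. No clever argument is needed; the non-trivial content sits entirely in bookkeeping of how each component of the state and input is moved by $\phi$ and $\theta$, followed by a single algebraic identity on $\SO(3)$.

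First, I would read off the transformed quantities appearing inside $h_v$ after applying $\phi$ and $\theta$. From the matrix form of $\Pose{}{}D \in \SE_2(3)$ with $D = (A,a,b)$, the rotation becomes $\Rot{}{}A$ and the velocity becomes $\Vector{}{v}{} + \Rot{}{}a$; the calibration action gives $\Vector{}{t}{} \mapsto A^\top(\Vector{}{t}{}-\gamma)$; and from $\theta(X,u)$ the angular velocity becomes $A^\top \Vector{}{\omega}{}$. The raw measurement $\Vector{}{\nu}{}$ is treated as exogenous data and is not transformed (exactly as $\Vector{}{\pi}{}$ was treated for $h_p$). Substituting all of this into \cref{ms_bins_confout_vel} yields
\begin{equation*}
h_v(\phi(X,\xi),\theta(X,u)) = (\Rot{}{}A)^\top\bigl(\Vector{}{\nu}{} - \Vector{}{v}{} - \Rot{}{}a - \Rot{}{}A\,(A^\top\Vector{}{\omega}{})^\wedge A^\top(\Vector{}{t}{}-\gamma)\bigr).
\end{equation*}

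The key algebraic step is the $\SO(3)$ conjugation identity $(A^\top\Vector{}{\omega}{})^\wedge = A^\top \Vector{}{\omega}{}^\wedge A$, which collapses the inner product to $\Rot{}{}\Vector{}{\omega}{}^\wedge(\Vector{}{t}{}-\gamma)$. After this, the expression splits cleanly into the original $\Rot{}{}^\top(\Vector{}{\nu}{}-\Vector{}{v}{}-\Rot{}{}\Vector{}{\omega}{}^\wedge\Vector{}{t}{})$ pre-multiplied by $A^\top$, plus the correction terms $A^\top(-a + \Vector{}{\omega}{}^\wedge\gamma)$. Recognising the first factor as $h_v(\xi,u)$ gives
\begin{equation*}
h_v(\phi(X,\xi),\theta(X,u)) = A^\top\bigl(h_v(\xi,u) - a + \Vector{}{\omega}{}^\wedge\gamma\bigr),
\end{equation*}
which matches $\rho_v(X,u,h_v(\xi,u))$ by the definition \cref{ms_bins_rho_v}.

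Before concluding, I would also check the two consistency conditions that were not verified in the statement: (i) that $\theta$ is a right group action of $\grpG$ on $\vecL$, which reduces to the identity $A_Y^\top A_X^\top \Vector{}{\omega}{} = (A_YA_X)^\top \Vector{}{\omega}{}$ on the angular-velocity slot and triviality on the other components; and (ii) that $\rho_v$ is a right group action on $\calN_v$ compatible with $\theta$, i.e.\ $\rho_v(Y, \theta(Y^{-1}\!,u), \rho_v(X, u, y_v)) = \rho_v(XY, u, y_v)$, which again comes down to expanding the semi-direct product rule used for $\gamma$ in $\grpG$ and the $A^\top\Vector{}{\omega}{}^\wedge A$ manipulation. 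The main obstacle, as already in the derivation of the tangent symmetry in \cref{bins_tg_sec}, is purely notational: being careful that $a$ inside $D$ (the velocity slot of the $\SE_2(3)$ factor) is \emph{not} the accelerometer reading $\Vector{}{a}{}$, and that $\gamma$ and $\Vector{}{\omega}{}^\wedge\gamma$ correctly pair with the lever-arm term rather than the linear-velocity term. Once the substitution is written out cleanly, the proof is a two-line computation.
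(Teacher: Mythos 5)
Your proof is correct and follows essentially the same route as the paper: direct substitution of the transformed state and input into $h_v$, followed by the $\SO(3)$ conjugation identity $(A^{\top}\Vector{}{\omega}{})^{\wedge} = A^{\top}\Vector{}{\omega}{}^{\wedge}A$ to collapse the lever-arm term and split off $A^{\top}(-a + \Vector{}{\omega}{}^{\wedge}\gamma)$. The extra checks you flag (that $\theta$ and $\rho_v$ are genuine right actions) are not carried out in the paper's proof either, but are a reasonable addition.
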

\begin{proof}
Expanding $h_v(\phi(X,\xi), \theta(X,u))$ in \cref{ms_bins_output_equi_extended} yields
\begin{align*}
    h_v(\phi(X,\xi), \theta(X,u)) &= A^{\top}\Rot{}{}^\top\left(\Vector{}{\nu}{} - \left(\Vector{}{v}{} + \Rot{}{}a + \Rot{}{}A\left(A^{\top}\Vector{}{\omega}{}\right)^{\wedge}A^{\top}\left(\Vector{}{t}{} - \gamma\right)\right)\right)\\
    &= A^{\top}\left(\Rot{}{}^\top\left(\Vector{}{\nu}{} - \left(\Vector{}{v}{} + \Rot{}{}\left(AA^{\top}\Vector{}{\omega}{}\right)^{\wedge}\left(\Vector{}{t}{} - \gamma\right)\right)\right) - a\right)\\
    &= A^{\top}\left(\Rot{}{}^\top\left(\Vector{}{\nu}{} - \left(\Vector{}{v}{} + \Rot{}{}\Vector{}{\omega}{}^{\wedge}\Vector{}{t}{}\right)\right) + \Vector{}{\omega}{}^{\wedge}\gamma- a\right)\\
    &= A^{\top}\left(h_v\left(\xi, u\right) + \Vector{}{\omega}{}^{\wedge}\gamma- a\right)\\
    &= \rho_v(X, u, h\left(\xi, u\right)) ,
\end{align*}
which demonstrates equivariance of the output map in \cref{ms_bins_confout_vel} with respect to the symmetry group $\grpG$.
\end{proof}

The existence of a transitive group action $\phi$ of the symmetry group $\grpG$ on the state space $\calM$ guarantees the existence of a lift $\Lambda \AtoB{\calM \times \vecL}{\gothg}$.
\begin{theorem}
Define the lift ${\Lambda\left(\xi, u\right)}$ with the four maps ${\Lambda_1\left(\xi, u\right), \Lambda_2\left(\xi, u\right), \Lambda_3\left(\xi, u\right), \Lambda_4\left(\xi, u\right)}$ as follows:
\begin{align}\label{ms_bins_ap_lift}
    &\Lambda_1\left(\xi, u\right) \coloneqq \left(\mathbf{W} - \mathbf{B} + \mathbf{D}\right) + \Pose{}{}^{-1}\left(\mathbf{G} - \mathbf{D}\right)\Pose{}{} ,\\
    &\Lambda_2\left(\xi, u\right) \coloneqq \adsym{\Vector{}{b}{}^{\wedge}}{\Pi\left(\Lambda_1\left(\xi, u\right)\right)},\\
    &\Lambda_3\left(\xi, u\right) \coloneqq \Vector{}{t}{}^{\wedge}\left(\Vector{}{\omega}{} - \Vector{}{b}{\bm{\omega}}\right),\\
    &\Lambda_4\left(\xi, u\right) \coloneqq \mathbf{S}^\top \left(\Vector{}{\omega}{} - \Vector{}{b}{\bm{\omega}}\right).
\end{align}
The map ${\Lambda\left(\xi, u\right)}$ is a lift for the system in \cref{ms_bins_ap_bins,ms_bins_ap_bins_se23} with respect to the symmetry group ${\grpG}$.
\end{theorem}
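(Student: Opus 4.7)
The plan is to verify the lift condition \cref{eq_lift}, namely $\td_{\phi_\xi}[\Lambda(\xi, u)] = f(\xi, u)$, by splitting the differential of $\phi$ according to the four components of the state space $(\Pose{}{}, \Vector{}{b}{}, \Vector{}{t}{}, \mathbf{S})$ and checking each one independently against the dynamics in \cref{ms_bins_ap_bins_se23}. Because the $\grpG$ action is a concatenation of the $\grpB$ action on the \ac{ins} part with two independent calibration actions of the form listed in \cref{ms_bins_states_table}, the four computations decouple cleanly.

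First I would handle the extended pose component: differentiating $\Pose{}{}D$ at $D=I$ in the direction $\Lambda_1$ gives left translation $\Pose{}{}\Lambda_1$, and substituting the definition of $\Lambda_1$ reproduces exactly $\Pose{}{}(\mathbf{W}-\mathbf{B}+\mathbf{N})+(\mathbf{G}-\mathbf{N})\Pose{}{}$. This step is essentially the one already performed for the SD symmetry in \cref{bins_sdb_sec} and requires no new ideas. Next, for the bias component, the action is $\AdMsym{B^{-1}}(\Vector{}{b}{}-\delta^\vee)$; differentiating at $(B,\delta)=(I,0)$ in the direction $(\Pi(\Lambda_1),\Lambda_2)$ yields
\begin{equation*}
-\adMsym{\Pi(\Lambda_1)^\vee}\Vector{}{b}{} - \Lambda_2^\vee ,
\end{equation*}
and substituting $\Lambda_2 = \adsym{\Vector{}{b}{}^\wedge}{\Pi(\Lambda_1)}$ together with the antisymmetry identity $\adsym{x}{y} = -\adsym{y}{x}$ shows that the two terms cancel, recovering the (constant-bias) dynamics $\dotVector{}{b}{} = 0$. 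This is the step I expect to be the main obstacle, because it requires carefully distinguishing between the $\SE_2(3)$-projection $\chi(D)\in\SE(3)$ used inside the $\AdMsym{}$ and the algebra projection $\Pi(\Lambda_1)\in\gothse(3)$ used for its derivative, and then matching them through the commutator identity \cref{math_adj_comm}.

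Third, for the \ac{gnss} lever arm, differentiating $A^\top(\Vector{}{t}{}-\gamma)$ at $(A,\gamma)=(I,0)$ in the direction $(\Gamma(\Lambda_1),\Lambda_3)$ yields $\Vector{}{t}{}^\wedge \Gamma(\Lambda_1)^\vee - \Lambda_3$. Reading off $\Gamma(\Lambda_1)^\vee = \Vector{}{\omega}{}-\Vector{}{b}{\bm{\omega}}$ from the fact that $\mathbf{G}-\mathbf{N}$ contributes nothing to the top-left block of $\Pose{}{}^{-1}(\mathbf{G}-\mathbf{N})\Pose{}{}$, and substituting $\Lambda_3 = \Vector{}{t}{}^\wedge(\Vector{}{\omega}{}-\Vector{}{b}{\bm{\omega}})$, the two terms cancel, recovering the constant-lever-arm dynamics. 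Finally, for the magnetometer rotational calibration the action is $A^\top \mathbf{S} E$; its differential at identity in the direction $(\Gamma(\Lambda_1),\Lambda_4)$ gives $-\Gamma(\Lambda_1)\mathbf{S} + \mathbf{S}\Lambda_4$, and substituting $\Lambda_4 = \mathbf{S}^\top(\Vector{}{\omega}{}-\Vector{}{b}{\bm{\omega}})^\wedge\mathbf{S} = \AdMsym{\mathbf{S}^\top}(\Vector{}{\omega}{}-\Vector{}{b}{\bm{\omega}})^\wedge$ (interpreting the lift entry through the wedge) reduces the two terms to zero as required, thus recovering $\dot{\mathbf{S}} = \mathbf{S}\Vector{M}{\mu}{}^\wedge$ under the assumption $\Vector{M}{\mu}{}=0$ used throughout the chapter. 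Assembling the four verifications concludes the proof; no equivariance claim is needed here, only the lift identity, which keeps each step purely computational once the correct projections ($\Gamma$, $\chi$, $\Theta$, $\Pi$) are lined up with the corresponding factor of the symmetry group.
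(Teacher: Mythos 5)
Your verification is correct, and since the paper states this theorem without proof, the right benchmark is the analogous lift proofs in the attitude and tangent-group chapters — your componentwise check of $\td\phi_{\xi}[\Lambda(\xi,u)]=f_u(\xi)$, with the cancellations $-\adMsym{\Pi(\Lambda_1)^{\vee}}\Vector{}{b}{}-\Lambda_2^{\vee}=0$, $\Vector{}{t}{}^{\wedge}(\Vector{}{\omega}{}-\Vector{}{b}{\bm{\omega}})-\Lambda_3=0$, and $-(\Vector{}{\omega}{}-\Vector{}{b}{\bm{\omega}})^{\wedge}\mathbf{S}+\mathbf{S}\Lambda_4^{\wedge}=0$, is exactly that argument. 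You also correctly read the statement's $\mathbf{D}$ as the matrix $\mathbf{N}$ and correctly note that the stated lift reproduces the dynamics under the chapter's standing assumption $\Vector{}{\tau}{}=\Vector{}{\zeta}{}=\Vector{}{\mu}{}=0$.
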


\subsection{Equivariant filter design}
Consider the state origin $\xizero$ to be the identity of the state space, thus ${\xizero = \left(\eye_3, \Vector{}{0}{3\times1}, \Vector{}{0}{3\times1}, \Vector{}{0}{3\times1}, \Vector{}{0}{3\times1}, \Vector{}{0}{3\times1}, \eye_3\right)}$. Local coordinates of the state space are chosen to be normal coordinates, hence ${e = \vartheta^{-1}(\varepsilon) \coloneqq \phi_{\xizero}(\exp_{\grpG}(\varepsilon^{\wedge})))}$.

The linearized error state matrix ${\mathbf{A}_{t}^{0} \st \dot{\varepsilon} \simeq \mathbf{A}_{t}^{0}\varepsilon}$ is the solution of \cref{eq_A0_alt}, and it is written
\begin{equation}
    \mathbf{A}_{t}^{0} = \begin{bNiceArray}{ccc:cc:c:c}[margin]
        \Block{3-3}{\prescript{}{1}{\mathbf{A}}} & & & \Block{2-2}{\eye_6} & & \Block{2-1}{\mathbf{0}_{6 \times 3}} & \Block{2-1}{\mathbf{0}_{6 \times 3}}\\
        & & & & & &\\
        & & & \hat{b}^{\wedge} & \mathbf{0}_{3 \times 3} & \mathbf{0}_{3 \times 3} & \mathbf{0}_{3 \times 3}\Bstrut\\
        \hdottedline
        \mathbf{0}_{6 \times 3} & \mathbf{0}_{6 \times 3} & \mathbf{0}_{6 \times 3} & \Block{1-2}{\prescript{}{2}{\mathbf{A}}} & & \mathbf{0}_{6 \times 3} & \mathbf{0}_{6 \times 3}\Tstrut\Bstrut\\
        \hdottedline
        \mathbf{0}_{6 \times 3} & \mathbf{0}_{6 \times 3} & \mathbf{0}_{6 \times 3} & \mathbf{0}_{3 \times 3} & \mathbf{0}_{3 \times 3} & \prescript{}{3}{\mathbf{A}} & \mathbf{0}_{3 \times 3}\Tstrut\Bstrut\\
        \hdottedline
        -\prescript{}{3}{\mathbf{A}} & \mathbf{0}_{6 \times 3} & \mathbf{0}_{6 \times 3} & \eye_3 & \mathbf{0}_{3 \times 3} & \mathbf{0}_{3 \times 3} & \prescript{}{3}{\mathbf{A}}\Tstrut
    \end{bNiceArray} \in \R^{21 \times 21}, \label{ms_bins_ap_At0}
\end{equation}
where the upper-left $15 \times 15$ block including $\prescript{}{1}{\mathbf{A}}$, and $\prescript{}{2}{\mathbf{A}}$, is defined as in \cref{bins_At0_SD} and $\prescript{}{3}{\mathbf{A}}$ represents the rotational component of $\prescript{}{2}{\mathbf{A}}$.
\begin{align*}
    &\prescript{}{1}{\mathbf{A}} = \begin{bmatrix}
    \Vector{}{0}{3 \times 3} & \Vector{}{0}{3 \times 3} & \Vector{}{0}{3 \times 3}\\
    \Vector{G}{g}{}^{\wedge} & \Vector{}{0}{3 \times 3} & \Vector{}{0}{3 \times 3}\\
    \Vector{}{0}{3 \times 3} & \eye_3 & \Vector{}{0}{3 \times 3}
    \end{bmatrix} \in \R^{9 \times 9},\\
    &\prescript{}{2}{\mathbf{A}} = \adMsym{\left(\AdMsym{\hat{B}}\Vector{}{w}{} + \hat{\delta}^{\vee} + \mathbf{G}^{\vee}\right)} \in \R^{6 \times 6}\\
    &\prescript{}{3}{\mathbf{A}} = \left(\hat{A}\Vector{}{\omega}{} + \hat{\delta}_{\omega}\right)^{\wedge} \in \R^{3 \times 3}.
\end{align*}

Let $y_m, y_p, y_v$ represent raw, noisy measurements of direction, position, and velocity, respectively. Then, given the configuration output in \cref{ms_bins_confout_dir,ms_bins_confout_pos,ms_bins_confout_vel}, the actions on the output space defined in \cref{ms_bins_rho_m,ms_bins_rho_p,ms_bins_rho_v}, and the use of normal coordinates, the three linearized output matrices are derived as solutions of \cref{eq_C_star}:
\begin{align}
    &\mathbf{C}_m^{\star} = \Vector{G}{m}{}^{\wedge} \begin{bmatrix}
        \mathbf{0}_{3 \times 15} & \frac{1}{2}\left(\Vector{G}{m}{} + \hat{E}y_d\right)^{\wedge} & \mathbf{0}_{3 \times 3}
        \end{bmatrix}, \label{ms_bins_Cstar_dir}\\
    &\mathbf{C}_p^{\star} = \begin{bmatrix}
        \frac{1}{2}\left(y_p + \hat{b} - \hat{d}\right)^{\wedge} & \mathbf{0}_{3 \times 3} & -\eye_3 & \mathbf{0}_{3 \times 6} & \eye_3 & \mathbf{0}_{3 \times 3}
        \end{bmatrix}, \label{ms_bins_Cstar_pos}\\
     &\mathbf{C}_v^{\star} = \begin{bmatrix}
        \frac{1}{2}\left(y_v + \hat{a} - \Vector{}{\omega}{}^{\wedge}\hat{d}\right)^{\wedge} & -\eye_3 & \mathbf{0}_{3 \times 9} &  \Vector{}{\omega}{}^{\wedge} & \mathbf{0}_{3 \times 3}\\
        \end{bmatrix}. \label{ms_bins_Cstar_vel}
\end{align}

\subsection{Uncertain observation handling}\label{ms_bins_unc_obs_handl}
Uncertain observation and signal outages are particularly common in inertial navigation problems. Specifically, \ac{gnss} outliers, commonly referred to as glitches, as well as sudden changes in the \ac{gnss} solution, commonly referred to as \ac{gnss} shifts, distortion of the measured magnetic field, are among the most common signal outages experienced in inertial navigation scenarios.

Typical approaches to overcome such issues rely on $\chi^2$ rejection tests. However, binary rejection strategies are often insufficient and very sensitive to tuning parameters. In the present subsection, we present the strategy for robust uncertain observation handling, based on the concept of generalized covariance union \ac{gcu}~\cite{Reece2010GeneralisedTracking, Ghobadi2018RobustFilter} for which tuning boils down to a simple choice of convergence rate.

Let $y$ represent an observation and ${\mathbf{R}}$ the observation covariance. Let $\mathbf{C}^{\star}$ represent the linearized output matrix, and ${\mathbf{\Sigma}}$ be the estimated state covariance.
Let $\Vector{}{r}{} = \delta(\rho(\hat{X}^{-1}, y))$ be the residual computed via the output action, and $\mathbf{S}$ be the innovation covariance. 
Let ${\alpha \in \left[0,\, 1\right]}$ be a scalar value used to control the convergence rate (the lower, the faster). 
Then, before updating the filter states according to~\cite{vanGoor2022EquivariantEqF}, we compute an inflated innovation covariance $\mathbf{S}^{\prime}$ as follows:
\begin{align*}
    &n = \Vector{}{r}{}^\top\mathbf{S}^{-1}\Vector{}{r}{} = \Vector{}{r}{}^\top\left(\mathbf{C}^{\star}\mathbf{\Sigma}{\mathbf{C}^{\star}}^\top + \mathbf{R}\right)^{-1}\Vector{}{r}{},\\    
    &\beta = \begin{cases}
    \frac{\left(1 + \sqrt{n}\right)^2}{1 + n} & \text{if } n < 1\\
    2 & \text{otherwise}
    \end{cases},\\
    &\mathbf{S}^{\prime} = \beta\left(\mathbf{C}^{\star}\mathbf{\Sigma}{\mathbf{C}^{\star}}^\top + \alpha\Vector{}{r}{}\Vector{}{r}{}^\top\right) + \mathbf{R}.
\end{align*}

The underlying idea is to inflate the innovation covariance in the direction of the innovation and in such a way that after the inflation the quantity ${\Vector{}{r}{}^\top{\mathbf{S}^{\prime}}^{-1}\Vector{}{r}{}}$ is smaller than $1$ \cite{Reece2010GeneralisedTracking, Ghobadi2018RobustFilter}. 

\subsection{Experiments}
In what follows, we present and analyze the performance of the proposed filter methodology under common challenging scenarios often encountered in aircraft navigation. Specifically, the performance of the proposed \ac{eqf} is benchmarked against the ArduPilot's \ac{ekf}3, the most sophisticated \ac{ekf} implementation currently available.

Various experiments are conducted using both simulated data within the ArduPilot's \acf{sitl} environment and real-world flight data. Each experiment includes a flying aircraft equipped with multiple \acp{imu}, \ac{gnss} receivers, magnetometers, and other sensors such as barometers. The ArduPilot \ac{ekf}3 is set to the default configuration, hence to use any available sensor. Accurate truth values of calibration states are given, and the tuning parameters of the filter were left unchanged to their default values. In contrast, the proposed \ac{eqf} is set to fuse the measurements from \acp{imu}, \ac{gnss} receivers, and magnetometers. The \ac{eqf} was initialized with zero \ac{gnss} lever arm and identity magnetometer rotational calibration. These states are self-calibrated online. Furthermore, the proposed \ac{eqf} was set with a similar but less inflated selection of default parameters that were found to work well in practice.

\subsubsection{\acs{sitl}: software in the loop simulations}
We conducted three distinct experiments in the \ac{sitl} environment. The first experiment was targeted at showing the self-calibration capabilities of the proposed \ac{eqf}. To achieve this, we simulated a quadcopter with magnetometer rotational calibrations at angles of $20$\si[per-mode=symbol]{\degree}, $30$\si[per-mode=symbol]{\degree}, and $20$\si[per-mode=symbol]{\degree}, along with a \ac{gnss} lever arm of $-0.4$\si[per-mode=symbol]{\meter}, $0.2$\si[per-mode=symbol]{\meter}, $0.1$\si[per-mode=symbol]{\meter} in the xyz axes, respectively.
The outcomes, depicted in \cref{ms_bins_sitl_cal}, show the estimated positions and attitudes generated by both the proposed \ac{eqf} and the ArduPilot's \ac{ekf}3. Additionally, \cref{ms_bins_sitl_cal} shows the sensor extrinsic calibrations that are estimated online by the \acl{eqf}. The results indicate that the proposed \ac{eqf} achieves effective estimation of all the states, including the sensor's extrinsic parameters, without prior knowledge of these parameters.

\begin{figure}[htb]
\centering
\includegraphics[width=\linewidth]{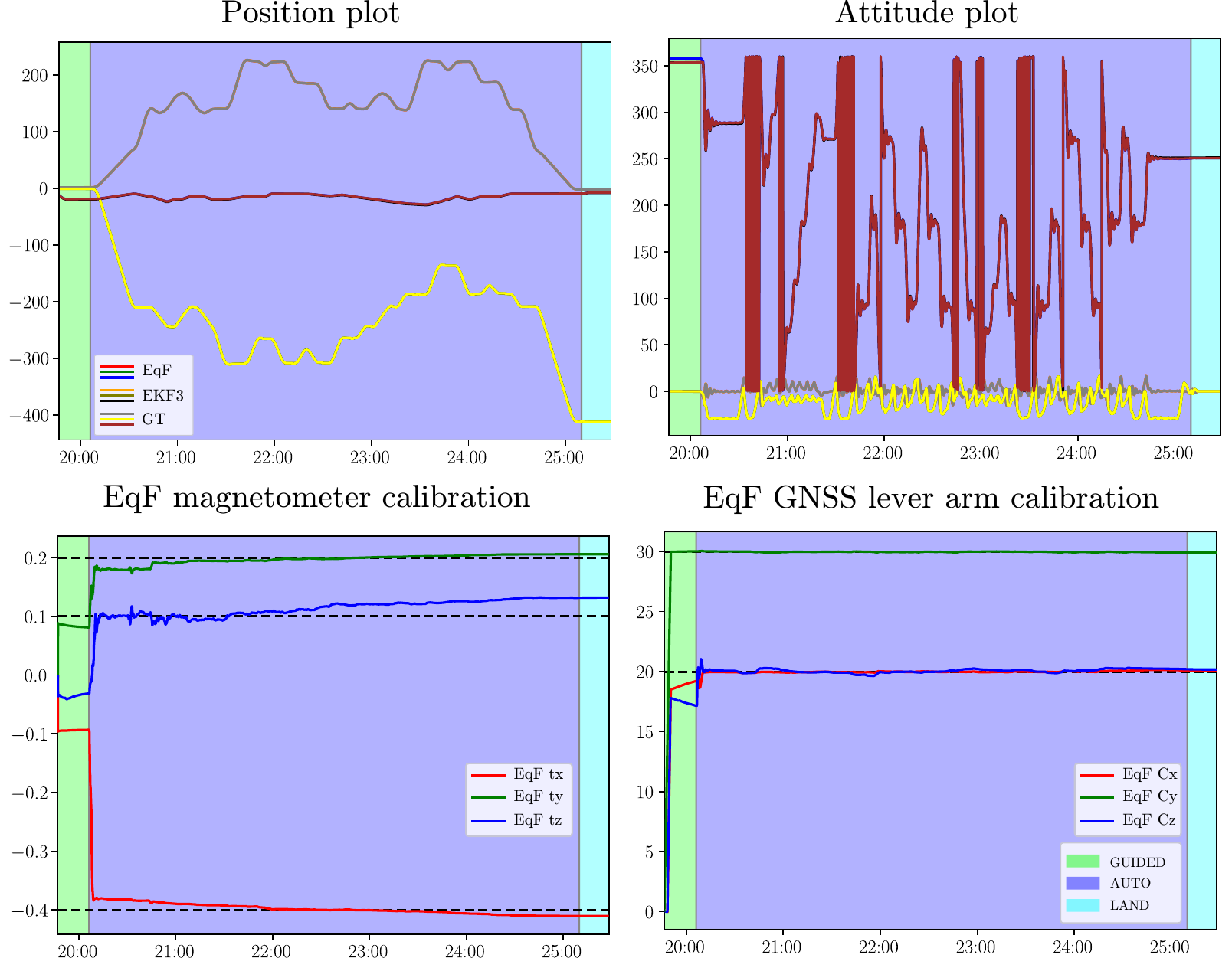}
\caption[Comparison of the proposed \ac{eqf} and the ArduPilot's \ac{ekf}3 for a simulated quadcopter flight, generated in \ac{sitl}.]{Comparison of the proposed \ac{eqf} and the ArduPilot's \ac{ekf}3 for a simulated quadcopter flight, generated in \ac{sitl}. \textbf{Top:} position, and attitude estimates. \textbf{Bottom:} online estimation of \ac{gnss} lever arm $\Vector{I}{t}{}$ (left) and magnetometer rotational calibration ${\Rot{I}{M}}$ (right) using the proposed \ac{eqf}. Dashed lines represent reference values, y-axes are in meters and degrees, x-axes in mm::ss format.}
\label{ms_bins_sitl_cal}
\end{figure}

The second experiment was targeted at showing the consistent nature of the proposed \ac{eqf} under static conditions. \acp{ekf} suffer from spurious information gains when receiving position updates under constant velocity or static conditions, leading to what is commonly termed as \emph{false observability}~\cite{barrau:tel-01247723}. Results in \cref{ms_bins_sitl_idle} highlight the behavior of the proposed \ac{eqf} and the ArduPilot's \ac{ekf}3 in a prolonged simulated static scenario. In this scenario, the \ac{eqf} achieves zero error in position and a constant error in attitude, mainly due to the initial yaw estimate of $0$\si[per-mode=symbol]{\degree}, whereas the actual yaw is approximately $7$\si[per-mode=symbol]{\degree}. On the contrary, the \ac{ekf}3 displays the classical symptoms of the false observability problem. It incorrectly gains spurious information, resulting in an erroneous and non-constant estimation of the yaw. Constant velocity motion and static conditions are common scenarios in autonomous missions, and hence, the ability to handle such situations without the need for exception code is of paramount importance.

The final experiment in the \ac{sitl} environment is targeted at evaluating the behavior of the filters in the presence of \ac{gnss} signal outages. 
In particular, we analyze the binary gating system of the ArduPilot's \ac{ekf}3, and the inflation strategy discussed in \cref{ms_bins_unc_obs_handl} and applied to the proposed \ac{eqf}. \cref{ms_bins_sitl_gps} shows that both filters are unaffected by \ac{gnss} glitches and keep providing a good estimate; thus, the proposed innovation-covariance inflation works as well as an exception-handling binary gating system in the presence of \ac{gnss} glitches. Furthermore, \cref{ms_bins_sitl_gps} shows the behavior of the \ac{ekf}3 and the \ac{eqf} on a scenario where a \ac{gnss} shift happens. In this case, the binary gating system of the ArduPilot's \ac{ekf}3 produces a jump to the new solution after $10$\si[per-mode=symbol]{\second} a shift is observed.
The tuning parameters for the binary gating systems include outlier identification criteria, the period to wait before resetting the state, and modifications for the covariance during the period that the outliers are rejected. 
In contrast, the behavior of the \ac{eqf} depends only on the parameter $\alpha$ that governs the convergence rate. The transition can be tuned to perform a smooth transition ($\alpha = 1$) or a semi-smooth transition ($\alpha=0$). The flexibility of the proposed \ac{eqf} allows it to work properly with different controller implementations.

\begin{figure}[htb]
\centering
\includegraphics[width=\linewidth]{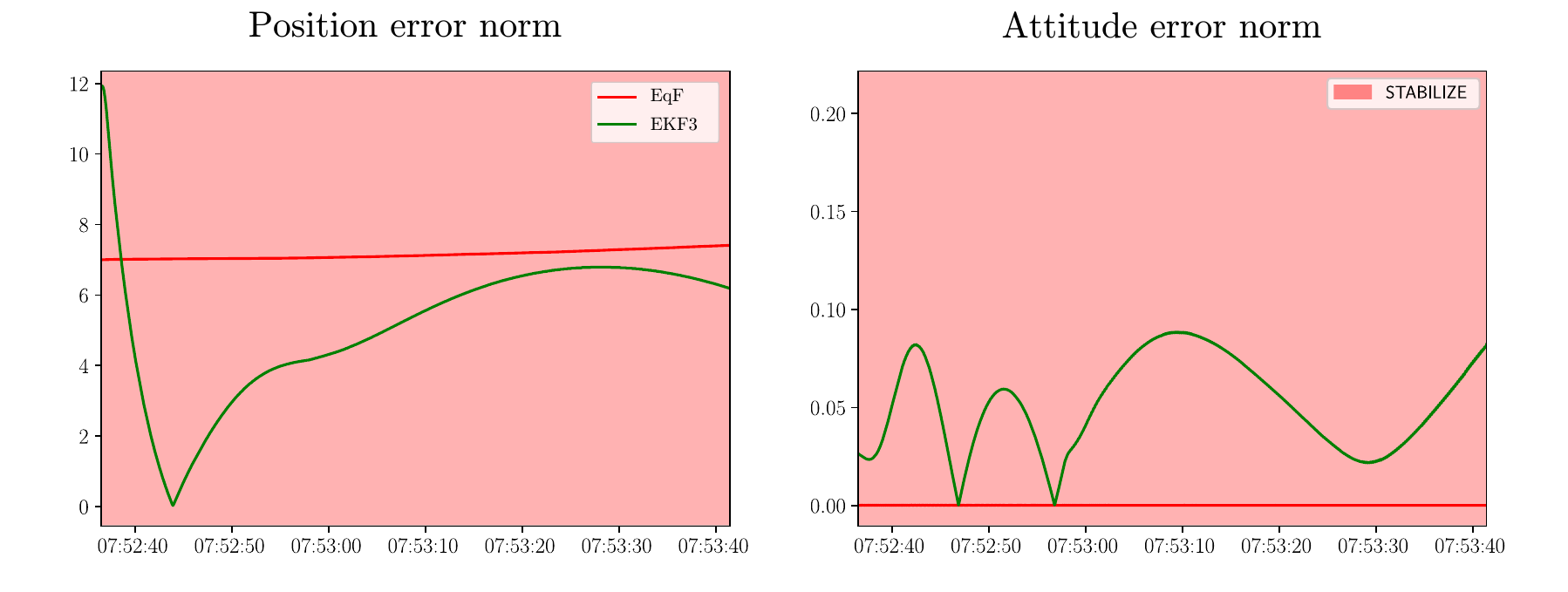}
\caption[Error plots of the proposed \ac{eqf} and the ArduPilot's \ac{ekf}3 in a simulated static scenario.]{Position error norm (left) and attitude error norm (right) using the proposed \ac{eqf} and the ArduPilot's \ac{ekf}3 in a simulated static scenario. The \ac{ekf}3 exhibits a wrong estimate, showing the typical symptoms of inconsistency due to false observability in static conditions. On the contrary, the proposed \ac{eqf} achieves a consistent estimate.}
\label{ms_bins_sitl_idle}
\end{figure}
\begin{figure}[htb]
\centering
\includegraphics[width=\linewidth]{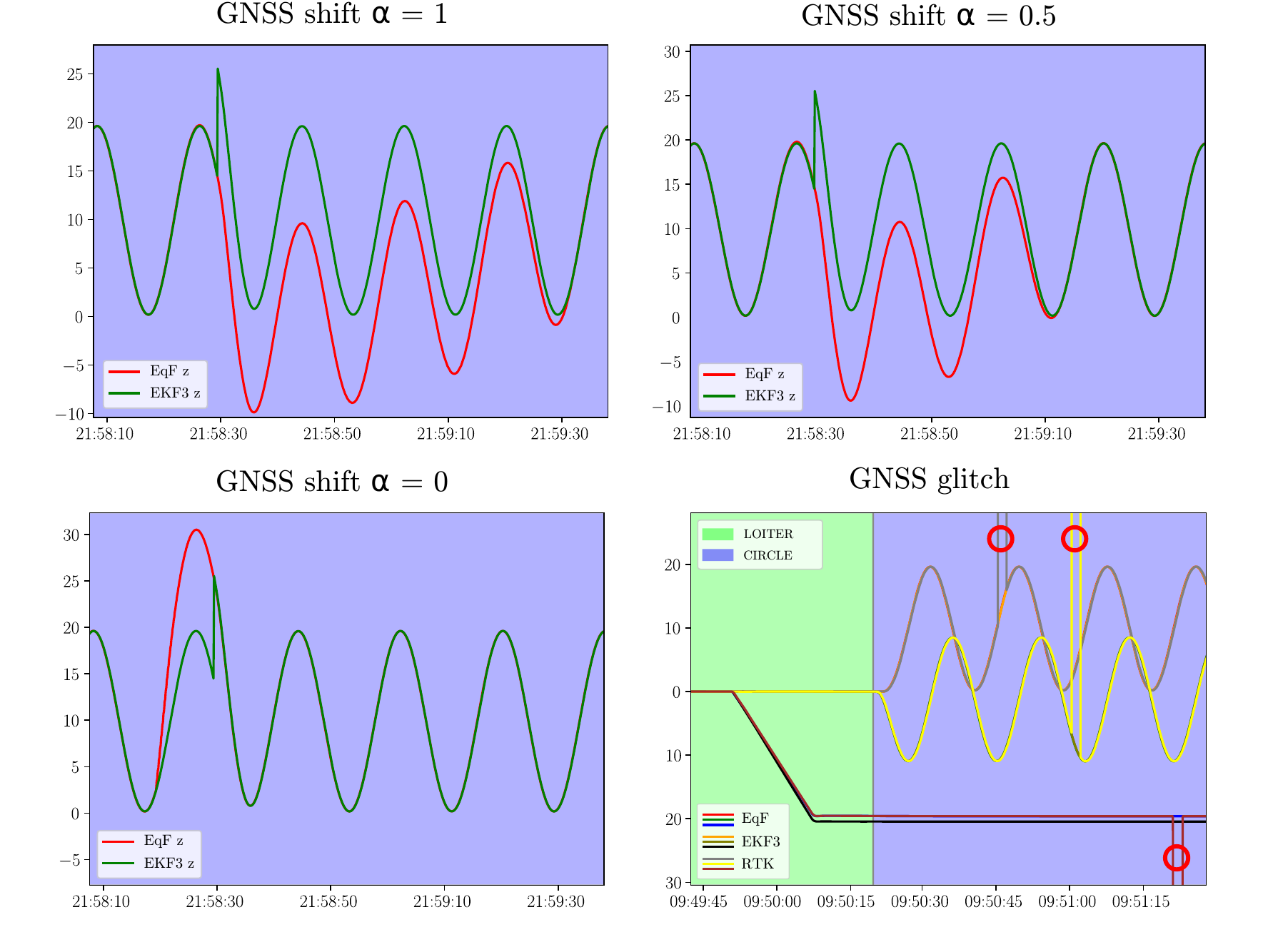}
\caption[Behavior of the ArduPilot's \ac{ekf}3 and the proposed \ac{eqf} in case of \ac{gnss} shifts and glitches.]{Behavior of the ArduPilot's \ac{ekf}3 and the proposed \ac{eqf} in case of \ac{gnss} shifts and glitches. The plots show the behavior of the filters following a \ac{gnss} shift happening at $22:58:20$. \textbf{Top left}, smooth transition of the proposed \ac{eqf} implementing the strategy discussed in \cref{ms_bins_unc_obs_handl} with $\alpha = 1$. \textbf{Top right}, semi-smooth transition of the proposed \ac{eqf} implementing the strategy discussed in \cref{ms_bins_unc_obs_handl} with $\alpha = 0.5$. \textbf{Bottom left}, transition with $\alpha = 1.0$. \textbf{Bottom right}, behavior of the ArduPilot's \ac{ekf}3 and the proposed \ac{eqf} in case of multiple \ac{gnss} glitches highlighted with red circles.}
\label{ms_bins_sitl_gps}
\end{figure}

\subsection{SpringValley: outdoor quadcopter flight}
In this subsequent experiment, real-world flight data collected from a quadcopter was used to validate the previously discussed results in the \acl{sitl} simulation environment. Due to the absence of ground truth data, our analysis focuses on the position and velocity error norms in relation to the measurements obtained from a \ac{rtk} \ac{gnss}. The error plots in \cref{ms_bins_springvalley} depict a comparison between the proposed \ac{eqf} and the ArduPilot's \ac{ekf}3. Notably, the \ac{eqf} achieves lower error compared to the \ac{ekf}3.

\begin{figure}[htb]
\centering
\includegraphics[width=\linewidth]{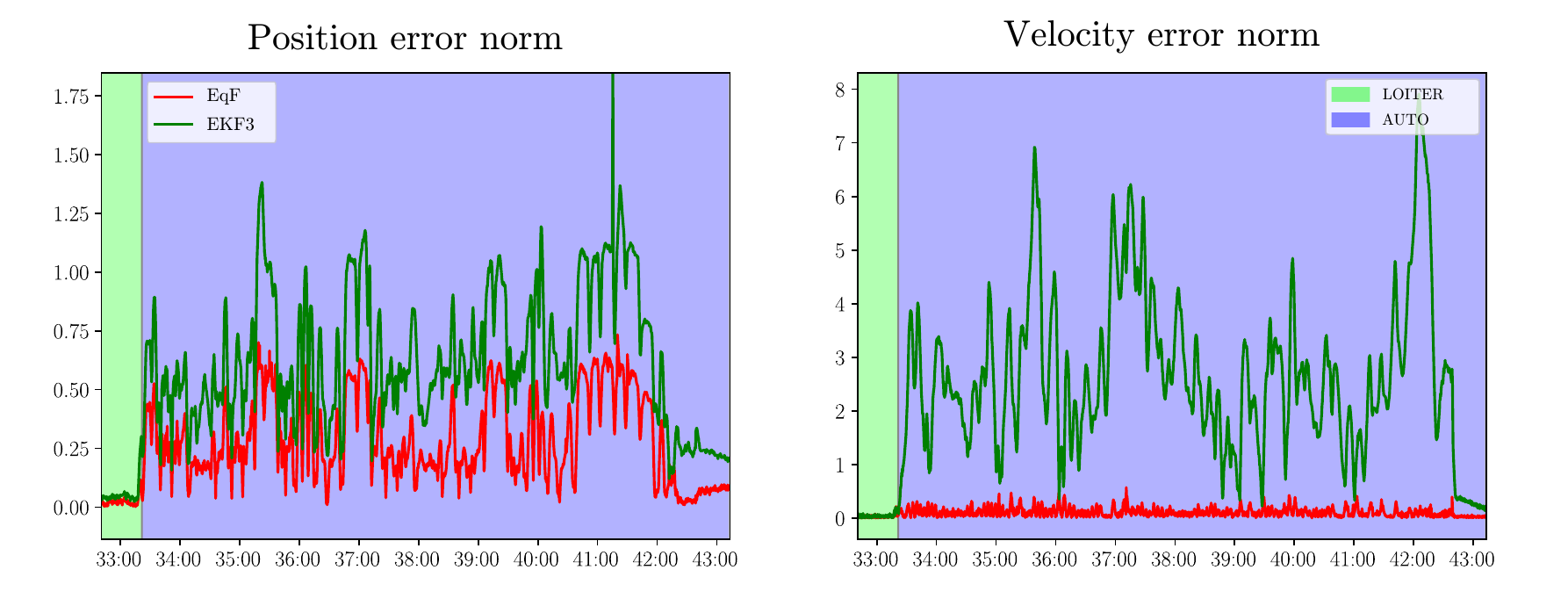}
\caption[Error plots of the proposed \ac{eqf} and the ArduPilot's \ac{ekf}3 on a real world quadcopter flight.]{Position and velocity estimation error when compared to raw \ac{rtk} \ac{gnss} measurements, using the proposed \ac{eqf} and the ArduPilot's \ac{ekf}3 on a real world quadcopter flight.}
\label{ms_bins_springvalley}
\end{figure}

\subsection{BraveHeart: outdoor quadcopter flight with faulty IMU}
In this concluding experiment, we evaluate the ability of the filters to handle a faulty \ac{imu} afflicted by high amplitude, high-frequency components, potentially stemming from excessive vibrations and aliasing effects. This scenario is illustrated in \cref{ms_bins_bh_225}, showing the low-passed filtered signal from two accelerometers recorded from an actual quadcopter flight. The accelerometer from \ac{imu}[0] is faulty and exhibits substantial high-frequency artifacts, setting it apart from the healthy \ac{imu}[1]. 

This experiment demonstrates the superior robustness of the proposed \ac{eqf}. It effectively provides an accurate estimate when used with the faulty \ac{imu}, eliminating the need for specialized parameter adjustments. In particular, we compare the \ac{eqf} run with the faulty \ac{imu} and with standard tuning parameters with two instances of the ArduPilot's \ac{ekf}3, one with the faulty \ac{imu}, and the second one with the healthy \ac{imu}. \cref{ms_bins_bh_225} shows that the \ac{eqf} succeeds in providing an accurate estimate when paired with the faulty \ac{imu} while the \ac{ekf}3 encounters challenges, only succeeding under specific tuning conditions involving high process noise and reduced observation noise.

\begin{figure}[htb]
\centering
\includegraphics[width=\linewidth]{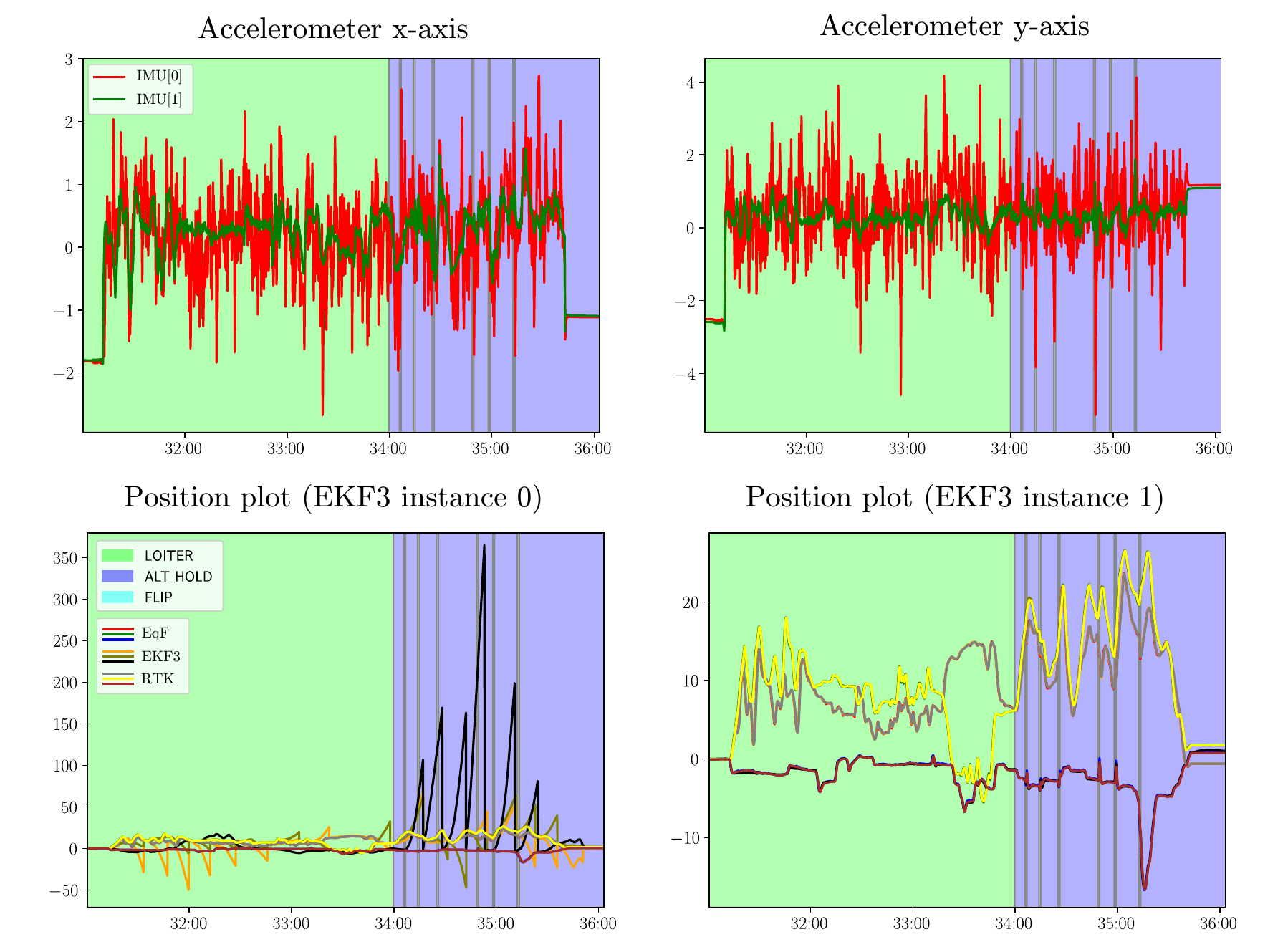}
\caption[Performance comparison of the proposed \ac{eqf} and the \ac{ekf}3 receiving measurements from two different \acp{imu}, one of which is faulty.]{Performance comparison of the proposed \ac{eqf} and the \ac{ekf}3 receiving measurements from two different \acp{imu}, one of which is faulty. \textbf{Top:} low-pass filtered acceleration along the x (left) and y (right) axes for two \acp{imu}. It is observed that \ac{imu}[0] (in red) is faulty and suffers from high amplitude components at high frequency, while \ac{imu}[1] (in green) is healthy. 
\textbf{Bottom left:} estimates obtained from both the proposed \ac{eqf} and an instance of the ArduPilot's \ac{ekf}3 when used with measurements from the faulty \ac{imu}. \textbf{Bottom right:} estimate from the proposed \ac{eqf} when used with measurements from the \emph{faulty} \ac{imu} and a second instance of the ArduPilot's \ac{ekf}3 that relies on measurements from the \emph{healthy} \ac{imu}. 
The \ac{ekf}3 fails to provide accurate estimation when dealing with data from the faulty \ac{imu}, while the proposed \ac{eqf} demonstrates superior robustness by effectively dealing with it.}
\label{ms_bins_bh_225}
\end{figure}

\section{Chapter conclusion}
This chapter introduced a \emph{general methodology to extend the symmetries for \ac{ins} states presented in \cref{bins_chp}, accounting for sensor calibration states and extra states of interest} often encountered in inertial navigation problems. Specifically, \cref{ms_bins_states_table} outlines which Lie groups provide symmetries for common extra state variables and what product rule is exploited in their composition. Additionally, concrete examples of symmetries extensions for self-calibration position-based navigation and camera-based navigation with camera extrinsic self-calibration and single visual landmark are provided. 

Continuing in \cref{ms_bins_chp}, \emph{a robust, high-performance \ac{ins} filter for the ArduPilot autopilot system is designed}, as an application of the results introduced in this and the previous chapters. Specifically, the presented \acl{eqf} can fuse measurements from multiple sensors, self-calibrating the sensor's extrinsic parameters and handling signal outliers and shifts with an innovative innovation covariance inflation technique. Furthermore, motivated by the need to model velocity measurement received from a receiver with a non-negligible lever arm, \emph{the output equivariance property, presented in \cref{eq_chp}, has been extended to observation models that depend on input variables}. The performance and robustness of the proposed filter are benchmarked against the ArduPilot's \ac{ekf}3 on challenging simulated and real-world data from the ArduPilot community.

\chapter[A Multi State Constraint Equivariant Filter for Vision-aided Inertial Navigation][Equivariant VINS]{A Multi State Constraint Equivariant Filter for Vision-aided Inertial Navigation}\label{msceqf_chp}

\emph{The present chapter contains results that have been peer-reviewed and published in the IEEE Robotics and Automation Letters~\cite{Fornasier2023MSCEqF:Navigation}.}
\bigskip

In the preceding chapters, we learned that improved convergence, filter consistency, and robustness are key properties of the theoretical framework introduced in this dissertation. These properties are of paramount importance for modern, robust inertial navigation and are a direct consequence of the choice of symmetry.

Turning our attention to vision-aided \ac{ins} systems, the lack of robustness against unexpected disturbances and the requirement for sophisticated tuning for a given environment and setup remain important limitations. Real-world deployments are typically constrained to precise tuning and highly engineered codebases, where the core \acf{vio} algorithm is encompassed by numerous modules responsible for tasks such as initialization, failure detection, algorithm reset, and more. A \emph{people's \acl{vio}}, that is, an algorithm whose operation requires minimal knowledge, little to no tuning, and yet still functions in many different real-world scenarios, would enable a whole new tranch of real-world applications without the requirement of having highly trained engineers available. 

The present chapter builds upon the established theoretical framework and introduces the \emph{\acf{msceqf}}, an \acl{eqf} approach for real-time vision-aided inertial navigation, as a step toward achieving the aforementioned goal. In this context, we present a performance evaluation that not only considers standard performance measures such as \ac{rmse}, accuracy, and precision, but also provides complimentary evaluations on metrics such as the likelihood of failure for poor initial conditions or poor calibration. Furthermore, the real-time capabilities and performance of the proposed \ac{msceqf} are evaluated in a closed-loop control of a resource-constrained aerial platform.

We will show that the proposed \ac{msceqf} demonstrates consistency naturally without artificial changes of linearization points and very high robustness to poor extrinsic calibration. It not only handles significant absolute (calibration) errors but also addresses the concept of \emph{dealing with ``you don't know what you don't know''}, such as errors exceeding the prior covariance, for example, sudden changes of calibrations states due to a disturbance during the operational phase of the robotic platform, when the state has converged already, and the covariance has shrunk.





\begin{figure}[htp]
\centering
\includegraphics[width=\linewidth]{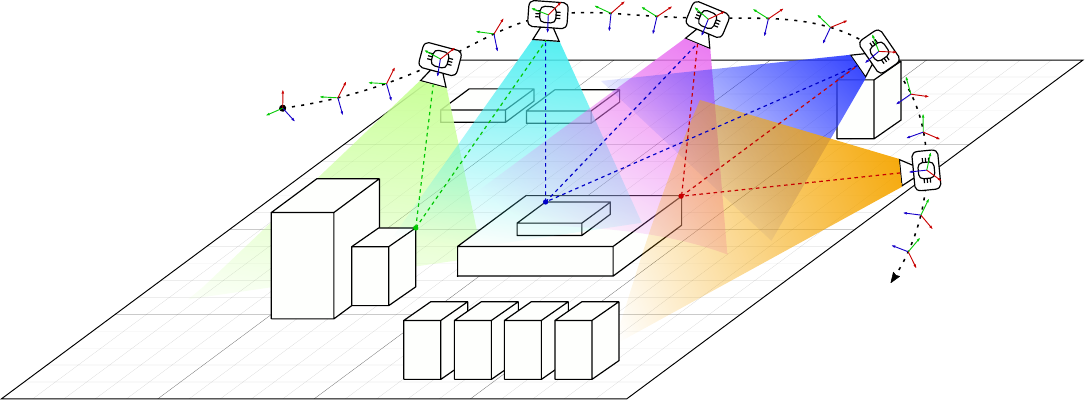}
\caption[Graphical representation of a vision-aided \ac{ins} systems.]{Graphical representation of a vision-aided \ac{ins} systems. A robot carrying a camera and an \ac{imu} moves freely in space and, by fusing information from these sensors, can estimate its own motion.}
\label{msceqf_vio}
\end{figure}

\section{Visual Inertial Navigation System}\label{msceqf_vins_sec}

Consider a mobile platform equipped with a camera observing global visual features and an \ac{imu} providing biased acceleration and angular velocity measurements, as depicted in \cref{msceqf_vio}. Consider the system described in \cref{bins_chp,bins_bins} extended with two extra state variables, $\PoseS{I}{C}$ and $\mathbf{K}$, representing respectively the camera extrinsic and intrinsic calibration parameters. Dropping superscripts and subscripts, the visual-inertial navigation system may then be written in compact form as in \cref{bins_bins_se23}, hence
\begin{subequations}\label{msceqf_vins}
    \begin{align}
        &\dotPose{}{} = \Pose{}{}\left(\mathbf{W} - \mathbf{B} + \mathbf{N}\right) + \left(\mathbf{G} - \mathbf{N}\right)\Pose{}{} ,\\
        &\dotVector{}{b}{} =  \Vector{}{\tau}{} ,\\
        &\dot{\mathbf{S}} = \mathbf{S}\Vector{}{\mu}{}^{\wedge} ,\\
        &\dot{\mathbf{K}} = \mathbf{K}\Vector{}{\zeta}{}^{\wedge} ,
    \end{align}
\end{subequations}
where $\bm{\tau}, \bm{\mu}, \bm{\zeta}$ are used to model the deterministic dynamics of the bias and calibration states and are zero when these states are modeled as constants, as they are in our formulation.

Define $\xi_{I} = \left(\Pose{}{}, \Vector{}{b}{}\right) \in \torSE_2(3) \times \R^{6}$ to be the inertial navigation state.
Define $\xi_{S} = \left(\mathbf{S}, \mathbf{K}\right) \in \torSE(3) \times \torIN(3)$ to be the camera calibration state.
Then the full system state is defined as $\xi = \left(\xi_{I}, \xi_{S}\right) \in \calM \coloneqq \torSE_2(3) \times \R^{6} \times \torSE(3) \times \torIN(3)$.
Define $u = \left(\Vector{}{w}{}, \Vector{}{\tau}{}, \Vector{}{\mu}{}, \Vector{}{\zeta}{}\right) \in \mathbb{L} \subset \R^{18}$ to be the system's input.
Note that in this work, \deleted{as in other works on multi-state constraint VIO~\cite{Wu2017AnConsistency, Sun2018RobustFlight, li2013high},} visual features are not considered as part of the state since the dependency of measurement on features is removed through nullspace projection.

Without loss of generality, let us consider the case of a single feature $\Vector{}{p}{f}$.
The camera measurement is modeled as the measurement of the bearing of the feature $\Vector{}{p}{f}$ seen from the camera. 
\begin{equation}\label{msceqf_h}
    h\left(\xi, \Vector{}{p}{f}\right) = \mathbf{K}\pi\left(\left(\mathbf{P}\mathbf{S}\right)^{-1} * \Vector{}{p}{f}\right) ,
\end{equation}
where the operation $* \AtoB{\torSE(3) \times \R^3}{\R^3}$ is defined by ${\PoseP{}{} * \Vector{}{x}{} = \Rot{}{}\Vector{}{x}{} + \Vector{}{p}{}}$  for all ${ \PoseP{}{} = \left(\Rot{}{}, \Vector{}{p}{}\right) \in \torSE(3),\; \Vector{}{x}{} \in \R^3}$. $\pi$ represents the projection method, and it can either be the projection on the unit plane $\pi_{\mathbb{Z}_1}$ in \cref{eq_piz1}, or the projection on the unit sphere $\pi_{\mathbb{S}^2}$ in \cref{eq_pis2}.

\subsection{Symmetry of the visual-inertial navigation system}

The symmetry for the \ac{ins} state $\xi_{I}$ is given by the semi-direct bias symmetry group $\grpB \coloneqq \left(\SE_2(3) \ltimes \gothse(3)\right)$ presented in \cref{bins_sdb_sec,bins_sd_linan}. The semi-direct bias symmetry group is extended with an instance of the special Euclidean group $\SE(3)$ to account for the camera extrinsic calibration states and an instance of the intrinsic group $\IN$ of dimension $4$ to account for the camera intrinsic calibration states without the skew parameter. The complete symmetry for the visual-inertial navigation system is thus defined by the product group $\grpG \coloneqq \grpB \times \SE(3) \times \IN$.

Let ${X = \left(\left(D,\delta\right), E, L\right) \in \grpG}$, with ${D = \left(A, a, b\right) \in \SE_2(3)}$ such that ${A \in \SO(3),\; a,b \in \R^{3}}$. Define the subgroups ${B = \chi\left(D\right) \in \SE(3)}$, and ${C = \Theta\left(D\right) \in \SE(3)}$, with ${\chi(\cdot),\; \Theta(\cdot)}$ defined in \cref{math_maps_sec}. Finally, define ${E \in \SE(3)}$, and ${L \in \IN}$.
\begin{lemma}
Define ${\phi \AtoB{\grpG \times \calM}{\calM}}$ as
\begin{equation}\label{msceqf_phi}
    \phi\left(X, \xi\right) \coloneqq \left(\Pose{}{}D, \AdMsym{B^{-1}}\left(\Vector{}{b}{} - \delta^{\vee}\right), C^{-1}\mathbf{S}E, \mathbf{K}L\right) \in \calM .
\end{equation}
Then, $\phi$ is a transitive right group action of $\grpG$ on $\calM$.
\end{lemma}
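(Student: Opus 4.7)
I would verify the three defining properties of a transitive right group action by exploiting the product structure $\grpG = \grpB \times \SE(3) \times \IN$ to decouple the verification component-wise. The identity check $\phi(I, \xi) = \xi$ is immediate: at $I = ((I_{\SE_2(3)}, 0), I_{\SE(3)}, I_{\IN})$ one has $B = C = I_{\SE(3)}$, $\delta = 0$, $E = I_{\SE(3)}$, and $L = I_{\IN}$, which leaves each component of $\xi$ unchanged.

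For the right composition law $\phi(X, \phi(Y, \xi)) = \phi(YX, \xi)$, I would split the check into the four state components. The navigation extended pose and the IMU bias components evolve exactly as under the semi-direct bias action of $\grpB$ already established in \cref{bins_sdb_sec}, so these sub-verifications would be cited rather than recomputed. The intrinsic calibration component reduces to plain right multiplication $\mathbf{K}L_Y L_X$ in $\IN$, which is a right action by construction. The only non-routine piece is the extrinsic calibration $\mathbf{S}$: two successive applications yield $C_X^{-1}C_Y^{-1}\mathbf{S}\, E_Y E_X$, whereas $\phi(YX, \xi)$ yields $C_{YX}^{-1}\mathbf{S}\, E_Y E_X$ with $C_{YX} = \Theta(D_Y D_X)$. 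Matching the two expressions therefore reduces to showing that $\Theta \AtoB{\SE_2(3)}{\SE(3)}$ is a group homomorphism, i.e., $\Theta(D_Y)\,\Theta(D_X) = \Theta(D_Y D_X)$. This is the main, though mild, obstacle in the composition argument; I would dispatch it by a direct matrix computation using the $\SE_2(3)$ product formula from \cref{math_se23_sec}, where the translation block $b$ evolves independently of $a$ under the semi-direct product and thus follows the same rule as a plain $\SE(3)$ product.

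Transitivity would be established by explicit construction of a mapping element. Given any $\xi_1, \xi_2 \in \calM$, I would set $D_Z = \Pose{}{1}^{-1}\Pose{}{2}$ to match the navigation extended pose, then $\delta_Z^{\vee} = \Vector{}{b}{1} - \AdMsym{B_Z}\Vector{}{b}{2}$ with $B_Z = \chi(D_Z)$ to match the bias, then $E_Z = \mathbf{S}_1^{-1}\,\Theta(D_Z)\,\mathbf{S}_2$ to match the extrinsic calibration, and finally $L_Z = \mathbf{K}_1^{-1}\mathbf{K}_2$ to match the intrinsic calibration. Each required inverse exists in its respective Lie group, so the candidate $Z \in \grpG$ is well-defined; substituting back into \cref{msceqf_phi} and using the identities verified above shows $\phi(Z, \xi_1) = \xi_2$, completing the proof of transitivity.
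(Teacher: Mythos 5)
Your proposal is correct and follows essentially the same route the paper takes for its analogous lemmas: the paper states this particular lemma without proof, deferring implicitly to the verification given for the tangent-group action in \cref{bins_tg_sec} (and the remark in \cref{bins_sdb_sec} that the $\grpB$ case follows the same pattern), and your component-wise check of the right-action law plus the explicit construction of the transitive element $Z$ reproduces exactly that argument extended to the calibration factors. The one detail you make explicit that the paper leaves implicit — that $\Theta$ (and likewise $\chi$) is a group homomorphism because the $b$-block of the $\SE_2(3)$ product $(A_1A_2,\, a_1+A_1a_2,\, b_1+A_1b_2)$ obeys the $\SE(3)$ composition rule — is correct and is indeed the only point where the composition check is not immediate.
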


\subsection{Lifted system}

To define a lifted system on the symmetry group $\grpG$ that projects down to the original system dynamics via the proposed group action $\phi$, a lift ${\Lambda \AtoB{\calM \times \vecL}{\gothg}}$ is required. The transitivity of $\phi$ guarantees the existence of such a lift, and the following theorem provides an explicit form for a lift of the system studied in this chapter.

\begin{theorem}
Define the map ${\Lambda \AtoB{\calM \times \vecL}{\gothg}}$ by
\begin{align*}
    \Lambda\left(\xi, u\right) &\coloneqq \left(\left(\Lambda_1\left(\xi, u\right), \Lambda_2\left(\xi, u\right)\right), \Lambda_3\left(\xi, u\right), \Lambda_4\left(\xi, u\right)\right),
\end{align*}
where ${\Lambda_1 \AtoB{\calM \times \vecL} \se_2(3)}$, ${\Lambda_2 \AtoB{\calM \times \vecL} \se(3)}$, ${\Lambda_3 \AtoB{\calM \times \vecL} \se(3)}$, and ${\Lambda_4 \AtoB{\calM \times \vecL} \mathfrak{in}}$ are given by
\begin{subequations}\label{msceqf_lift}
    \begin{align}
        &\Lambda_1\left(\xi, u\right) \coloneqq \left(\mathbf{W} - \mathbf{B} + \mathbf{N}\right) + \Pose{}{}^{-1}\left(\mathbf{G} - \mathbf{N}\right)\Pose{}{} ,\\
        &\Lambda_2\left(\xi, u\right) \coloneqq \left(\adMsym{\Vector{}{b}{}^{\wedge}}\left(\Pi\left(\Lambda_1\left(\xi, u\right)\right)^{\vee}\right) - \Vector{}{\tau}{}\right)^{\wedge},\\
        &\Lambda_3\left(\xi, u\right) \coloneqq \left(\AdMsym{\mathbf{S}^{-1}}\left(\Upsilon\left(\Lambda_1\left(\xi, u\right)\right)^\vee\right) + \Vector{}{\mu}{}\right)^{\wedge},\\
        &\Lambda_4\left(\xi, u\right) \coloneqq \Vector{}{\zeta}{}^{\wedge},
    \end{align}
\end{subequations}
Then ${\Lambda}$ is a lift for the system in~\cref{msceqf_vins} with respect to the symmetry group $\grpG$.
\end{theorem}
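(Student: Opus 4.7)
The plan is to verify the defining identity of a lift, namely $\td\phi_{\xi}(I)[\Lambda(\xi,u)] = f_u(\xi)$ from equation~\cref{eq_lift}, by exploiting the product structure of $\calM = \torSE_2(3)\times \R^{6} \times \torSE(3)\times \torIN(3)$. Because the action $\phi$ in~\cref{msceqf_phi} is defined coordinate-by-coordinate (with the two subgroup couplings $B=\chi(D)$ and $C=\Theta(D)$), the check splits naturally into four independent identities, one for each factor of $\calM$. For each factor, I would differentiate the expression $\phi_{i}(X,\xi)$ with respect to $X$ at the identity in the direction of the corresponding component of $\Lambda$ and show that the result matches the prescribed dynamics in~\cref{msceqf_vins}.

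Two of the four coordinates are immediate. For the intrinsic coordinate one has $\td\phi_{4}|_{I}[\Lambda_{4}] = \mathbf{K}\Lambda_{4} = \mathbf{K}\Vector{}{\zeta}{}^{\wedge}$, matching $\dot{\mathbf{K}}$. For the extended-pose coordinate the identity reduces to $\Pose{}{}\Lambda_{1}=\Pose{}{}(\mathbf{W}-\mathbf{B}+\mathbf{N}) + (\mathbf{G}-\mathbf{N})\Pose{}{}$, which is just the definition of $\Lambda_{1}$ multiplied on the left by $\Pose{}{}$. The extrinsic coordinate is slightly more involved: differentiating $C^{-1}\mathbf{S}E$ at the identity yields $-\Theta(\Lambda_{1})\mathbf{S}+\mathbf{S}\Lambda_{3}$, and substituting $\Lambda_{3}$ together with the identity $\mathbf{S}(\AdMsym{\mathbf{S}^{-1}}v)^{\wedge} = v^{\wedge}\mathbf{S}$ gives $\Upsilon(\Lambda_{1})\mathbf{S}+\mathbf{S}\Vector{}{\mu}{}^{\wedge}-\Theta(\Lambda_{1})\mathbf{S}$. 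Here I would invoke the fact that on the Lie-algebra side $\Theta$ and $\Upsilon$ project onto the same rotation and position components of $\se_{2}(3)$, so the two bracket-like terms cancel, leaving $\mathbf{S}\Vector{}{\mu}{}^{\wedge}=\dot{\mathbf{S}}$. The bias coordinate is handled by observing that the differential of $\chi$ at identity is $\Pi$, and that differentiating $\AdMsym{B^{-1}}$ at the identity in a direction $\eta\in\se(3)$ produces $-\adMsym{\eta}$; combining the two contributions coming from $D$ and from $\delta$ gives $-\adMsym{\Pi(\Lambda_{1})^{\vee}}\Vector{}{b}{} - \Lambda_{2}^{\vee}$, and antisymmetry of the bracket turns the first term into $\adMsym{\Vector{}{b}{}^{\wedge}}\Pi(\Lambda_{1})^{\vee}$, which cancels the corresponding term baked into $\Lambda_{2}$, leaving exactly $\Vector{}{\tau}{}=\dot{\Vector{}{b}{}}$.

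The main obstacle is the bias coordinate. Unlike the pose and intrinsic factors, where the lift is essentially read off the dynamics, $\Lambda_{2}$ is engineered to cancel a non-trivial bracket term coming from the derivative of $\AdMsym{B^{-1}}$ under the semi-direct product structure of $\grpB$. The crux of the argument is the careful book-keeping of the projections $\chi,\Pi$ (and their compatibility $\Pi=\td\chi|_{I}$) together with the antisymmetry identity $\adMsym{\Pi(\Lambda_{1})^{\vee}}\Vector{}{b}{} = -\adMsym{\Vector{}{b}{}^{\wedge}}\Pi(\Lambda_{1})^{\vee}$; once these are in place, the cancellation is algebraic. The analogous cancellation in the extrinsic coordinate, relying on $\Theta$-$\Upsilon$ compatibility, is a structurally identical argument and does not introduce new difficulties.
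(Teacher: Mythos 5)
Your proposal is correct and follows the same route the paper uses for its lift theorems (the paper omits the proof of this particular instance, but the analogous proofs in \cref{bas_chp} and \cref{bins_tg_sec} verify $\td\phi_{\xi}\left[\Lambda(\xi,u)\right] = f_u(\xi)$ factor by factor in exactly this way). Your key observations — that $\td\chi|_I = \Pi$ and $\td\Theta|_I = \Upsilon$, that $\mathbf{S}\left(\AdMsym{\mathbf{S}^{-1}}v\right)^{\wedge} = v^{\wedge}\mathbf{S}$ collapses the extrinsic terms, and that antisymmetry of the bracket cancels the $\adMsym{\Pi(\Lambda_1)^{\vee}}\Vector{}{b}{}$ term against the one built into $\Lambda_2$ — are precisely the computations needed.
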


\section{Multi state constraint equivariant filter}\label{msceqf_msceqf_sec}
\subsection{Filter state definition}
Define ${\hat{X} = \left(\left(\left(\hat{D}, \hat{\delta}\right), \hat{E}, \hat{L}\right), \hat{E}_1, \cdots, \hat{E}_k\right) \in \grpG \times \SE(3)^k}$ to be the filter's state evolving on the symmetry group. Similarly to the original formulation~\cite{Mourikis2007ANavigation} we maintain a sliding window of $k$ past $\hat{E}$ elements in the state of the filter, corresponding to the different times a camera measurement was collected.

\subsection{Error dynamics state matrix and input matrix}
Let ${e = \phi_{\hat{X}^{-1}}\left(\xi\right)}$ denote the equivariant error. Normal coordinates of the state space $\calM$ in a neighborhood of the origin $\xizero$ are defined as ${\varepsilon = \vartheta\left(e\right) \coloneqq \log\left(\phi_{\xizero}^{-1}\left(e\right)\right)^{\vee} \in \R^{25}}$, where ${\log \AtoB{\grpG}{\gothg}}$ is the logarithm of the symmetry group.

Recall the linearized error dynamics in~\cref{eq_A0_alt,eq_A0_alt_normal}
\begin{align*}
    &\dot{\varepsilon} \approx \mathbf{A}_{t}^{0}\varepsilon ,\\
    &\mathbf{A}_{t}^{0} = \Fr{e}{\xizero}\vartheta\left(e\right)\Fr{\xi}{\hat{\xi}}\phi_{\hat{X}^{-1}}\left(\xi\right)\Fr{E}{I}\phi_{\hat{\xi}}\left(E\right)\;\cdot\\
    & \quad\;\;\;\cdot \Fr{\xi}{\phi_{\hat{X}}\left(\xizero\right)}\Lambda\left(\xi, u\right)\Fr{e}{\xizero}
    \phi_{\hat{X}}\left(e\right)\Fr{\varepsilon}{\mathbf{0}}\vartheta^{-1}\left(\varepsilon\right)\\
    &\quad\; = \Fr{e}{\xizero}\vartheta\left(e\right)
    \Fr{E}{I}\phi_{\xizero}\left(E\right)\mathrm{Ad}_{\hat{X}}\;\cdot\\
    &\quad\;\;\;\cdot\Fr{\xi}{\phi_{\hat{X}}\left(\xizero\right)}\Lambda\left(\xi, u\right)
    \Fr{e}{\xizero}\phi_{\hat{X}}\left(e\right)
    \Fr{\varepsilon}{\mathbf{0}}\vartheta^{-1}\left(\varepsilon\right).
\end{align*}
The state matrix $\mathbf{A}_{t}^{0}$ is given by
\begin{equation}\label{msceqf_A}
    \mathbf{A}_{t}^{0} = 
    \begin{bmatrix}
        \prescript{}{1}{\mathbf{A}} & \prescript{}{2}{\mathbf{A}} & \Vector{}{0}{9 \times 6} & \Vector{}{0}{9 \times 4}\\
        \prescript{}{3}{\mathbf{A}} & \prescript{}{4}{\mathbf{A}} & \Vector{}{0}{6 \times 6} & \Vector{}{0}{6 \times 4}\\
        \prescript{}{5}{\mathbf{A}} & \prescript{}{6}{\mathbf{A}} & \prescript{}{7}{\mathbf{A}} & \Vector{}{0}{6 \times 4}\\
        \Vector{}{0}{4 \times 9} & \Vector{}{0}{4 \times 6} & \Vector{}{0}{4 \times 6} & \Vector{}{0}{4 \times 4}
    \end{bmatrix} \in \R^{25 \times 25},
\end{equation}
where
\begin{align*}
    & \prescript{}{1}{\mathbf{A}} =
    \begin{bNiceMatrix}[margin]
        \mathbf{\Psi} - \adMsym{\mathring{\Vector{}{b}{}}} & \Block{1-2}{\Vector{}{0}{6 \times 3}}\\
        \left(\mathring{\Rot{}{}}^T\mathring{\Vector{}{v}{}}\right)^{\wedge} - \hat{b}^{\wedge}\mathring{\Vector{}{b}{\omega}}^{\wedge} & \eye_3 & \Vector{}{0}{3 \times 3}
    \end{bNiceMatrix} \in \R^{9 \times 9},\\
    & \prescript{}{2}{\mathbf{A}} =
    \begin{bmatrix}
        \eye_3 & \Vector{}{0}{3 \times 3} \\
        \Vector{}{0}{3 \times 3} & \eye_3 \\
        \hat{b}^{\wedge} & \Vector{}{0}{3 \times 3}
    \end{bmatrix} \in \R^{9 \times 6}, \\
    & \prescript{}{3}{\mathbf{A}} = \begin{bmatrix}
        \adMsym{\mathring{\Vector{}{b}{}}}\mathbf{\Psi} - \adMsym{\left(\AdMsym{\hat{B}}\Vector{}{w}{} +  \hat{\delta}^{\vee} + \Vector{}{\theta}{}\right)}\adMsym{\mathring{\Vector{}{b}{}}} & \mathbf{0}_{6 \times 3}
    \end{bmatrix} \in \R^{6 \times 9},\\
    & \prescript{}{4}{\mathbf{A}} = \adMsym{\left( \AdMsym{\hat{B}}\Vector{}{w}{} + \hat{\delta}^{\vee} + \Vector{}{\theta}{}\right)}\in \R^{6 \times 6},\\
    & \prescript{}{5}{\mathbf{A}} = \AdMsym{\mathring{\mathbf{S}}^{-1}}
    \begin{bmatrix}
    -\Vector{}{\psi}{1}^{\wedge} & \Vector{}{0}{3 \times 3} & \Vector{}{0}{3 \times 3}\\
    -\Vector{}{\psi}{3}^{\wedge} - \mathring{\Vector{}{b}{\omega}}^{\wedge}\hat{b}^{\wedge} & \eye_3 & -\Vector{}{\psi}{2}^{\wedge}
    \end{bmatrix} \in \R^{6 \times 9},\\
    & \prescript{}{6}{\mathbf{A}} = \AdMsym{\mathring{\mathbf{S}}^{-1}}
    \begin{bmatrix}
        \eye_3 & \Vector{}{0}{3 \times 3}\\
        \hat{b}^{\wedge} & \Vector{}{0}{3 \times 3}
    \end{bmatrix} \in \R^{6 \times 6},\\
    & \prescript{}{7}{\mathbf{A}} = \adMsym{\left(\AdMsym{\mathring{\mathbf{S}}^{-1}}\Vector{}{\varrho}{}\right)} \in \R^{6 \times 6} ,
\end{align*}
with
\begin{align*}{}
    &\Vector{}{\psi}{1} = \hat{A}\Vector{}{\omega}{} + \delta_{\omega}^{\vee} \in \R^{3}, && \Vector{}{\theta}{} = \left(\Vector{}{0}{3 \times 1}, \mathring{\Rot{}{}}^T\Vector{}{g}{}\right) \in \R^6,\\
    &\Vector{}{\psi}{2} = \Vector{}{\psi}{1} - \mathring{\Vector{}{b}{\omega}} \in \R^{3}, && \mathbf{\Psi} =
    \begin{bmatrix}
        \Vector{}{0}{3 \times 3} & \Vector{}{0}{3 \times 3} \\
        \left(\mathring{\Rot{}{}}^T\Vector{}{g}{}\right)^{\wedge} & \Vector{}{0}{3 \times 3}
    \end{bmatrix} \in \R^{6 \times 6} ,\\
    &\Vector{}{\psi}{3} = \hat{a} - \Vector{}{\psi}{1}^{\wedge}\hat{b} \in \R^{3}, && \Vector{}{\varrho}{} = \left(\Vector{}{\psi}{2}, \Vector{}{\psi}{4}\right) \in \R^{3}. \\
    &\Vector{}{\psi}{4} = \hat{a} + \mathring{\Rot{}{}}^T\mathring{\Vector{}{v}{}} - \Vector{}{\psi}{2}^{\wedge}\hat{b} \in \R^{3}.
\end{align*}
The discrete-time state transition matrix is defined by ${\mathbf{\Phi} = \exp\left(\mathbf{A}_{t}^{0} \Delta T\right)}$ for time steps $\Delta T$.

Similarly, the matrix $\mathbf{B}_t$ is derived solving \cref{eq_B}:
\begin{equation}\label{msceqf_B}
    \mathbf{B}_t = \begin{bmatrix}
        \prescript{}{1}{\mathbf{B}} & \Vector{}{0}{9 \times 6}\\
        \prescript{}{2}{\mathbf{B}} & \prescript{}{3}{\mathbf{B}}\\
        \prescript{}{4}{\mathbf{B}} & \Vector{}{0}{6 \times 6}\\
        \Vector{}{0}{6 \times 6} & \Vector{}{0}{6 \times 6}
    \end{bmatrix} \in \R^{25 \times 12}
\end{equation}
with
\begin{align*}
    &\prescript{}{1}{\mathbf{B}} = \AdMsym{\hat{D}}\begin{bmatrix}
        \eye_3 & \Vector{}{0}{3 \times 3}\\
        \Vector{}{0}{3 \times 3} & \eye_3\\
        \Vector{}{0}{3 \times 3} & \Vector{}{0}{3 \times 3}
    \end{bmatrix} \in \R^{9 \times 6}\\
    &\prescript{}{2}{\mathbf{B}} = \adMsym{\mathring{\Vector{}{b}{}}}\AdMsym{\hat{B}} \in \R^{6 \times 6}\\
    &\prescript{}{3}{\mathbf{B}} = - \AdMsym{\hat{B}} \in \R^{6 \times 6}\\
    &\prescript{}{4}{\mathbf{B}} = \AdMsym{\mathring{\mathbf{S}}^{-1}\hat{C}}
    \begin{bmatrix}
        \eye_3 & \Vector{}{0}{3 \times 3}\\
        \Vector{}{0}{3 \times 3} & \Vector{}{0}{3 \times 3}
    \end{bmatrix} \in \R^{6 \times 6}
\end{align*}

\subsection{Multi state constraint}
Consider the measurement model in~\cref{msceqf_h}; applying the action of the symmetry group to the state space in~\cref{msceqf_phi} yields
\begin{equation}
    h\left(\phi_{X}\left(\xi\right)\right) = \mathbf{K}L \pi\left(E^{-1}\left(\mathbf{P}\mathbf{S}\right)^{-1} * \Vector{}{p}{f}\right)
\end{equation}
Recall the equivariant error ${e = \phi_{\hat{X}^{-1}}\left(\xi\right) =  \vartheta^{-1}\left(\varepsilon\right)}$.
Define the feature error ${\tilde{y} = \varsigma\left(\Vector{}{p}{f}\right) - \varsigma\left(\hatVector{}{p}{f}\right)}$, where $\varsigma\left(\cdot\right)$ represents the feature parametrization.
The true feature can then be written as ${\Vector{}{p}{f} = \varsigma^{-1}\left(\varsigma\left(\hatVector{}{p}{f}\right) + \tilde{y}\right)}$. 
Therefore, the measurement model in~\cref{msceqf_h} can be linearized at ${\varepsilon = \mathbf{0}}$, and ${\tilde{y} = \mathbf{0}}$ as follows:
\begin{equation}\label{msceqf_hlin}
\begin{split}
    h\left(\xi, \Vector{}{p}{f}\right) &= h\left(\phi_{\hat{X}}\left(\vartheta^{-1}\left(\varepsilon\right)\right), \varsigma^{-1}\left(\varsigma\left(\hatVector{}{p}{f}\right) + \tilde{y}\right)\right)\\
    &= h\left(\hat{\xi}, \hatVector{}{p}{f}\right) + \mathbf{C}_t\varepsilon + \mathbf{C}^{f}_t\tilde{y} + \cdots .
\end{split}
\end{equation}

Let ${^A\mathbf{P} ^A\mathbf{S}}$ be the pose of the anchor, defined as the pose of the camera where we want the feature to be parametrized in, often chosen to be the pose of the camera where the feature has been first seen. Define the feature in the anchor frame as ${\Vector{}{a}{f} = \left(^A\mathbf{P} ^A\mathbf{S}\right)^{-1} * \Vector{}{p}{f}}$, with ${\Vector{}{a}{f} = \left(a_{f_x}, a_{f_y}, a_{f_z}\right) \in \R^3}$.
Then, the matrix ${\mathbf{C}^{f}_t}$ is derived for different feature parametrization.

\paragraph{Anchored Euclidean.}
The anchored Euclidean parametrization is written
\begin{align}\label{msceqf_ae_param}
    &\Vector{}{z}{} = \varsigma\left(\Vector{}{p}{f}\right) = \left(^A\mathbf{P} ^A\mathbf{S}\right)^{-1} * \Vector{}{p}{f} = \Vector{}{a}{f} = \left(a_{f_x}, a_{f_y}, a_{f_z}\right) ,\\
    &\Vector{}{p}{f} = \varsigma^{-1}\left(\Vector{}{z}{}\right) = \left(^A\mathbf{P} ^A\mathbf{S}\right) * \Vector{}{a}{f} .
\end{align}
Then the matrix ${\mathbf{C}^{f}_t}$ is written
\begin{equation}\label{msceqf_ae_Cf}
    \begin{split}
        \mathbf{C}^{f}_t\tilde{y} &= 
        \mathring{\mathbf{K}}\hat{L}d_{\pi}
        \Gamma\left(\left(\hat{\mathbf{P}}\hat{\mathbf{S}}\right)^{-1} {^A\hat{\mathbf{P}}} {^A\hat{\mathbf{S}}}\right)\tilde{y} \\
        &= \mathring{\mathbf{K}}\hat{L}d_{\pi}
        \Gamma\left({\hat{E}^{-1}} {^A\hat{E}}\right)\tilde{y},
    \end{split}
\end{equation}
where we have used ${\hat{\xi} \coloneqq \phi_{\hat{X}}\left(\xizero\right)}$ to map between the estimated state in the homogeneous space $\hat{\xi}$, and the estimated state in the symmetry group $\hat{X}$. Therefore
\begin{equation*}
    \left(\hat{\mathbf{P}}\hat{\mathbf{S}}\right)^{-1} {^A\hat{\mathbf{P}}} {^A\hat{\mathbf{S}}} = \hat{E}^{-1}\mathring{\PoseS{}{}}^{-1}\hat{C}\hat{C}^{-1}\mathring{\PoseP{}{}}^{-1}\mathring{\PoseP{}{}}^A\hat{C}^A\hat{C}^{-1}\mathring{\PoseS{}{}}^A\hat{E} = \hat{E}^{-1}{^A\hat{E}} .
\end{equation*}
Note that we can choose to parametrize the feature in the global fame, hence choosing $^A\mathbf{P} ^A\mathbf{S} = \eye$. In this case, it is trivial to see that $^A\hat{E} = \left(\mathring{\PoseP{}{}}\mathring{\PoseS{}{}}\right)^{-1}$.

\paragraph{Anchored inverse depth.} 
The anchored inverse depth parametrization~\cite{Civera2008InverseSLAM, Mourikis2007ANavigation} is written
\begin{align}\label{msceqf_aid_param}
    &\Vector{}{z}{} = \varsigma\left(\Vector{}{p}{f}\right) = \left(\Vector{}{z}{1}, z_2\right) = \left(\left(\frac{a_{f_x}}{a_{f_z}}, \frac{a_{f_y}}{a_{f_z}}\right), \frac{1}{a_{f_z}}\right) ,\\
    &\Vector{}{p}{f} = \varsigma^{-1}\left(\Vector{}{z}{}\right) = \left(^A\mathbf{P} ^A\mathbf{S}\right) * \begin{bmatrix}
        \frac{\Vector{}{z}{1}}{z_2}\\
        \frac{1}{z_2}
    \end{bmatrix} .
\end{align}
Then the matrix ${\mathbf{C}^{f}_t}$ is written
\begin{equation}\label{msceqf_aid_Cf}
    \begin{split}
        \mathbf{C}^{f}_t\tilde{y} &= 
        \mathring{\mathbf{K}}\hat{L}d_{\pi}
        \Gamma\left(\left(\hat{\mathbf{P}}\hat{\mathbf{S}}\right)^{-1} {^A\hat{\mathbf{P}}} {^A\hat{\mathbf{S}}}\right)\frac{1}{\hat{z}_2}
        \begin{bmatrix}
            \eye_2 & -\frac{\hatVector{}{z}{1}}{\hat{z}_2}\\
            \Vector{}{0}{1 \times 2} & -\frac{1}{\hat{z}_2}
        \end{bmatrix}
        \tilde{y} \\
        &= \mathring{\mathbf{K}}\hat{L}d_{\pi}
        \Gamma\left({\hat{E}^{-1}} {^A\hat{E}}\right)\frac{1}{\hat{z}_2}
        \begin{bmatrix}
            \eye_2 & -\frac{\hatVector{}{z}{1}}{\hat{z}_2}\\
            \Vector{}{0}{1 \times 2} & -\frac{1}{\hat{z}_2}
        \end{bmatrix}
        \tilde{y}.
    \end{split}
\end{equation}

\paragraph{Anchored polar parametrization.}
The anchored polar parametrization, is the anchored version of the polar parametrization recently introduced in~\cite{vanGoor2023EqVIO:Odometry}. In particular, it is defined by normal coordinates of $\SOT(3)$ about ${\mathring{\Vector{}{c}{f}} = \left(\mathring{\mathbf{P}}\mathring{\mathbf{S}}\right)^{-1} * \mathring{\Vector{}{p}{f}}}$ as follows
\begin{align}
    &\Vector{}{z}{} = \varsigma\left(\Vector{}{p}{f}\right) = \left(z_1, \Vector{}{z}{2}\right) = \left(\log\left(\frac{\norm{\mathring{\Vector{}{c}{f}}}}{\norm{\Vector{}{a}{f}}}\right), -\arccos\left(\frac{\mathring{\Vector{}{c}{f}}^T\Vector{}{a}{f}}{\norm{\mathring{\Vector{}{c}{f}}}\norm{\Vector{}{a}{f}}}\right)\frac{\mathring{\Vector{}{c}{f}} \times \Vector{}{a}{f}}{\norm{\mathring{\Vector{}{c}{f}} \times \Vector{}{a}{f}}}\right) ,\\
    &\Vector{}{p}{f} = \varsigma^{-1}\left(\Vector{}{z}{}\right) = \left(^A\mathbf{P} ^A\mathbf{S}\right) * \exp_{\SOT(3)}\left(\Vector{}{z}{}^{\wedge}\right)^{-1}\mathring{\Vector{}{c}{f}} .
\end{align}
Then the matrix ${\mathbf{C}^{f}_t}$ is written
\begin{equation}\label{msceqf_ap_Cf}
    \begin{split}
        \mathbf{C}^{f}_t\tilde{y} &= 
        \mathring{\mathbf{K}}\hat{L}d_{\pi}
        \Gamma\left(\left(\hat{\mathbf{P}}\hat{\mathbf{S}}\right)^{-1} {^A\hat{\mathbf{P}}} {^A\hat{\mathbf{S}}}\right)
        \begin{bmatrix}
            \Vector{}{a}{f}^{\wedge}\mathbf{J}_{L}\left(-\Vector{}{z}{2}\right) & -\Vector{}{a}{f}
        \end{bmatrix}\tilde{y}\\
        &= 
        \mathring{\mathbf{K}}\hat{L}d_{\pi}
        \Gamma\left({\hat{E}^{-1}} {^A\hat{E}}\right)
        \begin{bmatrix}
           \Vector{}{a}{f}^{\wedge}\mathbf{J}_{L}\left(-\Vector{}{z}{2}\right) & -\Vector{}{a}{f}
        \end{bmatrix}\tilde{y}
    \end{split}
\end{equation}

According to \cref{msceqf_hlin}, the matrix ${\mathbf{C}_t}$ is defined, for any given parametrization, by
\begin{equation}\label{msceqf_Ct}
    \begin{split}
        \mathbf{C}_t\varepsilon &= \Fr{\xi}{\hat{\xi}}h\left(\xi\right)\Fr{e}{\mathring{\xi}}\phi_{\hat{X}}\Fr{\varepsilon}{\mathbf{0}}\vartheta^{-1}\left(\varepsilon\right)\left[\varepsilon\right]\\
        &= \mathring{\mathbf{K}}
        \hat{L}
        d_{\pi}\Gamma\left(\hat{E}^{-1}\right)
        \begin{bmatrix}
        \left({^AE}\hatVector{}{a}{f}\right)^{\wedge} & -\eye_3
        \end{bmatrix}
        \varepsilon_{E}\; -\\
        &- \mathring{\mathbf{K}}
        \hat{L}
        d_{\pi}\Gamma\left(\hat{E}^{-1}\right)
        \begin{bmatrix}
        \left({^AE}\hatVector{}{a}{f}\right)^{\wedge} & -\eye_3
        \end{bmatrix}
        \varepsilon_{^AE}\; +\\
        &+
        \mathring{\mathbf{K}}\Xi\left(\hat{L}\pi\left(\hat{E}^{-1}{^AE}\hatVector{}{a}{f}\right)\right)
        \varepsilon_L ,
    \end{split}
\end{equation}
where $\varepsilon_{E}$, and $\varepsilon_{^AE}$ represent respectively the error in normal coordinates for the element $E$ of the symmetry group corresponding to the most recent clone and to the anchor clone, whereas $\varepsilon_{L}$ represent the error in normal coordinates that is related to the camera intrinsics.

To compute the matrices ${\mathbf{C}^{f}_t}$, and  ${\mathbf{C}_t}$ above, an estimate of the feature position in the anchor frame is required. To this end, when a feature has been seen from multiple views, a linear-nonlinear least square problem is solved~\cite{Mourikis2007ANavigation, Geneva2020OpenVINS:Estimation}.

\paragraph{Linear feature triangulation.}

Consider the following relation between the feature expressed in the camera frame $\Vector{}{c}{f}$ and the feature expressed in the anchor frame $\Vector{}{a}{f}$:
\begin{equation}\label{msceqf_lintri_cf_af}
\begin{split}
    \Vector{}{c}{f} 
    &= \left({\mathbf{P}}{\mathbf{S}}\right)^{-1}{^A\mathbf{P}}{^A\mathbf{S}} * \Vector{}{a}{f}\\
    &= \left({E^{-1}}{^AE}\right) * \Vector{}{a}{f} ,
\end{split}
\end{equation}
The feature ${\Vector{}{c}{f}}$ can be expressed as ${\Vector{}{c}{f} = ^{C}z_f\Vector{C}{b}{f}}$, where ${\Vector{C}{b}{f}}$ is the bearing on the unit plane (the homogeneous vector of normalized ${u, v}$ coordinates) and $^{C}z_f$ its depth; thus
\begin{equation*}
    \Vector{}{c}{f} = \left(^{C}z_f\right)\left(\Vector{C}{b}{f}\right) 
    = \left(^{C}z_f\right)
    \begin{bmatrix}
        u\\
        v\\
        1\\
    \end{bmatrix} .
\end{equation*}
Substituting $\Vector{}{c}{f}$ in \cref{msceqf_lintri_cf_af}, and inverting it yields
\begin{align*}
    \Vector{}{a}{f} 
    &= \left({^AE^{-1}}{E}\right) * \Vector{}{c}{f} \\
    &= \left({^AE^{-1}}{E}\right) * \left(^{C}z_f\Vector{C}{b}{f}\right)\\
    &= \Gamma\left({^AE^{-1}}{E}\right)\left(^{C}z_f\Vector{C}{b}{f}\right) + \left({^AE^{-1}}{E}\right) * \Vector{}{0}{}\\
    &= ^{C}z_f\Gamma\left({^AE^{-1}}{E}\right)\Vector{C}{b}{f} + \left({^AE^{-1}}{E}\right) * \Vector{}{0}{}\\
    &= ^{C}z_f\Vector{A}{b}{f} + \left({^AE^{-1}}{E}\right) * \Vector{}{0}{} ,
\end{align*}
then, pre-multiplying both sides by ${\left(\Vector{A}{b}{f}\right)^{\wedge}}$ yields
\begin{equation*}
    \left(\Vector{A}{b}{f}\right)^{\wedge}\Vector{}{a}{f} = \left(\Vector{A}{b}{f}\right)^{\wedge}\left(\left({^AE^{-1}}{E}\right) * \Vector{}{0}{}\right) .
\end{equation*}
Therefore, a linear system in the form ${\mathbf{A}\Vector{}{a}{f} = \mathbf{b}}$ can be built with all the measurements of the same feature at different times:
\begin{equation*}
    \begin{bmatrix}
        \left(\Vector{A}{b}{f}^1\right)^{\wedge}\\
        \vdots\\
        \left(\Vector{A}{b}{f}^k\right)^{\wedge}
    \end{bmatrix}
    \Vector{}{a}{f} = 
    \begin{bmatrix}
        \left(\Vector{A}{b}{f}^1\right)^{\wedge}\left(\left({^AE^{-1}}{E_1}\right) * \Vector{}{0}{}\right)\\
        \vdots\\
        \left(\Vector{A}{b}{f}^n\right)^{\wedge}\left(\left({^AE^{-1}}{E_k}\right) * \Vector{}{0}{}\right)
    \end{bmatrix} .
\end{equation*}

\paragraph{Nonlinear feature triangulation.}
The solution of the linear system above can be used as an initial condition in a nonlinear least square refinement. Specifically, \cref{msceqf_lintri_cf_af} can be written as follows:
\begin{align*}
    \Vector{}{c}{f} 
    &= \left({E^{-1}}{^AE}\right) * \Vector{}{a}{f}\\
    &= \left({E^{-1}}{^AE}\right) * \varsigma^{-1}\left(\Vector{}{z}{}\right) ,
\end{align*}
Then, a nonlinear least square problem can be formulated:
\begin{equation*}
    \sum_{j = 1}^{k} \norm{\pi\left(\Vector{C}{b}{f}^j\right) - \pi\left(\left({E_j^{-1}}{^AE}\right) * \varsigma^{-1}\left(\Vector{}{z}{}\right)\right)}^2 ,
\end{equation*}
where ${\Vector{C}{b}{f}^j}$ is again the direct measurement of the feature in the $j$-th camera frame expressed in normalized coordinates.

\paragraph{Two-stage feature triangulation: Wahba's problem and ray intersection.}
The previously introduced polar parameterization can be exploited together with the projection function on the two-sphere to define a novel feature triangulation methodology based on a decomposition into two linear problems. Specifically, the triangulation problems seek to find a scaling term $^A\lambda$ first and a rotation matrix $^A\Rot{}{}$ after such that $^A\lambda ^A\Rot{}{} \mathring{\Vector{}{c}{f}} = \Vector{}{a}{f}$. Form \cref{msceqf_lintri_cf_af} we can write
\begin{align*}
    {^AE} * \Vector{}{a}{f}
    &= {E} * \Vector{}{c}{f} \\
    &= {E} * \lambda\pi_{\mathbb{S}^2}\left(\Vector{C}{b}{f}\right)\\
    {^AE} * ^A\lambda\pi_{\mathbb{S}^2}\left(\Vector{A}{b}{f}\right)
    &= {E} * \lambda\pi_{\mathbb{S}^2}\left(\Vector{C}{b}{f}\right)\\
    ^A\lambda\Gamma\left({^AE}\right)\pi_{\mathbb{S}^2}\left(\Vector{A}{b}{f}\right) + {^AE}*\Vector{}{0}{}    
    &= \lambda\Gamma\left({E}\right)\pi_{\mathbb{S}^2}\left(\Vector{C}{b}{f}\right) + {E}*\Vector{}{0}{}\\
    ^A\lambda\Vector{A}{x}{f} + {^AE}*\Vector{}{0}{} 
    &=\lambda\Vector{C}{x}{f} + {E}*\Vector{}{0}{} 
\end{align*}
Then, the following linear least square problem is built
\begin{equation*}
    \begin{bmatrix}
        \Vector{A}{x}{f}  & -\Vector{C}{x}{f}^1  & \cdots & \Vector{}{0}{}\\
        \vdots & \vdots & & \vdots\\
        \Vector{A}{x}{f} & \Vector{}{0}{} & \cdots & -\Vector{C}{x}{f}^k
    \end{bmatrix}
    \begin{bmatrix}
        ^A\lambda\\
        \lambda_1\\
        \vdots\\
        \lambda_k
    \end{bmatrix} =
    \begin{bmatrix}
        {E_1}*\Vector{}{0}{} - {^AE}*\Vector{}{0}{}\\
        {E_2}*\Vector{}{0}{} - {^AE}*\Vector{}{0}{}\\
        \vdots\\
        {E_k}*\Vector{}{0}{} - {^AE}*\Vector{}{0}{}
    \end{bmatrix}
\end{equation*}
Once a solution for ${^A\lambda,\lambda_1,\cdots,\lambda_n}$ is found, these can be used to build a Wahba's problem~\cite{Wahba2006AAttitude}.
\begin{align*}
    \Vector{}{a}{f} 
    &= \left({^AE^{-1}}{E}\right) * \Vector{}{c}{f} \\
    ^A\lambda^A\Rot{}{}\mathring{\Vector{}{c}{f}}
    &= \left({^AE^{-1}}{E}\right) * \Vector{}{c}{f}\\
    ^A\Rot{}{}\pi_{\mathbb{S}^2}\left(\mathring{\Vector{}{c}{f}}\right)
    &= \pi_{\mathbb{S}^2}\left(\left({^AE^{-1}}{E}\right) * \Vector{}{c}{f}\right) .
\end{align*}
Therefore, ${\Vector{}{z}{} = \left(\log\left(\frac{1}{^A\lambda}\right), \log\left(^A\Rot{}{}^{\top}\right)^{\vee}\right)}$

Finally, to remove the dependency of the features in \cref{msceqf_hlin}, and hence perform a filter update, we employ nullspace marginalization of the matrix ${\mathbf{C}^{f}_t}$, according to the original \acs{msckf} formulation~\cite{Mourikis2007ANavigation}.

\subsection{Equivariant zero velocity update}
For multi state constraint filter types, different strategies are proposed to better handle zero motion scenarios. Common approaches include optimized keyframe section~\cite{Allak2018Key-FrameFeatures, Kottas2013DetectingSystems} and \acf{zvu}~\cite{QIU2020LightweightUpdate}. While the former seeks to optimize the selection of cloned poses in the sliding window to always ensure sufficient parallax and hence a successful feature triangulation, the latter seeks to identify zero motion scenarios through heuristics and perform a zero velocity filter update. 

Here, we exploit an extended pose measurement with constant position and rotation and zero velocity to define a novel \emph{equivariant \acl{zvu}} formulation.
\begin{lemma}
    Define the \acl{zvu} measurement model $h\left(\xi\right)$ as follows:
    \begin{equation}\label{msceqf_zvu_h}
         h\left(\xi\right) = \Pose{}{} = (\Rot{}{}, \Vector{}{v}{}, \Vector{}{p}{}) \in \calN \coloneqq \torSE_2(3) .
    \end{equation}
    Let ${y = (\hatRot{}{}, \Vector{}{0}{}, \hatVector{}{p}{}) \in \calN}$ be a measurement defined according to the model in \cref{msceqf_zvu_h}, define ${\rho \AtoB{\grpG \times \calN}{\calN}}$ as
    \begin{equation}\label{msceqf_zvu_rho}
        \rho\left(X,y\right) \coloneqq yD.
    \end{equation}
    Then, the configuration output defined in \cref{msceqf_zvu_h} is equivariant.
\end{lemma}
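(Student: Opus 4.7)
The plan is to verify the two ingredients that together establish the claim: (i) that $\rho$ as defined in \cref{msceqf_zvu_rho} is indeed a well-defined right group action of $\grpG$ on $\calN = \torSE_2(3)$, and (ii) that the equivariance identity $h(\phi_X(\xi)) = \rho_X(h(\xi))$ holds pointwise for every $X \in \grpG$ and $\xi \in \calM$.

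For (i), I would take arbitrary $X = ((D_X, \delta_X), E_X, L_X)$, $Y = ((D_Y, \delta_Y), E_Y, L_Y)$ in $\grpG$ and $y \in \torSE_2(3)$, and check the two right-action axioms. The identity axiom is immediate since the identity of $\grpG$ projects to $D = I \in \SE_2(3)$ and $yI = y$. The compatibility axiom follows from the fact that $\rho$ only depends on the $\SE_2(3)$-component $D$ of $X$: one computes
\begin{equation*}
    \rho(Y, \rho(X, y)) = \rho(Y, yD_X) = yD_XD_Y = y D_{XY} = \rho(XY, y),
\end{equation*}
since the group product in $\grpG$ acts as the semi-direct/product rule of $\SE_2(3)$ on the relevant factor.

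For (ii), the verification reduces to a one-line projection argument: reading off the first component of $\phi(X, \xi)$ from \cref{msceqf_phi} gives $h(\phi_X(\xi)) = \mathbf{P}D$, while the definition of $\rho$ directly yields $\rho_X(h(\xi)) = \rho_X(\mathbf{P}) = \mathbf{P}D$. These coincide, which is exactly the equivariance condition \cref{eq_out_equi} for the output map in \cref{msceqf_zvu_h}.

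The only conceptual subtlety, rather than a computational obstacle, is to emphasize that equivariance is a property of the measurement model $h$, not of any particular ``measurement value'' $y = (\hatRot{}{}, \Vector{}{0}{}, \hatVector{}{p}{})$ constructed from filter estimates; the ZVU pseudo-measurement value is only invoked when defining the residual $\delta(\rho_{\hat{X}^{-1}}(y))$ used for the innovation, and does not enter the equivariance proof itself. Once this is noted, the proof is essentially a direct substitution and requires no further manipulation of the $\grpB$, $\SE(3)$, or $\IN$ factors.
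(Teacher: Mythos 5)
Your verification is correct and is exactly the (one-line substitution) argument the paper leaves implicit, since the first component of $\phi(X,\xi)$ is $\mathbf{P}D$ and $\rho_X(h(\xi)) = \mathbf{P}D$ by definition, while the right-action axioms for $\rho$ follow from the $\SE_2(3)$ factor of the group product. Your closing remark that equivariance concerns the output map $h$ rather than the particular pseudo-measurement value used to form the residual is also correct and worth keeping.
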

Local coordinates for the output are chosen to be logarithmic coordinates of $\SE_2(3)$, hence define
\begin{equation}
    \delta(y) = \log_{\SE_2(3)}(\mathring{\Pose{}{}}^{-1}y).
\end{equation}
Exploiting the equivariance of the \ac{zvu} output we can derive the $\mathbf{C}^{\star}$ matrix for the \acl{zvu} solving \cref{eq_C_star}
\begin{align*}
    \mathbf{C}^{\star}\varepsilon &= \frac{1}{2}\left(\left(\eye_9 + \mathring{\Pose{}{}}^{-1}y\hat{D}^{-1}\right){\varepsilon_D}^{\wedge}\right)^{\vee} = \varepsilon_D\\
    &= \begin{bmatrix} \eye_9 & \Vector{}{0}{9 \times 6} & \Vector{}{0}{9 \times 6} & \Vector{}{0}{9 \times 4} & \Vector{}{0}{9 \times 6} & \cdots & \Vector{}{0}{9 \times 6} \end{bmatrix}\varepsilon .
\end{align*}

In our approach, stationary detection is performed based on a threshold of the image disparity over a rolling window of image data. This approach can be extended with robust methodologies such as statistical stationarity test~\cite{Solin2018InertialSmartphones} or frequency-domain tecniques~\cite{Ramanandan2012InertialUpdates} on a rolling window of \ac{imu} data.

\subsection{Implementation details}
The proposed framework is implemented as a stand-alone C++ library, and it is source-available to the community\footnotemark. 
Wrappers for the standard middle-ware (e.g., ROS1, ROS2, etc.) are provided such that code is available for direct use and comparison against other approaches. 

\footnotetext{\href{https://github.com/aau-cns/MSCEqF}{https://github.com/aau-cns/MSCEqF}}

The vision frontend is implemented utilizing the OpenCV library~\cite{opencv_library}. The feature detector can be chosen between \emph{Good Feature to Track (GFTT)} and \emph{Fast}. The input image is divided into a $N \times M$ grid, and the feature extraction is executed in parallel. A minimum pixel distance between detected features is imposed; furthermore, sub-pixel refinement is performed. Detected features are subsequently tracked between consecutive images using the pyramidal Lucas-Kanade algorithm~\cite{lucas1981iterative} with the previous flow as the initial value, and RANSAC is used to remove outliers.

A High-level overview of the filter backend logic is shown in the flowchart in \cref{msceqf_flowchart}. At the beginning, the filter tries to initialize the state origin $\xizero$. This is done by collecting a window of \ac{imu} measurements during a still phase and computing the roll and pitch of the platform. Origin position and velocity are set to zero, and origin calibration is set to the provided values. The origin can also be set with given values, such as those provided by an external dynamic initialization module. Once the state origin is initialized, the feature extraction and matching, as well as the propagation, are executed in parallel on different threads. Once the propagation is performed, either a stochastic cloning or a \acl{zvu} is performed. Finally, when the \acl{zvu} is not active or the platform is moving, the multi state constraint update is performed.

\begin{figure}[htp]
\centering
\includegraphics[width=\linewidth]{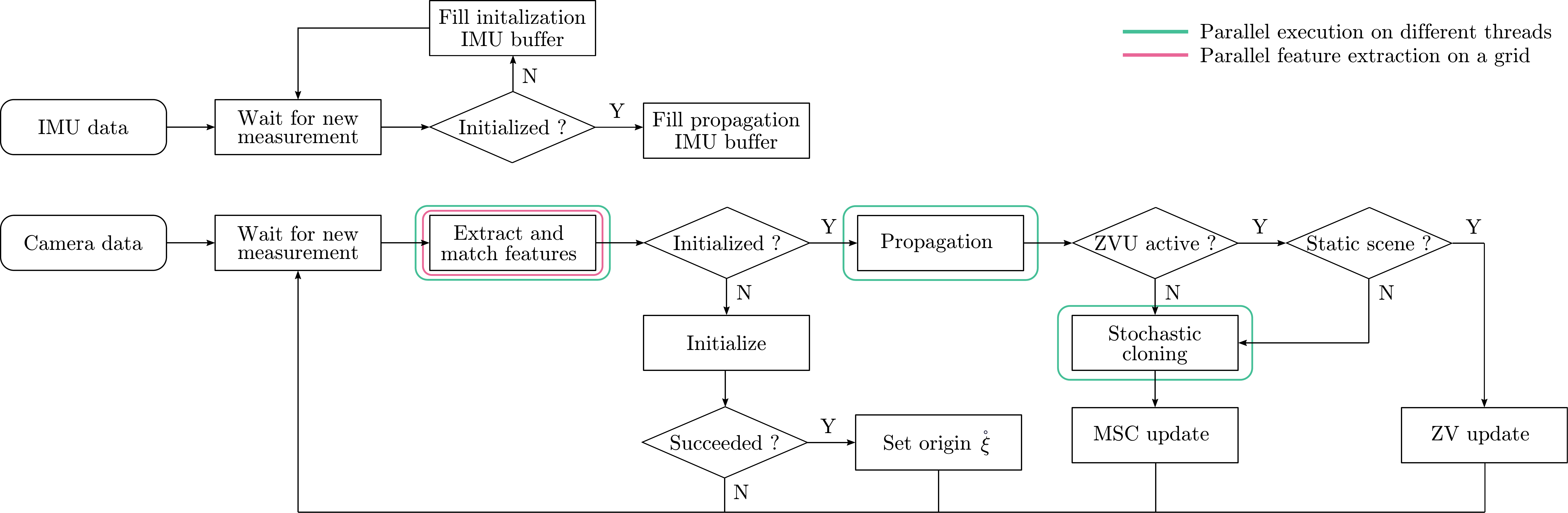}
\caption{\ac{msceqf} algorithm flowchart.}
\label{msceqf_flowchart}
\end{figure}


\section{Consistency properties of the MSCEqF}
An estimator is said to be consistent if the estimated covariance of the error reflects its real distribution; in other words, an estimator is consistent if the error is unbiased and within the sigma bounds of the estimated covariance. 
\emph{The \ac{msceqf} presented in \cref{msceqf_msceqf_sec} is a consistent estimator}. Its consistency property is proven by compatibility of the group action $\phi$ in \cref{msceqf_phi}, and invariance of the lift $\Lambda$ in \cref{msceqf_lift}, to reference frame transormations~\cite{vanGoor2023EqVIO:Odometry, Wu2017AnConsistency}. This ensures that the filter does not gain spurious information along the unobservable directions.

\begin{theorem}
Define $H{ \coloneqq (R_{H}, 0, p_{H}) \in \SE_2(3)}$, where ${R_{H} \in \SE_{\ethree}(3)}$ represent a anti-clockwise rotation about the vertical axis $\ethree$, and $p_{H}$ represent the a translation. Define the right group action ${\alpha \AtoB{\SE_2(3) \times \calM}{\calM}}$ such that ${\alpha(H, \xi) \coloneqq (H^{-1}\Pose{}{}, \Vector{}{b}{}, \mathbf{S}, \mathbf{K})}$ represents a change of reference, from \frameofref{G} to \frameofref{H} that leaves the direction of gravity unchanged.

Then the action of the symmetry group on the state space $\phi$ and the lift $\Lambda$ are respectively compatible and invariant with respect to change of reference, that is
\begin{align*}
    &\alpha(H,\phi(X,\xi)) = \phi(X, \alpha(H, \xi)),\\
    &\Lambda(\alpha(H,\xi), u) = \Lambda(\xi, u) .
\end{align*}
\end{theorem}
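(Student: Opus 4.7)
The plan is to prove the two claims separately and to exploit the observation that the action $\alpha$ only perturbs the extended pose component $\Pose{}{}$, leaving the bias, extrinsic, and intrinsic components untouched. For the compatibility identity $\alpha(H,\phi(X,\xi)) = \phi(X, \alpha(H, \xi))$, I would first expand both sides componentwise using the definitions of $\phi$ in \cref{msceqf_phi} and of $\alpha$. The bias, extrinsic, and intrinsic components of $\phi(X,\xi)$ depend only on $\Vector{}{b}{}$, $\mathbf{S}$, and $\mathbf{K}$ (together with $X$), which $\alpha$ leaves invariant, so these components agree on both sides trivially. The only nontrivial check is the extended pose component, which on one side equals $H^{-1}(\Pose{}{}D)$ and on the other equals $(H^{-1}\Pose{}{})D$. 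The identity then reduces to associativity of matrix multiplication in $\SE_2(3)$.

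For the lift invariance $\Lambda(\alpha(H,\xi), u) = \Lambda(\xi, u)$, I would handle the four components of \cref{msceqf_lift} in turn. The components $\Lambda_2$, $\Lambda_3$, and $\Lambda_4$ depend on $\Pose{}{}$ only through $\Lambda_1$ (via $\Pi$ and $\Upsilon$) or not at all, so once $\Lambda_1$ is shown invariant, the remaining invariances follow immediately. The core computation is therefore
\begin{equation*}
    \Lambda_1(\alpha(H,\xi), u) = (\mathbf{W} - \mathbf{B} + \mathbf{N}) + \Pose{}{}^{-1} H (\mathbf{G} - \mathbf{N}) H^{-1} \Pose{}{},
\end{equation*}
and the claim reduces to showing $H(\mathbf{G}-\mathbf{N})H^{-1} = \mathbf{G}-\mathbf{N}$.

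The main substantive step, and the only place where the geometric assumptions on $H$ are actually used, is this conjugation identity. Writing $H$ in block form with rotation $R_H$, zero velocity, and translation $p_H$, and noting that $\mathbf{G}-\mathbf{N}$ has nonzero entries only in the ``gravity'' column ($g$) and in the entry coupling velocity and position index ($-1$), a direct block multiplication gives $H(\mathbf{G}-\mathbf{N})H^{-1}$ identical to $\mathbf{G}-\mathbf{N}$ except with $g$ replaced by $R_H g$. Since $R_H$ is a rotation about the vertical axis $\ethree$ and $g$ is parallel to $\ethree$, we have $R_H g = g$, closing the argument. The velocity block of $H$ must be zero (as prescribed) for the cross-terms to vanish; this is the role of the restriction ``leaves the direction of gravity unchanged.''

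The proof is therefore almost mechanical once the structure is laid out; I expect the only potential obstacle to be bookkeeping in the matrix products for the $\Lambda_1$ conjugation, and verifying that the invariance of $g$ under $R_H$ together with the zero velocity of $H$ are both necessary and sufficient. No subtle topological or analytic argument is needed, and no appeal to the equivariance of the system is required beyond the explicit formulas already derived.
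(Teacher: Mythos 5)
Your proposal is correct and follows essentially the same route as the paper's proof: the compatibility identity reduces to associativity of the product in $\SE_2(3)$ on the pose component (the other components being untouched by $\alpha$), and the lift invariance reduces to showing $\Lambda_1$ is invariant via the conjugation identity $H(\mathbf{G}-\mathbf{N})H^{-1} = \mathbf{G}-\mathbf{N}$, which holds because $R_H\Vector{}{g}{} = \Vector{}{g}{}$ for a rotation about $\ethree$ and because the velocity block of $H$ is zero. Your additional remark that a nonzero velocity block would break the conjugation is a correct and slightly sharper observation than the paper makes explicit.
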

\begin{proof}
\begin{align*}
    \phi(X, \alpha(H, \xi)) &= ((H^{-1}\Pose{}{})D, \AdMsym{B^{-1}}\left(\Vector{}{b}{} - \delta^{\vee}\right), C^{-1}\mathbf{S}E, \mathbf{K}L)\\
    &= (H^{-1}\Pose{}{}D, \AdMsym{B^{-1}}\left(\Vector{}{b}{} - \delta^{\vee}\right), C^{-1}\mathbf{S}E, \mathbf{K}L)\\
    &= \alpha(H,\phi(X,\xi)) ,
\end{align*}
as required.

To prove the invariance of $\Lambda$ to the action $\alpha$, it is sufficient to show that ${\Lambda_1(\alpha(H,\xi), u) = \Lambda_1(\xi, u)}$.
\begin{align*}
\Lambda_1(\alpha(H,\xi), u) &= (\mathbf{W} - \mathbf{B} + \mathbf{N}) + (\Pose{}{}^{-1}H)(\mathbf{G} - \mathbf{N})(H^{-1}\Pose{}{}) \\
&= (\mathbf{W} - \mathbf{B} + \mathbf{N}) + \Pose{}{}^{-1}(H(\mathbf{G} - \mathbf{N})H^{-1})\Pose{}{} \\
&= (\mathbf{W} - \mathbf{B} + \mathbf{N}) + \Pose{}{}^{-1}(H(\mathbf{G} - \mathbf{N})H^{-1})\Pose{}{} \\
&= (\mathbf{W} - \mathbf{B} + \mathbf{N}) + \Pose{}{}^{-1}(\mathbf{G} - \mathbf{N})\Pose{}{} \\
&= \Lambda_1(\xi, u) ,
\end{align*}
where we have used the fact that ${H(\mathbf{G-N})H^{-1} = \mathbf{G-N}}$. Specifically, the gravity vector $\Vector{}{g}{}$ is aligned along the $\ethree$ direction, thus $\Vector{}{g}{} = g\ethree$, where $g$ is the magnitude of gravity. Therefore, for any rotation $R_{H}$ about the $\ethree$ axis we have that ${R_{H}\Vector{}{g}{} = R_{H}g\ethree = g\ethree = \Vector{}{g}{}}$, and hence 
\begin{equation*}
    H(\mathbf{G-N})H^{-1} = \begin{bmatrix}
    \mathbf{0}_{3\times 3} & R_{H}\Vector{}{g}{} & \mathbf{0}_{3\times 1}\\
    \mathbf{0}_{1\times 3} & 0 & -1\\
    \mathbf{0}_{1\times 3} & 0 & 0\\
    \end{bmatrix} = \begin{bmatrix}
    \mathbf{0}_{3\times 3} & \Vector{}{g}{} & \mathbf{0}_{3\times 1}\\
    \mathbf{0}_{1\times 3} & 0 & -1\\
    \mathbf{0}_{1\times 3} & 0 & 0\\
    \end{bmatrix} = \mathbf{G-N} .
\end{equation*}
This completes the proof.
\end{proof}

\section{Experimental evaluation}
In this section, we perform a series of experiments to evaluate the accuracy, consistency, and, more importantly, robustness of the proposed \ac{msceqf}. We perform many experiments on real-world data to evaluate robustness to expected and unexpected errors in the camera extrinsic calibration. In all these experiments, we limit our comparison to filter-based \ac{msckf} algorithms for \ac{vio}, and in particular, to the best available one we believe represents the state-of-the-art, that is OpenVINS~\cite{Geneva2020OpenVINS:Estimation}. For a fair comparison, we turned off OpenVINS's persistent features (\acs{slam} features), and only compare against its pure \ac{msckf} part. Furthermore, in all the experiments, OpenVINS's \ac{msckf} parameters were tuned for each dataset according to the authors' suggested parameters. In contrast, the proposed \ac{msceqf} shares the same tuning parameters, shown in \cref{msceqf_param}, across all the experiments and datasets.

\ifdefined\includetblr
\begin{table}[htp]
    \setlength\tabcolsep{5pt}
    \centering
    \captiontitlefont{\scshape\small}
    \captionnamefont{\scshape\small}
    \captiondelim{}
    \captionstyle{\centering\\}
    \caption{\ac{msceqf} parameters across all the experiments on the Euroc, TUM-VI and UZH-FPV datasets}
    \begin{tblr}{
        rows = {m},
        colspec={ccc},
    }
    \toprule
    \textsc{Features} & \textsc{Detector} & \textsc{Detection grid}\\
    \hline
    $200$ & GFTT & $6 \times 6$\\
    \toprule
    \textsc{Parametrization} & \textsc{Projection} & \textsc{Clones} \\
    \hline
    anchored Euclidean & Unit plane & $11$\\
    \bottomrule
    \end{tblr}
    \label{msceqf_param}
\end{table}
\fi

\subsection{Robustness}
Robustness is an important property of a modern filter-based \acl{vio} algorithm. It is the ability to function with significant yet known errors, as well as the ability to deal with unknown unknowns. In simpler terms, it refers to how well an algorithm performs under non-ideal conditions, such as imperfect tuning parameters, poor calibration, or unexpected changes in the sensor's extrinsic parameters during field operations.

To assess the robustness of the proposed \ac{msceqf} and OpenVINS's \ac{msckf}, we ran a series of experiments using widely-known datasets for evaluating \ac{vio} algorithms. Specifically, the Euroc dataset~\cite{Burri2016TheDatasets}, the TUM-VI dataset~\cite{Schubert2018TheOdometry}, and the UZH-FPV dataset~\cite{Delmerico19icra}. For each dataset, we selected two sequences and ran each estimator $6\times6\times6 = 216$ times (for a total number of runs of $2592$). In these experiments, we intentionally initialized the filters with incorrect camera extrinsic parameters, introducing errors in six steps ranging from ($15$\si[per-mode = symbol]{\degree}, $0.05$\si[per-mode = symbol]{\meter}), to ($90$\si[per-mode = symbol]{\degree}, $0.3$\si[per-mode = symbol]{\meter}). For each error step, we ran the estimators with six different priors (initial covariance) accounting for initial calibration errors in the range of the six error steps. We run each estimator for each pair (prior, error) six times. Finally, for each individual run, we classified an estimator as converged or diverged based on a position error threshold.

\begin{figure}[htp]
\centering
\includegraphics[width=0.77\linewidth]{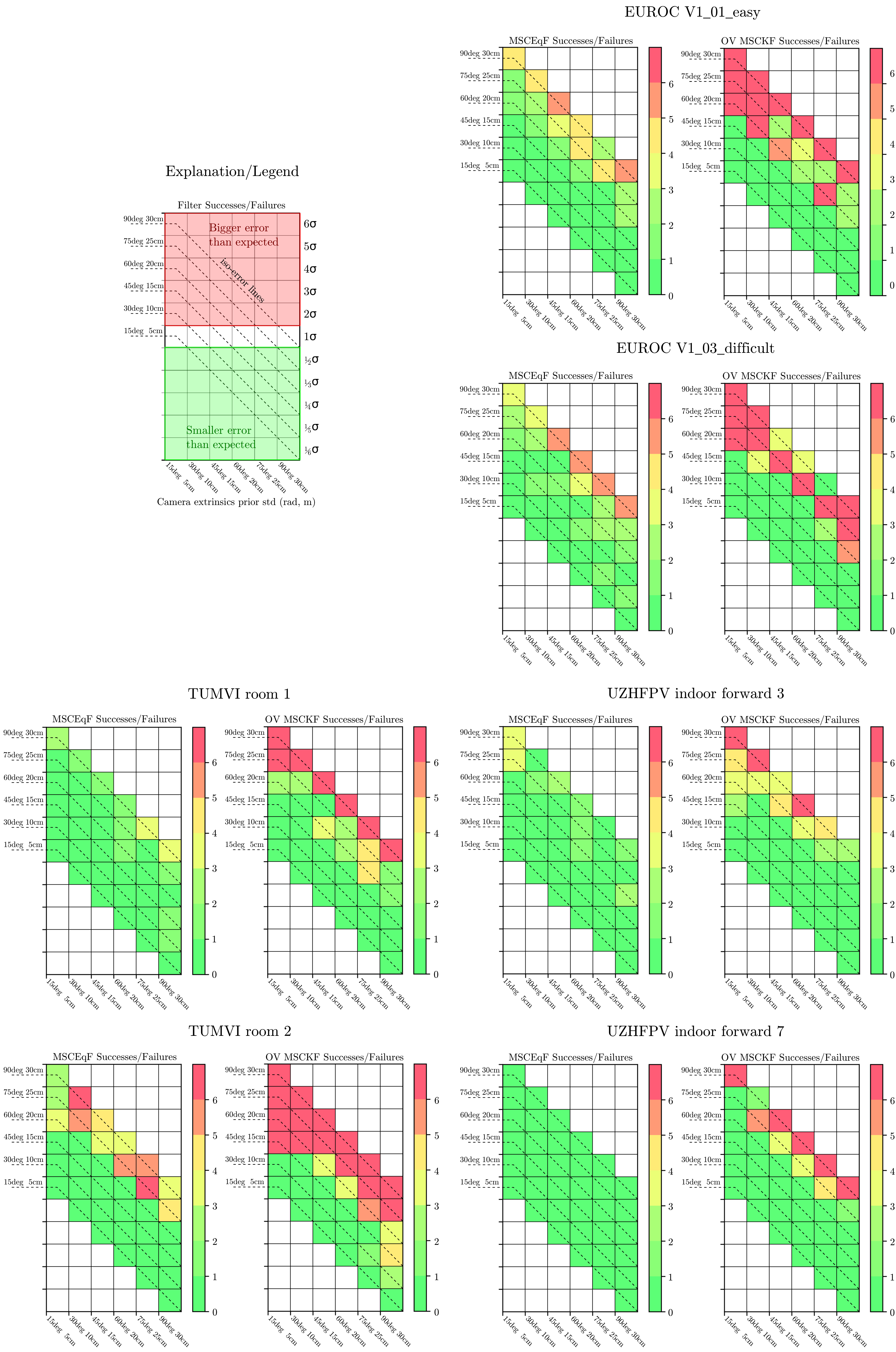}
\figrefcaption{\cite{Fornasier2023MSCEqF:Navigation}}
\caption[Results of the experiment evaluating the robustness of the proposed \ac{msceqf} and OpenVINS's \ac{msckf}.]{Results of the experiment evaluating the robustness of the proposed \ac{msceqf} and OpenVINS's \ac{msckf}. In these grid plots, the x-axis is the prior standard deviation the estimators are set with. The y-axis is how many $\sigma$-levels that error corresponds to. Labeled diagonal dashed lines represent iso-error lines (lines along with the error is constant). The bottom part of each grid represents expected errors, thus errors falling within $\sfrac{1}{6} \sigma$-$\sfrac{1}{2} \sigma$, whereas the top part of each grid represents unexpected errors, thus errors falling within $2 \sigma$-$6 \sigma$. According to the colorbar, the color of each cell shows the number of failures.}
\label{msceqf_grid}
\end{figure}

Based on the results of the experiment in \cref{msceqf_grid}, we derive the following noteworthy observations. In absolute terms, there seems to be an upper limit of absolute error that, no matter the prior, makes the estimators diverge. Although this limit highly depends on the dataset, for each of the tested sequences, the proposed \ac{msceqf} possesses a higher error limit and hence improved robustness to known absolute error.
In relative terms, the proposed \ac{msceqf} seems to deal better with unknown errors since the line at which the estimator fails is straight and does not bend towards the left side as it appears to happen for the \ac{msckf}. Encouraged by these results, we ran an additional experiment on the \emph{V1\_01\_easy} sequence of the Euroc dataset, introducing new, smaller priors and errors to effectively evaluate whether the estimators are able to manage errors that are smaller in absolute terms but outside the prior covariance. \cref{msceqf_grid_extension} clearly shows that the \ac{msceqf} is indeed a more robust filter, able to deal with unexpected errors. 
Finally, \cref{msceqf_extrinsics} shows the convergence of the camera extrinsic parameters for both filters evaluated on the Euroc \emph{V1\_01\_easy} sequence, with an initial error of ($30$\si[per-mode = symbol]{\degree}, $0.1$\si[per-mode = symbol]{\meter}) and an initial covariance to match the error. The error plots clearly show that the proposed \ac{msceqf} not only is a more robust filter, but it also converges faster. 

For the sake of completeness, the self-calibration capabilities, including camera extrinsic and intrinsic parameters, of the proposed \ac{msceqf} are re-evaluated on the Euroc \emph{V1\_01\_easy} sequence, with an initial error of ($30$\si[per-mode = symbol]{\degree}, $0.1$\si[per-mode = symbol]{\meter}) in the camera extrinsic parameters, and an initial error of $50$ units in each of the camera intrinsic parameters ($f_x,\;f_y,\;c_x,\;c_y$). Results are shown in \cref{msceqf_intrinsics}.

\begin{figure}[htp]
\centering
\includegraphics[width=\linewidth]{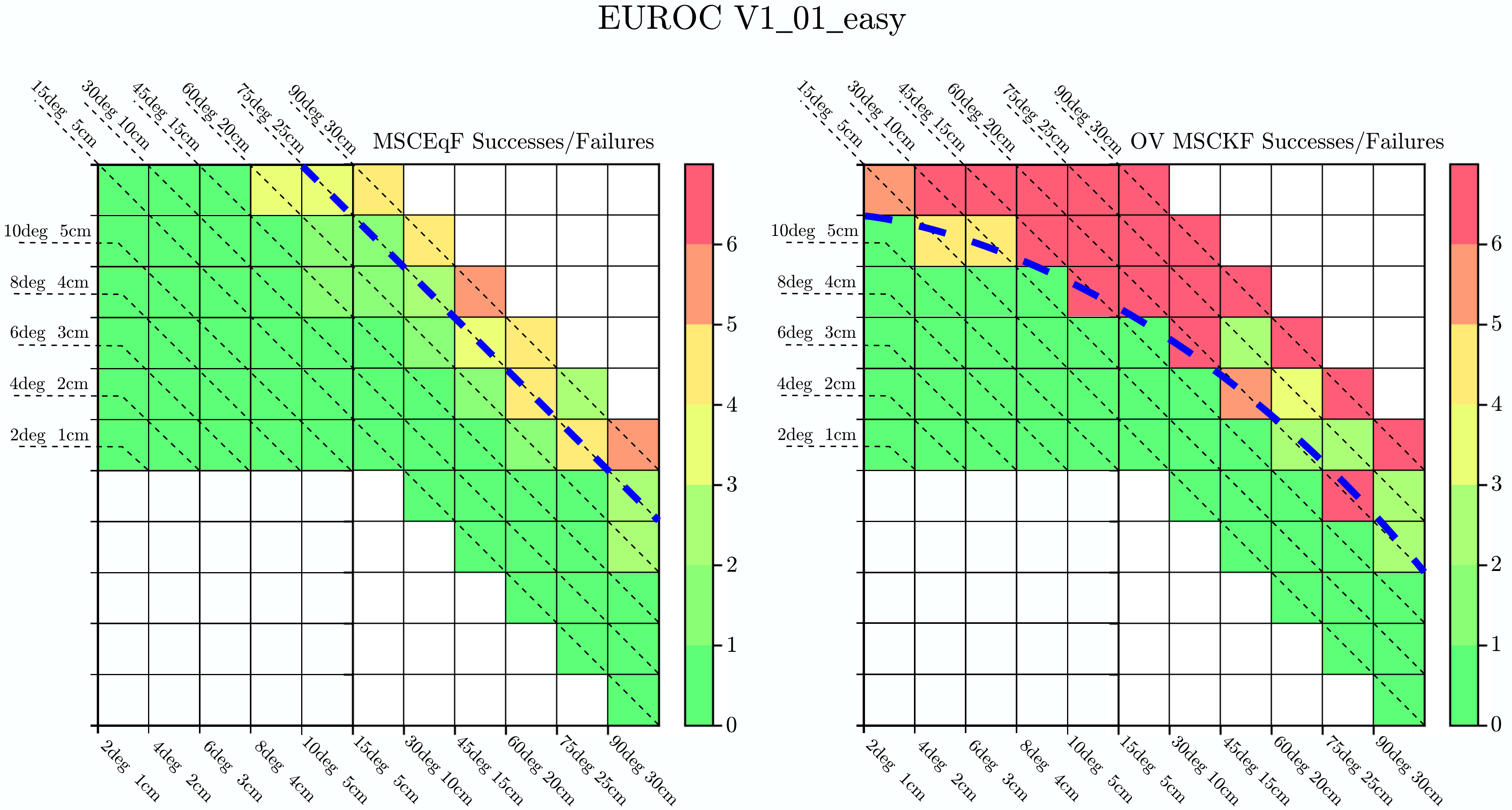}
\figrefcaption{\cite{Fornasier2023MSCEqF:Navigation}}
\caption[Results of the experiment evaluating the robustness of the proposed \ac{msceqf} compared to OpenVINS's \ac{msckf} for unexpected errors.]{Grid plot showing the robustness of the proposed \ac{msceqf} compared to OpenVINS's \ac{msckf} for unexpected errors, thus the ability to deal with \emph{you don't know what you don't know}. The x-axis is the prior standard deviation the estimators are set with. The y-axis is how many $\sigma$-levels that error corresponds to. Diagonal dashed lines represent iso-error lines. The blue bold dashed line is the limit at which each estimator fails. According to the colorbar, the color of each cell represents the number of failures.}
\label{msceqf_grid_extension}
\end{figure}

\begin{figure}[htp]
\centering
\includegraphics[width=\linewidth]{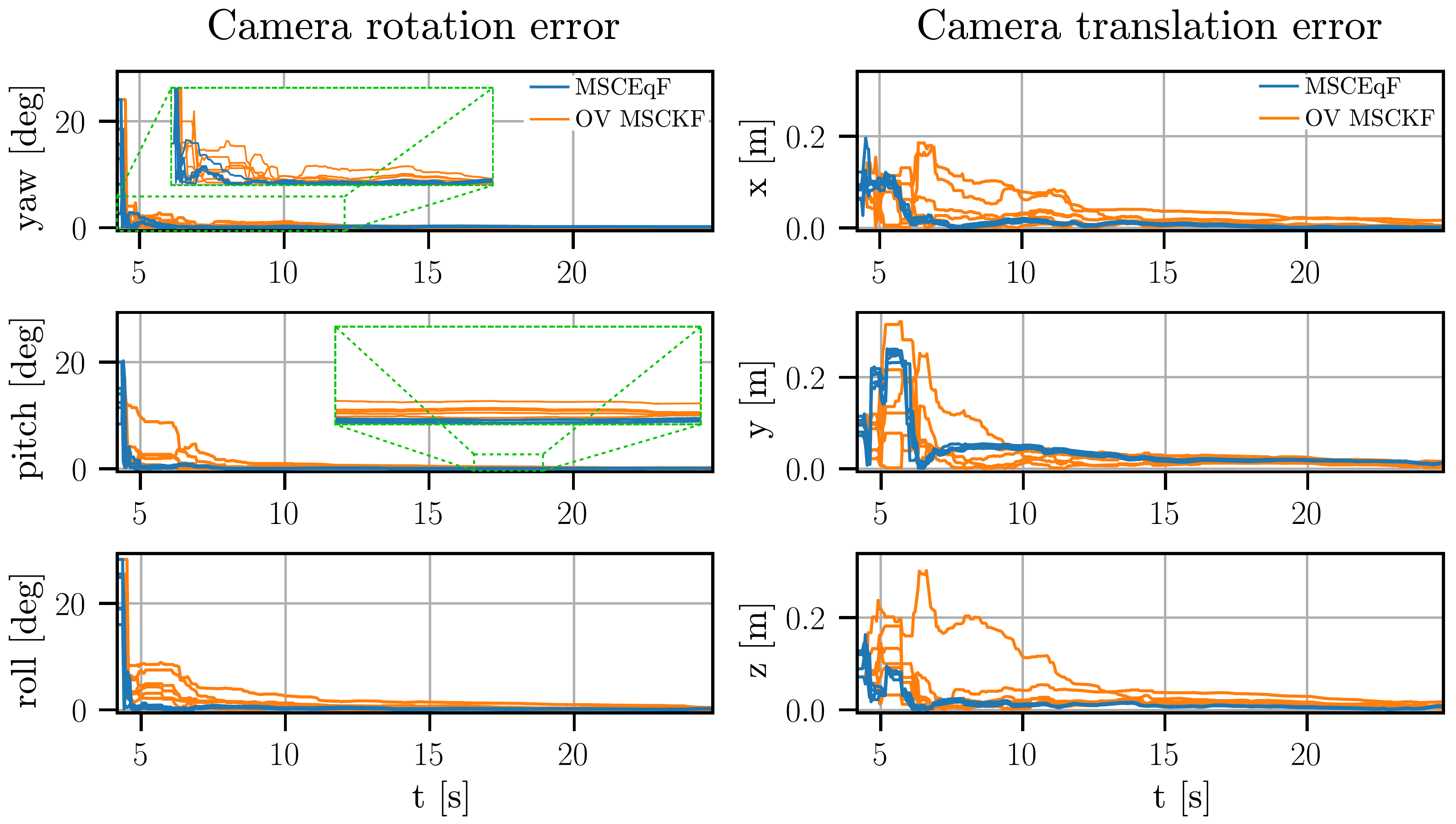}
\figrefcaption{\cite{Fornasier2023MSCEqF:Navigation}}
\caption[Absolute errors of camera extrinsic parameters for the proposed \ac{msceqf}, and OpenVINS's \ac{msckf}.]{Absolute errors of camera extrinsic parameters for the proposed \ac{msceqf}, and OpenVINS's \ac{msckf}. The plots show the convergence performance of the filters evaluated on the Euroc \emph{V1\_01\_easy} sequence, for $6$ runs, with an initial error of ($30$\si[per-mode = symbol]{\degree}, $0.1$\si[per-mode = symbol]{\meter}).}
\label{msceqf_extrinsics}
\end{figure}

\begin{figure}[htp]
\centering
\includegraphics[width=\linewidth]{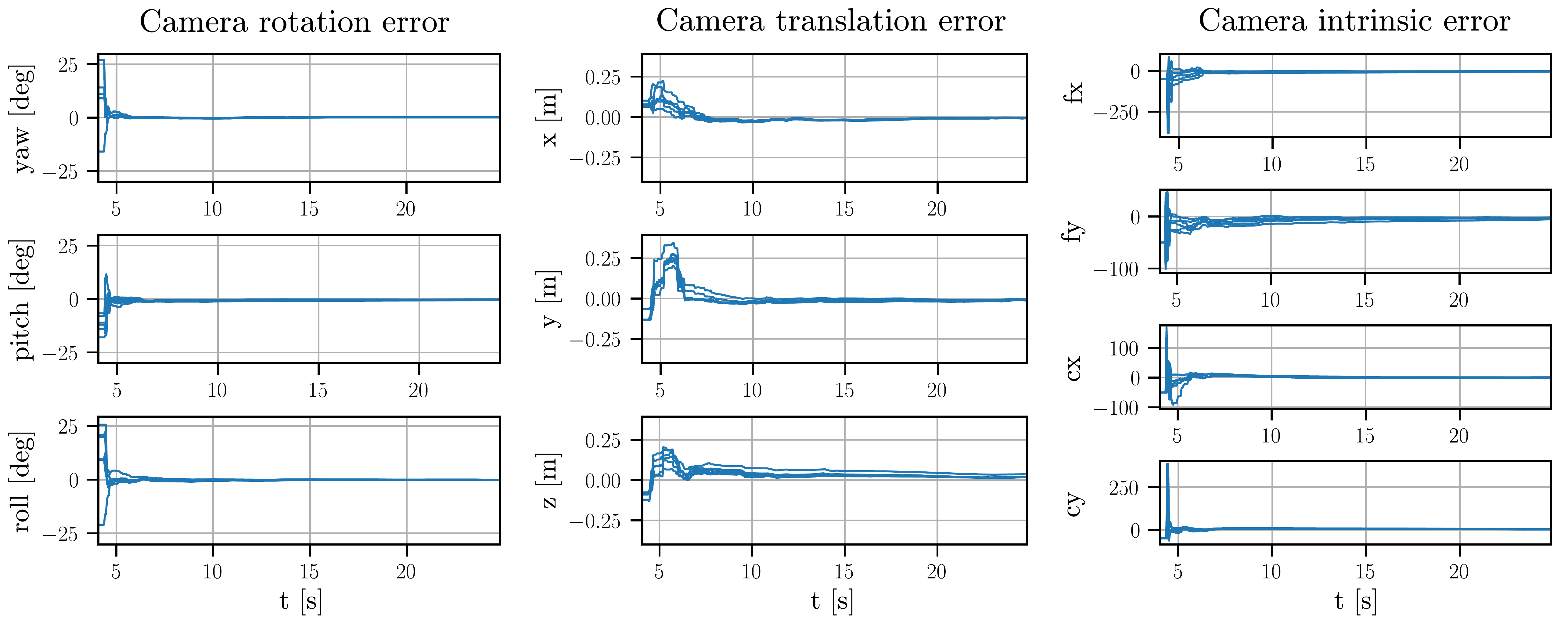}
\caption[Errors of camera extrinsic and intrinsic parameters for the proposed \ac{msceqf}.]{Errors of camera extrinsic and intrinsic parameters for the proposed \ac{msceqf}. The plots show the convergence performance of the filter evaluated on the Euroc \emph{V1\_01\_easy} sequence, for $6$ runs, with an initial error of ($30$\si[per-mode = symbol]{\degree}, $0.1$\si[per-mode = symbol]{\meter}) in the camera extrinsic parameters, and an initial error of $50$ units in each of the camera intrinsic parameters ($f_x,\;f_y,\;c_x,\;c_y$).}
\label{msceqf_intrinsics}
\end{figure}

Quantifying robustness in robotics, however, remains an ongoing challenge. 
In the presented evaluation, we have chosen the camera extrinsic calibration as a state subjected to error. Even though static and dynamic initialization approach exists~\cite{Dong-Si2012EstimatorCalibration, Campos2019FastSLAM} for such a problem, in our formulation, extrinsic parameters are treated as regular state variables, and our proposed algorithm showcases inherent robustness by successfully attaining reliable estimation, for both expected and unexpected errors, eliminating the need of any auxiliary module.
This characteristic sets our algorithm apart from conventional \ac{vio} algorithms, emphasizing its superior robustness.

\subsection{Accuracy}
Our next experiment focuses on the classical and widely-used metric for evaluating the performance of \acl{vio} algorithms~\cite{Delmerico2018ARobots}, namely the \ac{rmse} of the \ac{ate}. For this experiment, we ran the proposed \ac{msceqf} and OpenVINS's \ac{msckf} on all Euroc sequences~\cite{Burri2016TheDatasets}. The results presented in \cref{msceqf_euroc_rmse} demonstrate that the proposed \ac{msceqf} achieves state-of-the-art accuracy comparable to \changed{OpenVINS}{the \ac{msckf}}. It should be noted that in our evaluation, we aligned each estimate with the groundtruth using the initial state rather than finding the optimal alignment that minimizes the error throughout the entire trajectory.

\ifdefined\includetblr
\begin{table}[htp]
    \setlength\tabcolsep{5pt}
    \centering
    \tabrefcaption{\cite{Fornasier2023MSCEqF:Navigation}}
    \captiontitlefont{\scshape\small}
    \captionnamefont{\scshape\small}
    \captiondelim{}
    \captionstyle{\centering\\}
    \caption{Attitude (A), and position (P) Absolute Trajectory Error (ATE) RMSE on Euroc dataset.}
    \begin{tblr}{
        rows = {m},
        colspec={c|cc|cc},
    }
    \toprule
    \textsc{Sequence} & \SetCell[c=2]{c}{\textsc{MSCEqF}} & & \SetCell[c=2]{c}{\textsc{OV MSCKF}} & \\
    \hline
    & A [\si[per-mode = symbol]{\radian}] & P [\si[per-mode = symbol]{\meter}] & A [\si[per-mode = symbol]{\radian}] & P [\si[per-mode = symbol]{\meter}] \\
    \hline
    V1\_01\_easy & $0.07$ & \color{Blue1}{$\mathbf{0.24}$} & \color{Blue1}{$\mathbf{0.05}$} & $0.36$\\
    V1\_02\_medium & $0.03$ & \color{Blue1}{$\mathbf{0.20}$} & \color{Blue1}{$\mathbf{0.02}$} & $0.22$\\
    V1\_03\_difficult & $0.05$ & $0.30$ & \color{Blue1}{$\mathbf{0.02}$} & \color{Blue1}{$\mathbf{0.18}$} \\
    V2\_01\_easy & \color{Blue1}{$\mathbf{0.02}$} & \color{Blue1}{$\mathbf{0.13}$} & $0.05$ & $0.18$ \\
    V2\_02\_medium & $0.08$ & $0.55$ & \color{Blue1}{$\mathbf{0.03}$} & \color{Blue1}{$\mathbf{0.17}$}\\
    V2\_03\_difficult\footnotemark{} & \color{Blue1}{$\mathbf{0.03}$} & $0.39$ & \color{Blue1}{$\mathbf{0.03}$} & \color{Blue1}{$\mathbf{0.28}$}\\
    MH\_01\_easy & \color{Blue1}{$\mathbf{0.05}$} & \color{Blue1}{$\mathbf{0.29}$} & \color{Blue1}{$\mathbf{0.05}$} & $0.42$ \\
    MH\_02\_easy & \color{Blue1}{$\mathbf{0.01}$} & \color{Blue1}{$\mathbf{0.38}$} & $0.03$ & $0.54$\\
    MH\_03\_medium & $0.02$ & \color{Blue1}{$\mathbf{0.34}$} & \color{Blue1}{$\mathbf{0.01}$} & $0.41$\\
    MH\_04\_difficult & \color{Blue1}{$\mathbf{0.03}$} & \color{Blue1}{$\mathbf{0.53}$} & $0.04$ & $0.61$\\
    MH\_05\_difficult & $0.03$ & \color{Blue1}{$\mathbf{0.70}$} & \color{Blue1}{$\mathbf{0.02}$} & $0.78$\\
    \bottomrule
    \end{tblr}
    \label{msceqf_euroc_rmse}
\end{table}
\footnotetext{Due to non-deterministic results with varying in accuracy, the best result out of $5$ runs was reported}
\fi

\subsection{Consistency}
In this final experiment, we employed the pose (orientation and position) \ac{anees} as a metric to analyze the consistency of the proposed \ac{msceqf}. In particular, we used the VINSEval framework~\cite{Fornasier2021} to generate a photorealistic synthetic dataset of 25 runs of the same trajectory, with the same noise statistics but different noise realizations.

The \ac{anees} for the \ac{msceqf} was computed according to the following formula
\begin{equation*}
    \text{ANEES} = \frac{1}{Mn}\sum_{i=1}^{M}\varepsilon_i^T\bm{\Sigma}_i^{-1}\varepsilon_i ,
\end{equation*}
where $M$ is the number of runs, $n = dim\left(\varepsilon\right)$ is the dimension of the error $\varepsilon$, and $\bm{\Sigma}$ is the covariance of the error. The error $\varepsilon = \log_{\SE(3)}\left(\mathring{\PoseP{}{}}^{-1}\PoseP{}{}\hatPoseP{}{}^{-1}\mathring{\PoseP{}{}}\right)^{\vee}$ is the pose components of the equivariant error defined in \cref{msceqf_msceqf_sec}.

The resulting \ac{anees} shown in \cref{msceqf_anees} fluctuates around a computed average of $1.0$ and is not increasing or decreasing over time. The results obtained are similar to \ac{fej} estiamtors~\cite{Geneva2020OpenVINS:Estimation, Chen2022FEJ2:Design}, but without requiring artificial modification of the linearization points to achieve consistency.

\begin{figure}[htp]
\centering
\includegraphics[width=\linewidth]{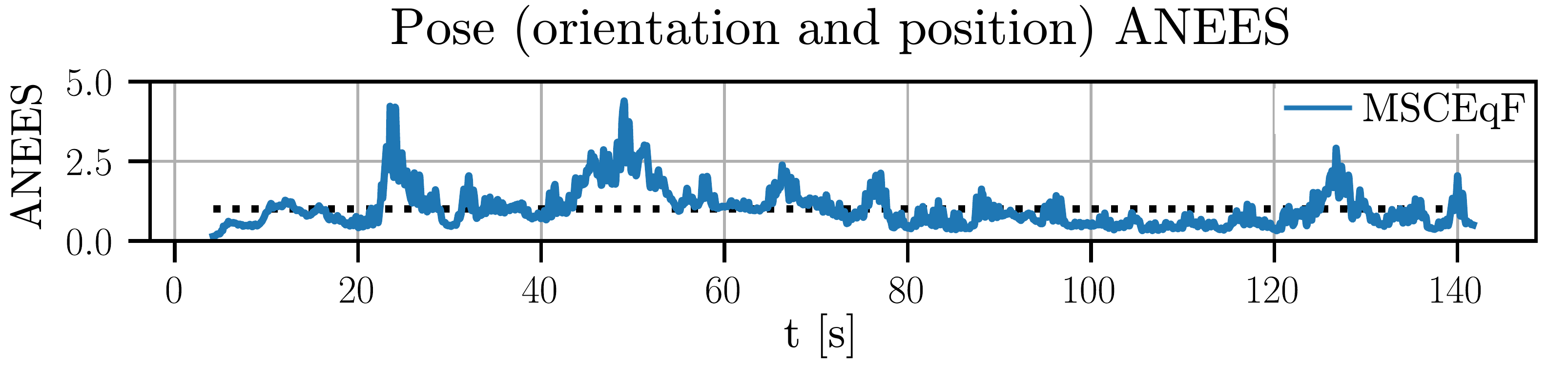}
\figrefcaption{\cite{Fornasier2023MSCEqF:Navigation}}
\caption[Pose \acs{anees} of the proposed \ac{msceqf}]{Pose (orientation and position) \acs{anees} of the proposed \ac{msceqf} executed for 25 runs on a custom photorealistic dataset generated with the VINSEval framework. The plot shows that the \acs{anees} is fluctuating around an average of $1.0$ (dashed line) and does not increase or decrease over time.}
\label{msceqf_anees}
\end{figure}

\section{Real-time performance evaluation -- closed-loop control of resource-constrained aerial platforms}

In this section, we evaluate the performance obtained by deploying the proposed \ac{msceqf} on a resource-constrained aerial platform for closed-loop control. Specifically, the \ac{msceqf} was run on a Raspberry Pi 4 with 4GB of RAM mounted on a Twins twinFOLD SCIENCE quadcopter. The \ac{imu} data was collected from the quadcopter autopilot, a Holybro Pixhawk 4 at $200$\si[per-mode = symbol]{\hertz}, whereas the image data was collected at $15$\si[per-mode = symbol]{\hertz} from a $60$\si[per-mode = symbol]{\degree} downward-looking 
Matrix Vision BlueFox MLC200WG global-shutter camera with $752 \times 480$ CMOS grayscale sensor.

With this experiment, we do not claim completeness in comparative evaluations but rather demonstrate the real-time capabilities of the proposed framework and its applicability to real-world vision-based robotic navigation.

The experiment consists of flying $38$ times a $2.5$\si[per-mode = symbol]{\meter}$\times 1.5$\si[per-mode = symbol]{\meter} square trajectory for a total length of $288$\si[per-mode = symbol]{\meter}. For this experiment, the \ac{msceqf} was set with the parameters shown in \cref{msceqf_param_cl}. Moreover, curvature correction was disabled.

\ifdefined\includetblr
\begin{table}[htp]
    \setlength\tabcolsep{5pt}
    \centering
    \captiontitlefont{\scshape\small}
    \captionnamefont{\scshape\small}
    \captiondelim{}
    \captionstyle{\centering\\}
    \caption{\ac{msceqf} parameters for closed-loop control running on a Raspberry Pi 4 with 4GB RAM}
    \begin{tblr}{
        rows = {m},
        colspec={ccc},
    }
    \toprule
    \textsc{Features} & \textsc{Detector} & \textsc{Detection grid}\\
    \hline
    $60 - 90$ & Fast & $4 \times 4$\\
    \toprule
    \textsc{Parametrization} & \textsc{Projection} & \textsc{Clones} \\
    \hline
    anchored Euclidean & Unit plane & $9$\\
    \bottomrule
    \end{tblr}
    \label{msceqf_param_cl}
\end{table}
\fi

Trajectory, position, and orientation plots for the closed-loop control experiment are reported in \cref{msceqf_closedloop_plots}. The \ac{rmse} of the \ac{ate} for this experiment sets at $1.79$\si[per-mode = symbol]{\degree} and $0.38$\si[per-mode = symbol]{\meter}. Moreover, we evaluated the position and angular error every $135$ steps, to evaluate the accumulated drift and obtained a final angular error of $4.25$\si[per-mode = symbol]{\degree} and a final position error of $0.64$\si[per-mode = symbol]{\meter} which corresponds to a drift of $0.2\%$ of the total trajectory length. Accumulated angular and position errors are reported in \cref{msceqf_closedloop_ape}.

This experiment has demonstrated the real-time capabilities of the \ac{msceqf} for closed-loop control of resource-constrained platforms. Although many modules of the CNS Flight stack~\cite{Scheiber2022} modules were running on the Raspberry Pi during the execution of the experiment, the framerate of the estimate provided by the \ac{msceqf} was constant at the camera rate, and no computational spikes or high load was registered on the CPU. For this experiment, we kept a conservative tuning (\cref{msceqf_param_cl}); the accuracy of the \ac{msceqf} could be further improved by increasing the number of tracked features and the number of stochastic clones. Furthermore, additional techniques specifically tailored for resource-constrained platforms, such as dynamic number of stochastic clones as in~\cite{Li2012Vision-aidedSystems}, could be implemented to set an upper bound of the time taken for each iteration of the filter.

\begin{figure}[htp]
\centering
\includegraphics[width=0.825\linewidth]{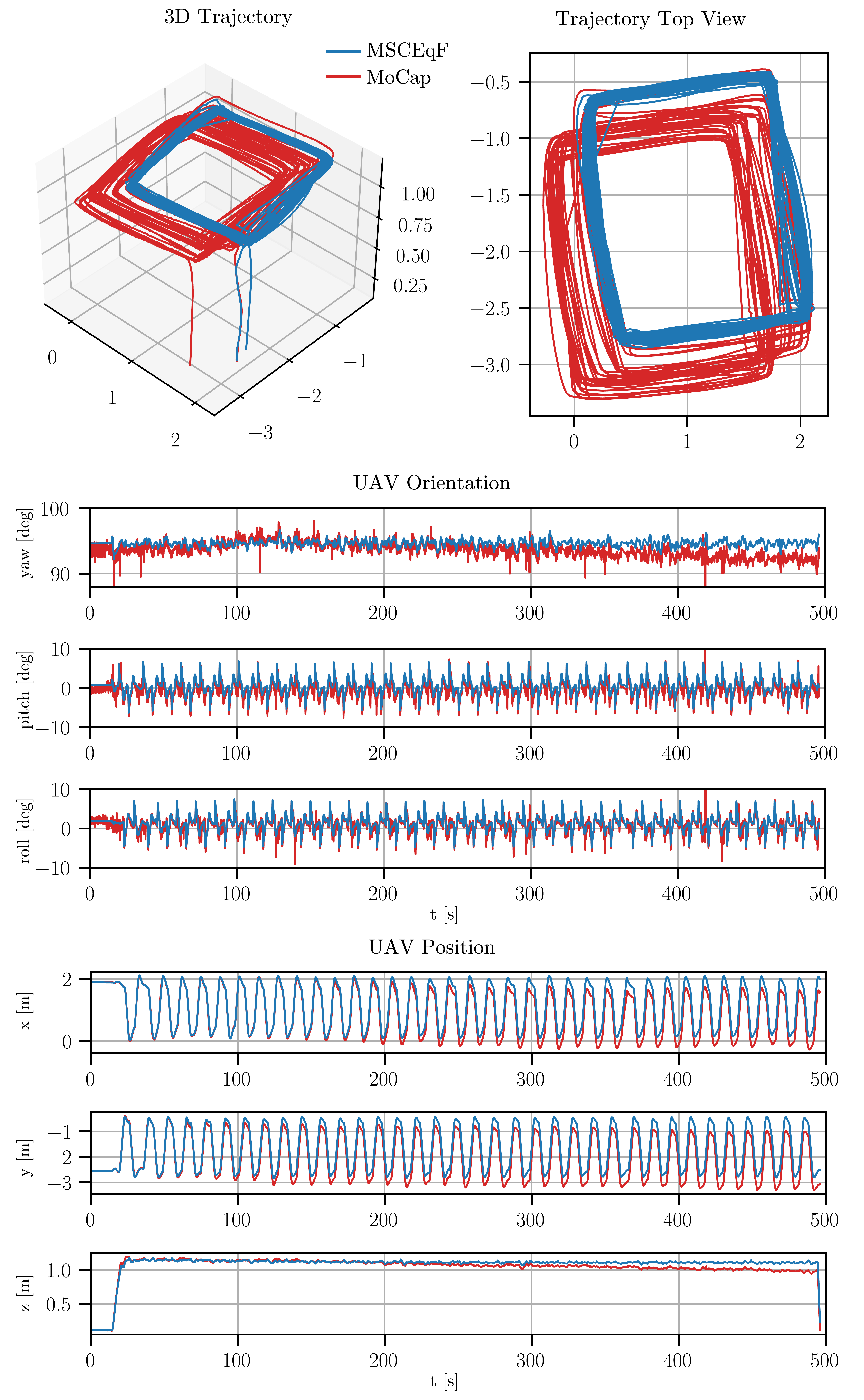}
\caption[\ac{msceqf} estimate and motion capture groundtruth comparison for the closed-loop control of a computationally-constrained aerial platform experiment.]{\ac{msceqf} estimate and motion capture groundtruth comparison for the closed-loop control of a computationally-constrained aerial platform experiment. \textbf{Top:} 3D Trajectory and top view. \textbf{Middle:} \ac{uav} orientation in degrees. \textbf{Bottom:} \ac{uav} position in meters}
\label{msceqf_closedloop_plots}
\end{figure}

\begin{figure}[htp]
\centering
\includegraphics[width=\linewidth]{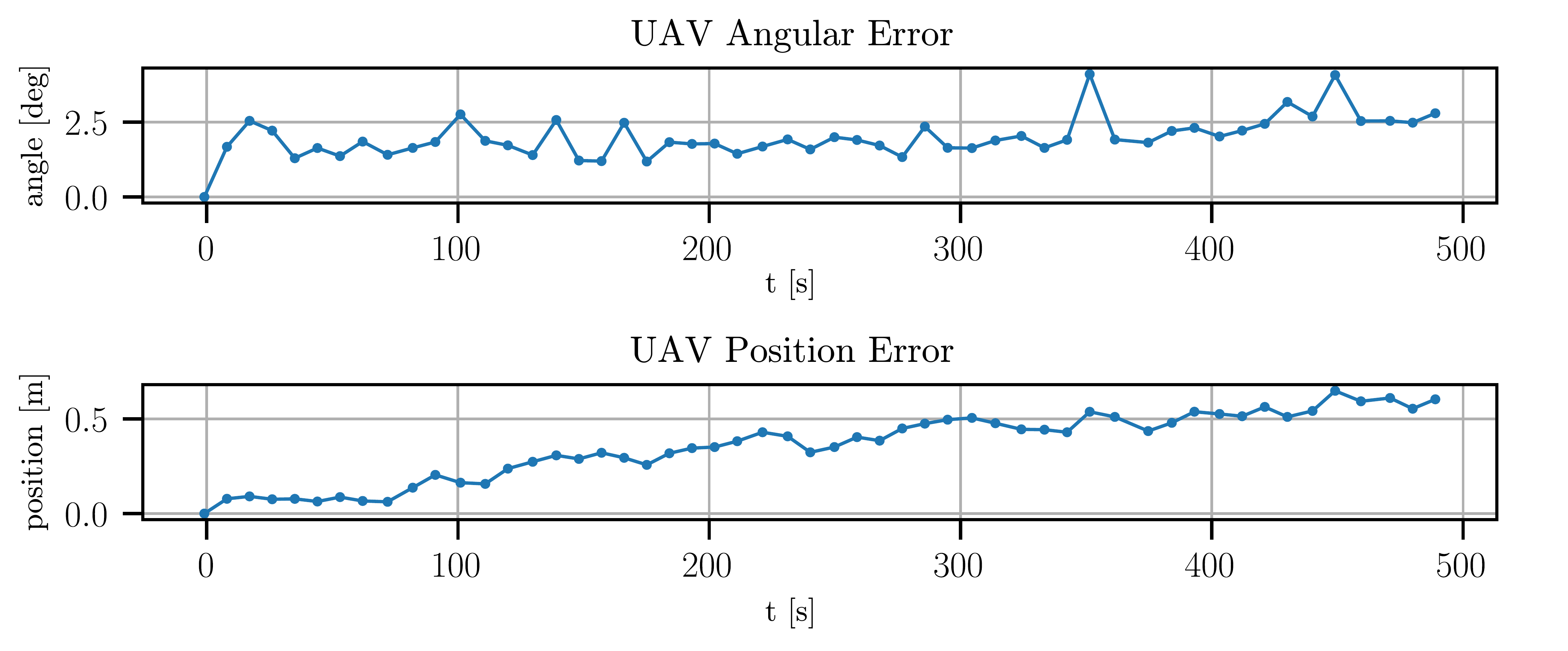}
\caption[Error plots for the closed-loop control of a computationally-constrained aerial platform experiment.]{\ac{uav} angular and position errors evaluated every $135$ steps for the closed-loop control of a computationally-constrained aerial platform experiment. The final errors are respectively at $4.25$\si[per-mode = symbol]{\degree} and $0.64$\si[per-mode = symbol]{\meter}, which correspond to a drift of $0.2\%$ of the total trajectory length.}
\label{msceqf_closedloop_ape}
\end{figure}

\section{Hybrid equivariant visual-inertial navigation system -- MSCEqF with persistent features}

Multi state constraint filters are specifically designed to minimize computational complexity. They do not retain visual landmarks as part of their state, resulting in high sensitivity to accumulated drift. In contrast, classical filter-based algorithms incorporate a subset of visual landmarks as part of their state, with the benefit of limiting the accumulated drift, allowing for loop-closures~\cite{Durrant-Whyte2006SimultaneousI, Geneva2019AClosures}, albeit at the expense of increased computational complexity.

In this section, we exploit the polar symmetry of fixed landmark measurements introduced by van Goor \etal~\cite{Goor2019AMapping, VanGoor2020AnEquivariance, vanGoor2021ConstructiveMapping, vanGoor2021AnOdometry, vanGoor2023EqVIO:Odometry, VanGoor2023EquivariantAwareness}, and propose an extension of the \ac{msceqf} that includes a subset of visual landmark as part of the state. 

\subsection{Extended symmetry}

Recall the procedure described in \cref{ms_bins_ext_sec}. We propose the following extension of the symmetry introduced in \cref{msceqf_vins_sec}. Consider the state $(\xi_{I}, \xi_{S})$ of the visual inertial navigation system discussed in \cref{msceqf_vins_sec}. Without loss of generality, extend the system state with the position of a fixed visual feature in the world; therefore, define ${\xi_{f} = \Vector{}{p}{f} \in \R^3}$. The full system states is defined as ${\xi = (\xi_{I}, \xi_{S}, \xi_{f}) \in \calM \coloneqq \torSE_2(3) \times \R^{6} \times \torSE(3) \times \torIN(3) \times \R^3}$ . Define a novel group $\grpG \coloneqq \grpB \times \SE(3) \times \IN \times \SOT(3)$ termed $\mathbf{VINS}$ group. Elements of the $\mathbf{VINS}$ group $\grpG$ and the state action ${\phi \AtoB{\grpG \times \calM}{\calM}}$ are defined as follows:
\begin{align}
    &X = \left(\left(D,\delta\right), E, L, Q\right) \in \grpG,\\
    &X^{-1} = ((D^{-1}, -\Adsym{D^{-1}}{\delta}), E^{-1}, L^{-1}, Q^{-1}) \in \grpG,\\
    &\phi\left(X, \xi\right) \coloneqq \left(\Pose{}{}D, \AdMsym{B^{-1}}\left(\Vector{}{b}{} - \delta^{\vee}\right), C^{-1}\mathbf{S}E, \mathbf{K}L, \right.\\
    &\qquad\qquad\quad\left.\PoseP{}{}\PoseS{}{}E * Q^{-1}(\PoseP{}{}\PoseS{}{})^{-1} * \Vector{}{p}{f}\right) \in \calM \label{msceqf_phi_pf},
\end{align}
where ${Q = (\mathrm{Q}, \mathrm{q}), Q^{-1} = (\mathrm{Q}^{\top}, \frac{1}{\mathrm{q}}) \in \SOT(3)}$. $\PoseP{}{}\PoseS{}{}E * Q^{-1}(\PoseP{}{}\PoseS{}{})^{-1} * \Vector{}{p}{f}$ is a concatenation of roto-translation actions of $\SE(3)$, and rotation and scaling actions of $\SOT(3)$ on $\R^{3}$. $* \AtoB{\torSE(3) \times \R^3}{\R^3}$ is defined by ${\PoseP{}{} * \Vector{}{v}{} = \Rot{}{}\Vector{}{v}{} + \Vector{}{p}{}}$  for all ${ \PoseP{}{} = \left(\Rot{}{}, \Vector{}{p}{}\right) \in \torSE(3),\; \Vector{}{v}{} \in \R^3}$. The last expression can also be written as $(\PoseP{}{}\PoseS{}{}) * \frac{1}{\mathrm{q}}\mathrm{Q}^{\top}((\PoseP{}{}\PoseS{}{})^{-1} * \Vector{}{p}{f})$.

\subsection{Extended lift}

The lift presented in \cref{msceqf_lift} is extended as follows.
\begin{theorem}
Let $\Lambda_3 = (\mathbf{\Omega}, \Vector{}{\omega}{}) \in \se(3)$, with $\mathbf{\Omega} \in \so(3)$ and $\Vector{}{\omega}{} \in \R^3$. Let $\Vector{}{c}{f} = (\PoseP{}{}\PoseS{}{})^{-1} * \Vector{}{p}{f}$. Define the map ${\Lambda_5 \AtoB{\calM \times \vecL}{\sot(3)}}$ by
\begin{equation}\label{msceqf_ext_lift}
    \Lambda_5 = (\mathbf{\Omega} + \frac{\Vector{}{c}{f}^{\wedge}\Vector{}{\omega}{}}{\norm{\Vector{}{c}{f}}^2}, \frac{\Vector{}{c}{f}^{\top}\Vector{}{\omega}{}}{\norm{\Vector{}{c}{f}}^2})
\end{equation}
Then ${\Lambda \AtoB{\calM \times \vecL}{\gothg}}$ given by
\begin{align*}
    \Lambda\left(\xi, u\right) &\coloneqq \left(\left(\Lambda_1\left(\xi, u\right), \Lambda_2\left(\xi, u\right)\right), \Lambda_3\left(\xi, u\right), \Lambda_4\left(\xi, u\right), \Lambda_5\left(\xi, u\right)\right),
\end{align*}
is a lift for the system in~\cref{msceqf_vins}, extended with a single visual feature $\Vector{}{p}{f}$, with respect to the symmetry group $\grpG$.
\end{theorem}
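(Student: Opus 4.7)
The plan is to invoke the existing lift property for the components $\Lambda_1,\Lambda_2,\Lambda_3,\Lambda_4$ already established in the preceding theorem, and reduce the problem to verifying the lift equation only for the newly added feature coordinate. Since $\Vector{}{p}{f}$ is a fixed visual feature, the system dynamics in that coordinate is $\dot{\Vector{}{p}{f}} = 0$. Thus it suffices to show that
\begin{equation*}
    \Fr{X}{I}\phi_{\xi, f}(X)\bigl[\Lambda(\xi,u)\bigr] \;=\; 0,
\end{equation*}
where $\phi_{\xi,f}$ denotes the projection of the action $\phi$ in \cref{msceqf_phi_pf} onto the feature component $\Vector{}{p}{f} \in \R^3$. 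Note that only $\Lambda_3 = (\mathbf{\Omega},\Vector{}{\omega}{}) \in \se(3)$ and $\Lambda_5 = (\alpha,\beta) \in \sot(3)$ enter this component, because the $\grpB$ and $\IN$ factors of $X$ do not act on the world-frame feature position in $\phi_{\xi,f}$. So the computation reduces to a purely $\SE(3)\times\SOT(3)$ differential calculation.

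First, I would rewrite the feature component of $\phi$ using the camera-frame coordinate $\Vector{}{c}{f} = (\PoseP{}{}\PoseS{}{})^{-1} * \Vector{}{p}{f}$ as $\phi_{\xi,f}(X) = \PoseP{}{}\PoseS{}{} E * \bigl(Q^{-1} \Vector{}{c}{f}\bigr)$. Then I would compute one-parameter curves $E(t) = \exp_{\SE(3)}(t\Lambda_3)$ and $Q(t) = \exp_{\SOT(3)}(t\Lambda_5)$, with $\dot{E}(0)$ giving rotational velocity $\mathbf{\Omega}$ and translational velocity $\Vector{}{\omega}{}$, and $\dot{Q}^{-1}(0)$ acting on $\R^3$ as the linear map $-\alpha - \beta \eye_3$ (using the standard identification of the $\SOT(3)$ exponential). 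Differentiating at $t=0$ yields
\begin{equation*}
    \Fr{X}{I}\phi_{\xi,f}(X)\bigl[\Lambda\bigr] \;=\; \Gamma(\PoseP{}{}\PoseS{}{})\Bigl[(\mathbf{\Omega}-\alpha)\Vector{}{c}{f} - \beta\Vector{}{c}{f} + \Vector{}{\omega}{}\Bigr].
\end{equation*}
Since $\Gamma(\PoseP{}{}\PoseS{}{}) \in \SO(3)$ is invertible, the lift property reduces to the algebraic identity
\begin{equation*}
    (\mathbf{\Omega}-\alpha)\Vector{}{c}{f} - \beta\Vector{}{c}{f} + \Vector{}{\omega}{} \;=\; 0.
\end{equation*}

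Substituting the explicit choice from \cref{msceqf_ext_lift}, namely $\alpha = \mathbf{\Omega} + \bigl(\tfrac{\Vector{}{c}{f}^{\wedge}\Vector{}{\omega}{}}{\norm{\Vector{}{c}{f}}^2}\bigr)^{\wedge}$ and $\beta = \tfrac{\Vector{}{c}{f}^{\top}\Vector{}{\omega}{}}{\norm{\Vector{}{c}{f}}^2}$, I would expand using the skew-symmetric identities $a^{\wedge}b = -b^{\wedge}a$ and $\Vector{}{c}{f}^{\wedge}\Vector{}{c}{f}^{\wedge} = \Vector{}{c}{f}\Vector{}{c}{f}^{\top} - \norm{\Vector{}{c}{f}}^2 \eye_3$. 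This produces $(\mathbf{\Omega}-\alpha)\Vector{}{c}{f} = \tfrac{\Vector{}{c}{f}\Vector{}{c}{f}^{\top}\Vector{}{\omega}{}}{\norm{\Vector{}{c}{f}}^2} - \Vector{}{\omega}{}$ and $\beta \Vector{}{c}{f} = \tfrac{\Vector{}{c}{f}\Vector{}{c}{f}^{\top}\Vector{}{\omega}{}}{\norm{\Vector{}{c}{f}}^2}$. The two rank-one terms cancel and $\Vector{}{\omega}{}$ cancels against $+\Vector{}{\omega}{}$, giving zero as required. Combined with the previously established lift property for $\Lambda_1,\dots,\Lambda_4$, this shows that $\Lambda$ is a valid lift for the extended system.

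The main technical obstacle is handling the $\SOT(3)$ factor cleanly: one must correctly identify the derivative of $Q^{-1}$ acting on $\R^3$ with the generator $-\alpha - \beta\eye_3$ (distinguishing the skew part from the scalar scaling part), and then apply the rank-one/identity decomposition of $\Vector{}{c}{f}^{\wedge}\Vector{}{c}{f}^{\wedge}$ in the right order so that the projection onto the span of $\Vector{}{c}{f}$ (absorbed by $\beta$) separates from the projection onto its orthogonal complement (absorbed by $\alpha$). A secondary but conceptually important point is justifying the singularity $\norm{\Vector{}{c}{f}} \neq 0$, which is physically meaningful since the feature must not coincide with the camera optical center and is consistent with the well-posedness of the underlying polar symmetry of $\SOT(3)$ used throughout van Goor et al.
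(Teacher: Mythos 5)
Your proof is correct: the paper states this theorem without an explicit proof, and your argument follows exactly the constructive-lift methodology the paper uses for its other lift theorems — reduce to the differential of the projection $\phi_{\xi}$ at the identity and check that it reproduces the system dynamics, here $\dotVector{}{p}{f} = \Vector{}{0}{}$ for the fixed feature, with the $\grpB$ and $\IN$ factors dropping out and the $\Lambda_1,\dots,\Lambda_4$ components inherited from the previously established lift. The key cancellation via $\Vector{}{c}{f}^{\wedge}\Vector{}{c}{f}^{\wedge} = \Vector{}{c}{f}\Vector{}{c}{f}^{\top} - \norm{\Vector{}{c}{f}}^2\eye_3$ checks out, your identification of the $\sot(3)$ generator of $Q^{-1}$ as $-\alpha - \beta\eye_3$ is the right way to handle the scaling factor, and the caveat $\norm{\Vector{}{c}{f}} \neq 0$ is the correct well-posedness condition.
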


\subsection{Extended state and input matrices}
The state matrix $\mathbf{A}_{t}^{0}$ in~\cref{msceqf_A}, is extended as follows:
\begin{equation}\label{msceqf_A_pf}
    \mathbf{A}_{t}^{0} = \begin{bNiceArray}{ccccc}[margin]
        \prescript{}{1}{\mathbf{A}} & \prescript{}{2}{\mathbf{A}} & \Vector{}{0}{9 \times 6} & \Vector{}{0}{9 \times 4} & \Vector{}{0}{9 \times 4}\\
        \prescript{}{3}{\mathbf{A}} & \prescript{}{4}{\mathbf{A}} & \Vector{}{0}{6 \times 6} & \Vector{}{0}{6 \times 4} & \Vector{}{0}{6 \times 4}\\
        \prescript{}{5}{\mathbf{A}} & \prescript{}{6}{\mathbf{A}} & \prescript{}{7}{\mathbf{A}} & \Vector{}{0}{6 \times 4} & \Vector{}{0}{6 \times 4}\\
        \Vector{}{0}{4 \times 9} & \Vector{}{0}{4 \times 6} & \Vector{}{0}{4 \times 6} & \Vector{}{0}{4 \times 4} & \Vector{}{0}{4 \times 4} \Bstrut\\
        \hdottedline
        \Block{1-5}{\prescript{}{8}{\mathbf{A}}} &&&& \Bstrut\\
        \hdottedline
        \Block{1-5}{\prescript{}{9}{\mathbf{A}}} &&&& \Tstrut
    \end{bNiceArray} \in \R^{29 \times 29},
\end{equation}
where
\begin{align*}
    &\begin{bmatrix}
        \mathbf{\Upsilon}\\
        \mathbf{\Gamma}
    \end{bmatrix} = \AdMsym{\hat{E}^{-1}}
    \begin{bmatrix}
        \prescript{}{5}{\mathbf{A}} & \prescript{}{6}{\mathbf{A}} & \prescript{}{7}{\mathbf{A}} & \Vector{}{0}{6 \times 4} & \Vector{}{0}{9 \times 4}
    \end{bmatrix}  \in \R^{6 \times 29},\\
    &\Vector{}{\gamma}{} = \frac{1}{\hat{\mathrm{q}}}\hat{\mathrm{Q}}^{\top}(\mathring{\PoseP{}{}}\mathring{\PoseS{}{}})^{-1}\mathring{\Vector{}{p}{f}} = \frac{1}{\hat{\mathrm{q}}}\hat{\mathrm{Q}}^{\top}\mathring{\Vector{}{c}{f}} \in \R^{3}\\
    &\Vector{}{\zeta}{} = \frac{1}{\mathrm{q}}\hat{\mathrm{Q}}^{\top}
    \begin{bmatrix}
        \mathring{\Vector{}{c}{f}}^{\wedge} & -\mathring{\Vector{}{c}{f}}
    \end{bmatrix} \in \R^{3 \times 4},\\
    & \mathbf{\Theta} = 
    \begin{bmatrix}
        \Vector{}{0}{3 \times 15} & \Vector{}{0}{3 \times 6} & \Vector{}{0}{3 \times 6} & \Vector{}{0}{3 \times 4} & \left(\left(\eye_3 - \frac{2\Vector{}{\gamma}{}\Vector{}{\gamma}{}^{\top}}{\norm{\Vector{}{\gamma}{}}^2}\right)\Vector{}{\zeta}{}\right)
    \end{bmatrix} \in \R^{3 \times 29},\\    
    &\prescript{}{8}{\mathbf{A}} = \hat{\mathrm{Q}}\left(\mathbf{\Upsilon} + \frac{1}{\norm{\Vector{}{\gamma}{}}^2}\left(
    \Vector{}{\gamma}{}^{\wedge}\mathbf{\Gamma} - \Vector{}{\omega}{}^{\wedge}\mathbf{\Theta}
    \right)\right) \in \R^{3 \times 29},\\
    &\prescript{}{9}{\mathbf{A}} = \frac{1}{\norm{\Vector{}{\gamma}{}}^2}\left(\Vector{}{\gamma}{}^{\top}\mathbf{\Gamma} + \Vector{}{\omega}{}^{\top}\mathbf{\Theta}\right) \in \R^{1 \times 29}.
\end{align*}
Similarly, the input matrix $\mathbf{B}_{t}$ in~\cref{msceqf_B} is extended and it is written
\begin{equation}\label{msceqf_B_pf}
    \mathbf{B}_{t} = \begin{bNiceArray}{cc}[margin]
        \prescript{}{1}{\mathbf{B}} & \Vector{}{0}{9 \times 6}\\
        \prescript{}{2}{\mathbf{B}} & \prescript{}{3}{\mathbf{B}}\\
        \prescript{}{4}{\mathbf{B}} & \Vector{}{0}{6 \times 6}\\
        \Vector{}{0}{6 \times 6} & \Vector{}{0}{6 \times 6} \Bstrut\\
        \hdottedline
        \prescript{}{5}{\mathbf{B}} & \Vector{}{0}{3 \times 6} \Bstrut\\
        \hdottedline
        \prescript{}{6}{\mathbf{B}} & \Vector{}{0}{1 \times 6} \Tstrut
    \end{bNiceArray} \in \R^{29 \times 12},
\end{equation}
with
\begin{align*}
    &\prescript{}{5}{\mathbf{B}} = \hat{\mathrm{Q}} 
    \begin{bmatrix}
        \eye_3 & \frac{\Vector{}{\gamma}{}^{\wedge}}{\norm{\Vector{}{\gamma}{}}^2}
    \end{bmatrix}
    \prescript{}{4}{\mathbf{B}} \in \R^{3 \times 6}\\
    &\prescript{}{6}{\mathbf{B}} = \frac{\Vector{}{\gamma}{}^{\top}}{\norm{\Vector{}{\gamma}{}}^2}
    \begin{bmatrix}
        \Vector{}{0}{3 \times 3} & \eye_3 
    \end{bmatrix}
    \prescript{}{4}{\mathbf{B}} \in \R^{1 \times 6}
\end{align*}

\subsection{Equivariant persistent features update}

When camera intrinsic parameters are explicitly considered, the configuration output in \cref{msceqf_h} is not equivariant with respect to the defined symmetry. However, as demonstrated in~\cite{vanGoor2023EqVIO:Odometry} and in \cref{ms_bins_sot3_example}, the output is equivariant when no camera intrinsic parameters are considered, and the projection onto the unit-sphere is chosen. We will leverage this observation to define an update routine of the hybrid-\ac{msceqf}, where camera intrinsic parameters are excluded from the configuration output. This modification leads to an update that exploits the equivariance of the output and achieves third-order linearization error. 

\begin{lemma}
    Consider the configuration output
    \begin{equation}\label{msceqf_h_pf}
         h(\xi) = \pi_{\mathbb{S}^2}\left(\left(\mathbf{P}\mathbf{S}\right)^{-1} * \Vector{}{p}{f}\right) \in \calN \coloneqq \mathbb{S}^2.
    \end{equation}
    Similar to~\cref{ms_bins_sot3_example}, applying the action $\phi$ of the symmetry group in~\cref{msceqf_phi_pf} yields
    \begin{align*}
        h(\phi_X(\xi)) &= \pi_{\mathbb{S}^2}\left(\left(\PoseP{}{}\PoseS{}{}E\right)^{-1}\left(\PoseP{}{}\PoseS{}{}\right)E * \left(Q^{-1}\left(\left(\PoseP{}{}\PoseS{}{}\right)^{-1} * \Vector{}{p}{f}\right)\right)\right)\\
        &= \pi_{\mathbb{S}^2}\left(Q^{-1}\left(\left(\PoseP{}{}\PoseS{}{}\right)^{-1} * \Vector{}{p}{f}\right)\right)\\
        &= \pi_{\mathbb{S}^2}\left(\frac{1}{\mathrm{q}}\mathrm{Q}^{\top}\left(\left(\PoseP{}{}\PoseS{}{}\right)^{-1} * \Vector{}{p}{f}\right)\right)\\
        &= \mathrm{Q}^{\top}\pi_{\mathbb{S}^2}\left(\left(\PoseP{}{}\PoseS{}{}\right)^{-1} * \Vector{}{p}{f}\right)\\
        &= \mathrm{Q}^{\top}h(\xi) .\\
    \end{align*}
    Let $y \in \calN$ be a measurement defined according to the model in~\cref{msceqf_h_pf}.
    Define ${\rho \AtoB{\grpG \times \calN}{\calN}}$ as
    \begin{equation}\label{msceqf_pf_rho}
        \rho\left(X,y\right) \coloneqq \mathrm{Q}^{\top}y.
    \end{equation}
    Then, the configuration output defined in \cref{msceqf_h_pf} is equivariant.
\end{lemma}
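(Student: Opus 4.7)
The proof is essentially a direct verification that $h(\phi_X(\xi)) = \rho(X, h(\xi))$, and the lemma statement itself already sketches the chain of equalities. My plan is to formalize each step of that chain and justify the two non-trivial manipulations: the cancellation of the $\SE(3)$ cocycle and the commutation of $\pi_{\mathbb{S}^2}$ with the $\SOT(3)$ action.

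First I would substitute the state action $\phi$ from \cref{msceqf_phi_pf} into $h(\phi_X(\xi))$, yielding
\begin{equation*}
h(\phi_X(\xi)) = \pi_{\mathbb{S}^2}\!\left((\PoseP{}{}\PoseS{}{}E)^{-1} \,\bigl(\PoseP{}{}\PoseS{}{}E * \bigl(Q^{-1}(\PoseP{}{}\PoseS{}{})^{-1} * \Vector{}{p}{f}\bigr)\bigr)\right),
\end{equation*}
and invoke the standard compatibility property of the $\SE(3)$-action on $\R^{3}$, namely that $G^{-1} * (G * v) = v$ for any $G \in \SE(3)$ and $v \in \R^{3}$, to cancel the outer $\SE(3)$ factor. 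This reduces the expression to $\pi_{\mathbb{S}^2}(Q^{-1} \cdot \Vector{}{c}{f})$ where $\Vector{}{c}{f} = (\PoseP{}{}\PoseS{}{})^{-1} * \Vector{}{p}{f}$.

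Next, expanding the $\SOT(3)$ inverse as $Q^{-1} = (\mathrm{Q}^{\top}, \tfrac{1}{\mathrm{q}})$, acting on a vector in $\R^{3}$ gives $\tfrac{1}{\mathrm{q}}\mathrm{Q}^{\top}\Vector{}{c}{f}$. The key analytic fact is that $\pi_{\mathbb{S}^2}$ is invariant under positive scaling (since it normalizes by the Euclidean norm), which allows the $\tfrac{1}{\mathrm{q}}$ factor (strictly positive by construction of $\SOT(3)$) to be eliminated from both numerator and denominator. Then, because $\mathrm{Q} \in \SO(3)$ is an isometry, the norm in the denominator is unchanged under $\mathrm{Q}^{\top}$, so $\mathrm{Q}^{\top}$ pulls out of $\pi_{\mathbb{S}^2}$ as $\pi_{\mathbb{S}^2}(\mathrm{Q}^{\top}\Vector{}{c}{f}) = \mathrm{Q}^{\top}\pi_{\mathbb{S}^2}(\Vector{}{c}{f})$. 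Recognizing $\pi_{\mathbb{S}^2}(\Vector{}{c}{f}) = h(\xi)$ and invoking the definition of $\rho$ in \cref{msceqf_pf_rho} closes the argument.

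The main obstacle I anticipate is not conceptual but notational: making sure the two-step $\SE(3)$ cancellation is unambiguously stated given the concatenation of roto-translation and scaling actions on $\R^{3}$, and being precise that the invariance under positive scaling requires $\mathrm{q} > 0$ (which is built into the $\SOT(3)$ definition). Once those two facts are cleanly stated, the remainder is line-by-line substitution and the verification is immediate.
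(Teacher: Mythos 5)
Your proposal is correct and follows essentially the same route as the paper: substitute the action, cancel the $\SE(3)$ factor, expand $Q^{-1}$ as $\tfrac{1}{\mathrm{q}}\mathrm{Q}^{\top}$, and use positive-scale invariance of $\pi_{\mathbb{S}^2}$ together with the isometry of $\mathrm{Q}$ to pull the rotation outside the projection. The two facts you flag as needing care (the cocycle cancellation and $\mathrm{q}>0$) are exactly the ones the paper relies on, so no gap remains.
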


Let $\yzero \coloneqq h(\xizero)$, then local coordinates for the output are defined as follows:
\begin{equation}
    \delta(y) = \yzero^{\wedge}y \in \R^3 .
\end{equation}

The equivariance of the output in \cref{msceqf_h_pf} allows us to achieve third-order linearization error through the equivariant output approximation in~\cref{eq_eq_out_app,eq_C_star}; hence the $\mathbf{C}^{\star}$ matrix is written
\begin{align*}
    \mathbf{C}^{\star}\varepsilon &= \frac{1}{2}\yzero^{\wedge}\left(\yzero + \hat{\mathrm{Q}}y\right)^{\wedge}\varepsilon_Q\\
    &= \begin{bmatrix} \Vector{}{0}{3 \times 9} & \Vector{}{0}{3 \times 6} & \Vector{}{0}{3 \times 6} & \Vector{}{0}{3 \times 4} & \frac{1}{2}\yzero^{\wedge}\left(\yzero + \hat{\mathrm{Q}}y\right)^{\wedge} & \Vector{}{0}{3 \times 1} & \Vector{}{0}{3 \times 6} & \cdots & \Vector{}{0}{3 \times 6} \end{bmatrix}\varepsilon .
\end{align*}

\section{Chapter conclusion}

This chapter exploited the theoretical framework established in \cref{bins_chp,ms_bins_chp} and \emph{presented a novel symmetry group for \aclp{vins}, along with a novel \acl{eqf} formulation for the \acl{vio} problem; the \acf{msceqf}}. The proposed system integrates the concept of multi state constraint into an \ac{eqf} based on the proposed symmetry, resulting in a filter that achieves \emph{state-of-the-art accuracy and beyond state-of-the-art robustness}. Furthermore, the proposed symmetry is compatible with reference frame transformations that preserve the direction of gravity. This ensures that \emph{the \ac{msceqf} does not gain spurious information along the unobservable direction and is a consistent estimator} 

A known issue of multi state constraint filter formulations is their inability to handle zero motion scenarios without either a custom keyframe selection strategy or \acl{zvu}. To address this issue, \emph{a novel equivariant formulation of the \acl{zvu} that exploits the symmetry of \acl{vins} is presented}.

Through a series of experiments, it is demonstrated that the \ac{msceqf} exhibits robustness against both high absolute errors and unexpected errors that exceed the prior covariance. It achieves accuracy in the estimation comparable to a state-of-the-art \ac{msckf} while naturally maintaining consistency. Unlike \ac{fej}~\cite{huang2009first} and \ac{oc}~\cite{Hesch2014ConsistencyNavigation} methodologies, the \ac{msceqf} does not require artificial manipulation of the linearization point to achieve consistent estimation. Moreover, the proposed system's real-time capabilities and applicability to real-world vision-based robotic navigation scenarios are demonstrated by deploying the \ac{msceqf} to a Raspberry Pi 4 for closed-loop control of a resource-constrained aerial platform.

Finally, this chapter is concluded by proposing an extension of the \acl{vins} symmetry that includes fixed visual features. Such extension exploits the polar symmetry for fixed landmarks proposed by van Goor \etal~\cite{Goor2019AMapping, VanGoor2020AnEquivariance, vanGoor2021ConstructiveMapping, Mahony2020EquivariantWild, vanGoor2021AnOdometry, vanGoor2023EqVIO:Odometry, VanGoor2023EquivariantAwareness} to build a \emph{hybrid-\ac{msceqf}}

\chapter{Discussion and Conclusion}\label{concl_chp}

\section{Discussion}

The recognition that current theories of state estimation for biased \aclp{ins} are incomplete, along with the rise of a novel, powerful theory of equivariant systems, has motivated the research presented in this thesis. Specifically, the search for new equivariant symmetries of \aclp{ins} that properly account for \ac{imu} biases and the design of novel \aclp{eqf} for such problems.

The introduction of the tangent group for biased \aclp{ins}, a Lie group symmetry accounting for \ac{imu} biases, and the subsequent development of the first \acl{eqf} for biased \aclp{ins} with autonomous navigation error dynamics represent significant contributions of this dissertation. The theoretical assessment to identify symmetries for filter design, conducted in \cref{bins_eqf_sec} through linearization error analysis, has demonstrated that modeling the coupling between bias and navigation states using a semi-direct product symmetry improves the linearization error of the filter’s navigation states, shifting the nonlinearities toward the bias states. This dissertation formalized the concept that every filter is equivariant and demonstrated that symmetry-based \acl{eqf} design represents a unified design methodology. Every modern filter, in fact, is derived as an \ac{eqf} for a specific choice of symmetry. Furthermore, experimental evaluations have demonstrated that a good choice of symmetry leads to better convergence, accuracy, and consistency than state-of-the-art solutions.

\Aclp{ins} are core components of modern autonomous robotic platforms. Consequently, the theoretical framework presented finds application in many modern robotic systems. The results presented in this dissertation extend beyond theory to practical application. First, it was shown that casting global-referenced measurements of state variables into body-referenced measurements that are compatible with the underlying symmetry allows for handling many measurement types with equivariant output. Additionally, \cref{ms_bins_ext_sec} introduces the concept of symmetry extension to account for sensors' calibration states and variables of interest in modern multi-sensor fusion problems. These findings are combined in a case study for the ArduPilot autopilot system, discussing the design and implementation of a robust, self-calibrating multi-sensor fusion \ac{eqf}-based algorithm.

The algorithm proposed in \cref{ms_bins_ap_sec} demonstrates robustness and low sensitivity to tuning parameters. It provides accurate state estimation even in the presence of measurement outliers, shifts, and faulty sensors, without the need for extensive fine-tuning and a whole set of exception cases. Therefore, it stands as a viable option for future iterations of commercial and open-source autopilot systems for \acl{uav} targeted at a broad spectrum of users. 

Finally, this dissertation presented a novel Lie group symmetry for \acl{vins} as well as the \acs{msceqf}, a \acl{msceqf} for vision-aided inertial navigation. This filter demonstrated state-of-the-art accuracy and out-of-the-box consistency properties stemming from the compatibility of the proposed symmetry with the reference frame invariance of the \acl{vio} problem. More importantly, the proposed system achieves remarkable robustness against significant absolute errors and showcases the ability to deal with ``you don't know what you don't know'', such as unexpected errors during operation. Additionally, the real-time capabilities of the \ac{msceqf} were tested in an experimental setting for closed-loop control of a resource-constrained aerial platform. 

Multi state constraint filters are generally excellent choices for high-performance vision-aided inertial navigation systems. However, most implementations incorporate \acl{zvu} routines, custom keyframe selection strategies, or fixed visual landmarks as part of their state. This addresses the filter's limitation in handling zero motion scenarios. In this context, an equivariant \acl{zvu} routine that exploits the underlying Lie group symmetry, resulting in no linearization error, is presented. In conclusion of \cref{msceqf_chp}, an extension of the proposed algorithm is presented, wherein fixed visual features are considered as part of the \ac{eqf} state, resulting in a hybrid-\ac{msceqf}.

The results presented in \cref{msceqf_chp} differentiate the \ac{msceqf} from conventional \ac{vio} algorithms, emphasizing its superior robustness. This sets the stage for future algorithms with minimal setup time, able to provide accurate estimation without requiring extensive fine-tuning and only requiring minimal sensor calibration procedures.

\section{Future work}

The work presented in this dissertation serves as a foundation for equivariant \aclp{ins}, opening the way for future research directions.

\subsection{Data-driven discovery of new symmetries and symmetries in data-driven methods}
Discovering symmetries of kinematic systems is indeed very hard, and, as of today, no golden recipe exists to find new symmetries. Recent works have exploited data-driven techniques to discover governing equations of dynamical systems~\cite{Camps-Valls2023DiscoveringData}. Deep symbolic regression~\cite{Petersen2019DeepGradients} and sparse identification of nonlinear dynamics~\cite{Brunton2016DiscoveringSystems} are promising technologies, and exploring data-driven techniques for symmetries discovery seems to be a promising research direction.

On the other hand, the concepts of symmetry and equivariance of \aclp{ins} introduced in this dissertation might find application in data-driven methodologies for robotic systems. Specifically, an interesting research direction is to explore whether encoding the equivariance and the symmetries of a kinematic system and, hence, the geometric structure associated with the law of motion within data-driven algorithms for robotics play a role in sample efficiency and generalization of those algorithms. 

\subsection{Equivariant symmetries for discrete-time inertial navigation systems and preintegration}
In \cref{math_G3_sec} of this dissertation, the inhomogeneous Galilean group $\grpG(3)$ and its algebraic properties are introduced. $\grpG(3)$ is the group of 3D rotations, translations in space and time, and transformations between frames of reference that differ only by constant relative motion. The inhomogeneous Galilean group is an overgroup of the extended special Euclidean group $\SE_2(3)$, which includes a time component. Due to this characteristic, it finds application in discrete-time \aclp{ins}~\cite{vanGoor2023ConstructiveNavigation, Wang2023MAVIS:Pre-integration}. Exploiting the inhomogeneous Galilean group and adapting the symmetries presented in this thesis for discrete-time systems and discrete-time \acl{eqf}~\cite{Ge2022EquivariantSystems} is thus an exciting research direction. Moreover, exploiting the tangent group of $\grpG(3)$ may lead to a novel equivariant theory of \ac{imu} preintegration, wherein \ac{imu} biases are explicitly considered, and the preintegrated navigation error is autonomous, in contrast to standard methodologies~\cite{Forster2017On-ManifoldOdometry, Barrau2020APreintegration, Eckenhoff2019Closed-formNavigation, Yang2020AnalyticNavigation, Brossard2021AssociatingEarth, Wang2023MAVIS:Pre-integration, Tsao2023AnalyticSystems}.

\subsection{Modular multi-sensor fusion and temporal calibration}
This dissertation introduced novel equivariant symmetries for \aclp{ins} and their extension for multi-sensor fusion problems. The low computational complexity, the natural consistency, and the robustness of the \acl{eqf} algorithm make it a perfect candidate for such applications. There are many different research problems in the world of multi-sensor fusion; however, the next logical step is represented by extending the results presented in this dissertation towards the problem of modular multi-sensor fusion~\cite{Brommer2021}, where the sensor suite, and hence the underlying symmetry, is extended or pruned at run-time in a modular fashion. 

A particularly interesting aspect of multi-sensor fusion problems is the time synchronization of the various sensors' readings and clocks and, hence, the ability of online temporal calibration of such sensors~\cite{Liu2022AFusion, Furgale2013, Li2014OnlineAlgorithms, Li2013Real-timeCamera, Guo2014EfficientTimestamps., Yang2019DegenerateCalibration}. The inhomogeneous Galilean group $\grpG(3)$ introduced in \cref{math_G3_sec} might be suitable for providing symmetry for systems where temporal calibration parameters are considered. Therefore, its exploitation as Lie group symmetry for \aclp{ins} with online sensors temporal calibration represents a fascinating research direction.

\subsection{Lidar-, radar-, dense-, and event-aided inertial odometry}
Potential research directions include the application and the extension of the symmetries presented in this dissertation for inertial navigation based on different sensor modalities, such as lidars~\cite{Xu2021FAST-LIO:Filter, Xu2022FAST-LIO2:Odometry, Wu2023LIO-EKF:Filters} and radars~\cite{Michalczyk2022Radar-InertialManoeuvres, Michalczyk2022Tightly-CoupledOdometry, Michalczyk2023Multi-StateLandmarks}. Lidar sensors directly provide scans of 3D features. Hence, the symmetries discussed in this dissertation should be directly applicable and lead to equivariance of the output. 

Radars are cost-effective sensor modalities, providing accurate range and doppler velocity measurements of 3D features in the environment while being insensitive to environmental conditions and privacy-preserving. It is certainly interesting to evaluate how the symmetries presented in this thesis apply and extend to the radar-inertial navigation problem. 

Additionally, interesting research directions are represented by investigating novel Lie group symmetries of the underlying nature of dense and event measurements for dense \acl{vio}~\cite{Hardt-Stremayr2019, Hardt-Stremayr2020} and event-based \acl{vio}~\cite{Zhu2017Event-basedOdometry} respectively.

\section{Conclusion}

This dissertation started with a clear goal: \emph{discover novel symmetries for \aclp{ins} that properly include a geometric treatment of the \ac{imu} biases and exploit them to design filtering algorithms that respect the geometry of the problem and outperform the state-of-the-art.}

In conclusion, this dissertation successfully achieved the overall goal of advancing the field of \aclp{eqf} for \aclp{ins}. 
By discovering novel equivariant symmetries, presenting a unified framework for modern inertial navigation and multi-sensor fusion problems, bridging theoretical results with practical applications, case study, and implementation, and finally showcasing the potential for \aclp{vins}, this work represents a leap forward in autonomous robotics and \aclp{ins}.

\appendix
\chapter{Derivation of \texorpdfstring{${\mathbf{C}^{\star}}$}{C star}}\label{appendix_B_chp}

Recall the definition of the equivariant error provided in \cref{eq_eqf_design}, ${e = \phi(E, \xizero) = \phi_{\xizero}(E)}$.
Let $h(e) \in \calN$ denote the configuration output computed at the error $e$. Assuming equivariance of the output, hence the existence of a valid right group action ${\rho \AtoB{\grpG \times \calN}{\calN}}$, the following relation holds:
\begin{equation*}
    h(e) = h(\phi(E, \xizero)) = \rho(E, h(\xizero)) = \rho(E, \yzero).
\end{equation*}

Normal coordinates are defined fixing ${\varepsilon = \log_{\grpG}(E)^{\vee}}$, thus ${E = X\hat{X}^{-1} = \exp_{\grpG}(\varepsilon^{\wedge})}$, and hence ${e = \phi_{\xizero}(\exp_{\grpG}(\varepsilon^{\wedge})}$. Choosing normal coordinates, one has
\begin{align*}
    \delta(h(e)) &= \delta(h(\vartheta^{-1}(\varepsilon)))\\
    &=\delta(\rho(E, \yzero)) = \delta(\rho(\exp_{\grpG}(\varepsilon^{\wedge}), \yzero)).
\end{align*}

The previous relation can be exploited to obtain two linearization points for $\delta(h(e))$. One at ${\varepsilon = 0}$ and a second one at $\varepsilon = \vartheta(e)$ in the direction given by $\varepsilon$~\cite{vanGoor2022EquivariantEqF, VanGoor2023EquivariantAwareness}.
Linearizing ${\delta(\rho(\exp_{\grpG}(\varepsilon^{\wedge}), \yzero))}$ at ${\varepsilon = 0}$ yields
\begin{equation*}
    \Fr{y}(\yzero)\delta{y}\Fr{E}{I}\rho(E, \yzero).
\end{equation*}
Linearizing ${\delta(\rho(\exp_{\grpG}(\varepsilon^{\wedge}), \yzero))}$ at $\varepsilon = \vartheta(e)$ in the direction given by $\varepsilon$ yields
\begin{align*}
    &\Fr{y}{\yzero}\delta(y) \circ \ddt\Bigr|_{t = 0}h(\phi(\exp_{\grpG}((1+t)\varepsilon^{\wedge}), \xizero))\\
    &\Fr{y}{\yzero}\delta(y) \circ \ddt\Bigr|_{t = 0}h(\phi(\exp_{\grpG}(\varepsilon^{\wedge})\exp_{\grpG}(t\varepsilon^{\wedge}), \xizero))\\
    &\Fr{y}{\yzero}\delta(y) \circ \ddt\Bigr|_{t = 0}h(\phi(E\exp_{\grpG}(t\varepsilon^{\wedge}), \xizero))\\
    &\Fr{y}{\yzero}\delta(y) \circ \ddt\Bigr|_{t = 0}h(\phi(\exp_{\grpG}(t\varepsilon^{\wedge}), \phi(E, \xizero)))\\
    &\Fr{y}{\yzero}\delta(y) \circ \ddt\Bigr|_{t = 0}h(\phi(\exp_{\grpG}(t\varepsilon^{\wedge}), e))\\
    &\Fr{y}{\yzero}\delta(y) \circ \ddt\Bigr|_{t = 0}\rho(\exp_{\grpG}(t\varepsilon^{\wedge}), h(e))\\
    &\Fr{y}{\yzero}\delta(y) \circ \ddt\Bigr|_{t = 0}\rho(\exp_{\grpG}(t\varepsilon^{\wedge}), \rho_{\hat{X}^{-1}}(y))\\
    &\Fr{y}{\yzero}\delta(y) \circ \Fr{E}{I}\rho(E, \rho_{\hat{X}^{-1}}(y)).
\end{align*}
Combining the two results yields
\begin{equation*}
    \mathbf{C}^{\star}\varepsilon = \frac{1}{2}\Fr{y}{\yzero}\delta\left(y\right)\left(\Fr{E}{I}\rho_E\left(\yzero\right) + \Fr{E}{I}\rho_E\left(\rho_{\hat{X}^{-1}}\left(y\right)\right)\right)\varepsilon^{\wedge}.
\end{equation*}

\chapter{EqF Matrices with Generic \texorpdfstring{${\xizero}$}{State Origin}}\label{appendix_A_chp}
Recall the $\grpD$ symmetry introduced in \cref{bins_tg_sec}. Here, we extend the filter state matrix for an \ac{eqf} based on the $\grpD$ symmetry presented in \cref{bins_At0_tg}, for an arbitrary choice of $\xizero$.

Recall the formula in \cref{eq_A0}. By choosing normal coordinates, the two most left differentials cancel out, yielding
\begin{equation*}
    \mathbf{A}_{t}^{0} = \Fr{e}{\xizero}\Lambda\left(e, \mathring{u}\right)\Fr{\varepsilon}{\mathbf{0}}\vartheta^{-1}\left(\varepsilon\right).
\end{equation*}
The derivation of the $\mathbf{A}_{t}^{0}$ is carried out by solving each differential separately and then combining the results. The overall derivation is tedious and lengthy, and therefore, we report the final result:
\begin{equation*}
    \mathbf{A}_{t}^{0} = \begin{bNiceArray}{c:c}[margin]
        \mathbf{\Psi} - \adMsym{\mathring{\Vector{}{b}{}}} & \eye_9 \Bstrut\\
        \hdottedline
        \adMsym{\mathring{\Vector{}{b}{}}}\mathbf{\Psi} - \adMsym{\left(\mathring{\Vector{}{w}{}} + \left(\mathbf{N} + \mathring{\Pose{}{}}^{-1}\left(\mathbf{G - N}\right)\right)^{\vee}\right)}\adMsym{\mathring{\Vector{}{b}{}}} & \adMsym{\left(\mathring{\Vector{}{w}{}} + \left(\mathbf{N} + \mathring{\Pose{}{}}^{-1}\left(\mathbf{G - N}\right)\right)^{\vee}\right)} \Tstrut
    \end{bNiceArray} \in \R^{18 \times 18},
\end{equation*}
with
\begin{align*}
    &\mathbf{\Psi} = \begin{bmatrix}
        \Vector{}{0}{3 \times 3} & \Vector{}{0}{3 \times 3} & \Vector{}{0}{3 \times 3}\\
        \left(\mathring{\mathbf{R}}^\top\Vector{}{g}{}\right)^{\wedge} & \Vector{}{0}{3 \times 3} & \Vector{}{0}{3 \times 3}\\
        \left(\mathring{\mathbf{R}}^\top\mathring{\mathbf{v}}\right)^{\wedge} & \eye_3 & \Vector{}{0}{3 \times 3}
    \end{bmatrix} \in \R^{9 \times 9},\\
    &\mathring{u} = (\mathring{\Vector{}{w}{}}, \mathring{\Vector{}{\tau}{}}) \coloneqq\psi_{\hat{X}^{-1}}(u) \in \vecL.
\end{align*}

Something worth discussing is how we can certify the correctness of the matrices we derive. One approach that was extensively used in the work presented in this dissertation is based on comparing the analytical form of the matrix with its \emph{numerical differentiation} counterpart. The following snippet shows a straightforward Python implementation of numerical differentiation for the $\mathbf{A}_{t}^{0}$ matrix.

\begin{boxsnippet}{Numerical filter state matrix}{}
\inputminted[mathescape=true, fontsize=\small]{python}{code/appendix/numerical_differentation.py}
\end{boxsnippet}

\backmatter
\bibliographystyle{plain}
\bibliography{bib/b042115, bib/eqf, bib/vio, bib/preinit}

\end{document}